\documentclass[letterpaper,10pt]{article}

\usepackage[preprint]{probml}

\ShortHeadings{Characterizing the Representational Capacity of Neural Processes}

\usepackage{blindtext}
\usepackage{wrapfig}
\usepackage{tikz}
\usetikzlibrary{positioning}
\usepackage{enumitem}

\newtheorem{assumption}{Assumption}
 
\newcommand{\calX}{\mathcal{X}}
\newcommand{\calY}{\mathcal{Y}}
\newcommand{\calC}{\mathcal{C}}
\newcommand{\calF}{\mathcal{F}}
\newcommand{\calN}{\mathcal{N}}
\newcommand{\calP}{\mathcal{P}}

\newcommand{\bK}{\mathbf{K}}
\newcommand{\bI}{\mathbf{I}}
\DeclareMathOperator{\rank}{rank}
\DeclareMathOperator{\vech}{vech}
\DeclareMathOperator{\diag}{diag}

\usepackage{bm,amssymb,amsmath}

\makeatletter
\def\th@plain{%
  \thm@notefont{}
  \itshape 
}
\def\th@definition{%
  \thm@notefont{}
  \normalfont 
}
\makeatother

\def\1{\bm{1}}

\DeclareMathAlphabet{\mathsfit}{\encodingdefault}{\sfdefault}{m}{sl}
\SetMathAlphabet{\mathsfit}{bold}{\encodingdefault}{\sfdefault}{bx}{n}

\newcommand{\E}{\mathbb{E}}

\newcommand{\R}{\mathbb{R}}

\begin{document}

\title{
  Characterizing the Representational Capacity of Neural Processes
}

\author[1,$\dagger$]{Robin Young}

\affil[1]{University of Cambridge}
\affil[$\dagger$]{Correspondence to \url{robin.young@cl.cam.ac.uk}}

\maketitle

\begin{abstract}
What functions can Neural Processes represent? We analyze the representational capacity of popular NP architectures: Conditional Neural Processes (CNPs), Attentive Neural Processes (ANPs), Transformer Neural Processes (TNPs), and their latent variants. We prove these architectures form a strict hierarchy. CNP-representable functions are exactly those depending on finitely many expected features of the context distribution. ANPs strictly generalize CNPs via query-dependent reweighting, enabling kernel smoothers. ConvCNPs and ANPs are incomparable; each contains functions outside the other, separated by stationarity versus translation equivariance. TNPs with $L$ self-attention layers capture $L$-hop context interactions. For latent NPs, we show finite-dimensional latents provide coherent sampling but do not circumvent encoder limitations; matching GP posterior distributions requires latent dimension scaling with context size. These results provide a theoretical foundation for architecture selection based on task structure.
\end{abstract}

\section{Introduction}
 
Neural Processes \citep{garnelo2018cnp, garnelo2018neuralprocesses} have emerged as a family of models for meta-learning and few-shot prediction. By learning to map context sets to predictive distributions, NPs combine the computational efficiency of neural networks with the uncertainty quantification traditionally associated with Gaussian Processes (GPs).
 
Since their introduction, numerous architectural variants have been proposed including Conditional Neural Processes (CNPs), Attentive Neural Processes (ANPs) \citep{kim2018attentive}, Convolutional CNPs \citep{Gordon2020Convolutional}, and Transformer Neural Processes (TNPs) \citep{nguyen2022transformernp}. Many have observed that these architectures exhibit different capabilities. ANPs outperform CNPs on tasks requiring local adaptation, ConvCNPs excel on spatially structured stationary tasks, while TNPs excel at tasks requiring global coherence, but the theoretical foundations explaining these differences have been lacking.
 
In this paper, we answer a natural question: \emph{What functions can neural processes represent?}
 
We provide a characterization of CNP-representable functions as those depending on finitely many expected features of the empirical context distribution. We prove that ANPs strictly generalize CNPs by enabling query-dependent reweighting, with an explicit construction showing that kernel smoothers are ANP-representable but lie outside the CNP function class. We show that ConvCNPs and ANPs are incomparable in the hierarchy with the separation governed by stationarity versus translation equivariance. For TNPs, we establish that $L$ layers of self-attention capture $L$-hop context interactions, and prove matching upper and lower bounds: $\Theta(\sqrt{\kappa} \log(1/\varepsilon))$ layers are necessary and sufficient to $\varepsilon$-approximate GP posteriors, where $\kappa$ is the kernel matrix condition number. We extend our analysis to latent NPs, showing that finite-dimensional latent variables provide coherent function sampling but do not expand representational capacity beyond what the encoder permits.
 
\section{Related Work}
\label{app:related-work}
 
Neural Processes were introduced by \citet{garnelo2018cnp} as a computationally efficient alternative to Gaussian Processes for meta-learning, with the latent variable extension following in \citet{garnelo2018neuralprocesses}. Subsequent work has developed numerous architectural variants. Attentive Neural Processes \citep{kim2018attentive} replaced mean aggregation with cross-attention, allowing the context representation to depend on the target location. Convolutional Conditional Neural Processes \citep{Gordon2020Convolutional} introduced translation equivariance through convolutional aggregation, achieving strong performance on spatially structured tasks. Transformer Neural Processes \citep{nguyen2022transformernp} applied self-attention to the context set before cross-attention, enabling context points to exchange information. Practitioners have observed that these architectures exhibit qualitatively different capabilities, but the theoretical foundations explaining these differences have been absent. We provide the first rigorous characterization of the representational capacity of each architecture class, proving they form a strict hierarchy.
 
The representation of functions over sets has been studied extensively in the deep learning literature. \citet{zaheer2017deepsets} established that permutation invariant functions can be represented by DeepSets architectures with sum or mean aggregation, while follow up work by \citet{wagstaff2019limitations} proved limitations that no continuous function with fixed-dimension aggregation can be injective on sets of unbounded size. Our work extends these results to the predictive setting where outputs depend on both the context set and a target location. The distinction is that Neural Processes must produce query-dependent predictions, not just set-level summaries. We show that query-dependent reweighting strictly expands representational capacity even when the aggregated dimension is held fixed.
 
Gaussian Processes are standard for uncertainty quantification in regression, but exact inference scales cubically in the number of observations. This has motivated extensive work on sparse approximations, including inducing point methods \citep{snelson2005sgp,titsias2009svgp}, random Fourier features \citep{rahimi2007random}, and structured kernel interpolation \citep{wilson2015kernel}. Neural Processes take a different approach as they learn to amortize GP-like inference through an encoder-decoder architecture trained across tasks. Our results characterize when this amortization is possible. We prove that CNPs and ANPs cannot represent GP posteriors regardless of encoder capacity (Theorems~\ref{thm:gp-not-cnp} and \ref{thm:gp-not-anp}), while TNPs can approximate GP posteriors with depth scaling as $\Theta(\sqrt{\kappa} \log(1/\varepsilon))$ where $\kappa$ is the kernel matrix condition number (Theorem~\ref{thm:main-lower-bound}, Proposition~\ref{prop:chebyshev-upper}). This provides a precise sense in which TNPs succeed where simpler architectures fail.
 
The representational capacity of transformer architectures has received significant recent attention. \citet{Yun2020Are} proved that transformers are universal approximators of sequence-to-sequence functions, while \citet{perez2021attention} established Turing completeness under suitable assumptions. Our depth lower bound (Theorem~\ref{thm:main-lower-bound}) contributes to this literature by proving task-specific depth requirements. Approximating GP posteriors requires $\Omega(\sqrt{\kappa} \log(1/\varepsilon))$ layers regardless of width. The key insight is that linearization at $y_C = 0$ reduces the Jacobian to a polynomial in the attention matrix, enabling the application of classical approximation barriers. This technique may be applicable to other regression tasks requiring matrix inversion.
 
Our depth lower bound relies on classical results from approximation theory, particularly polynomial approximation of rational functions. The Chebyshev barrier for approximating $1/\mu$ on an interval dates to the foundational work of Chebyshev and Markov; see \citet{trefethen2019approximation} for a modern treatment. The connection between iterative methods and polynomial approximation is well-established in numerical linear algebra, where Chebyshev iteration achieves optimal convergence rates for linear systems \citep{golub2013matrix}. Our contribution is recognizing that transformer self-attention layers implement polynomial iterations, and that the same barriers governing classical iterative methods govern neural network depth requirements. The matching upper bound (Proposition~\ref{prop:chebyshev-upper}) shows that the Chebyshev polynomial construction can be realized by appropriate choice of attention weights.

Theoretical analysis of meta-learning has developed along several axes. Generalization bounds for learning-to-learn were established by \citet{baxter2000inductive}, with subsequent refinements using PAC-Bayes \citep{pentina2014paclifelong, rothfuss2021pacoh} and information-theoretic \citep{jose2021infotheorymeta} techniques. These results bound how many tasks are needed for a meta-learner to generalize to new tasks. Our work is complementary. We characterize what can be represented, not how fast it can be learned. The expressiveness hierarchy we establish has implications for meta-learning theory. A CNP cannot benefit from additional tasks if the target predictor lies outside its function class, but a full sample complexity analysis for Neural Processes remains open.

\section{Preliminaries}
\label{sec:prelim}
 
Let $\calX \subseteq \R^{d_x}$ be the input space and $\calY \subseteq \R^{d_y}$ the output space. A context set is a finite collection $C = \{(x_i, y_i)\}_{i=1}^n$ with $x_i \in \calX$, $y_i \in \calY$. The context space $\calC = \bigcup_{n=1}^{\infty} (\calX \times \calY)^n / S_n$ is the set of all finite multisets, with $\calC_{\leq N}$ the restriction to size at most $N$. A \emph{predictive map} $F: \calC \times \calX \to \Theta$ returns distribution parameters $\theta \in \Theta$ given a context set $C$ and target location $x_t$.
 
\paragraph{Conditional Neural Process (CNP).}
A CNP computes:
\begin{align}
h_i &= h_\phi(x_i, y_i) \in \R^d & &\text{(encode each context point)} \\
r_C &= \frac{1}{n} \sum_{i=1}^n h_i \in \R^d & &\text{(aggregate via mean)} \\
\theta &= g_\psi(x_t, r_C) \in \Theta & &\text{(decode at target)}
\end{align}
where $h_\phi: \calX \times \calY \to \R^d$ is the encoder and $g_\psi: \calX \times \R^d \to \Theta$ is the decoder.

\paragraph{Attentive Neural Process (ANP).}
An ANP uses cross-attention from target to context:
\begin{align}
\alpha_i(x_t; C) &= \frac{\exp(q(x_t)^\top k(x_i, y_i) / \tau)}{\sum_{j=1}^n \exp(q(x_t)^\top k(x_j, y_j) / \tau)} \\
r_C(x_t) &= \sum_{i=1}^n \alpha_i(x_t; C) \, v(x_i, y_i) \\
\theta &= g_\psi(x_t, r_C(x_t))
\end{align}
where $q: \calX \to \R^{d_k}$ is the query network, $k: \calX \times \calY \to \R^{d_k}$ is the key network, and $v: \calX \times \calY \to \R^d$ is the value network.

\paragraph{Transformer Neural Process (TNP).}
A TNP applies self-attention to the context before cross-attention:
\begin{align}
\beta_{ij}^{(\ell)} &= \frac{\exp(q_s(\tilde{h}_i^{(\ell-1)})^\top k_s(\tilde{h}_j^{(\ell-1)}) / \tau)}{\sum_{m=1}^n \exp(q_s(\tilde{h}_i^{(\ell-1)})^\top k_s(\tilde{h}_m^{(\ell-1)}) / \tau)} \\
\tilde{h}_i^{(\ell)} &= \sum_{j=1}^n \beta_{ij}^{(\ell)} \, W_v \tilde{h}_j^{(\ell-1)}
\end{align}
for layers $\ell = 1, \ldots, L$, with $\tilde{h}_i^{(0)} = h(x_i, y_i)$. The final representation is then used in cross-attention as in ANP.

\paragraph{Convolutional Neural Process (ConvCNP).}
A ConvCNP replaces finite-dimensional aggregation with functional convolutional channels. A non-negative filter $w: \R^{d_x} \to \R_+$ and pointwise encoder $h: \calY \to \R^d$ produce:
\begin{align}
\rho_C(x) &= \sum_{i=1}^n w(x - x_i) & &\text{(density channel)} \\
s_C(x) &= \sum_{i=1}^n w(x - x_i)\, h(y_i) & &\text{(signal channel)} \\
\tilde{r}_C &= \Phi(s_C, \rho_C) & &\text{(CNN processing)} \\
\theta &= g(\tilde{r}_C(x_t), x_t) & &\text{(readout at target)}
\end{align}
where $\Phi$ is a multi-layer CNN. A pure ConvCNP omits the CNN: $\theta = g(s_C(x_t), \rho_C(x_t), x_t)$. We distinguish between the pure convolutional aggregation and the subsequent CNN processing, as these contribute qualitatively different representational capabilities.

The core question of representational capacity is basically: what information about the context can reach the prediction? For CNPs, only the mean-aggregated encoding and context points are processed independently. For ANPs, the query can reweight context points, but the weights factorize but context points still don't interact. ConvCNPs lift the finite-dimensional bottleneck via functional representations, and the CNN enables context--context coupling, but only through translation-equivariant operations. For TNPs, self-attention lets context points exchange information before the prediction is made. This progression from no interaction, to query-mediated reweighting, to full context-context coupling is what drives the hierarchy we establish.

\section{Conditional Neural Processes}
\label{sec:cnp}
 
\begin{definition}[$h$-Equivalence]
For a fixed encoder $h: \calX \times \calY \to \R^d$, two context sets $C, C'$ are $h$-equivalent, written $C \sim_h C'$, if $\frac{1}{|C|} \sum_{(x,y) \in C} h(x,y) = \frac{1}{|C'|} \sum_{(x,y) \in C'} h(x,y)$.
\end{definition}
 
\begin{proposition}[Indistinguishability]
A CNP with encoder $h$ cannot distinguish $h$-equivalent context sets: if $C \sim_h C'$, then $\mathrm{CNP}(C, x_t) = \mathrm{CNP}(C', x_t)$ for all $x_t \in \calX$.
\end{proposition}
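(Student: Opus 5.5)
The plan is to use the fact that the CNP computation factors through the aggregated representation $r_C$. I will write the CNP as a composition
\begin{equation*}
\mathrm{CNP}(C, x_t) = g_\psi\bigl(x_t, A_h(C)\bigr), \qquad A_h(C) = \frac{1}{|C|}\sum_{(x,y)\in C} h(x,y),
\end{equation*}
where $h = h_\phi$ is the encoder. Once this factorization is in place, the claim follows by substitution.

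The steps, in order, are as follows.

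\emph{Step 1: $A_h$ is well defined on $\calC$.} Elements of $\calC$ are multisets, that is, equivalence classes of tuples under $S_n$. Since $\frac{1}{n}\sum_{i=1}^n h(x_i,y_i)$ is invariant under permuting the indices, $A_h$ is a function on multisets. This also matches the sum over $(x,y)\in C$ in the definition of $h$-equivalence, provided that sum is read with multiplicity.

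\emph{Step 2: the CNP output depends on $C$ only through $r_C$.} By the CNP equations, $\theta$ is computed from $x_t$ and $r_C$ alone, and $r_C = A_h(C)$.

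\emph{Step 3: conclude.} If $C \sim_h C'$, then $A_h(C) = A_h(C')$ by definition. Hence for every $x_t \in \calX$,
\begin{equation*}
\mathrm{CNP}(C,x_t) = g_\psi\bigl(x_t, A_h(C)\bigr) = g_\psi\bigl(x_t, A_h(C')\bigr) = \mathrm{CNP}(C',x_t).
\end{equation*}
This holds because $g_\psi$ is a deterministic function of its two arguments.

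I expect no substantive obstacle. The only point needing care is bookkeeping. The relation $\sim_h$ must be taken with respect to the same encoder $h_\phi$ used inside the CNP. Context sets must be treated as multisets, so that repeated points count with multiplicity in both the aggregation and the equivalence relation. Sets of different sizes $|C| \neq |C'|$ are permitted, and the argument is unaffected because the decoder never sees $n$ separately. No property of $g_\psi$ beyond being a function is needed: no continuity and no architectural restriction.
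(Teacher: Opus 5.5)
Your proposal is correct and is the intended argument: the CNP output is $g_\psi(x_t, r_C)$, and $r_C$ is exactly the mean encoding that defines $\sim_h$, so $C \sim_h C'$ forces equal outputs at every $x_t$. The paper states this proposition without a separate proof and relies on the same factorization-through-$r_C$ reasoning elsewhere (e.g., in the Encoder Bottleneck proposition for latent NPs); your notes on multiplicity and unequal context sizes are harmless extra care.
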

 
\begin{theorem}[CNP Characterization]
\label{thm:cnp-char}
The class of $d$-representable predictive maps is exactly the moment statistics class $\calF_{\mathrm{moment}}^{(d)} = \{ F : F(C, x_t) = \phi( \frac{1}{n}\sum_i \psi(x_i, y_i), x_t ) \}$ for continuous $\psi: \calX \times \calY \to \R^d$, $\phi: \R^d \times \calX \to \Theta$.
\end{theorem}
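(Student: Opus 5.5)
The proof is a double inclusion, and both directions come almost directly from the definitions in Section~\ref{sec:prelim}. Here ``$d$-representable'' means realizable by a CNP whose encoder has output dimension $d$, with continuous encoder $h_\phi$ and decoder $g_\psi$ ranging over all continuous maps. This idealizes the network class: if only neural networks are allowed, one would instead pass to closures under uniform approximation on compacta and invoke universal approximation. I will first fix this convention and then show that the CNP forward pass and the defining formula of $\calF_{\mathrm{moment}}^{(d)}$ are the same expression up to renaming.

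\emph{Inclusion of CNP maps in $\calF_{\mathrm{moment}}^{(d)}$.} Take any CNP with encoder $h_\phi:\calX\times\calY\to\R^d$ and decoder $g_\psi:\calX\times\R^d\to\Theta$. Set $\psi := h_\phi$ and $\phi(r,x_t) := g_\psi(x_t,r)$, which only swaps the argument order and so preserves continuity. The CNP output is then $g_\psi(x_t, \frac1n\sum_i h_\phi(x_i,y_i)) = \phi(\frac1n\sum_i\psi(x_i,y_i),x_t)$, so the map lies in $\calF_{\mathrm{moment}}^{(d)}$. The Indistinguishability proposition is the qualitative content of this direction: the output factors through the $h$-equivalence class of $C$, which is exactly the moment vector $\frac1n\sum_i\psi$.

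\emph{Inclusion of $\calF_{\mathrm{moment}}^{(d)}$ in CNP maps.} Conversely, given $F=\phi(\frac1n\sum_i\psi(x_i,y_i),x_t)$, define the encoder $h_\phi:=\psi$ and the decoder $g_\psi(x_t,r):=\phi(r,x_t)$. The CNP forward pass then reproduces $F$ exactly. I would also check well-definedness on multisets: the mean is permutation invariant, so $F$ is a legitimate function on $\calC\times\calX$ for every $n$. Note that $n$ enters only through the normalization, so a single pair $(\psi,\phi)$ works for all context sizes simultaneously.

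\emph{Main obstacle.} The only substantive point is the definitional one above: whether ``representable'' refers to exact realization by continuous $h,g$ or to approximation by neural networks. In the approximate reading, the reverse inclusion needs more care. I would restrict to $\calC_{\le N}$ with $\calX,\calY$ compact, approximate $\psi$ uniformly by a network $\hat\psi$, and note that the means then stay within the same uniform error and remain in a compact set $K\subset\R^d$. Next approximate $\phi$ uniformly on a compact neighborhood of $K\times\calX$ by $\hat\phi$, and conclude using uniform continuity of $\phi$ there. The forward inclusion is unaffected, since every network is continuous. I would state the theorem under the exact-continuous convention and record the approximate version as a remark.
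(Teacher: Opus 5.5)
Your proposal is correct and is essentially the paper's proof: both inclusions are the definitional renaming $\psi = h$, $\phi = g$, and the paper silently glosses over the argument-order swap that you make explicit. Your remarks on permutation-invariance of the mean and on the approximate (neural-network) reading are sound additions that the paper does not include.
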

 
\begin{proposition}[Existence of Collisions]
\label{prop:collisions}
For any encoder $h: \calX \times \calY \to \R^d$ and any $n > d/(d_x + d_y)$, there exist distinct context sets $C \neq C'$ with $|C| = |C'| = n$ such that $C \sim_h C'$.
\end{proposition}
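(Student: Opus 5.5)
The plan is a dimension-counting argument: the mean-aggregation map is a continuous map from a space of dimension $n(d_x+d_y)$ into $\R^d$, and invariance of domain forbids such a map from being injective once $n(d_x+d_y) > d$. Two standing assumptions are needed, as in Theorem~\ref{thm:cnp-char}. First, $h$ is continuous. Without continuity the claim can fail, since $\R^m$ and $\R^d$ have the same cardinality and a pathological $h$ could make the sums injective. Second, $\calX \times \calY$ contains a nonempty open subset of $\R^{d_x+d_y}$, for instance because $\calX$ and $\calY$ have nonempty interior.

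\textbf{Step 1: reduce multisets to ordered tuples.} Set $m = d_x + d_y$. I will pick $n$ pairwise disjoint open balls $B_1, \dots, B_n \subseteq \calX \times \calY$, which is possible inside any open set. On the open set $U = B_1 \times \cdots \times B_n \subseteq \R^{nm}$, define
\begin{equation*}
S(z_1, \dots, z_n) = \frac{1}{n}\sum_{i=1}^n h(z_i) \in \R^d .
\end{equation*}
Because the balls are disjoint, each $z_i$ is identified by the ball it lies in. Hence distinct tuples in $U$ correspond to distinct multisets $C \neq C'$ of size exactly $n$. Any two distinct tuples with the same value of $S$ therefore give the required $h$-equivalent pair.

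\textbf{Step 2: show $S$ is not injective on $U$.} Suppose for contradiction that $S$ is injective. Compose it with the linear embedding $\iota: \R^d \hookrightarrow \R^{nm}$, $v \mapsto (v, 0)$, which is available since $nm > d$. The composite $\iota \circ S : U \to \R^{nm}$ is then a continuous injection from an open subset of $\R^{nm}$ into $\R^{nm}$. By Brouwer's invariance of domain, its image is open in $\R^{nm}$. But the image lies in the proper linear subspace $\R^d \times \{0\}$, which has empty interior. This contradiction gives tuples $z \neq z'$ in $U$ with $S(z) = S(z')$, i.e.\ $C \sim_h C'$ with $|C| = |C'| = n$.

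\textbf{Main obstacle.} The only nontrivial ingredient is the topological non-injectivity in Step 2. A naive rank or Jacobian argument is not enough, because $h$ is merely continuous, not smooth. I would invoke invariance of domain as above. If a more elementary route were preferred, one could use the Borsuk--Ulam theorem applied to a sphere inside $U$ of dimension at least $d$. The rest of the argument, the disjoint-ball trick, is routine; its purpose is to make distinct tuples yield distinct multisets despite the $S_n$ quotient.
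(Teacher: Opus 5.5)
Your proof is correct. It uses the same dimension count as the paper, but it turns that count into an actual argument at the points where the paper's proof is loose.

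The paper asserts that the averaged map on $(\calX\times\calY)^n$ ``cannot be injective'' because $n(d_x+d_y)>d$. It then adds a heuristic that input perturbations induce mean-preserving perturbations $\sum_i\delta_i=0$. For merely continuous $h$, the first claim needs a topological theorem such as invariance of domain, which you supply. The second would need differentiability plus a rank argument, as you anticipate.

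More substantively, non-injectivity on \emph{ordered} tuples is automatic for $n\ge 2$: swap two distinct coordinates. It does not by itself yield multisets $C\neq C'$, so the paper's inference is a non sequitur as written. Your restriction to a product of disjoint balls is exactly what repairs this. As a bonus, it shows that collisions occur arbitrarily close to any configuration of distinct interior points.

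Your two explicit hypotheses are also genuinely needed. Without continuity the statement is false. An injection of $\calX\times\calY$ into a Hamel basis of $\R$ over $\mathbb{Q}$ makes the sums injective on multisets of every size, already with $d=1$. Without interior the statement can fail trivially: a one-point domain has no distinct contexts of size $n$. The paper invokes continuity only inside its proof, not in the statement. It relies on full dimensionality only implicitly, through the phrase ``the domain has dimension $n(d_x+d_y)$''.
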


\begin{example}[$n=2$ Collision]
\label{ex:two-point}
Consider $d_x = d_y = 1$ and encoder $h(x,y) = (x, y) \in \R^2$. The contexts $C = \{(0, 1), (2, 1)\}$ and $C' = \{(0.5, 0.5), (1.5, 1.5)\}$ satisfy $\bar{h}_C = \bar{h}_{C'} = (1, 1)$, yet for an RBF kernel, $k(0, 2) \neq k(0.5, 1.5)$, so the GP posterior means differ. This two-point collision reappears in the ANP analysis (Theorem~\ref{thm:gp-not-anp}), where we show that even query-dependent reweighting cannot resolve it.
\end{example}

\begin{theorem}[GP Posterior is Not Finitely Representable]
\label{thm:gp-not-cnp}
A predictive map $F$ is $(d, \varepsilon)$-representable on $\calC_{\leq n}$ if there exist continuous $h: \calX \times \calY \to \R^d$ and $g: \R^d \times \calX \to \Theta$ such that $\sup_{C \in \calC_{\leq n}, x_t \in \calX} \|F(C, x_t) - g(\frac{1}{|C|}\sum_{(x,y) \in C} h(x,y), x_t)\| \leq \varepsilon$. Exact representability corresponds to $\varepsilon = 0$.

For any $d$ and any $n > d$, the GP posterior mean $\mu(x_t | C) = k(x_t, X_C) \bK(X_C, X_C)^{-1} y_C$ is not $(d, 0)$-representable on $\calC_{\leq n}$ for generic positive definite kernels $k$.
\end{theorem}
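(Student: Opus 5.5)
We argue by contradiction: assume $\mu(x_t\mid C) = g(\bar h_C, x_t)$ exactly on $\calC_{\leq n}$, with $h,g$ continuous and $\bar h_C = \frac{1}{|C|}\sum_{(x,y)\in C} h(x,y)$. The strategy is to produce two contexts of the same size that $h$ cannot separate but on which the GP posterior means differ at some target. By Proposition~\ref{prop:collisions}, collisions $C \sim_h C'$ with $|C|=|C'|=m$ exist as soon as $m > d/(d_x+d_y)$. Since $d_x+d_y \geq 2 > 1$, any $m$ with $d < m \leq n$ qualifies, and $n>d$ guarantees such an $m$. For any such collision we get $\mu(\cdot\mid C) = g(\bar h_C,\cdot) = g(\bar h_{C'},\cdot) = \mu(\cdot\mid C')$ as functions of $x_t$. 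It then suffices to show that this identity fails for at least one colliding pair.

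Collisions alone do not suffice: Proposition~\ref{prop:collisions} gives some colliding pair, but it might be one where the posterior means happen to coincide. For example, $C'$ could be a permutation of $C$, or the two could differ only by duplicated points that leave the posterior unchanged. I would therefore sharpen the collision argument into a dimension count. For $m$ distinct inputs, the map $C \mapsto \mu(\cdot\mid C)$ is injective on contexts in generic position. The coefficient vector $\bK(X_C,X_C)^{-1}y_C$ together with the kernel sections $k(\cdot,x_i)$ recovers $C$ whenever the functions $\{k(\cdot,x_i)\}$ are linearly independent and the coefficients are nonzero. This is where ``generic positive definite $k$'' enters: for strictly positive definite kernels such as the RBF, distinct $x_i$ give linearly independent $k(\cdot,x_i)$. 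It follows that the set of size-$m$ contexts has dimension $m(d_x+d_y)$, and it injects into the space of posterior-mean functions.

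The map $C\mapsto \bar h_C\in\R^d$ is continuous from this $m(d_x+d_y)$-dimensional manifold (generic, distinct-input, nonzero-coefficient contexts modulo $S_m$, an open set) into $\R^d$. Because $m(d_x+d_y) > d$, invariance of domain / Borsuk--Ulam style arguments show it cannot be injective on any open set. This is the same topological content that underlies Proposition~\ref{prop:collisions}, and I would invoke it on the generic open set rather than on all of $\calC_m$. The result is $C\neq C'$, both generic, with $\bar h_C=\bar h_{C'}$. By injectivity of the posterior map on generic contexts, $\mu(x_t\mid C)\neq\mu(x_t\mid C')$ for some $x_t$. This contradicts exact representability.

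The main obstacle is ensuring the collision lands in the generic region. A collision could occur only where inputs coincide, or where some coefficient $(\bK^{-1}y)_i$ vanishes so that a point is invisible to $\mu$. I would handle this by restricting Proposition~\ref{prop:collisions}'s argument to the open dense set where the $x_i$ are distinct and all coefficients are nonzero. Non-injectivity of continuous maps from an open subset of $\R^{m(d_x+d_y)}$ to $\R^d$ with $m(d_x+d_y)>d$ applies there directly. As a concrete sanity check, Example~\ref{ex:two-point} realizes this with $d=2$, $n=2$: the two contexts share the same $\bar h$ but give different RBF posterior means.
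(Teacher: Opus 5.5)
Your proof is correct, and it closes a step the paper's own proof leaves open. The paper takes a collision from Proposition~\ref{prop:collisions}. It then asserts that the colliding contexts can be chosen with the same $y$-values but different input geometry, and that different Gram matrices force $\mu(x_t\mid C)\neq\mu(x_t\mid C')$. Neither claim is justified. The cited proposition collides in the full $(x,y)$-space, not in $x$ alone. A change in $\bK$ does not by itself guarantee a different posterior mean. And the collision proposition is itself only an informal dimension count.

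You replace this with two ingredients. First, $C\mapsto\mu(\cdot\mid C)$ is injective on the open set of contexts with distinct inputs and nonzero coefficients $\bK^{-1}y_C$, by strict positive definiteness. Second, invariance of domain applies to the continuous map $C\mapsto\bar h_C$ on that set, so the collision is forced to land where outputs are distinguishable. This makes the continuity of $h$ do real work and gives ``generic'' a precise meaning. It also proves more than stated: non-representability holds whenever $m(d_x+d_y)>d$ for some $m\le n$. That stronger count is what covers Example~\ref{ex:two-point}, where $n=d=2$ lies outside the theorem's hypothesis. Finally, it shows the obstruction is an injectivity-versus-dimension phenomenon, not something specific to the Gram-matrix coupling that the paper's framing emphasizes.

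A few details to make explicit when writing this up:
\begin{itemize}
\item The linear-independence step must be applied to the union of the inputs of $C$ and $C'$. This is fine, since strict positive definiteness makes $k(\cdot,z_1),\dots,k(\cdot,z_r)$ independent for any distinct $z_j$.
\item The $S_m$ quotient can be handled by working in a small neighbourhood of a tuple with distinct points, where no two tuples are permutations of one another.
\item Openness of the nonzero-coefficient set uses continuity of $k$.
\item You need $\calX\times\calY$ to have nonempty interior; the paper assumes this tacitly as well.
\end{itemize}

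A lighter variant within your framework avoids the coefficient condition entirely. Freeze distinct inputs and collide only in $y_C\in\R^{m d_y}$, which invariance of domain permits since $m d_y\ge m>d$. The noise-free posterior mean interpolates, $\mu(x_i\mid C)=y_i$, so distinct $y_C$ already give distinct predictions at the context inputs. This version uses exactly the hypothesis $n>d$ and needs only invertibility of $\bK$.
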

 
\begin{theorem}[CNP Approximation Lower Bound]
\label{thm:cnp-lower-bound}
Let $\nu$ denote a distribution over target locations $x_t \in \calX$, context locations $x_1, \ldots, x_n \in \calX$ be fixed and let $y_C \sim \calN(0, \bK)$. For the GP posterior mean, any CNP with representation dimension $d < n$ satisfies:
\[
\inf_{\mathrm{CNP}} \frac{\E_{y_C} \int |\mu(x_t | C) - \widehat{\mu}(x_t | C)|^2 \, d\nu(x_t)}{\E_{y_C} \int |\mu(x_t | C)|^2 \, d\nu(x_t)} \geq 1 - \frac{d}{n}
\]
for any target distribution $\nu$ such that the whitened kernel vectors $\{\bK^{-1/2} k(x_t, X_C)\}_{x_t \sim \nu}$ have isotropic second moment.
\end{theorem}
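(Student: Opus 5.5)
Since the context locations $X_C$ are held fixed, the GP posterior mean depends on the context only through $y_C$, and it is linear in $y_C$: $\mu(x_t\mid C) = k_t^\top \bK^{-1} y_C$ with $k_t = k(x_t, X_C)$. I would whiten the problem by writing $y_C = \bK^{1/2} z$ with $z \sim \calN(0, \bI_n)$ and $u_t = \bK^{-1/2} k_t$. Then $\mu(x_t\mid C) = u_t^\top z$. The isotropy hypothesis says $\int u_t u_t^\top \, d\nu(x_t) = c\,\bI_n$ for some $c>0$, so the denominator equals $\E_z \int (u_t^\top z)^2\, d\nu = c\, n$.

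The key structural step is this. For fixed $X_C$, a CNP computes $\widehat{\mu}(x_t\mid C) = g(r(z), x_t)$, where $r(z) = \frac1n\sum_i h(x_i, y_i)$ is a measurable map $\R^n \to \R^d$ (through $y_C = \bK^{1/2}z$). The estimator therefore depends on $z$ only through the $d$-dimensional statistic $r(z)$. For each $x_t$, the optimal $L^2$ predictor of $u_t^\top z$ given $r(z)$ is $\E[u_t^\top z \mid r(z)]$, so the infimum over decoders $g$ is at least $\E\int \operatorname{Var}(u_t^\top z \mid r)\, d\nu$. Swapping integrals, this equals $\E \operatorname{tr}\bigl(c\,\operatorname{Cov}(z\mid r)\bigr) = c\bigl(n - \E\|\E[z\mid r]\|^2\bigr)$.

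The main obstacle is bounding $\E\|\E[z\mid r(z)]\|^2 \le d$ for an arbitrary continuous (nonlinear) $r:\R^n\to\R^d$. This is false in general: a continuous injection-like map, for example an interleaving of coordinates, can make $r$ nearly sufficient. So a literal proof over all continuous encoders fails, and the statement must implicitly restrict the CNP class. Given the exact $1-d/n$ constant, I would take the intended reading to be that, at fixed $X_C$, the relevant CNPs are those whose representation is linear in $y_C$ (e.g.\ $h(x,y) = a(x)\,y$, the standard setting in which GP-mean prediction is linear). Then $r = A z$ with $A \in \R^{d\times n}$ of rank at most $d$. The estimator $\E[z\mid r] = P z$ is the orthogonal projection onto the row space of $A$, so $\E\|Pz\|^2 = \rank P \le d$.

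Substituting, the numerator is at least $c(n-d)$, and dividing by $cn$ gives the ratio $\ge 1-d/n$. I would also check tightness by choosing $A$ to span any $d$ coordinates. In writing the proof I would state the linear-in-$y$ restriction explicitly, as the nonlinear case is exactly where the argument breaks. As a fallback for continuous nonlinear $h$, one could try a Lipschitz or smoothness constraint combined with a Gaussian-width argument, but that would not yield the clean constant.
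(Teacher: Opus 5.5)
Where you prove something, your argument is the paper's. Steps 2--4 of the appendix proof whiten to $z=\bK^{-1/2}y_C$, identify $\E[z\mid \tilde A z]$ with the orthogonal projection onto the row space of $\tilde A=A\bK^{1/2}$, and use isotropy to get $1-d/n$. Your trace identity $\int\|(\bI-P)u_t\|^2\,d\nu=c\,(n-\rank P)$ is tidier than the paper's finite-sample Eckart--Young step with $\sigma_i^2\approx\alpha m/n$. It also covers rank-deficient $A$, where the paper's $\tilde A^\top(\tilde A\tilde A^\top)^{-1}\tilde A$ is undefined. Your general reduction is an addition the paper lacks: for any encoder, the optimal normalized error equals $1-\E\|\E[z\mid r]\|^2/n$. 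This shows the theorem is exactly the claim $\E\|\E[z\mid r]\|^2\le d$.

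The two routes differ only at Step 1. The paper does not restrict the encoder class. Instead it asserts, citing Gaussian rate--distortion theory, that a linear encoder is optimal among all $d$-dimensional compressions of $y_C$. Your refusal to make that reduction is correct, and the paper's step is the real gap. Rate--distortion bounds limit the number of bits, and a noiseless $\R^d$-valued statistic can carry arbitrarily many.

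You should strengthen your counterexample, though. A CNP statistic is not an arbitrary continuous map but an additive one, $r=\frac1n\sum_i h(x_i,y_i)$, and interleaving still works in that class. Take $n=2$, $d=1$, and a width-$\Delta$ grid on $[-M,M]$ with $N=2M/\Delta$ cells. Let $h(x_1,\cdot)$ be continuous, equal to $j$ on the middle $(1-\eta)$-fraction of the $j$-th cell and linear in the gaps. Let $h(x_2,\cdot)$ be the same with steps of size $N$, and interpolate continuously in $x$. On products of flat parts, $2r=j+Nk$ identifies both cells. So a continuous piecewise-linear decoder that outputs $k(x_t,X_C)\bK^{-1}$ applied to the cell centre at each atom $j+Nk$ has error $O(\Delta^2)+O\bigl(\E[(\|y_C\|^2+M^2)\mathbf{1}_B]\bigr)$. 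Here the ramp-and-tail set $B$ has probability $O(\eta M)$ plus a Gaussian tail in $M$. Taking $M$ large and then $\Delta,\eta$ small drives the normalized error to $0$. Hence, with continuous $h$ and $g$, the infimum in the statement is $0$, not $\ge 1-d/n$.

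Your explicitly restricted version, with encoders affine in $y$ at the fixed context locations, is therefore the statement that is actually true. It is also sharp, since under isotropy every rank-$d$ linear encoder attains $1-d/n$.
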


\begin{example}[Linear Regression]
Even ordinary least squares illustrates the bottleneck. With $k$-dimensional features $\psi: \calX \to \R^k$, the OLS predictor $F(C, x_t) = \langle (\sum_i \psi(x_i)\psi(x_i)^\top)^{-1} \sum_i y_i \psi(x_i),\, \psi(x_t) \rangle$ requires encoding both the Gram matrix $\sum_i \psi(x_i)\psi(x_i)^\top$ ($k(k+1)/2$ parameters) and the moment vector $\sum_i y_i \psi(x_i)$ ($k$ parameters), giving $d = k(k+3)/2$. Since distinct Gram matrices generically yield distinct predictors, $d = \Omega(k^2)$ is also necessary.
\end{example}

The bound in Theorem~\ref{thm:cnp-lower-bound} is sharp and achieved by the PCA encoder $\bar{h}_C = A^* y_C$ that projects onto the top $d$ principal components of the whitened weight covariance. For $d \ll n$, most GP posterior structure is lost regardless of encoder sophistication. The isotropic second moment condition holds whenever context and target locations are drawn i.i.d.\ from the same distribution and the kernel is stationary (e.g.\ RBF); for non-isotropic targets the bound generalizes to $\sum_{i=d+1}^n \lambda_i / \sum_{i=1}^n \lambda_i$ where $\lambda_i$ are the eigenvalues of $\E_{x_t \sim \nu}[\tilde{w}_{x_t}\tilde{w}_{x_t}^\top]$. Full proofs appear in Appendix~\ref{app:cnp-proofs}.
 
\section{Attentive Neural Processes}
\label{sec:anp}
 
The key difference in ANPs is that the representation $r_C(x_t)$ depends on the query location $x_t$ via attention weights $\alpha_i(x_t; C) \propto \exp(q(x_t)^\top k(x_i, y_i) / \tau)$. Unlike CNP-equivalence, ANP-equivalence is query-dependent: context sets equivalent at one query may be distinguishable at another.
 
\begin{theorem}[ANPs Represent Kernel Smoothers]
\label{thm:anp-kernel}
For any continuous kernel $K: \calX \times \calX \to \R_+$, the kernel smoother $F(C, x_t) = \sum_i K(x_t, x_i) y_i / \sum_i K(x_t, x_i)$ is ANP-representable to arbitrary precision with value dimension $d = d_y + 1$. 
\end{theorem}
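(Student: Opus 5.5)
The plan is to realize the kernel smoother as a ratio of two attention-weighted sums. The value network carries both the numerator and the denominator, and the decoder divides. Concretely, take value dimension $d = d_y + 1$ and set
\[
v(x_i, y_i) = \bigl(w(x_i)\, y_i,\; w(x_i)\bigr) \in \R^{d_y+1},
\]
with $w$ chosen so that the softmax normalization cancels. For any attention weights $\alpha_i(x_t;C)$, the aggregate is $r_C(x_t) = (\sum_i \alpha_i w(x_i) y_i, \sum_i \alpha_i w(x_i))$. The decoder $g_\psi(x_t, (a, b)) = a/b$ then returns $\sum_i \alpha_i w(x_i) y_i / \sum_i \alpha_i w(x_i)$. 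The normalizing constant $\sum_j \exp(\cdot)$ of the softmax appears in both numerator and denominator and cancels. So the output equals $\sum_i \tilde K(x_t,x_i) y_i / \sum_i \tilde K(x_t,x_i)$, where $\tilde K(x_t,x_i) = \exp(q(x_t)^\top k(x_i)/\tau)\, w(x_i)$.

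It therefore suffices to make $\tilde K$ approximate $K$. In the simplest case $w \equiv 1$, we need $K(x_t,x_i) \approx \exp(q(x_t)^\top k(x_i)/\tau)$, i.e.\ $\log K$ must be approximated by a finite-rank inner product $q(x_t)^\top k(x_i)$. To get this:
\begin{enumerate}[label=(\roman*)]
\item Restrict to compact $\calX$ (implicit in "arbitrary precision"; otherwise work on compacts).
\item If $K > 0$, then $\log K$ is continuous on $\calX\times\calX$. By Stone--Weierstrass, sums $\sum_{m=1}^{d_k} a_m(x_t) b_m(x_i)$ of separable functions are uniformly dense in $C(\calX\times\calX)$. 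Choose such a sum within $\delta$ of $\log K$ and set $q = (a_m)$, $k = (b_m)$, $\tau = 1$.
\item This gives a multiplicative error: $\tilde K = K e^{\eta}$ with $|\eta|\le\delta$.
\end{enumerate}
Note that $d_k$ is unrestricted; only the value dimension is claimed to be $d_y+1$.

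The error bound is a short estimate. The output is a convex combination of the $y_i$ with weights $p_i \propto K_i e^{\eta_i}$ versus the exact weights $p_i^* \propto K_i$. Each ratio $p_i/p_i^*$ lies in $[e^{-2\delta}, e^{2\delta}]$, so $\sum_i|p_i - p_i^*| \le e^{2\delta}-1$. The prediction error is then at most $(e^{2\delta}-1)\max_i\|y_i\|$. This is uniform in $n$ and in $C$ when $\calY$ is bounded, and it tends to $0$ as $\delta\to0$. The decoder $a/b$ is continuous wherever $b>0$, and $b>0$ always holds since every $\alpha_i>0$. If a globally continuous decoder is required on all of $\R^{d}$, compose with a continuous extension, since $b$ stays in the compact range of $\tilde K$ values.

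The main obstacle is kernels that vanish somewhere, such as compactly supported ones, where $\log K$ is undefined. For these I will first replace $K$ by $K_\epsilon = K + \epsilon$, which is strictly positive and continuous. The smoother built from $K_\epsilon$ converges to that of $K$ uniformly, provided the denominators $\sum_i K(x_t,x_i)$ are bounded below on the considered contexts. Where the denominator is $0$, the original smoother is undefined anyway. I will state this mild condition, or restrict to $K>0$, which covers RBF-type kernels. The $w(x_i)$ factor is a spare degree of freedom that can absorb a multiplicative factor of $K$ depending only on $x_i$. It is not needed in the main argument.
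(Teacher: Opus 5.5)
This is correct and is essentially the paper's construction: with $w\equiv 1$ your value map is the paper's $v(x,y)=(y,1)$, the keys depend on $x$ alone, and $q(x_t)^\top k(x_i)\approx\log K(x_t,x_i)$ turns the softmax weights into normalized kernel weights, so your denominator is identically $1$ and the division reduces to the paper's linear read-out. Your additions make the argument more rigorous without changing the route. Stone--Weierstrass on separable sums is the right justification for approximating $\log K$ by an inner product of two separate maps, which a bare citation of universal approximation does not supply. The bound $(e^{2\delta}-1)\max_i\|y_i\|$ makes the paper's $\varepsilon(\delta)\to 0$ explicit. Your caveat about zeros of $K$ is genuinely needed, since the paper's use of $\log K$ silently assumes $K>0$; note only that your $K+\epsilon$ fix is uniform just over bounded context sizes, because its error is of order $n\epsilon\max_i\|y_i\|/\sum_i K(x_t,x_i)$.
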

 
\begin{proof}[Proof sketch]
Set $v(x, y) = (y, 1)$, $k(x, y) = \psi(x)$, $q(x_t) = \phi(x_t)$, and choose $\phi, \psi$ so that $q(x_t)^\top k(x_i, y_i) \approx \log K(x_t, x_i)$. Then the attention weights recover the normalized kernel weights, and a linear decoder extracts the kernel smoother. The full construction and universal approximation argument appear in Appendix~\ref{app:anp-proofs}.
\end{proof}
 
\begin{theorem}[ANP Characterization]
\label{thm:anp-char}
A predictive map $F$ is ANP-representable if and only if $F(C, x_t) = G( \E_{P_C^{x_t}}[v(x, y)], x_t )$ where $P_C^{x_t}(x_i, y_i) \propto \exp(s(x_t, x_i, y_i))$ for some continuous score function $s$. 
\end{theorem}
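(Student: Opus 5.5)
We prove the two directions separately. The forward direction (ANP-representable implies the stated form) should be a direct unpacking of the architecture. The reverse direction amounts to constructing query, key and value networks whose dot-product scores realize, or approximate, an arbitrary continuous score $s$.

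\emph{Necessity.} Given an ANP with networks $q,k,v$, temperature $\tau$ and decoder $g_\psi$, set $s(x_t,x_i,y_i) := q(x_t)^\top k(x_i,y_i)/\tau$. This $s$ is continuous because $q$ and $k$ are. The attention weights $\alpha_i(x_t;C)$ are then exactly the probabilities of the tilted empirical measure $P_C^{x_t}$, which puts mass $\propto \exp(s(x_t,x_i,y_i))$ on each context point. Hence $r_C(x_t)=\E_{P_C^{x_t}}[v(x,y)]$, and $F = G(r_C(x_t),x_t)$ with $G:=g_\psi$. Multisets with repeated points pose no difficulty: each copy receives its own weight, which is consistent with reading $P_C^{x_t}$ as a reweighting of the empirical measure.

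\emph{Sufficiency.} Suppose $F(C,x_t)=G(\E_{P_C^{x_t}}[v],x_t)$ for continuous $s$, $v$, $G$. Take the ANP's value network equal to $v$ and its decoder equal to $G$. It remains to realize $s$ as $q(x_t)^\top k(x_i,y_i)/\tau$.

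\begin{itemize}
\item \emph{Exact realization when $s$ is separable.} If $s$ admits a finite separable expansion $s(x_t,x,y)=\sum_{m=1}^{M}a_m(x_t)b_m(x,y)$, set $d_k=M$, $q=(a_m)_m$ and $k=(b_m)_m$.
\item \emph{Approximation in general.} On a compact domain, Stone--Weierstrass (for example, via tensor-product polynomials) gives a separable $\tilde s$ with $\|s-\tilde s\|_\infty\le\delta$.
\item \emph{Stability of the attention weights.} Each attention weight changes by a factor in $[e^{-2\delta},e^{2\delta}]$, so $\|P_C^{x_t}-\tilde P_C^{x_t}\|_{TV}\le e^{2\delta}-1$ uniformly in $C$ and $x_t$. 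This bound is independent of $n$, because softmax is a ratio.
\item \emph{Stability of the output.} Consequently $\|\E_P v-\E_{\tilde P}v\|\le 2\|v\|_\infty(e^{2\delta}-1)$. Uniform continuity of $G$ on the compact convex hull of $v$'s range then gives a uniform approximation of $F$. This is the same mechanism as in the proof of Theorem~\ref{thm:anp-kernel}, with $\log K$ replaced by a general $s$.
\end{itemize}

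The main obstacle is the meaning of ``if and only if'' when $s$ is not finitely separable. With finite $d_k$ and an exact bilinear score, only the separable $s$ are realized exactly, so the theorem should be read as exact equality for separable scores and uniform approximation (the sense used in Theorem~\ref{thm:anp-kernel}) in general. I would make this explicit, adopting the convention that ANP-representable means representable to arbitrary precision on compact $\calX\times\calY$. I would also note that when $q$ and $k$ are universal approximators, the closure of the realizable score class is exactly the set of continuous $s$. Finally, I would check that the approximation bound is uniform over all context sizes $n$, which the total-variation estimate provides.
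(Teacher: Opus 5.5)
Your necessity direction is the same as the paper's: you read off $s = q(x_t)^\top k(x,y)/\tau$ and take $G = g_\psi$. The sufficiency direction is where you diverge, and your version is the more careful one.

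The paper's proof of the reverse direction is a single line: ``set $q(x_t)^\top k(x,y) = \tau\, s(x_t,x,y)$.'' This tacitly assumes that $s$ admits an exact finite-dimensional bilinear factorization. For a continuous score that is not finitely separable in the split $x_t \mid (x,y)$, for example $s = -|x_t - x|$ (whose exponential is the Laplace kernel), no finite $d_k$ reproduces those attention weights exactly. The only freedom that leaves the softmax unchanged is adding a function of $x_t$ alone, and that preserves non-separability.

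Your route fixes this. You realize separable $s$ exactly and approximate general $s$ by a separable $\tilde s$ via Stone--Weierstrass. The multiplicative $e^{\pm 2\delta}$ stability of softmax then gives an $n$-independent bound on $\|\E_P v - \E_{\tilde P} v\|$. Uniform continuity of $G$ on the compact convex hull of $v$'s range finishes the argument. This is what actually supports the theorem, at the price of reading ``ANP-representable'' as ``representable to arbitrary precision,'' the same convention as in Theorem~\ref{thm:anp-kernel}.

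The paper's version buys a literal ``if and only if,'' but only under an unstated separability (or unbounded $d_k$) assumption. Yours gives a correct statement for all continuous $s$ on compact domains, uniform over context size.

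A minor point: your constants are loose by a factor of $2$. In fact $\sum_i |\alpha_i - \tilde\alpha_i| \le e^{2\delta}-1$, so $\|\E_P v - \E_{\tilde P} v\| \le \|v\|_\infty (e^{2\delta}-1)$. This does not affect the argument.
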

 
\begin{corollary}[Strict Separation]
$\calF_{\mathrm{CNP}}^{(d)} \subsetneq \calF_{\mathrm{ANP}}^{(d)}$ for all $d$.
\end{corollary}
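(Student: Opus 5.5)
The corollary has two parts: the inclusion $\calF_{\mathrm{CNP}}^{(d)} \subseteq \calF_{\mathrm{ANP}}^{(d)}$ and a strict separation.

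\emph{Inclusion.} Take $F(C,x_t) = \phi\bigl(\frac{1}{n}\sum_i \psi(x_i,y_i), x_t\bigr)$ in $\calF_{\mathrm{CNP}}^{(d)}$, using the form given by Theorem~\ref{thm:cnp-char}. Choose an ANP with constant query $q \equiv 0$ (or constant key $k \equiv 0$). Then every score $q(x_t)^\top k(x_i,y_i)/\tau$ vanishes, so $\alpha_i(x_t;C) = 1/n$ for all $i$ and all $x_t$. Set $v = \psi$, which has dimension $d$, and decoder $g_\psi = \phi$. The ANP then reproduces $F$ exactly with value dimension $d$. In the language of Theorem~\ref{thm:anp-char}, this is the case $s \equiv 0$, where $P_C^{x_t}$ is the uniform empirical distribution.

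\emph{Strictness.} For each $d$ we need an $F \in \calF_{\mathrm{ANP}}^{(d)} \setminus \calF_{\mathrm{CNP}}^{(d)}$.

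Step 1: pick the witness. Use a kernel smoother with scalar outputs, $F(C,x_t) = \sum_i K(x_t,x_i)y_i/\sum_i K(x_t,x_i)$, with an RBF kernel $K$ that is strictly positive. By Theorem~\ref{thm:anp-kernel}, $F$ is ANP-representable with value dimension $d_y+1 = 2$. If $d \ge 2$, pad the values with zero coordinates. If $d=1$, use a value of dimension $1$ and take the key to depend on $y$ as well, or simply state the separation for $d\ge 2$. I expect this dimension bookkeeping to be a minor nuisance.

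Step 2: show $F$ is not CNP-representable for that same $d$. Suppose $F(C,x_t) = \phi(\bar h_C, x_t)$ for some continuous $h:\calX\times\calY\to\R^d$. By Proposition~\ref{prop:collisions}, for $n > d/(d_x+d_y)$ there are distinct $C \ne C'$ of size $n$ with $\bar h_C = \bar h_{C'}$. The Indistinguishability proposition then gives $F(C,\cdot) = F(C',\cdot)$ on all of $\calX$.

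Step 3: contradict this by showing the kernel smoother separates distinct context sets. As a function of $x_t$, $F(C,\cdot)$ is a ratio of Gaussian mixtures. Its asymptotic behaviour and analyticity determine the multiset $\{(x_i,y_i)\}$, at least generically.

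The main obstacle is that the collision provided by Proposition~\ref{prop:collisions} is not under our control. The colliding pair might have identical smoother outputs, for example if the points differ only in $y$-values at a repeated $x$.

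To avoid this, I would not rely on the collision lemma directly. Instead I would use a dimension-counting argument. Restrict to contexts with $n$ distinct $x_i$ and generic $y_i$. On this set the map $C \mapsto F(C,\cdot)$ is injective, because Gaussian translates $e^{-(x_t-x_i)^2}$ with distinct centres are linearly independent. Concretely, from
\[
\textstyle\sum_i K(\cdot,x_i)y_i \cdot \sum_j K(\cdot,x'_j) = \sum_i K(\cdot,x'_i)y'_i \cdot \sum_j K(\cdot,x_j),
\]
linear independence of products of Gaussian translates recovers the multiset. On the other hand, the continuous map $C\mapsto \bar h_C \in \R^d$ cannot be injective on an $n(d_x+d_y)$-dimensional open set once $n(d_x+d_y) > d$, by invariance of domain. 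This yields two distinct contexts $C \ne C'$ with $F(C,\cdot)\ne F(C',\cdot)$ but $\mathrm{CNP}(C,\cdot)=\mathrm{CNP}(C',\cdot)$, which completes the separation.
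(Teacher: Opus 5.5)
\textbf{Comparison with the paper.} Your inclusion argument is the same as the paper's: constant query or key gives $\alpha_i=1/n$, with $v=\psi$ and decoder $\phi$. Your witness is also the same, the kernel smoother via Theorem~\ref{thm:anp-kernel}. For non-membership, however, you take a different and sounder route. The paper exhibits two fixed configurations, $C=\{(0,0),(1,1)\}$ and $C'=\{(0.25,0),(0.75,1)\}$, that collide for an appropriately chosen $h$ and are separated by the smoother. That shows one particular encoder fails, but $F\notin\calF_{\mathrm{CNP}}^{(d)}$ requires a failure for \emph{every} continuous $h$. Your invariance-of-domain argument supplies exactly that quantifier. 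You are also right that Proposition~\ref{prop:collisions} alone cannot do this, since its collisions are not under your control.

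\textbf{The unproved step.} You have not established injectivity of $C\mapsto F(C,\cdot)$ on an open set when both $x$ and $y$ vary. Cross-multiplying and using independence of Gaussian translates only gives one relation per midpoint value $m=(x_a+x'_b)/2$: the sum of $(y_a-y'_b)\,e^{-\|x_a-x'_b\|^2/4\ell^2}$ over the pairs $(a,b)$ with that midpoint vanishes. Recovering the multiset from these relations needs a further argument you have not given. Injectivity is also not automatic: it fails whenever all $y_i$ coincide, and for $n=1$, where $F\equiv y_1$ regardless of $x_1$. So it would have to be proved on an explicit open set.

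\textbf{A simpler fix.} Fix distinct $x_1,\dots,x_n$ and vary only $y$ in an open box $U\subseteq\calY^n$. Then $F(C,\cdot)=\sum_i y_i w_i$ with $w_i=K(\cdot,x_i)/\sum_j K(\cdot,x_j)$. Multiply a vanishing combination by the positive denominator and use independence of Gaussian translates; this shows the $w_i$ are linearly independent, so $y\mapsto F(C,\cdot)$ is a linear injection. The continuous map $y\mapsto\frac1n\sum_i h(x_i,y_i)$ from $U$ to $\R^d$ cannot be injective once $n d_y>d$, by invariance of domain. Any $y\neq y'$ it identifies give distinct contexts (the $x_i$ are distinct) with equal CNP outputs but different smoother outputs.

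\textbf{Two smaller points.} First, softmax weights already sum to one, so $v(x,y)=y$ produces the smoother with value dimension $d_y$. Hence $d=1$ needs no $y$-dependent keys, and falling back to $d\ge 2$ would not prove the statement as written. Second, Theorem~\ref{thm:anp-kernel} only promises arbitrary precision, so you should note that the RBF case is exact. Take $q(x_t)=(x_t/\ell^2,1)$, $k(x_i,y_i)=(x_i,-\|x_i\|^2/2\ell^2)$ and $\tau=1$; the omitted $-\|x_t\|^2/2\ell^2$ term cancels in the softmax.
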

 
\begin{theorem}[GP Posterior Requires Context-Context Coupling]
\label{thm:gp-not-anp}
For generic positive definite kernels, the GP posterior mean is not ANP-representable.
\end{theorem}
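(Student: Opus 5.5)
We argue by contradiction using the structural form from Theorem~\ref{thm:anp-char}: if the GP posterior mean were ANP-representable, then for every context $C$ and query $x_t$,
\[
\mu(x_t \mid C) = G\bigl(\E_{P_C^{x_t}}[v(x,y)],\, x_t\bigr),
\qquad
P_C^{x_t}(x_i,y_i) \propto \exp\bigl(s(x_t,x_i,y_i)\bigr).
\]
The key structural feature is that each context point enters only through the pair $\bigl(s(x_t,x_i,y_i),\, v(x_i,y_i)\bigr)$. This pair is computed from that point alone, so the representation is a \emph{separable} function of the points for each fixed query. More precisely, writing $a_i = e^{s(x_t,x_i,y_i)}$ and $b_i = a_i v(x_i,y_i)$, we get $r_C(x_t) = \sum_i b_i / \sum_i a_i$. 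This is a ratio of sums of single-point terms, with no pairwise term such as $k(x_i,x_j)$ anywhere. The plan is to exhibit a context family on which the GP mean depends on a genuinely pairwise quantity that no such ratio of sums can track.

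We use the two-point setting of Example~\ref{ex:two-point}, with $d_x=d_y=1$ and a generic kernel. The first step is a concrete exact-dependence argument. Take $C=\{(x_1,y_1),(x_2,y_2)\}$. The posterior mean is $\mu(x_t\mid C) = k_t^\top \bK^{-1} y_C$, where $\bK$ has off-diagonal entry $k(x_1,x_2)$.

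Next, fix the query $x_t$ and move to a limit where the query sees the points asymmetrically. Let $x_2 \to \infty$, or more generally to a location with $k(x_t,x_2)\approx 0$, while $k(x_1,x_2)$ is held at some varying value $\rho$. To do this cleanly, add a third point: choose context sets $C_\rho = \{(x_1,y_1),(z_\rho,y_2)\}$ in which $z_\rho$ ranges over locations with $k(x_t,z_\rho)$ constant but $k(x_1,z_\rho)=\rho$ varying. For generic kernels in $d_x\ge 2$, or for kernels that are not radial in one dimension, such a level-set curve exists. Along this curve the GP mean is
\[
\frac{(k_{t1} - \rho\, k_{t2})\,y_1 + (k_{t2} - \rho\, k_{t1})\,y_2}{1-\rho^2},
\]
which varies nontrivially in $\rho$.

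The ANP side must reproduce this through $\sum_i b_i/\sum_i a_i$, where the point-1 terms are fixed and the point-2 terms depend on $(z_\rho,y_2)$ only. Now vary $y_1$ and $y_2$ independently. Linearity of the GP mean in $y_C$ forces the map
\[
(y_1,y_2)\mapsto \mu
\]
to be linear with a coefficient on $y_1$ that depends on $\rho$, hence on $z_\rho$. In the ANP, however, the dependence on $y_1$ enters only through the fixed point-1 terms $(a_1,b_1)$ combined with the aggregate $(a_2,b_2)$. Differentiating in $y_1$ and $y_2$, we want to derive a functional equation. Its content is that the mixed partial structure of $G(\cdot)$ must encode the cross-term $\rho\,y_1$. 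This forces $G$ to recover $\rho = k(x_1,z)$ from quantities in which $x_1$ and $z$ appear only in separate summands.

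The main obstacle is turning this separability intuition into a rigorous contradiction, because $G$ is an arbitrary continuous function and $v$ may have large dimension. To close it, we would pick context sets with equal per-point multisets of $(s,v)$ values but different pairwise kernel values. Concretely, we aim to find $C\neq C'$ of size $n$ with $\{(s(x_t,x_i,y_i),v(x_i,y_i))\}_i = \{(s(x_t,x_i',y_i'),v(x_i',y_i'))\}_i$ at a fixed query, while $\mu(x_t\mid C)\neq\mu(x_t\mid C')$.

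Matching full multisets is too strong, so we weaken it to matching the two sums $\sum a_i$ and $\sum b_i$. This is $1+d$ equations in $n(d_x+d_y)$ unknowns. By the dimension count of Proposition~\ref{prop:collisions}, applied at each fixed $x_t$ to the encoder $(x,y)\mapsto(a,b)$, the solution set for large $n$ is a positive-dimensional fiber. The final step is to show that the GP mean is nonconstant on this fiber for generic $k$. We would argue this via analyticity: the set of kernels for which the GP mean is constant on every such fiber is a proper analytic subvariety, and the RBF collision of Example~\ref{ex:two-point} provides the witness that it is proper.

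The residual worry is that the fiber depends on $x_t$, whereas the contradiction only needs a single query. That is fine, since the argument fixes one $x_t$ throughout.
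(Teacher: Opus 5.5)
You are right that the paper's own argument is incomplete. It compares the factorized weight $\alpha_1\propto\exp(q(x_t)^\top k(x_1,y_1))$ with the two-point GP weight $w_1$, which depends on $k(x_1,x_2)$. That comparison is decisive only if the decoder is linear and $v(x,y)=y$. With an arbitrary continuous $G$ and a $d$-dimensional $v$, something more is needed.

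Your first step mirrors that argument and cannot supply it, because at $n=2$ separability is no obstruction at all. Take uniform attention and let $v$ list all monomials of degree at most two in $p=(x,y)$. The aggregate then gives the mean $\bar p$ and the empirical covariance $\frac14(p_1-p_2)(p_1-p_2)^\top$, hence the unordered pair and $k(x_1,x_2)$. So the family $C_\rho$ cannot yield a contradiction. For the same reason, the RBF collision of Example~\ref{ex:two-point} is a collision of the particular encoder $h(x,y)=(x,y)$ and witnesses nothing against an adversarial encoder.

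The real gap is the closing step. The fiber on which you want $\mu(x_t\mid\cdot)$ to be nonconstant is a fiber of the adversary's merely continuous map $C\mapsto(\sum_i a_i,\sum_i b_i)$. There is no analytic structure to exploit. Your ``proper analytic subvariety'' claim is either a restatement of the theorem (if it quantifies over all encoders), or, taken per encoder, it does not survive the union over uncountably many encoders.

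More fundamentally, dimension counting cannot force nonconstancy. Even granting a fiber of dimension $n(d_x+d_y)-d-1$, the level sets of the scalar $\mu(x_t\mid\cdot)$ have codimension one. Fibers lying inside level sets is exactly what a successful representation looks like.

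The missing idea is additivity, and you already matched the right object, the unnormalized sums:
\begin{itemize}
\item \emph{Collisions propagate.} Since the unnormalized sums agree, $C\cup A$ and $C'\cup A$ also collide for every finite $A$. Hence $\mu(x_t\mid C\cup A)=\mu(x_t\mid C'\cup A)$ for all $A$, including $A=\emptyset$.
\item \emph{One appended point.} Take $A=\{(x^*,y^*)\}$ and write $\mu_C,\Sigma_C$ for the posterior mean and covariance given $C$. The update is $\mu(x_t\mid C\cup A)=\mu_C(x_t)+\frac{\Sigma_C(x_t,x^*)}{\Sigma_C(x^*,x^*)}\bigl(y^*-\mu_C(x^*)\bigr)$. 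Matching it across $C$ and $C'$ for all $y^*$ forces $\mu_C(x^*)=\mu_{C'}(x^*)$ wherever $\Sigma_C(x_t,x^*)\neq0$. For analytic strictly positive definite $k$, this gives $\mu_C\equiv\mu_{C'}$.
\item \emph{Identification.} Choose the collision inside a small open set of configurations with distinct locations, $x_t\notin X_C$, and no zero entries in $\bK^{-1}y_C$. It exists there by invariance of domain once $n(d_x+d_y)>d+1$. Linear independence of the sections $k(\cdot,x_i)$ then gives $C=C'$, a contradiction.
\end{itemize}
This is where coupling genuinely enters: appending a point changes the GP weights on the old points but leaves the ANP's per-point terms untouched.
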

 
\begin{proof}[Proof sketch]
The GP posterior weight $w_i = [k(x_t, X_C) \bK^{-1}]_i$ depends on the full Gram matrix, coupling all context points. ANP attention weights factor as $\alpha_i \propto f(x_t, x_i, y_i)$, depending only on query-point pairs independently. The simplest counterexample has $n=2$: the GP weight on $y_1$ is
\[
w_1 = \frac{k(x_t, x_1) k(x_2, x_2) - k(x_t, x_2) k(x_1, x_2)}{k(x_1, x_1) k(x_2, x_2) - k(x_1, x_2)^2},
\]
which depends on the inter-context kernel value $k(x_1, x_2)$. No ANP attention weight $\alpha_1 \propto \exp(q(x_t)^\top k(x_1, y_1))$ can capture this coupling. See Appendix~\ref{app:anp-proofs} for the general argument.
\end{proof}
  
\section{Transformer Neural Processes}
\label{sec:tnp}
 
\subsection{Polynomial Computation via Self-Attention}
 
We analyze TNPs under two structural assumptions that enable clean characterization: position-based self-attention (attention weights depend only on input positions, not representations) and residual connections. These are relaxed in Section~\ref{sec:lower-bounds}. Full definitions appear in Appendix~\ref{app:tnp-assumptions}.
 
\begin{theorem}[Polynomial Representation]
\label{thm:polynomial}
Under position-based attention with residual connections, after $L$ self-attention layers with scalar value weights $W_v^{(\ell)} = \alpha_\ell \bI$, the representation matrix satisfies $H^{(L)} = p_L(\tilde{\bK}) H^{(0)}$ where $p_L(\tilde{\bK}) = \prod_{\ell=1}^{L} (\bI + \alpha_\ell \tilde{\bK})$ is a degree-$L$ polynomial in the attention matrix $\tilde{\bK}$.
\end{theorem}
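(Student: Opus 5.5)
The argument is an induction on the depth $L$. We first write one layer in matrix form. Under position-based attention, the self-attention weights $\beta^{(\ell)}_{ij}$ depend only on the context inputs $x_i, x_j$, not on the evolving representations $\tilde{h}^{(\ell-1)}$. They therefore form one fixed $n\times n$ matrix $\tilde{\bK}$ (the row-stochastic attention matrix built from positions), shared across layers. This sharing is part of the position-based assumption in Appendix~\ref{app:tnp-assumptions}, and we will state it explicitly.

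Stack the representations as rows of $H^{(\ell)} \in \R^{n\times d}$. With a residual connection, a layer reads $\tilde{h}_i^{(\ell)} = \tilde{h}_i^{(\ell-1)} + \sum_j \beta_{ij} W_v^{(\ell)} \tilde{h}_j^{(\ell-1)}$. In matrix form this is
\[
H^{(\ell)} = H^{(\ell-1)} + \tilde{\bK} H^{(\ell-1)} (W_v^{(\ell)})^\top .
\]
Substituting $W_v^{(\ell)} = \alpha_\ell \bI$, the right multiplication becomes a scalar, so
\[
H^{(\ell)} = (\bI + \alpha_\ell \tilde{\bK}) H^{(\ell-1)} .
\]
This scalar reduction is the crux. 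With a general $W_v$ the update is the Kronecker-structured map $H \mapsto H + \tilde{\bK} H W_v^\top$. That map does not act as a polynomial in $\tilde{\bK}$ on the left alone, and this is exactly why the hypothesis is needed.

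Iterating from $\ell=1$ to $L$ gives
\[
H^{(L)} = (\bI+\alpha_L\tilde{\bK})\cdots(\bI+\alpha_1\tilde{\bK})H^{(0)} .
\]
All factors are polynomials in the same matrix $\tilde{\bK}$, so they commute, and the ordered product equals $\prod_{\ell=1}^L(\bI+\alpha_\ell\tilde{\bK}) = p_L(\tilde{\bK})$. Expanding shows that $p_L$ has degree at most $L$, with leading coefficient $\prod_\ell \alpha_\ell$. The degree is exactly $L$ whenever every $\alpha_\ell \neq 0$.

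We will also remark on the interpretation. The monomial $\tilde{\bK}^m$ has entries that are sums over length-$m$ paths among the context points. This justifies the ``$L$-hop'' reading of the result.

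The main obstacle is not computational. It is making precise that $\tilde{\bK}$ is layer-independent: the queries and keys act on positional encodings only, with no per-layer projections. If the query and key maps differed across layers, one would obtain matrices $\tilde{\bK}_\ell$ that need not commute, and $H^{(L)}$ would only be a noncommutative product. We will therefore fix shared positional attention in the assumption. Alternatively, we can note that the degree bound still holds in the generalized sense of a product of $L$ affine factors.
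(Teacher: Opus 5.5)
Your proposal is correct and follows the paper's argument. You write one residual, position-based layer with $W_v^{(\ell)}=\alpha_\ell\bI$ as $H^{(\ell)}=(\bI+\alpha_\ell\tilde{\bK})H^{(\ell-1)}$ and induct on $L$. Your extra remarks make explicit what the paper leaves implicit: the shared position encoders $q_s,k_s$ make $\tilde{\bK}$ layer-independent, so the factors commute, and the degree is exactly $L$ only when every $\alpha_\ell\neq 0$, since the leading coefficient is $e_L(\alpha)=\prod_\ell\alpha_\ell$.
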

 
\begin{proof}
By induction: $H^{(L)} = (\bI + \alpha_L \tilde{\bK}) H^{(L-1)} = \prod_{\ell=1}^{L} (\bI + \alpha_\ell \tilde{\bK}) H^{(0)}$.
\end{proof}
 
\subsection{Approximating GP Posteriors}
 
The GP posterior involves $\bK^{-1}$. Via the Neumann series $\bK^{-1} = \frac{1}{\lambda_{\max}} \sum_{m=0}^{\infty} (\bI - \bK/\lambda_{\max})^m$, this can be approximated by matrix polynomials. The truncation error is $O(\rho^L / \lambda_{\min})$ where $\rho = 1 - 1/\kappa$, requiring $L = O(\kappa \log(1/\varepsilon))$ layers. The Chebyshev construction improves this to $O(\sqrt{\kappa} \log(1/\varepsilon))$:
 
\begin{proposition}[Chebyshev Upper Bound]
\label{prop:chebyshev-upper}
There exist scalar value weights $\alpha_1, \ldots, \alpha_L$ such that $\| \prod_{\ell=1}^{L}(\bI + \alpha_\ell \bK) - \bK^{-1} \| \leq \frac{2}{\lambda_{\min}} \left( \frac{\sqrt{\kappa} - 1}{\sqrt{\kappa} + 1} \right)^L$.
\end{proposition}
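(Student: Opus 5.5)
The plan is to reduce the matrix statement to a scalar polynomial approximation problem on the spectrum of $\bK$, and then use the classical Chebyshev residual polynomial. Since $\bK$ is symmetric positive definite, write $\bK = U \Lambda U^\top$ with eigenvalues in $[\lambda_{\min}, \lambda_{\max}]$. For any polynomial $p$,
\[
\| p(\bK) - \bK^{-1} \| = \max_i |p(\lambda_i) - 1/\lambda_i| \leq \sup_{\mu \in [\lambda_{\min}, \lambda_{\max}]} \frac{|1 - \mu p(\mu)|}{\mu} \leq \frac{1}{\lambda_{\min}} \sup_{\mu \in [\lambda_{\min}, \lambda_{\max}]} |r(\mu)|,
\]
where $r(\mu) = 1 - \mu p(\mu)$ is the residual polynomial. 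It has degree $\deg p + 1$ and satisfies $r(0) = 1$. So it suffices to find a residual $r$ with $r(0)=1$ and small sup-norm on the interval, and then to realize $p = (1-r)/\mu$ in the product form $\prod_{\ell}(\bI + \alpha_\ell \bK)$ of Theorem~\ref{thm:polynomial}.

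For the residual I would take the shifted and scaled Chebyshev polynomial
\[
r(\mu) = \frac{T_{m}\big((\lambda_{\max} + \lambda_{\min} - 2\mu)/(\lambda_{\max} - \lambda_{\min})\big)}{T_{m}\big((\lambda_{\max}+\lambda_{\min})/(\lambda_{\max}-\lambda_{\min})\big)}.
\]
Since $|T_m| \leq 1$ on $[-1,1]$, and by the standard estimate $T_m(t_0) \geq \tfrac12\big(t_0 + \sqrt{t_0^2-1}\big)^m$ with $t_0 = (\kappa+1)/(\kappa-1)$, this gives
\[
\sup_{[\lambda_{\min},\lambda_{\max}]} |r| \leq 2\left(\frac{\sqrt\kappa - 1}{\sqrt\kappa+1}\right)^{m}.
\]
Combined with the reduction above, this yields exactly the constant $2/\lambda_{\min}$ in the statement, with degree $m \approx L$. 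If needed, one extra layer or a slightly sharper Markov-type bookkeeping would match the exponent to exactly $L$.

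The main obstacle is the factorization step, i.e.\ writing $p = (1-r)/\mu$ as $\prod_{\ell=1}^L (1 + \alpha_\ell \mu)$. Two issues arise.
\begin{enumerate}[label=(\roman*)]
\item \emph{Normalization.} The product form forces $p(0) = 1$, whereas $p(0) = -r'(0)$, which is generally not $1$. My first attempt would be to absorb the constant $c = p(0)$ into an extra scalar on $W_v$ (or, equivalently, into the linear readout). Alternatively, I would modify the construction to include one layer that implements the rescaling, interpreting the proposition up to this global scalar.
\item \emph{Real roots.} The roots of $p$ are the points where $T_m$ takes the value $T_m(t_0) > 1$. Besides $t_0$ itself, at most one other such point is real, so the remaining roots come in complex-conjugate pairs. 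Hence $\alpha_\ell = -1/\text{root}_\ell$ are complex in conjugate pairs.
\end{enumerate}
For (ii), the product $\prod_\ell (\bI + \alpha_\ell \bK)$ is still a real matrix, so the bound holds verbatim if complex scalar weights are allowed. To stay within real weights, I would group each conjugate pair into a real quadratic factor $1 + a\mu + b\mu^2$. I would then realize it with two residual layers using real weights plus a skip-connection combination; this may require slightly enriching the layer (e.g.\ a block-scalar $W_v$ acting on two channels). Failing that, I would fall back to the Richardson ordering of real Chebyshev-node step sizes applied to the residual iteration $x \mapsto x + \alpha_\ell(b - \bK x)$. That realizes the same Chebyshev residual with real $\alpha_\ell$ and gives the identical bound.
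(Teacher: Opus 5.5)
\textbf{There is a genuine gap, and it cannot be closed, because the statement as written is false.}

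Several parts of your proposal are correct:
\begin{itemize}
\item the reduction to the residual $r(\mu)=1-\mu p(\mu)$;
\item the Chebyshev estimate $\sup_{[\lambda_{\min},\lambda_{\max}]}|r|\le 2\rho^m$ with $\rho=(\sqrt\kappa-1)/(\sqrt\kappa+1)$ (since $\deg p=m-1$, you can simply take $m=L+1$);
\item your diagnosis of both obstructions, including the count of real solutions of $T_m(t)=T_m(t_0)$.
\end{itemize}

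The gap is exactly the factorization step, and none of your fixes closes it for the inequality as stated.
\begin{itemize}
\item A scalar on $W_v$ only rescales $\alpha_\ell$. The identity in $\bI+\alpha_\ell\bK$ comes from the skip connection, so $p(0)=1$ is forced.
\item A scalar in the encoder or readout frees $p(0)$, but it proves a different inequality and still leaves non-real roots.
\item Complex weights or block-valued $W_v$ leave the class of real scalar weights.
\item The Richardson iteration $x\mapsto x+\omega_\ell(b-\bK x)$ re-injects $b$ at every layer. Its output is $p(\bK)b$ with $p=(1-r)/\mu$, which is precisely the polynomial you showed has $p(0)\ne 1$ and mostly non-real roots. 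It is not a product $\prod_\ell(\bI+\alpha_\ell\bK)$.
\end{itemize}

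The product form pins $p(0)=1$, while the target $1/\lambda$ and the bound both rescale with $\bK$. A concrete counterexample: take $\bK=\diag(10,40)$ and $L=1$, so $\kappa=4$, $\rho=1/3$, and the claimed bound is $1/15$.
\begin{itemize}
\item The eigenvalue $10$ requires $|0.9+10\alpha|\le 1/15$, which forces $\alpha\le -1/12$.
\item The eigenvalue $40$ requires $|0.975+40\alpha|\le 1/15$, which forces $\alpha\ge -5/192>-1/12$.
\end{itemize}
These are incompatible. More generally, suppose $\bK$ has $L+1$ distinct eigenvalues $\mu_i$ and you replace $\bK$ by $s\bK$. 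This leaves $\kappa$ unchanged but forces $q(\mu)=p(s\mu)$ to be $O(1/s)$ at every $\mu_i$. Lagrange interpolation then gives $|q(0)|=O(1/s)$, contradicting $q(0)=1$ for large $s$.

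The paper's own proof sidesteps the factorization only by an error. Taking $-1/\alpha_\ell$ to be the Chebyshev nodes makes $\prod_\ell(1+\alpha_\ell\lambda)$ \emph{itself} the normalized Chebyshev residual, which has size at most $2\rho^L$ on the spectrum. So $\prod_\ell(\bI+\alpha_\ell\bK)$ lies at distance at least $\lambda_{\min}^{-1}-2\rho^L$ from $\bK^{-1}$. The paper's step calling $1-\lambda p_L(\lambda)$ the minimal residual confuses $p_L$ with $r_L$.

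What your argument does prove is the corrected statement. With $\omega_\ell$ the reciprocals of those nodes, the real degree-$(L-1)$ polynomial $P(\bK)=\bigl[\bI-\prod_{\ell=1}^L(\bI-\omega_\ell\bK)\bigr]\bK^{-1}$ satisfies $\|P(\bK)-\bK^{-1}\|\le 2\rho^L/\lambda_{\min}$. Realizing $P$ in a TNP requires something beyond the bare scalar-weight product, such as re-injecting the input at each layer, or non-scalar value matrices together with a linear readout.
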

 
The weights are the optimal Chebyshev iteration parameters $\alpha_\ell = -2/(\lambda_{\max} + \lambda_{\min} + (\lambda_{\max} - \lambda_{\min})\cos\theta_\ell)$ with $\theta_\ell = (2\ell-1)\pi/(2L)$. See Appendix~\ref{app:tnp-upper} for the full proof and idealized TNP approximation construction.
 
\subsection{Lower Bounds for Representation-Based Attention}
\label{sec:lower-bounds}
 
The upper bound assumed position-based attention. We now show the $\Omega(\sqrt{\kappa}\log(1/\varepsilon))$ depth requirement holds even for fully adaptive representation-based attention.
 
The key insight is that the GP posterior mean $\mu(x_t|C) = k(x_t, X_C)\bK^{-1}y_C$ is linear in $y_C$. Linearizing the TNP at $y_C = 0$ neutralizes adaptive routing: the Jacobian $\partial F / \partial y_C|_{y_C=0}$ becomes a polynomial in the base attention matrix (evaluated at $y_C = 0$), regardless of how sophisticated the attention mechanism.
 
\begin{theorem}[TNP Depth Lower Bound]
\label{thm:main-lower-bound}
For any $\kappa > 1$, there exists a family of context configurations with kernel matrices satisfying $\kappa(\bK) = \kappa$ such that any $L$-layer TNP with representation-based self-attention achieving $\varepsilon$-approximation of $\bK^{-1} y_C$ must satisfy:
\[
L \geq \frac{\sqrt{\kappa}}{4} \cdot \log\left( \frac{c}{\varepsilon} \right)
\]
for a universal constant $c > 0$.
\end{theorem}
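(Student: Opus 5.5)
The plan is to reduce the nonlinear, adaptive TNP to a linear polynomial iteration by linearizing at $y_C = 0$, and then to invoke the classical Chebyshev barrier for polynomial approximation of $1/\mu$ on $[\lambda_{\min},\lambda_{\max}]$.

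\textbf{Reduction to a polynomial in the base attention matrix.} Fix a context configuration $X_C$. Assume, as in the setup of Theorem~\ref{thm:polynomial}, residual connections and value maps that act linearly on the $y$-dependent part of the representation. Write the TNP output as a map $y_C \mapsto F(y_C)$. Its target is $\bK^{-1} y_C$, which is linear in $y_C$, so any $\varepsilon$-approximation on a neighbourhood of $0$ forces the Jacobian $J = \partial F/\partial y_C|_{y_C=0}$ to $\varepsilon$-approximate $\bK^{-1}$ in operator norm. This follows by taking difference quotients along directions of norm $\delta$ and letting $\delta \to 0$, with the approximation error measured in a scale-invariant (relative) sense so that it survives the limit.

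Next, differentiate one layer $\tilde h^{(\ell)} = \tilde h^{(\ell-1)} + \sum_j \beta^{(\ell)}_{ij} W_v \tilde h^{(\ell-1)}_j$ at $y_C = 0$. By the product rule, the derivative has two pieces: $\beta^{(\ell)}(0)$ times the derivative of the values, plus the derivative of $\beta$ times the values evaluated at $0$. The key claim is that the second piece does not contribute to the $y$-output channel. The reason is that at $y_C=0$ the $y$-channel of every value vanishes, so perturbations of the attention weights multiply zero. Consequently the $y$-channel Jacobian propagates as $J^{(\ell)} = (\bI + \alpha_\ell A_\ell) J^{(\ell-1)}$, where $A_\ell = \beta^{(\ell)}(0)$ is the attention matrix at $y_C=0$. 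These matrices depend only on positions, exactly as in the position-based case. To get a single matrix, the hard family will be chosen so that all $A_\ell$ equal a common matrix $\tilde\bK$. The simplest choice is to let the key/query maps at $y=0$ be fixed functions of $x$, which we are free to construct for the hard instance; equivalently, the instance is chosen with $\tilde\bK$ proportional to $\bK$. With this, $J = p_L(\tilde\bK)$ for a polynomial of degree at most $L$, and cross-attention readout only multiplies by a fixed vector. If different layers have different $A_\ell$, I would restrict to commuting families (a common eigenbasis) to retain a scalar reduction. This step is the main obstacle, since in general the layer matrices need not commute.

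\textbf{Chebyshev barrier.} Choose the configurations so that $\bK$ is diagonalizable with a common eigenbasis and its spectrum fills $[\lambda_{\min},\lambda_{\max}]$ densely enough, with $\lambda_{\max}/\lambda_{\min}=\kappa$. For example, take $n$ large with eigenvalues forming an $\eta$-net. Then $\|p(\bK)-\bK^{-1}\|\le\varepsilon$ implies $\sup_{\mu\in\mathrm{spec}}|p(\mu)-1/\mu|\le\varepsilon$. Equivalently, $q(\mu)=1-\mu p(\mu)$ has $q(0)=1$, degree at most $L+1$, and $|q|\le \lambda_{\max}\varepsilon$ on the spectrum. The classical minimax result says that among polynomials with $q(0)=1$, the minimal sup over $[\lambda_{\min},\lambda_{\max}]$ is $1/T_{L+1}\bigl((\kappa+1)/(\kappa-1)\bigr)\ge \rho^{L+1}$ with $\rho=(\sqrt\kappa-1)/(\sqrt\kappa+1)$. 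A density/Markov-inequality argument transfers this from the discrete spectrum to the full interval with a constant loss. Hence $\rho^{L+1}\lesssim \lambda_{\max}\varepsilon$, which after normalizing gives $(L+1)\log(1/\rho)\ge\log(c/\varepsilon)$.

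\textbf{Finishing the bound.} Since $\log(1/\rho)=\log\frac{1+1/\sqrt\kappa}{1-1/\sqrt\kappa}\le 4/\sqrt\kappa$ for $\kappa>1$ (indeed it is at most $2/\sqrt\kappa$ for large $\kappa$), absorbing the $+1$ and the constants into $c$ yields $L\ge \frac{\sqrt\kappa}{4}\log(c/\varepsilon)$.
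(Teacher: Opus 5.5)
\textbf{There is a genuine gap in the reduction step.} You claim the attention-derivative term ``multiplies zero''. That holds only under three extra conditions: the representation splits into a part that depends only on $x$ and a $y$-part that is identically zero at $y_C=0$; $W_v$ never mixes the first part into the second; and the decoder reads only the $y$-part. None of this is available for a general TNP with representation-based attention. At $y_C=0$ the representations $h_j^{(\ell-1)}$ still carry the positional encoding $a(x_j)$ and are nonzero, $W_v$ is a general $d\times d$ matrix, and the decoder sees every coordinate. So $\sum_j (\partial\beta^{(\ell)}_{ij}/\partial y_m)\, W_v h_j^{(\ell-1)}$, evaluated at $y_C=0$, does not vanish.

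This is exactly the term $\Gamma^{(\ell)}B^{(\ell-1)}$ that the paper keeps in Theorem~\ref{app:thm:jacobian-evolution}. It produces the interleaved, non-commutative form $\sum C_0\tilde{\bK}^{m_1}C_1\cdots\tilde{\bK}^{m_k}C_k$ of Corollary~\ref{cor:polynomial-structure}, which the paper counts as degree up to $2L$. That is why the statement has $\sqrt{\kappa}/4$ rather than the $\sqrt{\kappa}/2$ your degree-$(L+1)$ count would give.

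Likewise, in a lower bound you cannot ``construct'' the network's query and key maps for the hard instance. The network is arbitrary and you only choose the contexts. The most one can do is impose, as the paper does in Assumption~\ref{app:ass:attention-kernel}, that the attention at $y_C=0$ is the row-normalized kernel. With your extra assumptions (a vanishing $y$-channel, a scalar value weight on it, and a chosen common attention matrix) you have reinstated the position-based scalar-weight model of Theorem~\ref{thm:polynomial}. What you prove is the bound for that model, not for representation-based attention. Even there, the initial Jacobian is $\mathrm{diag}(b(x_i))$ rather than $\bI$, so $J$ equals $p(\tilde{\bK})$ only up to a diagonal factor chosen by the network.

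\textbf{The missing idea is how to get a scalar reduction without commutativity,} which is the obstacle you flag yourself. A single configuration with dense spectrum cannot work once the $X$-dependent $C_i$ are present. For one fixed $X$ they are unconstrained, so $C_0=\bK^{-1}$ with all $m_i=0$ is already admissible. A lower bound must therefore come from a family along which the $C_i$ stay fixed while the spectrum moves. Restricting the contexts also cannot force the network's per-layer attention matrices to share an eigenbasis.

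The paper's device is the rank-one family of Lemma~\ref{app:lem:eigenvalue-family}, $\bK_t=\bI+(1/\kappa-1+t)v_1v_1^\top$ with $v_1\perp\mathbf{1}$. Row sums equal $1$, so $\tilde{\bK}_t=\bK_t$, and $\tilde{\bK}_t^m=\bI+(\mu_1^m-1)v_1v_1^\top$. Hence every interleaved product sandwiched as $v_1^\top(\cdot)v_1$ is a univariate polynomial in $\mu_1(t)=1/\kappa+t$, provided the $C_i$ do not vary with $t$, as Lemma~\ref{app:lem:univariate} asserts. The Chebyshev barrier for $1/\mu$ on $[1/\kappa,1]$ is then applied across the family by varying $t$, not on the spectrum of one matrix. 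This also removes your Markov/density step and the need for $n\to\infty$.

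Within the commuting model your route is a valid alternative. Your relative-error linearization is also cleaner than the paper's Taylor bound, which leaves a non-vanishing additive $L_2/2$, though it changes the hypothesis.

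One slip in the final step: $\log(1/\rho)=2\,\mathrm{arctanh}(1/\sqrt{\kappa})\ge 2/\sqrt{\kappa}$. So it is never ``at most $2/\sqrt{\kappa}$'', and the inequality $\log(1/\rho)\le 4/\sqrt{\kappa}$ fails for $\kappa$ close to $1$.
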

 
\begin{proof}[Proof sketch]
The argument proceeds in four stages:
 
\emph{(1) Linearization.} By Taylor expansion (Lemma~\ref{lem:linearization} in Appendix~\ref{app:tnp-lower}), $\varepsilon$-approximation of the linear target implies the Jacobian $M(X) = \partial F / \partial y_C|_{y_C=0}$ satisfies $\|M(X) - \bK^{-1}\| \leq O(\varepsilon)$.
 
\emph{(2) Polynomial structure.} Differentiating through $L$ self-attention layers shows $M(X)$ is a matrix polynomial of degree at most $2L$ in the attention matrix $\tilde{\bK}|_{y_C=0}$ (Appendix~\ref{app:tnp-lower}, Corollary~\ref{cor:polynomial-structure}).
 
\emph{(3) Univariate reduction.} We construct a family of kernel matrices $\{\bK_t\}$ (Lemma~\ref{app:lem:eigenvalue-family}) where all eigenvalues except the minimum are equal to 1, with condition number $\kappa$. On this family, the quadratic form $v_1^\top M(\bK_t) v_1$ reduces to a univariate polynomial in the minimum eigenvalue $\mu_1(t)$.
 
\emph{(4) Chebyshev barrier.} The target $v_1^\top \bK_t^{-1} v_1 = 1/\mu_1(t)$ must be approximated by a degree-$2L$ polynomial on $[1/\kappa, 1]$. The classical Chebyshev lower bound gives error at least $\Omega(\rho^{2L})$ where $\rho = (\sqrt{\kappa}-1)/(\sqrt{\kappa}+1)$. Solving for $L$ yields the stated bound.
\end{proof}
 
The full proof, including the Jacobian evolution through representation-based attention layers, the eigenvalue-controlled kernel family construction, and spectral analysis, appears in Appendix~\ref{app:tnp-lower}.
 
\begin{corollary}[Tight Characterization]
\label{cor:tight}
For well-conditioned context geometries, $L = \Theta(\sqrt{\kappa} \log(1/\varepsilon))$ self-attention layers are both necessary and sufficient for $\varepsilon$-accurate GP posterior approximation.
\end{corollary}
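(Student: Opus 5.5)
The corollary follows by combining the matching bounds already in hand: Proposition~\ref{prop:chebyshev-upper} gives sufficiency and Theorem~\ref{thm:main-lower-bound} gives necessity. The work is mostly to state both in the same error metric and the same quantities ($\kappa$, $\varepsilon$, $L$), and to make precise what ``well-conditioned context geometries'' means so that the two bounds describe the same regime.

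For sufficiency, I would fix a context configuration whose Gram matrix $\bK$ has condition number $\kappa$. I would take the Chebyshev weights $\alpha_\ell$ of Proposition~\ref{prop:chebyshev-upper} and realize the product $\prod_\ell(\bI+\alpha_\ell\bK)$ through Theorem~\ref{thm:polynomial}, using position-based attention whose attention matrix is (a rescaling of) $\bK$, as in the idealized construction of Appendix~\ref{app:tnp-upper}. The readout error on $\bK^{-1}y_C$ is then at most $\frac{2}{\lambda_{\min}}\rho^L\|y_C\|$, where $\rho=(\sqrt\kappa-1)/(\sqrt\kappa+1)$. Since $\log(1/\rho)\ge 2/\sqrt\kappa$, requiring this to be at most $\varepsilon$ is implied by
\[
L\ge \tfrac{\sqrt\kappa}{2}\log\bigl(2\|y_C\|/(\lambda_{\min}\varepsilon)\bigr).
\]
This is where well-conditioning is used: I would take the phrase to mean that $\lambda_{\min}$ is bounded below and $\|y_C\|$, $\|k(x_t,X_C)\|$ are bounded above by constants independent of $\kappa$ and $\varepsilon$, for example by normalizing $\lambda_{\max}=1$ as in the eigenvalue family of Lemma~\ref{app:lem:eigenvalue-family}. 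Under that reading the bound becomes $L=O(\sqrt\kappa\log(1/\varepsilon))$. Multiplying by the cross-attention vector $k(x_t,X_C)$ then transfers the error bound to the posterior mean, at the cost of another constant.

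For necessity, Theorem~\ref{thm:main-lower-bound} gives $L\ge\frac{\sqrt\kappa}{4}\log(c/\varepsilon)$ on a family with exactly this normalization, where all eigenvalues except the smallest equal $1$. For $\varepsilon\le c/e$ this is $\Omega(\sqrt\kappa\log(1/\varepsilon))$. It holds even for representation-based attention, so in particular it holds for the position-based constructions used in the upper bound.

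The main obstacle is making sure both bounds are about the same objects. The upper bound controls the operator $\bK^{-1}$ in operator norm, while the lower bound concerns approximating $\bK^{-1}y_C$, or the posterior mean, uniformly. I would reconcile them by stating the corollary for the class of configurations in Lemma~\ref{app:lem:eigenvalue-family}, where $\lambda_{\max}=1$ and $\lambda_{\min}=1/\kappa$. On that class, a uniform $\varepsilon$-error over $\|y_C\|\le1$ is equivalent to an operator-norm error up to constants. There is one remaining snag: the factor $1/\lambda_{\min}=\kappa$ in the upper bound contributes $\sqrt\kappa\log\kappa$. I would absorb it either by measuring error relative to $\|\bK^{-1}\|$, which removes the factor, or by assuming $\varepsilon\le1/\kappa$, under which $\log(\kappa/\varepsilon)\le 2\log(1/\varepsilon)$. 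Either way the upper and lower bounds match up to constants, which yields $\Theta(\sqrt\kappa\log(1/\varepsilon))$.
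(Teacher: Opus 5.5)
Your proposal is the paper's argument. The corollary gets no separate proof there: it is simply Proposition~\ref{prop:chebyshev-upper} (sufficiency) combined with Theorem~\ref{thm:main-lower-bound} (necessity), with ``well-conditioned'' most likely meaning the bounded row-sum ratio $\gamma=O(1)$ of Corollary~\ref{app:cor:condition-number-equivalence}, which holds automatically on the family of Lemma~\ref{app:lem:eigenvalue-family}, where $D=\bI$. Your explicit tracking of the $1/\lambda_{\min}=\kappa$ prefactor is more careful than the paper, which leaves that $\log\kappa$ implicit.

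One correction: only your second fix ($\varepsilon\le 1/\kappa$) works as written. Theorem~\ref{thm:main-lower-bound} bounds absolute error, and on that family its prefactor $2\kappa/(1+\kappa)$ is $O(1)$, so measuring error relative to $\|\bK^{-1}\|=\kappa$ just moves the $\log\kappa$ loss onto the lower bound. The two bounds then match only for relative error $\lesssim\kappa^{-2}$, which is the same condition $\varepsilon\le 1/\kappa$ restated.
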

 
\begin{remark}[Multi-Head Attention]
The analysis extends to multi-head attention. Linearization at $y_C = 0$ yields a Jacobian that is a sum of products of per-head attention matrices, and approximating $\bK^{-1}$ in any such form faces the same Chebyshev barrier. The $\Theta(\sqrt{\kappa})$ depth scaling is intrinsic to the condition number.
\end{remark}
 
\begin{theorem}[Dimension Scaling]
\label{thm:dimension}
To represent GP posterior predictions on contexts of size $n$: (a) CNP and ANP require infinite $d$; (b) TNP requires $d = O(n)$.
\end{theorem}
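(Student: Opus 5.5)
The proof splits into the two parts of the statement, and each part reduces to results stated earlier.

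\emph{Part (a), lower bound for CNP and ANP.} Fix any finite $d$. Take $n > d/(d_x+d_y)$, and also $n>d$ so that Theorem~\ref{thm:gp-not-cnp} applies. Proposition~\ref{prop:collisions} gives distinct context sets $C\neq C'$ of size $n$ with $C\sim_h C'$. By the Indistinguishability proposition, every CNP with encoder $h$ then outputs the same prediction on $C$ and $C'$. For a generic positive definite kernel, however, the GP posterior means on $C$ and $C'$ differ, as in Example~\ref{ex:two-point} and formally in Theorem~\ref{thm:gp-not-cnp}. Hence no finite $d$ suffices once the context size is unrestricted, or once the class is required to work uniformly over $\calC$. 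The genericity step needs care: I would argue that the set of kernels for which the two posterior means agree at all $x_t$ is a proper analytic subvariety, so a generic kernel avoids it.

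For ANPs I would invoke Theorem~\ref{thm:gp-not-anp} directly. Its $n=2$ counterexample does not depend on the value dimension $d$: the attention weights factor as $\alpha_i\propto f(x_t,x_i,y_i)$, whereas the GP weight $w_1$ depends on $k(x_1,x_2)$. Consequently, enlarging $d$ cannot restore representability, and no finite $d$ suffices. I would add a remark that ``infinite $d$'' should be read as ``no finite $d$ suffices''. A functional, infinite-dimensional encoding such as the ConvCNP density and signal channels followed by unrestricted processing could in principle encode the whole multiset, which is why the statement is best phrased as a negative statement about finite $d$.

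\emph{Part (b), upper bound for TNP.} I would construct a TNP with width $O(n)$ that computes $k(x_t,X_C)\bK^{-1}y_C$.
\begin{itemize}
\item[(i)] Encode $h(x_i,y_i)=(y_i, e_i\text{-type positional features})$, or more canonically let each token carry its $y_i$ together with a feature map of $x_i$. Position-based self-attention can then realize the Gram matrix $\tilde{\bK}\propto\bK$ up to normalization; this follows the idealized construction referenced after Proposition~\ref{prop:chebyshev-upper}.
\item[(ii)] By Theorem~\ref{thm:polynomial} and Proposition~\ref{prop:chebyshev-upper}, $L=O(\sqrt{\kappa}\log(1/\varepsilon))$ layers produce $p_L(\bK)y_C\approx\bK^{-1}y_C$ in a scalar channel per token.
\item[(iii)] Cross-attention with keys $x_i$ and query $x_t$ computes $\sum_i k(x_t,x_i)[\bK^{-1}y_C]_i$. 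This is a weighted sum rather than a normalized one, so the decoder must undo the softmax normalization, for instance by carrying a constant channel as in Theorem~\ref{thm:anp-kernel}.
\end{itemize}
The $O(n)$ dimension enters because exact rather than kernel-approximate attention scores $q^\top k = \log k(x_i,x_j)$ require the features to span the $n$-dimensional space of kernel sections at the context points. Equivalently, one-hot positional codes of dimension $n$ let the attention read off each $\bK_{ij}$ exactly.

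\emph{Main obstacle.} The hardest step is (iii) together with the softmax normalization in (i). Row-normalized attention yields $D^{-1}\bK$ rather than $\bK$. I would handle this by iterating on the symmetrized matrix $D^{-1/2}\bK D^{-1/2}$ and carrying the row sums in $O(1)$ extra channels, with the $n$ positional channels used to rescale; overall this keeps the width at $O(n)$.
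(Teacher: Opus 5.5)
Your outline is the paper's construction: part (a) comes from the collision and indistinguishability results and Theorem~\ref{thm:gp-not-anp}, and part (b) uses one-hot index channels. It then runs a polynomial iteration on the attention matrix (Chebyshev in yours, Neumann in the paper's, and either is fine) to place $z=\bK^{-1}y_C$ on the context tokens, followed by kernel-weighted cross-attention.

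\textbf{The step that fails} is how you undo the softmax normalization in (iii). With values $(z_i,1)$, cross-attention returns $\bigl(\sum_i k(x_t,x_i)z_i/Z(x_t),\,1\bigr)$, where $Z(x_t)=\sum_j k(x_t,x_j)$. Because the weights sum to one, the constant channel is identically $1$ and tells the decoder nothing about $Z(x_t)$. In Theorem~\ref{thm:anp-kernel} that channel is idle too, since the smoother is already the normalized quantity.

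The problem is not specific to the constant channel. A softmax head sees its scores only through the normalized weights $\alpha_j(x_t)=k(x_t,x_j)/Z(x_t)$, and these do not determine $Z(x_t)$: for $n=1$ the weight is always $1$ while $Z(x_t)=k(x_t,x_1)$ varies. So $Z(x_t)$ can reach the decoder only through values that carry location information. Your ``main obstacle'' fix does not reach this either, because the row sums $D_{ii}$ you carry live on context tokens, whereas $Z(x_t)$ depends on the target.

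The paper hands $Z(x_t)$ to the decoder through an auxiliary attention head. Your $O(n)$ budget is enough to make that concrete:
\begin{itemize}
\item kernel-weighted attention over values $e_j$ returns $(\alpha_j(x_t))_j$;
\item a uniform head over values $e_j\otimes x_j$ returns the context locations;
\item the decoder then sets $Z(x_t)=k(x_t,x_j)/\alpha_j(x_t)$ for any $j$.
\end{itemize}
The same issue arises for $D_{ii}$ itself, which you should extract explicitly, for instance as $k(x_i,x_i)/\tilde{\bK}_{ii}$ read off the one-hot channel after one layer.

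\textbf{A smaller point in part (a).} Your claim that the $n=2$ counterexample of Theorem~\ref{thm:gp-not-anp} is independent of $d$ is incorrect. The factorization $\alpha_i\propto f(x_t,x_i,y_i)$ constrains only the weights, not what a nonlinear decoder can extract from a $d$-dimensional aggregate. An ANP with constant keys is a CNP. For a fixed context size $n$ (take $d_x=d_y=1$), the power sums $\frac1n\sum_i x_i^a y_i^b$ with $1\le a+b\le n$ determine the context multiset. For $n=2$ these are five numbers, so an ANP with $d=5$ reproduces the GP posterior mean on compact sets of well-separated two-point contexts. ``No finite $d$'' can therefore only mean that no single $d$ works for all context sizes, which is the caveat you already state for CNPs. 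Theorem~\ref{thm:gp-not-anp} has to be invoked in that sense, not through a $d$-independent two-point example.
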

 
\section{Convolutional Neural Processes}
\label{sec:convcnp}
 
ConvCNPs replace finite-dimensional aggregation with functional convolutional representations. Two structural properties distinguish them.
 
\begin{proposition}[Translation Equivariance]
\label{prop:convcnp-equivariance}
Every ConvCNP satisfies $F(\{(x_i + \tau, y_i)\}, x_t + \tau) = F(\{(x_i, y_i)\}, x_t)$ for all $\tau \in \R^{d_x}$.
\end{proposition}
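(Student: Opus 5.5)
The plan is to push the shift $\tau$ through each stage of the ConvCNP pipeline and check that every stage is equivariant (or, at the final readout, invariant once the target is shifted by the same $\tau$). Two conventions are needed first. Since the statement asserts invariance of $F$ under a joint shift of context and target, the readout must not depend on absolute position. So we read the decoder as $g(\tilde{r}_C(x_t))$, or equivalently require that $g$ ignore its explicit $x_t$ argument; the same applies to the pure variant $g(s_C(x_t),\rho_C(x_t))$. We also take $\Phi$ to be a translation-equivariant operator on functions $\R^{d_x}\to\R^{d'}$. This holds for any composition of convolutions (with filters not depending on location) and pointwise nonlinearities. We make these modelling assumptions explicit at the start, since without them the claim is false: a decoder using $x_t$ directly could break invariance.

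\emph{Step 1 (encoder channels).} Write $T_\tau f(x) = f(x-\tau)$ and $C_\tau=\{(x_i+\tau,y_i)\}$. Directly from the definitions,
\[
\rho_{C_\tau}(x) = \sum_i w(x - x_i - \tau) = \rho_C(x-\tau),
\]
and likewise $s_{C_\tau}(x) = s_C(x-\tau)$, because $h$ acts only on $y_i$. Hence $(s_{C_\tau},\rho_{C_\tau}) = T_\tau(s_C,\rho_C)$.

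\emph{Step 2 (CNN).} We show $\Phi\circ T_\tau = T_\tau\circ\Phi$ by induction over the layers. For a convolution layer $(\kappa * T_\tau f)(x) = \int \kappa(x-u) f(u-\tau)\,du = (\kappa*f)(x-\tau)$, by the substitution $u\mapsto u+\tau$. A pointwise nonlinearity $\sigma$ clearly commutes with $T_\tau$, and so do biases that are constant in $x$. Composing these facts gives $\tilde{r}_{C_\tau} = T_\tau \tilde r_C$.

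\emph{Step 3 (readout).} Combining the two steps,
\[
\tilde r_{C_\tau}(x_t+\tau) = \tilde r_C(x_t),
\]
so $g$ returns the same output, which proves the statement. The pure ConvCNP is the special case $\Phi=\mathrm{id}$.

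The only genuine obstacle is Step 2 in the discretized implementation. If $\Phi$ acts on a grid, exact equivariance holds only for shifts $\tau$ in the grid lattice, and boundary padding can break it. I would handle this by stating the result for the idealized continuous (or infinite-grid) CNN, as the paper's functional-channel definition suggests, and remarking that discrete implementations are equivariant for lattice shifts away from the boundary.
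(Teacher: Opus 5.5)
Your proof is correct and follows the same route as the paper's: shift the density and signal channels, use that convolutions, pointwise nonlinearities and constant biases commute with $T_\tau$, and induct over the CNN layers to get $\tilde r_{C_\tau}(x_t+\tau)=\tilde r_C(x_t)$. Your explicit assumptions are genuinely needed: the readout $g$ must ignore its $x_t$ argument, and $\Phi$ must be an idealized continuous, boundary-free CNN. The paper's definition $\theta=g(\tilde r_C(x_t),x_t)$ permits explicit $x_t$-dependence, and its proof stops at the identity for $\tilde r$ without addressing this.
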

 
\begin{proposition}[Injectivity]
\label{prop:convcnp-injective}
If $w$ has non-vanishing Fourier transform and $h$ is injective, then the convolutional aggregation $C \mapsto (s_C(\cdot), \rho_C(\cdot))$ is injective up to permutation on contexts with distinct locations.
\end{proposition}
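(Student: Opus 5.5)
The plan is to show that two contexts with the same channels $(s_C,\rho_C)$ must coincide as multisets. We read off the locations from the density channel $\rho_C$, then read off the outputs from the signal channel $s_C$ by inverting $h$. Throughout, $C=\{(x_i,y_i)\}_{i=1}^n$ and $C'=\{(x'_j,y'_j)\}_{j=1}^{m}$, each with distinct locations.

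\emph{Locations.} Suppose $\rho_C=\rho_{C'}$. Then $w * (\mu_C-\mu_{C'})=0$ as a tempered distribution, where $\mu_C=\sum_i \delta_{x_i}$. Taking Fourier transforms gives $\hat w(\xi)\,(\hat\mu_C(\xi)-\hat\mu_{C'}(\xi))=0$ for all $\xi$. Since $\hat w$ never vanishes, $\hat\mu_C=\hat\mu_{C'}$ pointwise; both are trigonometric polynomials $\sum_i e^{-i\xi^\top x_i}$, so this makes sense without distributional subtleties. By uniqueness of the Fourier transform of finite measures, $\mu_C=\mu_{C'}$. Because the locations within each context are distinct, both measures have unit masses, so $m=n$ and $\{x_i\}=\{x'_j\}$ as sets. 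After relabeling, $x'_i=x_i$.

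\emph{Outputs.} Now suppose also $s_C=s_{C'}$, and apply the same argument coordinatewise to the $\R^d$-valued measure $\sum_i (h(y_i)-h(y'_i))\,\delta_{x_i}$. Its convolution with $w$ vanishes, so its Fourier transform $\sum_i (h(y_i)-h(y'_i))e^{-i\xi^\top x_i}$ vanishes identically. Since the $x_i$ are distinct, the exponentials $e^{-i\xi^\top x_i}$ are linearly independent functions of $\xi$; one can see this via a Vandermonde argument along a generic direction, or by uniqueness of Fourier transforms of discrete measures. Hence every coefficient vanishes: $h(y_i)=h(y'_i)$ for each $i$. Injectivity of $h$ then gives $y_i=y'_i$, so $C=C'$ up to permutation.

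\emph{Technical point.} The main obstacle is justifying the division by $\hat w$ rigorously, since $w$ is only assumed non-negative. I would assume $w\in L^1$, so $\hat w$ is continuous and nonvanishing everywhere, and $w*\mu$ is an $L^1$ function whose Fourier transform is $\hat w\,\hat\mu$ by the convolution theorem for finite measures. This gives the pointwise identities above with no distributional machinery. If $w$ is a Gaussian (the typical choice), $\hat w>0$ everywhere, so the hypothesis holds directly. The distinct-locations assumption is essential. Without it, two pairs at the same $x$ with outputs $y_1,y_2$ could not be distinguished from two other outputs $y_1',y_2'$ satisfying $h(y_1)+h(y_2)=h(y_1')+h(y_2')$. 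This is why the statement restricts to distinct locations.
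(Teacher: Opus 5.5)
Your proof is correct. The location step matches the paper's: divide $\hat\rho_C$ by the nonvanishing $\hat w$ and use uniqueness of Fourier transforms of the discrete measure $\sum_i \delta_{x_i}$.

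Your recovery of the outputs takes a genuinely different route. The paper samples $s_C$ only at the recovered locations. This gives the finite system $\mathbf{s} = W\mathbf{h}$ with $W_{ji} = w(x_j - x_i)$, and the paper inverts $W$ by assuming $w$ is positive definite (e.g.\ a Gaussian). That assumption is not among the proposition's hypotheses. You instead run the same Fourier deconvolution on the whole function $s_C$. The vector measure $\sum_i (h(y_i)-h(y_i'))\,\delta_{x_i}$ then has identically vanishing transform, and linear independence of the distinct exponentials forces every coefficient to be zero.

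Your argument uses only the stated hypothesis (plus $w\in L^1$, so that $\hat w$ makes sense). It also covers non-symmetric $w$, where $W$ is not a symmetric Gram matrix and the paper's justification does not apply. It treats both channels uniformly as well; you could even do a single deconvolution of $w * \sum_i (h(y_i),1)\,\delta_{x_i}$.

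What the paper's route buys is a finite, explicit decoder. Once the locations are known, $n$ point evaluations of $s_C$ and one linear solve recover the $h(y_i)$. For symmetric continuous $w$ the paper's extra assumption is in fact implied. In that case $\hat w$ is real, continuous and nonvanishing with $\hat w(0)=\int w>0$, so $\hat w>0$ everywhere, and $W$ is positive definite by Bochner's theorem.
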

 
Thus ConvCNPs avoid the collision problem of \Cref{prop:collisions}, but at the cost that any non-translation-equivariant predictive map lies outside the ConvCNP function class. Proofs appear in Appendix~\ref{app:convcnp-proofs}.
 
\begin{proposition}[Pure ConvCNP Represents Stationary Kernel Smoothers]
\label{prop:convcnp-kernel}
For any continuous stationary kernel $K$, the Nadaraya--Watson estimator $F(C, x_t) = \sum_i K(x_t - x_i) y_i / \sum_i K(x_t - x_i)$ is exactly representable by a pure ConvCNP. However, the GP posterior mean is not representable by any pure ConvCNP (the same factorization barrier as ANPs applies).
\end{proposition}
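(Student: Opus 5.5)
For the first claim (exact representability) I would give an explicit construction. Take a one-dimensional encoder $h(y) = y$ for scalar $y$, or componentwise for $d_y > 1$. Take the filter $w = K$, which is admissible because $K$ is continuous, stationary and non-negative. The pure ConvCNP channels then become
\[
s_C(x_t) = \sum_i K(x_t - x_i)\, y_i, \qquad \rho_C(x_t) = \sum_i K(x_t - x_i).
\]
Setting the readout to $g(s,\rho,x_t) = s/\rho$ reproduces the Nadaraya--Watson estimator exactly. There is no approximation step, unlike the ANP case (Theorem~\ref{thm:anp-kernel}), because the filter can be taken equal to $K$ itself rather than approximated through $\exp(q^\top k)$.

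The one technical point is continuity of $g$ where $\rho = 0$. I would handle it by restricting to the domain where $\rho_C(x_t) > 0$, which is where the estimator is defined at all. If $K$ is strictly positive, as for the RBF kernel, this domain is everything and nothing more is needed. Translation equivariance (Proposition~\ref{prop:convcnp-equivariance}) is consistent with this, since the estimator depends only on the differences $x_t - x_i$.

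For the negative claim I would reuse the $n=2$ factorization argument behind Theorem~\ref{thm:gp-not-anp}. A pure ConvCNP output has the form $g\bigl(\sum_i w(x_t - x_i) h(y_i),\ \sum_i w(x_t - x_i),\ x_t\bigr)$. Each summand depends on a single context point together with the query; no term couples two context points.

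Fix a generic stationary kernel, for instance the RBF kernel $k$. Build two contexts $C = \{(x_1,y_1),(x_2,y_2)\}$ and $C'$, and a query $x_t$, such that all of $w$-evaluations and $h$-evaluations agree termwise while $k(x_1,x_2)$ changes. The concrete way to do this is to keep $|x_t - x_1|$ and $|x_t - x_2|$ fixed but move $x_2$ to the reflected position $2x_t - x_2$, which requires $d_x \ge 1$; in $d_x \geq 2$ one can rotate instead. This preserves $w(x_t - x_i)$ for any radial $w$.

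A general, possibly non-radial, $w$ is not preserved by reflection, so I would instead argue by differentiation. The map $x_2 \mapsto F$ at fixed $(x_t, x_1, y_1, y_2)$ must factor through $\bigl(w(x_t - x_2),\, w(x_t - x_2)h(y_2)\bigr)$, with $x_1$ entering only through separate terms. Consequently the mixed dependence $\partial^2 F/\partial x_1 \partial x_2$, holding the aggregate values fixed, is constrained.

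The cleanest form of the argument is a level-set one. Choose $x_2, x_2'$ with $w(x_t - x_2) = w(x_t - x_2')$; such points exist by continuity, using a level set of $w$ in $d_x \ge 2$, or the intermediate value theorem on either side of a maximum in $d_x = 1$, provided $w$ is not injective. Then compare the GP weights $w_1$ from Theorem~\ref{thm:gp-not-anp}, which differ generically because $k(x_1,x_2) \ne k(x_1,x_2')$.

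The main obstacle is the degenerate case where $w$ is injective along the relevant directions. A continuous non-negative function on $\R^{d_x}$ cannot be injective when $d_x \ge 2$. For $d_x = 1$ I would show that an injective continuous $w$ is strictly monotone, hence not integrable. In that case I would fall back on the argument that $F$ is then affine-free in the $y$-weights: the output is a function of two scalars $\sum_i w_i y_i$ and $\sum_i w_i$, while the GP mean requires weights $w_1,w_2$ depending on three quantities $(x_t - x_1, x_t - x_2, x_1 - x_2)$. A dimension count on $y$-linearity then finishes the argument, by differentiating in $y_1, y_2$ and comparing ratios $\partial_{y_1}F/\partial_{y_2}F = w(x_t - x_1)h'(y_1)/(w(x_t - x_2)h'(y_2))$ against the GP ratio $w_1/w_2$. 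The GP ratio depends on $x_1 - x_2$ in a way that no such product form can match.
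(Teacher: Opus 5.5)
Your positive half is the paper's construction. The paper takes $w=K$ and $h(y)=(y,1)$, then divides the first $d_y$ coordinates of $s_C(x_t)$ by the last one. That last coordinate equals $\rho_C(x_t)$, so reading the denominator off the density channel is the same thing, and your restriction to $\rho_C(x_t)>0$ covers a point the paper skips.

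For the negative half you take a genuinely different route. The paper argues by factorization: the pure-ConvCNP weight $w(x_t-x_i)/\sum_j w(x_t-x_j)$ supposedly cannot see inter-context distances, and it then appeals to the two-point formula of Theorem~\ref{thm:gp-not-anp}. That argument never addresses the nonlinear readout $g$. Its key sentence is also not the precise obstruction: the normalizer involves every context point, and the multiset $\{x_t-x_j\}_j$ determines all differences $x_i-x_j$.

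Your indistinguishability argument isolates what is actually restricted. At a fixed query the output depends on $C$ only through $\bigl(s_C(x_t),\rho_C(x_t)\bigr)$, and each location enters only through the scalar $w(x_t-x_i)$. Replacing $x_2$ by $x_2'$ on a level set of $u\mapsto w(x_t-u)$ therefore leaves the output unchanged for every $h$ and $g$. Meanwhile the GP mean at $x_t$ changes for a suitable $x_1$; this still needs a line of checking from the explicit two-point weights, e.g.\ by sending $x_1$ far away or close to $x_2$. This is the paper's CNP collision idea moved to a fixed query. It gives an honest proof for arbitrary continuous $h,g$, whereas the paper's version buys brevity and a shared story with the ANP case. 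Your observation that a continuous $w$ on $\R^{d_x}$ cannot be injective for $d_x\ge 2$ is correct and settles every case but one.

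That remaining case, $d_x=1$ with $w$ strictly monotone, is where the proposal does not go through. Non-integrability does not exclude it, since the paper only requires $w\ge 0$ (for instance $w(u)=e^u$ is admissible). The fallback has three problems:
\begin{itemize}
\item It misstates the dependence: the output is a function of $s_C(x_t)\in\R^d$ and $\rho_C(x_t)$, not of two scalars, unless $h(y)=y$.
\item The ratio argument needs $g$ and $h$ differentiable with $\nabla_s g\cdot h'(y)\neq 0$, but only continuity is assumed.
\item The decisive claim, that $w^{\mathrm{GP}}_1/w^{\mathrm{GP}}_2$ cannot have the form $a(x_t-x_1)/a(x_t-x_2)$, is asserted rather than shown.
\end{itemize}

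A derivative-free repair is an equal-label collision. With $y_1=y_2=y$ you get $s_C(x_t)=\rho_C(x_t)h(y)$, so the output is $\Psi\bigl(w(x_t-x_1)+w(x_t-x_2)\bigr)$ for a continuous $\Psi$. The GP mean, by contrast, is $y\bigl(k(x_t-x_1)+k(x_t-x_2)\bigr)/\bigl(k(0)+k(x_1-x_2)\bigr)$. For the RBF kernel:
\begin{itemize}
\item Sending one point to infinity in the direction where $w(x_t-\cdot)\to L=\inf w$ gives $\Psi\bigl(w(x_t-a)+L\bigr)=k(x_t-a)/k(0)$.
\item Letting the two points coalesce gives $\Psi\bigl(2w(x_t-a)\bigr)=k(x_t-a)/k(0)$.
\item Combining the two identities with strict monotonicity of $w$, and with $k$ even and strictly decreasing in $|\cdot|$, forces $w$ to take equal values at mirror points about $x_t$ in a way that collapses its range to $\{L\}$. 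That contradicts strict monotonicity.
\end{itemize}
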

 
\paragraph{CNN layers enable context--context coupling.}
On a regular grid, each CNN layer with residual connection implements a Toeplitz matrix iteration $\mathbf{r}^{(\ell)} = (\bI + \mathbf{A}_\ell) \mathbf{r}^{(\ell-1)}$, analogous to the TNP update but with Toeplitz structure.
 
\begin{theorem}[ConvCNP Depth for GP Posteriors]
\label{thm:convcnp-gp}
On a regular grid with spacing $\delta$, a ConvCNP with $L$ CNN layers achieves GP posterior approximation error $O(B_k B_y \lambda_{\min}^{-1} ((\sqrt{\kappa}-1)/(\sqrt{\kappa}+1))^L) + O(\delta)$, matching the Chebyshev convergence rate of TNPs.
\end{theorem}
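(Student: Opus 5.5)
The plan is to reduce the ConvCNP to the polynomial-iteration framework already used for TNPs, and to split the error into a discretization part and an iteration part. On a regular grid $G_\delta = \delta\mathbb{Z}^{d_x}$ (restricted to a bounded window containing the context), first assume the context locations lie on grid nodes. The signal channel with a narrow filter $w$ then places $h(y_i)=y_i$ at node $x_i$, giving a vector $\mathbf{r}^{(0)} = S^\top y_C$, where $S$ is the $n\times|G_\delta|$ selection matrix. The density channel gives the mask $\rho = S^\top \1$. The target is
\[
\mu(x_t\mid C) = k(x_t,X_C)\,\bK^{-1} y_C,
\]
where $\bK = S \mathbf{T} S^\top$ and $\mathbf{T}$ is the Toeplitz matrix $[k(g-g')]_{g,g'\in G_\delta}$ of the stationary kernel. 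So the CNN must (i) apply a polynomial in $\bK$ to the masked signal and (ii) finish with one more convolution by $k$ followed by readout at $x_t$.

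\emph{Step 1 (one layer = one Chebyshev factor).} I will show that a residual CNN layer can implement
\[
\mathbf{r}\mapsto (\bI + \alpha_\ell\, \mathrm{diag}(\rho)\mathbf{T}\,\mathrm{diag}(\rho))\,\mathbf{r}.
\]
The convolution with kernel $k$ sampled on the grid is exactly $\mathbf{T}$. Multiplying by the density mask before and after requires a pointwise product of two channels. I will realize this product by carrying $\rho$ along unchanged in a separate channel and using a pointwise (1$\times$1) multiplicative or MLP gate, which approximates the product uniformly on bounded sets. Restricted to the support of $\rho$, this operator is exactly $\bI+\alpha_\ell\bK$ acting on $\mathbb{R}^n$. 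Composing $L$ such layers therefore gives $\mathbf{r}^{(L)} = S^\top p_L(\bK) y_C$ with $p_L(\bK)=\prod_\ell(\bI+\alpha_\ell\bK)$, the same structure as Theorem~\ref{thm:polynomial}.

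\emph{Step 2 (Chebyshev error).} I will take the $\alpha_\ell$ from Proposition~\ref{prop:chebyshev-upper}. A final convolution by $k$ and evaluation at $x_t$ (by interpolation, or directly when $x_t$ is on the grid) gives $\widehat\mu = k(x_t,X_C)\,p_L(\bK)\,y_C$. Therefore
\[
|\widehat\mu-\mu| \le \|k(x_t,X_C)\|\,\|p_L(\bK)-\bK^{-1}\|\,\|y_C\|.
\]
With $\|k(x_t,X_C)\|\le B_k$ and $\|y_C\|\le B_y$, this is at most $B_kB_y\cdot\tfrac{2}{\lambda_{\min}}\rho^L$, which is the first term in the statement.

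\emph{Step 3 (off-grid contexts and targets).} For general $x_i$, I will snap each point to its nearest grid node, $x_i\mapsto \bar x_i$, and compare $\mu(x_t\mid C)$ with $\mu(\bar x_t\mid \bar C)$. Lipschitz continuity of $k$ gives $\|\bK-\bar\bK\| = O(\delta)$ and $\|k(x_t,X_C)-k(\bar x_t,\bar X_C)\|=O(\delta)$. Perturbation of the inverse, $\|\bK^{-1}-\bar\bK^{-1}\|\le \|\bK^{-1}\|\|\bar\bK^{-1}\|\|\bK-\bar\bK\|$, then yields an $O(\delta)$ term. The constants depend on $\lambda_{\min}$, $B_k$ and $B_y$, which the statement absorbs into the $O(\cdot)$; I need $\delta$ small enough that $\bar\bK$ stays well conditioned. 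Two further issues are handled here. Collisions (two points snapping to the same node) are avoided by taking $\delta$ below half the minimum separation. The approximation errors from the MLP gate and from truncating the window can be made arbitrarily small and folded into the $O(\delta)$ term.

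\emph{Main obstacle.} The delicate step is Step 1. A plain CNN layer is translation-equivariant and linear, so it cannot by itself implement the masked operator $\mathrm{diag}(\rho)\mathbf{T}\mathrm{diag}(\rho)$. The masking, which restricts the iteration to context nodes, is what turns the Toeplitz operator into the Gram matrix $\bK$. My plan is to use the density channel together with pointwise nonlinear gates; these preserve equivariance because they act identically at every node. I also have to check that errors from the approximate gates stay controlled through $L$ layers. Each factor has norm at most $\max_\mu|1+\alpha_\ell\mu|$, and I will bound the accumulated gate error by the same product of these norms times the per-layer gate error.
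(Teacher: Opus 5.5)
\textbf{Verdict.} There is a genuine gap, in Step~2. It comes from the polynomial you compose, not from your masking.

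\textbf{Why Step~2 fails.} With the parameters of Proposition~\ref{prop:chebyshev-upper}, each factor $1+\alpha_\ell\lambda$ vanishes at a Chebyshev node $-1/\alpha_\ell\in(\lambda_{\min},\lambda_{\max})$. Since the product equals $1$ at $\lambda=0$,
\[
p_L(\lambda)=\prod_{\ell=1}^{L}(1+\alpha_\ell\lambda)=\frac{T_L\bigl((\lambda_{\max}+\lambda_{\min}-2\lambda)/(\lambda_{\max}-\lambda_{\min})\bigr)}{T_L\bigl((\lambda_{\max}+\lambda_{\min})/(\lambda_{\max}-\lambda_{\min})\bigr)},
\]
with $T_L$ the degree-$L$ Chebyshev polynomial. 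This is the Chebyshev \emph{residual} polynomial. It is at most $2\rho^L$ in absolute value on $[\lambda_{\min},\lambda_{\max}]$, so $p_L(\bK)\approx 0$ and $\|p_L(\bK)-\bK^{-1}\|\ge\lambda_{\min}^{-1}-2\rho^L$. For example, $\bK=\diag(1,3)$ and $L=1$ give $p_1(\lambda)=1-\lambda/2$. The error is then $5/6$, against a claimed bound of about $0.54$.

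Retuning the $\alpha_\ell$ does not rescue a single-channel chain $\mathbf{r}\mapsto(\bI+\alpha_\ell\bK)\mathbf{r}$ either. Suppose the $\alpha_\ell$ depend only on $\lambda_{\min},\lambda_{\max}$. Every point of the interval is an eigenvalue of some admissible $\bK$, so $c\prod_\ell(1+\alpha_\ell\lambda)$ must approximate $1/\lambda$ uniformly there. Once the error is below $1/(2\lambda_{\max})$, this real-rooted polynomial is positive, hence log-concave, on the interval, while $1/\lambda$ is log-convex. Comparing at the midpoint shows the uniform error is never below $\min\bigl\{\tfrac{1}{2\lambda_{\max}},\tfrac{1}{4\lambda_{\max}}\log\tfrac{1+\kappa}{2\sqrt{\kappa}}\bigr\}$, whatever $L$ is. The paper's own proof composes the same product through Propositions~\ref{prop:chebyshev-upper} and~\ref{app:prop:cnn-toeplitz}, so you inherited this defect rather than introduced it. Still, as written your Step~2 does not establish the stated rate.

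\textbf{Repair.} Use the product in its correct role, as the error-propagation polynomial of the Chebyshev semi-iteration; this requires re-injecting the data at every layer.
\begin{enumerate}
\item Carry $\mathbf{r}^{(0)}=S^\top y_C$ and $\rho$ in channels that pass through unchanged, and start a new channel at $\mathbf{x}_0=0$.
\item Let layer $\ell$ compute $\mathbf{x}_\ell=\mathbf{x}_{\ell-1}+\omega_\ell\bigl(\mathbf{r}^{(0)}-\diag(\rho)\mathbf{T}\mathbf{x}_{\ell-1}\bigr)$ with $\omega_\ell=-\alpha_\ell$. This is still one residual layer: your masked convolution plus a $1\times 1$ channel mix.
\item Writing $\mathbf{x}_\ell=S^\top u_\ell$, you get $u_\ell-\bK^{-1}y_C=(\bI-\omega_\ell\bK)(u_{\ell-1}-\bK^{-1}y_C)$. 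Hence $u_L-\bK^{-1}y_C=-p_L(\bK)\bK^{-1}y_C$ and $\|u_L-\bK^{-1}y_C\|\le 2\rho^L B_y/\lambda_{\min}$.
\end{enumerate}
After this, your readout and snapping steps go through unchanged.

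\textbf{Comparison with the paper.} With this repair, your route is a genuine refinement of the paper's. The paper takes the context points to be the grid itself. Then $\bK$ is Toeplitz, an unmasked convolution with filter $\alpha_\ell K/\delta^{d_x}$ realizes each factor, and the $O(\delta)$ term covers discretization and boundary effects. You instead treat the grid as the CNN's discretization, with the contexts a sparse subset. You correctly observe that a bare convolution then applies the full-grid $\mathbf{T}$ rather than $\bK=S\mathbf{T}S^\top$, so you mask with the density channel, and your $O(\delta)$ comes from snapping. This handles sparse and off-grid contexts, which the paper's argument does not address, at the cost of the gating construction.
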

 
A distinctive feature of ConvCNPs is the depth--support tradeoff: on periodic grids, all matrices are circulant and simultaneously diagonalized by the DFT. With unrestricted filter support $p = n$, a single CNN layer suffices (Proposition~\ref{prop:full-support} in Appendix~\ref{app:convcnp-proofs}). With restricted support, the required product $L \cdot \lfloor p/2 \rfloor$ is governed by the trigonometric approximation number of $1/\hat{K}(\omega)$ (Theorem~\ref{thm:depth-support} in Appendix~\ref{app:convcnp-proofs}), recovering the $\sqrt{\kappa}$ rate. 
 
\begin{theorem}[ConvCNPs and ANPs Are Incomparable]
\label{thm:convcnp-anp-incomparable}
$\calF_{\mathrm{ANP}} \not\subseteq \calF_{\mathrm{ConvCNP}}$ (non-stationary kernel smoothers violate translation equivariance) and $\calF_{\mathrm{ConvCNP}} \not\subseteq \calF_{\mathrm{ANP}}$ (ConvCNPs with CNN depth approximate GP posteriors, which are not ANP-representable).
\end{theorem}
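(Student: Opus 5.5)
The proof has two directions, and each needs one witness together with an invariant that rules it out on the other side.

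\emph{$\calF_{\mathrm{ANP}} \not\subseteq \calF_{\mathrm{ConvCNP}}$.} I will use Proposition~\ref{prop:convcnp-equivariance} as the invariant: every ConvCNP, with or without CNN layers, is translation equivariant. The witness is a non-stationary kernel smoother.

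Take $K(x_t,x_i) = \exp(-(x_t - x_i)^2/(1+x_i^2))$ (any positive continuous kernel not of the form $K(x_t - x_i)$ would do). Theorem~\ref{thm:anp-kernel} makes the corresponding smoother $F$ ANP-representable to arbitrary precision. We need exact membership, or at least a positive distance from the ConvCNP class. There are two options:
\begin{itemize}
\item choose a kernel with an exact log-bilinear factorization, e.g.\ $K(x_t,x_i) = \exp(\phi(x_t)^\top\psi(x_i))$ with $\phi(x_t) = (x_t, x_t^2)$ and $\psi(x_i) = (x_i, -x_i^2)$, which is not a function of $x_t - x_i$;
\item or argue that translation equivariance is preserved under uniform limits, so a map at positive uniform distance from every equivariant map cannot be approximated by ConvCNPs.
\end{itemize}

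To exhibit the violation, use two-point contexts. Take $C = \{(0,0),(1,1)\}$, $x_t = 1/2$, and shift everything by $\tau$. The output is a weighted average of $0$ and $1$ whose weight depends on $\tau$ through the non-stationary factor, so $F(C+\tau, x_t+\tau) \neq F(C, x_t)$ for some $\tau$. One concrete evaluation of the two weights suffices.

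\emph{$\calF_{\mathrm{ConvCNP}} \not\subseteq \calF_{\mathrm{ANP}}$.} Theorem~\ref{thm:convcnp-gp} shows that ConvCNPs with CNN depth approximate the GP posterior mean for a stationary kernel on grids. Theorem~\ref{thm:gp-not-anp} says that posterior mean is not ANP-representable. Approximation alone does not separate exact classes, so I need a quantitative gap. I will extract one from the $n=2$ obstruction, and I expect this to be the main obstacle.

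Fix two context configurations with equal query-to-point geometry but different inter-context distances, in the spirit of Example~\ref{ex:two-point}. Because ANP attention weights factor pointwise, an ANP output at $x_t$ is a function of the multiset $\{(s(x_t,x_i,y_i), v(x_i,y_i))\}$. If two contexts have the same pairs $(x_t - x_i, y_i)$ up to a reflection that preserves the ANP's dependence, their ANP outputs coincide. The GP weights $w_1$, however, differ by a fixed $\eta > 0$ through $k(x_1,x_2)$.

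One concrete way to get such contexts is a reflection. Place the two points symmetrically around the target in one context and on the same side in the other; the GP weight formula then gives different answers.

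A cleaner route is to work directly with the class that is being approximated:
\begin{enumerate}
\item choose a specific depth-$L$ ConvCNP $F^\star$ that is $\eta/3$-close to the GP mean on the relevant two-point contexts;
\item observe that $F^\star$ itself inherits the $\eta/3$ gap on those contexts, while every ANP is forced to tie them up to any fixed precision;
\item conclude $F^\star \notin \calF_{\mathrm{ANP}}$.
\end{enumerate}
This works provided the ANP tying argument is exact, which is the delicate step. Since ANP outputs do depend on absolute positions, the safest choice of witness contexts is one sharing the same context points but differing in an added third point placed far away. Its attention weight can be made negligible only if $s$ is designed that way, so this needs care. I would first try the plain two-point GP formula and check that the same-$x_t$ pairwise data produce equal ANP outputs.
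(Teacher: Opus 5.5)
Your first direction is sound and follows the paper's route: every ConvCNP is translation equivariant (Proposition~\ref{prop:convcnp-equivariance}), and a non-stationary kernel smoother is not. Your exact log-bilinear kernel $\exp(x_t x_i - x_t^2 x_i^2)$ tightens that argument, because the paper only cites the approximate representability of Theorem~\ref{thm:anp-kernel}. Your concrete check also works: at $C=\{(0,0),(1,1)\}$, $x_t=1/2$, the logit of the weight on the second point after a shift by $\tau$ is $a-2a^3$ with $a=1/2+\tau$. This equals $1/4$ at $\tau=0$ and $-1$ at $\tau=1/2$. You are also right that the paper's second direction is insufficient as written. It just juxtaposes Theorem~\ref{thm:convcnp-gp} (approximation on grids) with Theorem~\ref{thm:gp-not-anp}, which does not by itself exhibit a ConvCNP outside $\calF_{\mathrm{ANP}}$.

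The gap is in your repair. Step~2 (``every ANP is forced to tie them'') fails for \emph{every} choice of fixed witness contexts, not only for delicate ones.
ANP scores $s(x_t,x_i,y_i)$ and values $v(x_i,y_i)$ are arbitrary continuous functions of absolute positions. So no ANP is obliged to respect reflections about $x_t$ or to depend only on $x_t-x_i$.
More decisively, prescribed outputs on finitely many distinct contexts of equal size are matched exactly by some CNP, hence by an ANP with constant query and key. For two-point contexts with $d_x=d_y=1$, the mean of $h(x,y)=(x,y,x^2,xy,y^2)$ determines the multiset. An affine decoder can then give $C$ and $C'$ whatever outputs $F^\star$ gives them.
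The far-away third point fails for the same reason, because the ANP chooses $s$ and $v$. Likewise, the $n=2$ formula in the sketch of Theorem~\ref{thm:gp-not-anp} only exhibits a collision for one particular encoder.

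What is needed is the reversed quantifier order: for each ANP, a context (depending on that ANP) where it misses $F^\star$ by a fixed margin. This must be driven by an invariant shared by all ANPs. One that works: at a fixed query an ANP depends on $C$ only through a softmax-normalized average, so $F(C\uplus C,x_t)=F(C,x_t)$ for every ANP. Multisets are allowed in $\calC$; with distinct locations, use near-duplicates and continuity. The pure ConvCNP $F^\star(C,x_t)=\rho_C(x_t)$ doubles under duplication. So every ANP misses it by at least $\rho_C(x_t)/2>0$ at $C$ or at $C\uplus C$, which gives an exact and approximation-robust separation already at CNN depth $0$.

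If you insist on the GP posterior as witness, you need a genuine dimension argument against a fixed value dimension $d$ with growing context size. That argument would have to work on grid contexts, where Theorem~\ref{thm:convcnp-gp} applies. Neither your plan nor the two-point sketch provides one.
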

 
\section{The Expressiveness Hierarchy}
 
\begin{theorem}[Expressiveness Hierarchy]
\label{thm:hierarchy}
For all $d \geq 1$ and $L \geq 0$:
\begin{enumerate}
    \item[(a)] \textbf{Attention branch:} $\calF_{\mathrm{CNP}}^{(d)} \subsetneq \calF_{\mathrm{ANP}}^{(d)} \subsetneq \calF_{\mathrm{TNP}}^{(1,d)} \subsetneq \calF_{\mathrm{TNP}}^{(2,d)} \subsetneq \cdots$
    \item[(b)] \textbf{Convolutional branch:} $\calF_{\mathrm{CNP}}^{(d)} \subsetneq \calF_{\mathrm{ConvCNP}}^{(0)} \subsetneq \calF_{\mathrm{ConvCNP}}^{(1)} \subsetneq \calF_{\mathrm{ConvCNP}}^{(2)} \subsetneq \cdots$
    \item[(c)] \textbf{Incomparability:} $\calF_{\mathrm{ANP}}^{(d)} \not\subseteq \calF_{\mathrm{ConvCNP}}^{(L)}$ and $\calF_{\mathrm{ConvCNP}}^{(L)} \not\subseteq \calF_{\mathrm{ANP}}^{(d)}$ for all $L, d$.
\end{enumerate}
\end{theorem}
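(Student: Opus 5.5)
We prove each chain link separately. Every link splits into an inclusion, obtained by a simulation construction, and a strictness claim, obtained by exhibiting a separating predictive map via a result already established. Throughout we read $\calF^{(L)}_{\mathrm{ConvCNP}}$ as ConvCNPs with $L$ CNN layers ($L=0$ is the pure ConvCNP), and $\calF^{(L,d)}_{\mathrm{TNP}}$ as TNPs with $L$ self-attention layers, residual connections and representation dimension $d$.

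\emph{Attention branch (a).} For $\calF_{\mathrm{CNP}}^{(d)}\subseteq\calF_{\mathrm{ANP}}^{(d)}$, take a constant query $q\equiv 0$. All attention weights then equal $1/n$, so $r_C(x_t)=\frac1n\sum_i v(x_i,y_i)$, which recovers the mean aggregation of Theorem~\ref{thm:cnp-char}. Strictness is the Strict Separation corollary, witnessed by the kernel smoother of Theorem~\ref{thm:anp-kernel}. For $\calF_{\mathrm{ANP}}^{(d)}\subseteq\calF_{\mathrm{TNP}}^{(1,d)}$ and $\calF_{\mathrm{TNP}}^{(L,d)}\subseteq\calF_{\mathrm{TNP}}^{(L+1,d)}$, we use the residual form $H\mapsto(\bI+\alpha\tilde{\bK})H$ of Theorem~\ref{thm:polynomial}. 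Setting $\alpha_\ell=0$ (equivalently $W_v=0$) makes an extra layer the identity, so adding layers only enlarges the class. Strictness of $\calF_{\mathrm{ANP}}\subsetneq\calF_{\mathrm{TNP}}^{(1)}$ comes from a one-layer TNP whose output depends on an inter-context quantity such as $k(x_1,x_2)$. By Theorem~\ref{thm:anp-char} and the two-point argument of Theorem~\ref{thm:gp-not-anp}, no ANP can produce this. A concrete witness is $F(C,x_t)=\sum_i\alpha_i(x_t)\sum_j \tilde{K}_{ij}y_j$. Restricted to $n=2$, it depends on $k(x_1,x_2)$ through a term that does not factor over context points. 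Strictness of $\calF_{\mathrm{TNP}}^{(L)}\subsetneq\calF_{\mathrm{TNP}}^{(L+1)}$ uses the depth lower bound. Fix $\kappa$ and choose $\varepsilon$ so small that $\frac{\sqrt\kappa}{4}\log(c/\varepsilon)>L$. By Proposition~\ref{prop:chebyshev-upper}, some $(L+1)$-layer polynomial achieves an error that no $L$-layer network reaches on the family of Theorem~\ref{thm:main-lower-bound}.

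\emph{This last step is the main obstacle.} Theorem~\ref{thm:main-lower-bound} is asymptotic, so it does not directly give a one-step separation. I would argue exactly instead. With position-based attention, $L$ layers give polynomials of degree at most $L$ in $\tilde{\bK}$; more generally, after linearization they give degree at most $2L$. I would then choose $p(\tilde{\bK})=\tilde{\bK}^{2L+1}$, realized by $2L+1$ layers, together with a kernel matrix having more than $2L+1$ distinct eigenvalues. This compares $L$ against $2L+1$ layers rather than $L+1$. To get a genuine one-step separation I would refine the Jacobian-degree count of Corollary~\ref{cor:polynomial-structure} to exactly $L$ in the position-based setting. Then $\tilde{\bK}^{L+1}$ cannot equal any degree-$L$ polynomial once $\tilde{\bK}$ has $L+2$ distinct eigenvalues, which is a minimal-polynomial argument.

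\emph{Convolutional branch (b).} Any CNP function that is also translation equivariant is captured by a ConvCNP. Take $h$ depending only on $y$, a very wide flat filter $w$, and the readout $g(s/\rho,\ldots)$. This recovers mean features up to arbitrary precision. If the CNP classes are meant to include non-equivariant maps, the inclusion must be read on the translation-equivariant subclass; I would state this restriction explicitly. Strictness of $\calF_{\mathrm{CNP}}\subsetneq\calF_{\mathrm{ConvCNP}}^{(0)}$ is witnessed by the stationary Nadaraya--Watson estimator of Proposition~\ref{prop:convcnp-kernel}. That estimator is not CNP-representable, by the argument of the Strict Separation corollary combined with Proposition~\ref{prop:collisions}. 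For the chain $\calF^{(L)}_{\mathrm{ConvCNP}}\subseteq\calF^{(L+1)}_{\mathrm{ConvCNP}}$, we use identity residual layers with $\mathbf{A}_\ell=0$. Strictness follows the same Toeplitz/circulant polynomial-degree argument as above, using the depth--support tradeoff of Theorem~\ref{thm:depth-support} with fixed support $p$. The first step, ConvCNP$^{(0)}$ versus ConvCNP$^{(1)}$, uses the factorization barrier of Proposition~\ref{prop:convcnp-kernel}.

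\emph{Incomparability (c).} This is Theorem~\ref{thm:convcnp-anp-incomparable}, made uniform in $L$ and $d$. A non-stationary kernel smoother, for example one with $K(x_t,x_i)=\exp(-(x_t^2-x_i^2)^2)$, lies in $\calF_{\mathrm{ANP}}^{(d)}$ for every $d\ge d_y+1$ by Theorem~\ref{thm:anp-kernel}. It violates Proposition~\ref{prop:convcnp-equivariance} for every $L$. For $d<d_y+1$ I would instead use a non-equivariant CNP map, such as $F(C,x_t)=x_t$. Conversely, the GP posterior mean is not ANP-representable for any $d$ by Theorem~\ref{thm:gp-not-anp}, so it suffices to find some ConvCNP witness for each $L$. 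When $L\ge1$, Theorem~\ref{thm:convcnp-gp} provides such functions. When $L=0$, I would use a pure-ConvCNP map that is not of softmax form, $F=g(s_C(x_t),\rho_C(x_t))=\rho_C(x_t)$, the unnormalized density. Theorem~\ref{thm:anp-char} forces every ANP output to depend on $C$ only through a normalized expectation, and such an expectation is invariant under duplicating all context points. The density $\rho_C(x_t)$ doubles under duplication, so it lies outside the ANP class.
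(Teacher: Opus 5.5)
Your skeleton matches the paper's: inclusions come from simulation with constant queries and zero value or filter weights, strictness comes from separating witnesses, and part (c) goes through Theorem~\ref{thm:convcnp-anp-incomparable}. Several of your witnesses differ from the paper's, mostly to your advantage.

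\emph{ANP versus one-layer TNP.} For $\calF_{\mathrm{ANP}}^{(d)}\subsetneq\calF_{\mathrm{TNP}}^{(1,d)}$ the paper uses the GP posterior, which a TNP only approximates with many layers. Your one-hop map $\sum_i\alpha_i(x_t)\sum_j\tilde{\bK}_{ij}y_j$ genuinely lives at depth one; it is the TNP analogue of the paper's own witness for $\calF_{\mathrm{ConvCNP}}^{(0)}\subsetneq\calF_{\mathrm{ConvCNP}}^{(1)}$.

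\emph{Adjacent TNP depths.} For $\calF_{\mathrm{TNP}}^{(L,d)}\subsetneq\calF_{\mathrm{TNP}}^{(L+1,d)}$ the paper compares the minimax error $E_L^*$ with the depth-$(L+1)$ Chebyshev bound $2\rho^{L+1}/\lambda_{\min}$ and asserts a ratio $\Theta(\rho^{-1})>1$. That comparison never closes. The exact minimax error is $E_L^*=(\kappa-1)\rho^L/(2\kappa\lambda_{\min})$, so the ratio is $(\sqrt{\kappa}+1)^2/(4\kappa)\le 1$. You were right that an $\varepsilon$-level bound cannot separate adjacent depths, and switching to an exact minimal-polynomial argument is the right fix.

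\emph{Part (c) at $L=0$.} The paper says stationary smoothers are ``only approximately'' ANP-representable, which proves nothing. The Gaussian smoother is in fact exactly ANP-representable, since $-(x_t-x_i)^2/2$ is an inner product plus terms in $x_t$ or $x_i$ alone. Your duplication argument for $\rho_C(x_t)$ is correct. Because zero CNN filters give $\calF_{\mathrm{ConvCNP}}^{(0)}\subseteq\calF_{\mathrm{ConvCNP}}^{(L)}$, it settles every $L$. You can therefore drop Theorem~\ref{thm:convcnp-gp}, which only gives approximation on grids.

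\emph{The first inclusion in (b).} You are right that $\calF_{\mathrm{CNP}}^{(d)}\subseteq\calF_{\mathrm{ConvCNP}}^{(0)}$ fails as stated. Your own witness from (c), $F(C,x_t)=x_t$, is a CNP map that violates Proposition~\ref{prop:convcnp-equivariance}. The paper's $w\equiv 1$ construction silently assumes an encoder that depends only on $y$.

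Three steps need repair.

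First, $\tilde{\bK}^{L+1}$ (and $\tilde{\bK}^{2L+1}$) is not a valid witness. With residual connections and scalar value weights, every depth-$m$ map is $\prod_\ell(\bI+\alpha_\ell\tilde{\bK})$, whose constant term is $\bI$. A pure power needs a multi-channel shift register, which is unavailable at $d=1$. Use $(\bI+\tilde{\bK})^{L+1}$ instead. For every $q$ of degree at most $L$, $(1+x)^{L+1}-q(x)$ is nonzero of degree $L+1$, so it cannot annihilate $\tilde{\bK}$ once $\tilde{\bK}$ has $L+2$ distinct eigenvalues. As in the paper, this compares polynomials in a common attention matrix, which is what Assumption~\ref{app:ass:attention-kernel} provides.

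Second, your repaired inclusion in (b) overclaims. The filter $w\equiv 1$ works exactly, with no wide-filter limit needed, but it only yields CNPs with $y$-only encoders and $x_t$-independent decoders. A translation-invariant CNP map such as $x_t-\frac1n\sum_i x_i$ is not of that kind. With $y_C\equiv 0$, a pure ConvCNP sees $C$ only through $\sum_i w(x_t-x_i)$. Comparing $m$ copies of one point with $n$ copies of another shows it cannot reproduce that map. So what your construction proves is the inclusion for that restricted subclass, not for all translation-equivariant CNPs.

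Third, for the ConvCNP depth chain, fixing $p$ is essential; otherwise Proposition~\ref{prop:full-support} already matches $\bK^{-1}$ at the Jacobian level with one layer. However, Theorem~\ref{thm:depth-support} is an $O(\varepsilon)$ statement, just like Theorem~\ref{thm:main-lower-bound}. One-step strictness therefore needs an exact trigonometric-degree count. That count must also control the aggregation filter $w$, whose symbol $\hat{w}$ multiplies the same product.
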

 
\begin{figure}[htbp]
\centering
\begin{tikzpicture}[
    node distance=1.2cm and 2.5cm,
    every node/.style={draw, rounded corners, minimum width=2.2cm, minimum height=0.7cm, font=\small},
    arrow/.style={->, thick, >=stealth}
]
    \node (cnp) {CNP};
    \node (anp)  [above left=of cnp]  {ANP};
    \node (conv0) [above right=of cnp] {Pure ConvCNP};
    \node (tnp1) [above=of anp] {TNP ($L{=}1$)};
    \node (conv1) [above=of conv0] {ConvCNP ($L{=}1$)};
    \node (tnpL) [above=of tnp1] {TNP ($L$ layers)};
    \node (convL) [above=of conv1] {ConvCNP ($L$ CNN)};
    \draw[arrow] (cnp) -- (anp);
    \draw[arrow] (cnp) -- (conv0);
    \draw[arrow] (anp) -- (tnp1);
    \draw[arrow] (conv0) -- (conv1);
    \draw[arrow] (tnp1) -- (tnpL) node[midway, right, draw=none, minimum width=0pt, minimum height=0pt] {\footnotesize$\vdots$};
    \draw[arrow] (conv1) -- (convL) node[midway, right, draw=none, minimum width=0pt, minimum height=0pt] {\footnotesize$\vdots$};
    \draw[dashed, thick, red!60!black] (anp.east) -- (conv0.west) node[midway, above, draw=none, minimum width=0pt, minimum height=0pt, font=\footnotesize\color{red!60!black}] {incomparable};
\end{tikzpicture}
\caption{The expressiveness hierarchy of Neural Process architectures. Solid arrows denote strict inclusion ($\subsetneq$). The dashed line marks incomparability. For stationary kernels on regular grids, the ConvCNP branch achieves the same Chebyshev convergence rate as the TNP branch.}
\label{fig:hierarchy}
\end{figure}
 
The proof assembles the results from previous sections; the full argument appears in Appendix~\ref{app:hierarchy-proof}.
 
\section{Latent Neural Processes}
\label{sec:latent}
 
Latent NPs augment the deterministic pathway with a global latent $z \sim q(z|C)$, $z \in \R^k$, yielding $p(y_T | X_T, C) = \int p(y_T | X_T, z) q(z|C) dz$. The encoder $q(z|C)$ inherits the limitations of the underlying architecture: if $C \sim_h C'$ for a latent CNP, then $q(z|C) = q(z|C')$ regardless of decoder expressiveness. Thus the impossibility results for CNPs and ANPs lift directly to their latent variants.
 
Assuming an arbitrarily powerful encoder, the constraints of finite latent dimension remain:
 
\begin{theorem}[Latent NP Cannot Represent GP Posterior]
\label{thm:latent-impossibility}
For a Gaussian latent NP with latent dimension $k$ and linear decoder:
(a) Mean matching for all $y_C \in \R^n$ requires $k \geq n$.
(b) Covariance matching at $m$ target points requires $k \geq m$.
(c) Matching for arbitrary target configurations requires $k = \infty$.
\end{theorem}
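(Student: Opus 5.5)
The plan is to fix the model and reduce everything to rank arguments. A Gaussian latent NP with latent dimension $k$ and linear decoder predicts at targets $X_T = (x_{t_1},\dots,x_{t_m})$ through
\[
f(X_T) = A(X_T) z + b(X_T), \qquad z \sim \calN(m(C), S(C)),
\]
where $A(X_T) \in \R^{m \times k}$ has rows $a(x_{t_j})^\top$ and $b$ is an offset. This gives predictive mean $A(X_T)m(C) + b(X_T)$ and predictive covariance $A(X_T) S(C) A(X_T)^\top$ (plus possibly diagonal observation noise, which I will handle separately or assume absent/matched). The GP targets are $\mu(X_T\mid C) = K_{TC}\bK^{-1} y_C$ and $\Sigma(X_T\mid C) = K_{TT} - K_{TC}\bK^{-1}K_{CT}$. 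Throughout I take the encoder arbitrary, so any constraint must come from the factorization through $\R^k$.

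\textbf{Part (a).} I will fix context locations $X_C$ with distinct points, so $\bK \succ 0$, and let $y_C$ range over $\R^n$. Since the GP mean is linear in $y_C$, matching at $y_C=0$ forces $b(x_t) = -a(x_t)^\top m(C_0)$. Matching over all $y_C$ then gives $a(x_t)^\top(m(C_y)-m(C_0)) = k(x_t,X_C)\bK^{-1} y_C$ for every $x_t$. I will choose $n$ targets equal to the context locations. There the right-hand side as a map of $y_C$ is $\bK\bK^{-1} y_C = y_C$, whose image is all of $\R^n$. The left-hand side, however, lies in the column space of $A(X_C)$, which has rank at most $k$. So $k \ge n$. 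The only subtlety is that $m(\cdot)$ is nonlinear, but this causes no trouble because the argument is purely about images: $\R^n \subseteq \mathrm{range}\,A(X_C)$.

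\textbf{Part (b).} The covariance $A S A^\top$ has rank at most $k$. For $m$ distinct targets and a generic strictly positive definite kernel, the posterior covariance $\Sigma(X_T\mid C)$ is the Schur complement of the joint Gram matrix on $X_T\cup X_C$. That joint Gram matrix is positive definite, so the Schur complement is positive definite and has rank $m$. Hence $k\ge m$. If observation noise $\sigma^2\bI$ is added to the model, I would match instead the noise-free latent covariance. Alternatively I would argue that $\Sigma-\sigma^2\bI$ still has rank $m$ for generic $\sigma$, since it can be singular only for finitely many $\sigma$ (the eigenvalues of $\Sigma$).

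\textbf{Part (c).} Suppose a single $k$ works for every finite target configuration. Applying (b) with $m=k+1$ distinct targets gives a contradiction, so no finite $k$ suffices, i.e.\ $k=\infty$.

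The main obstacle is pinning down the modelling assumptions precisely enough for the rank arguments to be airtight. Two points need care. First, in (a) the mean-matching identity must hold at the specific targets chosen, and I must decide whether the decoder $A$ is allowed to depend on $C$. If it can, the rank bound still applies to each fixed $C$, but the differencing in (a) breaks. In that case I would instead use the covariance/identifiability route, or restrict to $A$ depending only on $x_t$, which is what ``linear decoder'' should mean. Second, in (b) I need the strict positive definiteness of the posterior covariance, which is the genericity condition on $k$, and I would state it explicitly.
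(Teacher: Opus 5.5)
Your proposal is correct and follows essentially the same route as the paper. In (a) the predictive mean $A(X_T)m(C)+b(X_T)$ ranges over an affine set of dimension at most $k$ while the GP mean at $n$ suitable targets sweeps out all of $\R^n$; in (b) you play $\rank(A(X_T)S(C)A(X_T)^\top)\le k$ against the positive definite Schur complement, which needs the targets distinct from $X_C$; and (c) follows from (b). Your choice of targets at the context locations, giving $\Phi=\bI$, is a slightly cleaner instance of the paper's invertible-$\Phi$ step: it avoids the universal-kernel spanning claim, and because the mean is exactly affine in $m(C)$, the paper's low-variance-limit remark becomes unnecessary.
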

 
Finite-dimensional latent NPs exactly represent the class of rank-$k$ GPs: processes of the form $f(x) = a(x)^\top z + b(x)$ with $z \sim \calN(m, S)$. For full-rank kernels, truncating the Mercer expansion gives approximation error governed by the spectral tail $\sum_{j>k} \lambda_j$, with rates depending on kernel smoothness (exponential for RBF, polynomial for Mat\'ern). Full proofs appear in Appendix~\ref{app:latent-proofs}.
  
\section{Discussion}
\label{sec:discussion}

Table~\ref{tab:summary} summarizes our results. The analysis reveals that CNPs are limited to functions of finitely many expected features; ANPs extend this to query-dependent reweightings but still factor across context points; only TNPs can capture global structure via self-attention, with tight depth bounds of $\Theta(\sqrt{\kappa}\log(1/\varepsilon))$. ConvCNPs and ANPs are incomparable, separated by stationarity versus translation equivariance.
 
\begin{table}[h]
\centering
\small
\begin{tabular}{lcccc}
\toprule
\textbf{Architecture} & \textbf{Self-Attn/CNN} & \textbf{Dim.\ $d$} & \textbf{Latent $k$} & \textbf{Representable Functions} \\
\midrule
CNP & --- & any & --- & Mean statistics of context \\
ANP & --- & any & --- & Query-reweighted statistics \\
Pure ConvCNP & --- & functional & --- & Stationary kernel smoothers (exact) \\
ConvCNP ($L$ CNN) & $L$ CNN layers & functional & --- & Stationary degree-$L$ kernel poly. \\
TNP ($L$ layers) & $L$ self-attn & $O(n)$ & --- & Degree-$L$ kernel polynomials \\
TNP (deep) & $\Theta(\sqrt{\kappa} \log \frac{1}{\varepsilon})$ & $O(n)$ & --- & $\varepsilon$-approx GP posterior mean \\
\midrule
Latent CNP & --- & any & any & Coherent samples from mean stats \\
Latent ANP & --- & any & any & Coherent samples, query-reweighted \\
Latent TNP & $\Theta(\sqrt{\kappa} \log \frac{1}{\varepsilon})$ & $O(n)$ & $\geq n$ & $\varepsilon$-approx GP posterior dist. \\
\bottomrule
\end{tabular}
\caption{Architecture capabilities. Latent variants add coherent sampling but do not circumvent encoder limitations.}
\label{tab:summary}
\end{table}

Our results provide an answer to a theoretical question open since \citet{garnelo2018cnp} introduced NPs regarding what functions these architectures can represent. The analysis reveals a hierarchy. CNPs with mean aggregation are limited to functions of finitely many expected features of the context distribution which is sufficient for low-dimensional parametric families but unable to capture context-context interactions. ANPs extend this to query-dependent reweightings, enabling kernel smoothers and local adaptation but still factoring across context points. Only TNPs with self-attention can capture the global structure needed for GP posteriors, with depth requirements scaling logarithmically in the kernel matrix condition number.

The CNP impossibility follows from a dimension-counting argument. the set of $h$-equivalent contexts forms an $(n-1)d$-dimensional subspace, so collisions are generic rather than exceptional. The ANP impossibility is more subtle and stems from a factorization constraint because attention weights $\alpha_i \propto f(x_t, x_i, y_i)$ decompose across context points, while GP posterior weights couple all points through $\bK^{-1}$. This coupling-versus-factorization distinction, visible even with just two context points, is the fundamental reason self-attention is necessary. The TNP construction exploits the Neumann series $\bK^{-1} = \sum_{m=0}^\infty (\bI - \bK)^m$, which requires learning an appropriate normalization of the kernel matrix, which is a hidden capacity requirement that may explain why TNPs benefit from careful initialization in practice. Orthogonally to representational capacity, the validity of NP predictions as stochastic processes is governed by conditioning consistency, where the gap for CNPs vanishes as $O(1/n^2)$ in context size \citep{young2026consistency}, while the quantitative cost of amortization decomposes into label contamination, information bottleneck, and encoder sharing terms \citep{young2026costs}, with the bottleneck decay rates matching the spectral tail bounds of \Cref{app:cor:spectral-decay}.

Several limitations warrant discussion. Our analysis characterizes representational capacity, not learnability. A function being TNP-representable does not guarantee that gradient descent will find the right parameters, and the sample complexity of learning within each function class remains open. We also assume idealized attention with exact softmax and infinite precision; practical implementations with learned temperatures, finite precision, and approximate attention kernels may exhibit different effective capacity. For standard kernels (e.g. RBF, Mat\'ern), the condition number $\kappa(\bK)$ typically grows with context size $n$, so the depth bound $\Theta(\sqrt{\kappa}\log(1/\varepsilon))$ implies that fixed-depth TNPs face representational limitations on large contexts. This is an architectural insight, not a limitation of the analysis as it quantifies the cost of approximating GP posteriors as context sets grow.

Convolutional CNPs \citep{Gordon2020Convolutional} replace finite-dimensional aggregation with functional representations, sidestepping the dimension-counting arguments of Section~\ref{sec:cnp}. The picture is more nuanced than a simple placement in the hierarchy. The pure ConvCNP (without CNN) sits at the ANP level as its weights on context points factorize, enabling stationary kernel smoothers but not GP posteriors. However, the full ConvCNP with CNN layers accesses TNP-level expressiveness for stationary kernels on regular grids, achieving the same Chebyshev convergence rate $(\sqrt{\kappa}-1)/(\sqrt{\kappa}+1)$ per layer (Theorem~\ref{thm:convcnp-gp}). The structural difference is that CNN iterations are Toeplitz (constant along diagonals), while TNP attention matrices are unconstrained. This makes ConvCNPs and ANPs incomparable rather than nested as each contains functions outside the other (Theorem~\ref{thm:convcnp-anp-incomparable}). The cost of irregularity remains an open question.

We also offer some thoughts on practicalities. For few-shot regression on parametric families, CNPs should suffice and attention adds complexity without benefit. For image completion and spatial prediction requiring local adaptation, ANPs provide the right inductive bias. For tasks demanding global coherence such as calibrated uncertainty in Bayesian optimization, consistent identity in face completion, exact GP emulation, TNPs are necessary. For spatially structured tasks with stationary kernels and approximately regular observation grids, which is common in environmental monitoring and climate modeling, ConvCNPs offer an attractive middle ground with TNP-level posterior approximation with the parameter efficiency and equivariance guarantees of convolutional architectures.

\clearpage



\bibliography{references}

\clearpage

\appendix

\section{CNP Proofs}
\label{app:cnp-proofs}

\begin{proof}[Proof of Theorem~\ref{thm:cnp-char} (CNP Characterization)]
$(\Rightarrow)$ If $F \in \calF_{\mathrm{moment}}^{(d)}$, then $F(C, x_t) = \phi(\frac{1}{n}\sum_i \psi(x_i, y_i), x_t)$. Setting $h = \psi$ and $g = \phi$ gives exact representation.

$(\Leftarrow)$ If $F$ is $d$-representable, then $F(C, x_t) = g(\bar{h}_C, x_t)$ for some $h, g$. This is exactly the form of $\calF_{\mathrm{moment}}^{(d)}$ with $\psi = h$ and $\phi = g$.
\end{proof}

\begin{proof}[Proof of Proposition~\ref{prop:collisions} (Existence of Collisions)]
Consider the map $\Phi: (\calX \times \calY)^n \to \R^d$ defined by $\Phi(C) = \frac{1}{n} \sum_{i=1}^n h(x_i, y_i)$. The domain has dimension $n(d_x + d_y)$; the codomain has dimension $d$. When $n(d_x + d_y) > d$, the map cannot be injective, so distinct context sets can have identical representations.

More explicitly, for fixed $h$-images $(h_1, \ldots, h_n) \in (\R^d)^n$, perturbations $(\delta_1, \ldots, \delta_n)$ satisfying $\sum_i \delta_i = 0$ form a subspace of dimension $(n-1)d$. Since $h$ is continuous and maps from a higher-dimensional space when $n(d_x + d_y) > d$, generic perturbations in $(\calX \times \calY)^n$ induce such mean-preserving perturbations in the representation space.
\end{proof}

\begin{proof}[Proof of Theorem~\ref{thm:gp-not-cnp} (GP Posterior is Not Finitely Representable)]
The GP posterior mean depends on the inverse Gram matrix $\bK^{-1}$, which couples all context points. The weight on $y_i$ in the posterior mean depends on all pairwise kernel values $k(x_i, x_j)$ for $j \neq i$.

By \Cref{prop:collisions}, for any encoder $h$, there exist distinct context sets $C, C'$ with $\bar{h}_C = \bar{h}_{C'}$. We construct $C, C'$ with identical mean encodings but different x-configurations, hence different Gram matrices $\bK(X_C, X_C) \neq \bK(X_{C'}, X_{C'})$.

Specifically, with $n = d+1$ context points, let $C$ have points $\{x_1, \ldots, x_{d+1}\}$ in general position and $C'$ have points $\{x_1', \ldots, x_{d+1}'\}$ arranged so that $\bar{h}_C = \bar{h}_{C'}$ (possible by the null space argument) but the pairwise distances differ.

For the same target $x_t$ and y-values, the GP posterior means differ:
\[
\mu(x_t | C) = k(x_t, X_C) \bK(X_C, X_C)^{-1} y_C \neq k(x_t, X_{C'}) \bK(X_{C'}, X_{C'})^{-1} y_C = \mu(x_t | C').
\]
Thus no CNP with representation dimension $d$ can exactly represent the GP posterior on contexts of size $n > d$.
\end{proof}

\begin{proof}[Proof of Theorem~\ref{thm:cnp-lower-bound} (CNP Approximation Lower Bound)]
We proceed in four steps.

\textit{Step 1: Reduction to linear encoders.}
Since the target $\mu(x_t | C) = k(x_t, X_C) \bK^{-1} y_C$ is linear in $y_C$ and $y_C$ is Gaussian, the optimal encoder is linear. This follows from a standard result in Gaussian rate-distortion theory: for jointly Gaussian $(y_C, T(y_C))$ with $T$ linear, the minimum MSE estimator of $T(y_C)$ given any function $\phi(y_C)$ compressed to $d$ dimensions is achieved by a linear $\phi$ (see \citet{Berger1971} for example).

For fixed context locations, a CNP encoder $h: \calX \times \calY \to \R^d$ reduces to $\bar{h}_C = \frac{1}{n} \sum_{i=1}^n \phi_i(y_i)$ for functions $\phi_i: \R \to \R^d$. Linearity gives $\phi_i(y_i) = a_i y_i$ for some $a_i \in \R^d$, so $\bar{h}_C = A y_C$ for a matrix $A \in \R^{d \times n}$.

\textit{Step 2: Whitening.}
Let $z = \bK^{-1/2} y_C \sim \calN(0, \bI_n)$. Define:
\begin{align*}
\tilde{w}_{x_t} &= \bK^{-1/2} k(x_t, X_C) \in \R^n \\
\tilde{A} &= A \bK^{1/2} \in \R^{d \times n}
\end{align*}
Then $\mu(x_t | C) = \tilde{w}_{x_t}^\top z$ and $\bar{h}_C = \tilde{A} z$.

\textit{Step 3: Optimal reconstruction.}
For Gaussian $z$, the optimal predictor of $\tilde{w}_{x_t}^\top z$ given $\tilde{A} z$ is:
\[
\E[\tilde{w}_{x_t}^\top z | \tilde{A} z] = \tilde{w}_{x_t}^\top P_{\tilde{A}} z
\]
where $P_{\tilde{A}} = \tilde{A}^\top (\tilde{A} \tilde{A}^\top)^{-1} \tilde{A}$ is the orthogonal projection onto the $d$-dimensional row space of $\tilde{A}$. The MSE at target $x_t$ is:
\[
\E[|\tilde{w}_{x_t}^\top z - \tilde{w}_{x_t}^\top P_{\tilde{A}} z|^2] = \|(\bI - P_{\tilde{A}}) \tilde{w}_{x_t}\|^2.
\]

\textit{Step 4: Integrated error.}
Define the whitened weight matrix $\tilde{W} \in \R^{n \times m}$ with columns $\tilde{w}_{x_1^*}, \ldots, \tilde{w}_{x_m^*}$ for target points $x_1^*, \ldots, x_m^* \sim \nu$. The total MSE is:
\[
\sum_{j=1}^m \|(\bI - P_{\tilde{A}}) \tilde{w}_{x_j^*}\|^2 = \|(\bI - P_{\tilde{A}}) \tilde{W}\|_F^2.
\]
This is minimized when $P_{\tilde{A}}$ projects onto the top $d$ left singular vectors of $\tilde{W}$. If $\tilde{W}$ has singular values $\sigma_1 \geq \cdots \geq \sigma_n$ (with $\sigma_i = 0$ for $i > \rank(\tilde{W})$), the minimum error is $\sum_{i=d+1}^n \sigma_i^2$ and the total variance is $\sum_{i=1}^n \sigma_i^2$.

Under the isotropic second moment assumption:
\[
\frac{1}{m} \tilde{W} \tilde{W}^\top = \frac{1}{m} \sum_{j=1}^m \tilde{w}_{x_j^*} \tilde{w}_{x_j^*}^\top \to \E_{x_t \sim \nu}[\tilde{w}_{x_t} \tilde{w}_{x_t}^\top] = \alpha \bI_n
\]
for some $\alpha > 0$. Thus the singular values of $\tilde{W}$ satisfy $\sigma_i^2 \approx \alpha m / n$ for all $i \leq n$, giving:
\[
\frac{\sum_{i=d+1}^n \sigma_i^2}{\sum_{i=1}^n \sigma_i^2} = \frac{n - d}{n} = 1 - \frac{d}{n}. \qedhere
\]
\end{proof}

We also record the linear regression example, which illustrates the representation requirements concretely.

\begin{proposition}[Linear Regression Requires $d = O(k^2)$]
\label{prop:linear-regression}
Let $\calF_{\mathrm{linear}}^{(k)}$ be the class of linear predictors with $k$-dimensional features $\psi: \calX \to \R^k$:
\[
F(C, x_t) = \langle \beta(C), \psi(x_t) \rangle, \quad \beta(C) = \arg\min_\beta \sum_{i=1}^n (y_i - \langle \beta, \psi(x_i) \rangle)^2.
\]
Then:
\begin{enumerate}
    \item[(a)] $d = \frac{k(k+3)}{2}$ suffices: $\calF_{\mathrm{linear}}^{(k)} \subseteq \calF_{\mathrm{moment}}^{(k(k+3)/2)}$.
    \item[(b)] $d = \Omega(k^2)$ is necessary for exact representation.
\end{enumerate}
\end{proposition}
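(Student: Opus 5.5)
For part (a) I will give an explicit encoder/decoder pair. Take the encoder to output the stacked vector $h(x,y) = \bigl(\vech(\psi(x)\psi(x)^\top),\, y\,\psi(x)\bigr) \in \R^{k(k+1)/2 + k}$. Its mean over the context is $\bigl(\vech(\frac{1}{n}\sum_i \psi(x_i)\psi(x_i)^\top),\, \frac{1}{n}\sum_i y_i\psi(x_i)\bigr)$, which is the pair (normalized Gram matrix $G_C$, normalized moment vector $m_C$). The factors $1/n$ cancel in the normal equations, so $\beta(C) = G_C^{-1} m_C$ whenever $G_C$ is invertible. The decoder $\phi(r, x_t)$ therefore unpacks $r$ into a symmetric matrix $G$ and a vector $m$ and returns $\langle G^{-1} m, \psi(x_t)\rangle$. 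On the rank-deficient locus, where the argmin is not unique, I will use the minimum-norm solution $G^{+}m$; this is consistent with the usual convention for the OLS predictor. Continuity of $\phi$ then holds on the open set of invertible $G$, which is the regime where $\beta(C)$ is well defined. The dimension count is $k(k+1)/2 + k = k(k+3)/2$, which proves (a).

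For part (b) I will argue by injectivity. Suppose $F = \phi(\bar h_C, x_t)$ with $h$ continuous into $\R^d$. The plan is to exhibit a family of contexts, parameterized smoothly by an open set of dimension $\Omega(k^2)$, on which the map $C \mapsto F(C,\cdot)$ is injective. Since $F(C,\cdot)$ factors through $\bar h_C \in \R^d$, the composite $\text{parameter} \mapsto \bar h_C$ must also be injective, and a continuous injection from an open subset of $\R^p$ into $\R^d$ forces $d \ge p$ by invariance of domain.

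For the family, I will assume $\psi$ has linearly independent coordinates with enough range that the achievable Gram matrices fill an open subset of symmetric positive definite matrices; this is the same genericity the paper's example invokes. Concretely, I fix $N \gg k$ locations and vary nonnegative multiplicities or weights (or perturb the locations) so that $G_C$ ranges over an open set $U$ of SPD matrices, a set of dimension $k(k+1)/2$. I choose the responses $y_i$ so that $m_C$ is a fixed nonzero vector $m_0$, or better, lets $m_C$ vary freely as well.

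Then $F(C,\cdot) = \langle G_C^{-1}m_C, \psi(\cdot)\rangle$ determines $\beta = G_C^{-1}m_C$, because the coordinates of $\psi$ are linearly independent. This alone recovers only $k$ numbers, not $G_C$. The fix is to use that the decoder must be correct for every $y$: the same $x$-configuration with several response vectors $y$ has $\bar h$ depending on $(x,y)$ jointly, so I cannot vary $y$ under a fixed encoding directly. Instead I parameterize contexts by $(G, m)$ jointly, with $m$ free, giving $p = k(k+1)/2 + k$ parameters. The injectivity I need is then for the map $(G,m)\mapsto G^{-1}m$, which fails because it has only $k$ outputs.

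So the key step, and the main obstacle, is to recover $G_C$ from the predictive function. My plan is to exploit the set structure: adding a single extra point $(x^*, y^*)$ to $C$ produces a rank-one update of the Gram matrix. Two collided contexts with $\bar h_C = \bar h_{C'}$ and equal size give $\bar h_{C\cup\{p\}} = \bar h_{C'\cup\{p\}}$ for every point $p$, so their predictors must agree after all single-point augmentations. By Sherman–Morrison, $\beta(C\cup\{(x^*,y^*)\})$ as a function of $(x^*,y^*)$ reveals $G_C^{-1}\psi(x^*)$ for all $x^*$, hence $G_C$. Thus the equivalence classes of $\bar h$, taken over the $p$-dimensional family, must separate distinct $(G,m)$. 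Invariance of domain then gives $d \ge k(k+1)/2 + k = \Omega(k^2)$. The delicate points are the equal-size requirement for augmentation compatibility and verifying that the family is genuinely open of full dimension.
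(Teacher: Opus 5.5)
Your part (a) is the paper's construction. For part (b) you take a different and more careful route. The paper's argument is a bare count: the Gram matrix has $k(k+1)/2$ degrees of freedom and ``different Gram matrices generically yield different predictors,'' so $d \ge k(k+1)/2$. As you noticed, that sentence is not literally true for a single context. $F(C,\cdot)$ sees the Gram matrix only through $\beta(C)$, so contexts with different Gram matrices but equal $\beta$ may legitimately share an encoding.

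Your repair uses the structural fact the count ignores: equal-size collisions of the mean encoding survive augmentation by any point. Sherman--Morrison then applies, with $G_C = \sum_i \psi(x_i)\psi(x_i)^\top$ unnormalized:
\[
\beta\bigl(C \cup \{(x^*,y^*)\}\bigr) - \beta(C) = \frac{y^* - \langle \beta(C), \psi(x^*)\rangle}{1 + \psi(x^*)^\top G_C^{-1}\psi(x^*)}\, G_C^{-1}\psi(x^*).
\]
From this $G_C^{-1}\psi(x^*)$ is recoverable: take $y^* \neq \langle \beta(C), \psi(x^*)\rangle$ and solve for the scalar denominator by pairing both sides with $\psi(x^*)$. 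Hence $\bar h_C$ determines $(G_C, m_C)$ on contexts of a fixed size. Invariance of domain, applied along a continuous local section of $C \mapsto (G_C, m_C)$, then turns the heuristic count into a proof.

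This buys two things. First, rigor: the argument genuinely uses continuity of $h$, which is essential, since an encoder sending points to $\mathbb{Q}$-linearly independent reals gives $d=1$. Second, a sharper bound $d \ge k(k+3)/2$ that matches (a) exactly.

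The price is your explicit genericity hypothesis on $\psi$, and it cannot be dropped. For monomial features $\psi(x) = (1, x, \ldots, x^{k-1})$ on $\R$ the Gram matrix is Hankel. The encoder $(x, \ldots, x^{2k-2}, y, yx, \ldots, yx^{k-1})$ of dimension $3k-2$ then suffices. So (b) holds only for suitable $\psi$, or as a statement about the class as a whole.

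When writing it up, build the open family by perturbing locations and responses at fixed $n$. Varying multiplicities is discrete, and the augmentation step needs $|C| = |C'|$.
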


\begin{proof}
(a) The OLS solution is $\beta(C) = (\sum_i \psi(x_i)\psi(x_i)^\top)^{-1} (\sum_i y_i \psi(x_i))$. Define the encoder:
\[
h(x, y) = (\vech(\psi(x)\psi(x)^\top), y \cdot \psi(x)) \in \R^{k(k+1)/2 + k},
\]
Then $\bar{h}_C = (\frac{1}{n}\mathrm{vec}(\sum_i \psi(x_i)\psi(x_i)^\top), \frac{1}{n}\sum_i y_i \psi(x_i))$. The decoder can recover $\beta(C)$ by inverting the (rescaled) Gram matrix and multiplying.

(b) The Gram matrix $\sum_i \psi(x_i)\psi(x_i)^\top$ has $k(k+1)/2$ degrees of freedom (symmetric). Different Gram matrices generically yield different predictors. Thus $d \geq k(k+1)/2 = \Omega(k^2)$.
\end{proof}

\section{ANP Proofs}
\label{app:anp-proofs}

\begin{proof}[Proof of Theorem~\ref{thm:anp-kernel} (ANPs Represent Kernel Smoothers)]
\label{app:anp-kernel}
Set:
\begin{itemize}
    \item Value: $v(x, y) = (y, 1) \in \R^{d_y + 1}$
    \item Key: $k(x, y) = \psi(x)$ (independent of $y$)
    \item Query: $q(x_t) = \phi(x_t)$
\end{itemize}
Choose $\phi, \psi$ such that $q(x_t)^\top k(x_i, y_i) = \log K(x_t, x_i)$. If $\log K$ admits a finite inner product decomposition, this is achieved exactly. For general continuous $K$ on a compact domain, universal approximation \citep{Hornik1989Universal} gives networks $\phi, \psi$ with $|q(x_t)^\top k(x_i, y_i) - \log K(x_t, x_i)| \leq \delta$ for any $\delta > 0$, yielding $\varepsilon(\delta)$-approximation of the kernel smoother with $\varepsilon(\delta) \to 0$ as $\delta \to 0$.

In the exact case, the attention weights are:
\[
\alpha_i(x_t; C) = \frac{\exp(\log K(x_t, x_i))}{\sum_j \exp(\log K(x_t, x_j))} = \frac{K(x_t, x_i)}{\sum_j K(x_t, x_j)}.
\]

The representation is:
\[
r_C(x_t) = \sum_i \alpha_i(x_t; C) \, (y_i, 1) = \left( \frac{\sum_i K(x_t, x_i) y_i}{\sum_i K(x_t, x_i)}, 1 \right).
\]

A linear decoder extracts the first component, which is exactly the kernel smoother.
\end{proof}

\begin{proof}[Proof of Theorem~\ref{thm:anp-char} (ANP Characterization)]
\label{app:anp-char}
$(\Rightarrow)$ If $F$ is ANP-representable, the attention mechanism gives exactly this form with $s(x_t, x, y) = q(x_t)^\top k(x, y) / \tau$.

$(\Leftarrow)$ Any such $F$ can be implemented by setting $q(x_t)^\top k(x, y) = s(x_t, x, y) \cdot \tau$ and using an appropriate decoder $G$.
\end{proof}

\begin{proof}[Proof of Theorem~\ref{thm:gp-not-anp} (GP Posterior Requires Context-Context Coupling)]
\label{app:gp-not-anp}
The GP posterior mean is:
\[
\mu(x_t | C) = k(x_t, X_C) \, \bK(X_C, X_C)^{-1} \, y_C = \sum_{i=1}^n w_i(x_t; C) \, y_i
\]
where $w_i(x_t; C) = [k(x_t, X_C) \bK^{-1}]_i$.

The weight $w_i$ depends on $\bK^{-1}$, which couples all context points. Specifically, $w_i$ depends on $k(x_j, x_m)$ for $j, m \neq i$.

ANP attention weights factor as:
\[
\alpha_i(x_t; C) = \frac{f(x_t, x_i, y_i)}{\sum_j f(x_t, x_j, y_j)}
\]
for some function $f$ depending on query-key pairs independently. The weight on point $i$ depends only on $(x_t, x_i, y_i)$ and the normalizing constant, not on the relationships between other context points.

\textbf{Explicit counterexample:} Consider $n = 2$ context points. The GP weight on point 1 is:
\[
w_1 = \frac{k(x_t, x_1) k(x_2, x_2) - k(x_t, x_2) k(x_1, x_2)}{k(x_1, x_1) k(x_2, x_2) - k(x_1, x_2)^2}.
\]
This depends on $k(x_1, x_2)$, the kernel value between the two context points. No ANP attention weight can capture this, as $\alpha_1 \propto \exp(q(x_t)^\top k(x_1, y_1))$ involves only the query-point-1 relationship.
\end{proof}

\section{TNP Assumptions and Upper Bound Proofs}
\label{app:tnp-assumptions}

\subsection{Structural Assumptions}

\begin{assumption}[Position-Based Self-Attention]
\label{app:ass:position-attention}
Each self-attention layer uses attention weights that depend only on input positions:
\[
\beta_{ij}^{(\ell)} = \frac{\exp(q_s(x_i)^\top k_s(x_j) / \tau)}{\sum_{m=1}^n \exp(q_s(x_i)^\top k_s(x_m) / \tau)}
\]
where $q_s, k_s: \calX \to \R^{d_k}$ are position encoders. We denote the resulting attention matrix by $\tilde{\bK} \in \R^{n \times n}$, with $[\tilde{\bK}]_{ij} = \beta_{ij}$.
\end{assumption}

\begin{remark}
This assumption can be realized by: (i) using separate position-based attention heads, (ii) concatenating fixed positional encodings that dominate learned representations, or (iii) architectural modifications that enforce position-based routing. The assumption decouples the ``routing'' structure (which context points attend to which) from the ``content'' being routed (the value vectors), enabling our polynomial analysis.

We impose position-based attention to enable clean polynomial analysis of the layer-wise updates. This assumption is relaxed in Appendix~\ref{app:tnp-lower}, where we show that the depth lower bound holds even for fully adaptive representation-based attention.
\end{remark}

\begin{assumption}[Residual Connections]
\label{app:ass:residual}
Self-attention layers use residual connections with learnable value projections:
\[
h_i^{(\ell)} = h_i^{(\ell-1)} + \sum_{j=1}^n \beta_{ij}^{(\ell)} W_v^{(\ell)} h_j^{(\ell-1)}
\]
where $W_v^{(\ell)} \in \R^{d \times d}$ is the value projection matrix at layer $\ell$.
\end{assumption}

\begin{assumption}[Kernel Matrix Conditioning]
\label{app:ass:conditioning}
The kernel $k$ is positive definite, and for context sets of size $n$, the Gram matrix $\bK$ has eigenvalues $0 < \lambda_{\min} \leq \lambda_1 \leq \cdots \leq \lambda_n \leq \lambda_{\max}$. The condition number is $\kappa = \lambda_{\max}/\lambda_{\min}$.
\end{assumption}

\begin{assumption}[Attention Approximates Normalized Kernel]
\label{app:ass:attention-kernel}
The attention matrix $\tilde{\bK}$ satisfies $\tilde{\bK} = D^{-1} \bK$ where $D = \diag(\bK \mathbf{1})$ is the row-sum normalization. Realistically, the TNP must learn position encoders $q_s, k_s$ achieving this approximation, which is a nontrivial learning problem.
\end{assumption}

\subsection{Matrix Form and Basic Lemmas}

\begin{lemma}[Matrix Form of Updates]
\label{app:lem:matrix-form}
Under Assumptions~\ref{app:ass:position-attention} and \ref{app:ass:residual}, the layer-$\ell$ update is:
\[
H^{(\ell)} = H^{(\ell-1)} + \tilde{\bK} H^{(\ell-1)} W_v^{(\ell)\top}
\]
or in vectorized form:
\[
\mathrm{vec}(H^{(\ell)}) = \left( \bI_{nd} + W_v^{(\ell)} \otimes \tilde{\bK} \right) \mathrm{vec}(H^{(\ell-1)})
\]
where $\otimes$ denotes the Kronecker product. For scalar value weights $W_v^{(\ell)} = \alpha_\ell \bI_d$:
\[
H^{(\ell)} = (\bI_n + \alpha_\ell \tilde{\bK}) H^{(\ell-1)}.
\]
\end{lemma}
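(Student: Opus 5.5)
The plan is to write out one self-attention layer under Assumptions~\ref{app:ass:position-attention} and \ref{app:ass:residual} in matrix form and then apply the standard vectorization identity. Stack the representations as rows of $H^{(\ell-1)} \in \R^{n \times d}$, so that row $i$ is $h_i^{(\ell-1)\top}$. By Assumption~\ref{app:ass:position-attention}, the attention weights $\beta_{ij}^{(\ell)}$ depend only on the positions $x_i, x_j$, never on $H^{(\ell-1)}$. They therefore form a fixed matrix $\tilde{\bK}$, and the update is linear in $H^{(\ell-1)}$.

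\textbf{Row-wise to matrix form.} The residual update of Assumption~\ref{app:ass:residual} reads $h_i^{(\ell)\top} = h_i^{(\ell-1)\top} + \sum_j \beta_{ij}^{(\ell)} h_j^{(\ell-1)\top} W_v^{(\ell)\top}$. Here I use $(W_v h_j)^\top = h_j^\top W_v^\top$. The sum $\sum_j \tilde{\bK}_{ij} h_j^{(\ell-1)\top}$ is exactly row $i$ of $\tilde{\bK} H^{(\ell-1)}$. Right-multiplying that row by $W_v^{(\ell)\top}$ gives row $i$ of $\tilde{\bK} H^{(\ell-1)} W_v^{(\ell)\top}$. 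Stacking over $i$ yields $H^{(\ell)} = H^{(\ell-1)} + \tilde{\bK} H^{(\ell-1)} W_v^{(\ell)\top}$.

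\textbf{Vectorized form.} Apply $\mathrm{vec}(AXB) = (B^\top \otimes A)\,\mathrm{vec}(X)$ with column-stacking $\mathrm{vec}$, taking $A = \tilde{\bK}$, $X = H^{(\ell-1)}$ and $B = W_v^{(\ell)\top}$. This gives $\mathrm{vec}(\tilde{\bK} H^{(\ell-1)} W_v^{(\ell)\top}) = (W_v^{(\ell)} \otimes \tilde{\bK})\,\mathrm{vec}(H^{(\ell-1)})$. The residual term contributes $\bI_{nd}\,\mathrm{vec}(H^{(\ell-1)})$, which completes the Kronecker form.

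\textbf{Scalar case.} When $W_v^{(\ell)} = \alpha_\ell \bI_d$, we have $H^{(\ell-1)} W_v^{(\ell)\top} = \alpha_\ell H^{(\ell-1)}$. Hence $H^{(\ell)} = (\bI_n + \alpha_\ell \tilde{\bK}) H^{(\ell-1)}$. Equivalently, $\bI_{nd} + \alpha_\ell \bI_d \otimes \tilde{\bK} = \bI_d \otimes (\bI_n + \alpha_\ell \tilde{\bK})$, which acts as the same $n \times n$ matrix on each of the $d$ columns.

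\textbf{Main obstacle.} There is no real mathematical obstacle here. The only delicate point is bookkeeping: I must use the row-stacking convention for $H$ consistently, and column-stacking for $\mathrm{vec}$, so that the Kronecker factor comes out as $W_v^{(\ell)} \otimes \tilde{\bK}$ rather than $\tilde{\bK} \otimes W_v^{(\ell)}$ or a transposed variant. I will state both conventions explicitly at the start of the proof.
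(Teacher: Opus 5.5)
Your proposal is correct. The paper states this lemma without any proof and treats it as immediate from Assumptions~\ref{app:ass:position-attention} and \ref{app:ass:residual}; your argument supplies exactly that omitted verification: stack the $h_i$ as rows of $H$, then apply $\mathrm{vec}(AXB) = (B^\top \otimes A)\,\mathrm{vec}(X)$. With your row-stacking and column-stacking conventions, this reproduces the paper's Kronecker ordering $W_v^{(\ell)} \otimes \tilde{\bK}$.
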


\begin{lemma}[One Layer Captures Kernel-Weighted Sums]
\label{app:lem:one-layer}
Under Assumption~\ref{app:ass:position-attention}, if the position encoders $q_s, k_s$ satisfy $q_s(x)^\top k_s(x') = \tau \log K(x, x') + c$ for some kernel $K$ and constant $c$, then:
\[
\beta_{ij} = \frac{K(x_i, x_j)}{\sum_{m=1}^n K(x_i, x_m)}.
\]
After one self-attention layer with value function $v: \calX \times \calY \to \R^d$:
\[
h_i^{(1)} = h_i^{(0)} + \sum_{j=1}^n \frac{K(x_i, x_j)}{\sum_m K(x_i, x_m)} v(x_j, y_j).
\]
\end{lemma}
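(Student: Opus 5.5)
The statement to prove is Lemma~\ref{app:lem:one-layer}. The plan is a direct computation in two parts: the attention weights first, then the residual update.

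\emph{Attention weights.} Under Assumption~\ref{app:ass:position-attention}, the weight is $\beta_{ij} = \exp(q_s(x_i)^\top k_s(x_j)/\tau) / \sum_m \exp(q_s(x_i)^\top k_s(x_m)/\tau)$. Substituting the hypothesis $q_s(x)^\top k_s(x') = \tau \log K(x,x') + c$ gives
\[
\exp\bigl(q_s(x_i)^\top k_s(x_j)/\tau\bigr) = K(x_i,x_j)\, e^{c/\tau}.
\]
The factor $e^{c/\tau}$ does not depend on $j$ or $m$, so it is common to the numerator and every summand of the denominator and cancels. This leaves $\beta_{ij} = K(x_i,x_j)/\sum_m K(x_i,x_m)$.

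The one point needing care is well-definedness: $\log K$ must be finite, so we take $K$ strictly positive on $\calX \times \calX$. The paper already uses this convention implicitly in Theorem~\ref{thm:anp-kernel}, where $K$ maps into $\R_+$. Under strict positivity every denominator is positive.

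\emph{Residual update.} Apply the update of Assumption~\ref{app:ass:residual} at layer $\ell=1$, reading the value term $W_v^{(1)} h_j^{(0)}$ as a general value map $v(x_j,y_j)$. Since $h_j^{(0)} = h(x_j,y_j)$ depends only on $(x_j,y_j)$, the map $v = W_v^{(1)} \circ h$ is such a value function, and conversely any value function $v$ of this form is admissible. Plugging in the computed $\beta_{ij}$ then gives exactly
\[
h_i^{(1)} = h_i^{(0)} + \sum_{j=1}^n \frac{K(x_i,x_j)}{\sum_m K(x_i,x_m)}\, v(x_j,y_j).
\]

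There is no substantive obstacle; both steps are direct substitution. The only things to state explicitly are the positivity assumption on $K$, which makes $\log K$ meaningful, and the identification of $v$ with the composed value projection.
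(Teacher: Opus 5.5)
Your proposal is correct and follows the paper's proof: substitute the score identity into the softmax so that the common factor cancels, which leaves the normalized kernel weights, and then read off the residual update from the layer definition. Your factor $e^{c/\tau}$ is the right one (the paper's display writes $\exp(\log K + c)$, a harmless slip since the constant cancels either way), and your remarks on positivity of $K$ and on identifying $v$ with $W_v^{(1)}\circ h$ make explicit what the paper leaves implicit.
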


\begin{proof}
Direct computation:
\[
\beta_{ij} = \frac{\exp(q_s(x_i)^\top k_s(x_j)/\tau)}{\sum_m \exp(q_s(x_i)^\top k_s(x_m)/\tau)} = \frac{\exp(\log K(x_i, x_j) + c)}{\sum_m \exp(\log K(x_i, x_m) + c)} = \frac{K(x_i, x_j)}{\sum_m K(x_i, x_m)}.
\]
The representation update follows from the definition.
\end{proof}

\begin{proposition}[Gram Matrix Encoding]
\label{app:prop:gram-encoding}
With representation dimension $d \geq n$ and initial encoding $h^{(0)}(x_j, y_j) = (e_j, y_j) \in \R^{n + d_y}$ where $e_j$ is the $j$-th standard basis vector, one self-attention layer can produce representations containing the $i$-th row of $\tilde{\bK}$:
\[
h_i^{(1)} = (e_i + \tilde{\bK}_{i \cdot}, y_i + \text{(weighted sum of } y_j\text{)})
\]
where $\tilde{\bK}_{i\cdot} = (\beta_{i1}, \ldots, \beta_{in})$ is the $i$-th row of the attention matrix.
\end{proposition}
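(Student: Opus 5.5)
We prove this by a direct one-layer computation under Assumptions~\ref{app:ass:position-attention} and \ref{app:ass:residual}, using the Kronecker-free row form of the update from Lemma~\ref{app:lem:matrix-form}. Write the initial representations as the rows of $H^{(0)} = [\bI_n \;|\; y_C] \in \R^{n \times (n+d_y)}$; row $j$ is $(e_j, y_j)$. This encoding is legitimate as an input to the self-attention stack because the theorem statement only requires $d \ge n$ (here $d = n + d_y$). The one-hot index $e_j$ can be regarded as part of the positional information carried with $(x_j, y_j)$; for distinct context locations it is a fixed function of the position's index.

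\emph{Step 1 (attention weights).} Since attention is position-based, the weights $\beta_{ij}$ depend only on $x_i, x_j$ and are unaffected by the particular content $(e_j, y_j)$. If $q_s, k_s$ are chosen as in Lemma~\ref{app:lem:one-layer}, then $\beta_{ij} = K(x_i,x_j)/\sum_m K(x_i,x_m)$, so $\tilde{\bK} = D^{-1}\bK$ in the sense of Assumption~\ref{app:ass:attention-kernel}. For the proposition itself we only need that $\tilde{\bK}$ is the row-stochastic attention matrix, whatever its specific form.

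\emph{Step 2 (value projection).} Take $W_v^{(1)} = \bI_{n+d_y}$, the scalar case $\alpha_1 = 1$. Then
\[
h_i^{(1)} = h_i^{(0)} + \sum_{j} \beta_{ij} h_j^{(0)} = \Bigl(e_i + \sum_j \beta_{ij} e_j,\; y_i + \sum_j \beta_{ij} y_j\Bigr).
\]
Since $\sum_j \beta_{ij} e_j = \tilde{\bK}_{i\cdot}$, this gives exactly the claimed form: the first block is $e_i + \tilde{\bK}_{i\cdot}$, and the second block is $y_i$ plus the attention-weighted sum of the $y_j$. In matrix form this is $H^{(1)} = (\bI + \tilde{\bK})H^{(0)}$, whose first $n$ columns are $\bI + \tilde{\bK}$.

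\emph{Step 3 (recovering the row).} Point $i$ knows $e_i$ through its own index, and a downstream linear map can subtract it, so $h_i^{(1)}$ contains $\tilde{\bK}_{i\cdot}$ exactly. Alternatively, we can choose a block-diagonal $W_v$ with identity on the first block and residual-cancelling structure. Either way the row is linearly recoverable.

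The only delicate point, and the main potential obstacle, is justifying that $e_j$ may be used as an input feature, since $h$ is nominally a function of $(x_j, y_j)$ alone. We handle this by noting two facts. First, on contexts with distinct locations, any injective positional code of width $\ge n$ whose vectors are linearly independent works equally well: a fixed invertible linear change of coordinates maps it to $e_j$. Second, the computation above is linear in $H^{(0)}$, so it transfers verbatim to such a code.
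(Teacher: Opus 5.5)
Your Step~2 is exactly the paper's proof: take $W_v^{(1)} = \bI$ and expand the residual update to get $h_i^{(1)} = (e_i + \tilde{\bK}_{i\cdot},\, y_i + \sum_j \beta_{ij} y_j)$. That form is all the proposition asserts. Step~3 and the positional-code remarks go beyond the statement, and they are not correct as written, so drop them rather than rely on them. Specifically, no fixed linear readout shared across points, and no fixed $W_v$, can remove the $i$-dependent $e_i$ from $e_i + \tilde{\bK}_{i\cdot}$. Likewise, the linear change of coordinates taking a positional code to $e_j$ depends on the context locations, so it is not a fixed map.
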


\begin{proof}
Set $W_v^{(1)} = \bI$. Then $h_i^{(1)} = h_i^{(0)} + \sum_j \beta_{ij} h_j^{(0)} = (e_i, y_i) + \sum_j \beta_{ij} (e_j, y_j) = (e_i + \tilde{\bK}_{i\cdot}, y_i + \sum_j \beta_{ij} y_j)$.
\end{proof}

\subsection{Polynomial Computation}

\begin{proof}[Proof of Theorem~\ref{thm:polynomial} (Polynomial Representation)]
By induction. For $L = 1$: $H^{(1)} = (\bI + \alpha_1 \tilde{\bK}) H^{(0)}$. Assuming the result for $L-1$:
\[
H^{(L)} = (\bI + \alpha_L \tilde{\bK}) H^{(L-1)} = (\bI + \alpha_L \tilde{\bK}) \prod_{\ell=1}^{L-1} (\bI + \alpha_\ell \tilde{\bK}) H^{(0)} = \prod_{\ell=1}^{L} (\bI + \alpha_\ell \tilde{\bK}) H^{(0)}.
\]
Expanding the product, the coefficient of $\tilde{\bK}^m$ is the sum over all ways to choose $m$ factors to contribute $\alpha_\ell \tilde{\bK}$ (and the remaining $L-m$ factors to contribute $\bI$), which is exactly $e_m(\alpha_1, \ldots, \alpha_L)$.
\end{proof}

\begin{corollary}[Achievable Coefficient Space]
\label{app:cor:achievable}
The set of achievable coefficient vectors $(c_0, c_1, \ldots, c_L) \in \R^{L+1}$ for polynomials $\sum_{m=0}^L c_m \tilde{\bK}^m$ representable by an $L$-layer TNP with scalar value weights is:
\[
\mathcal{A}_L = \{(e_0(\alpha), e_1(\alpha), \ldots, e_L(\alpha)) : \alpha \in \R^L\}
\]
where $e_0 \equiv 1$. This is an $L$-dimensional algebraic variety in $\R^{L+1}$, not all of $\R^{L+1}$.
\end{corollary}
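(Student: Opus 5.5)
The plan is to read the corollary off directly from the expansion in the proof of Theorem~\ref{thm:polynomial}, and then establish the two structural claims (dimension $L$, and proper containment in $\R^{L+1}$) separately. We treat polynomials formally, i.e.\ we identify $\sum_m c_m \tilde{\bK}^m$ with its coefficient vector. This identification is faithful on matrices $\tilde{\bK}$ whose minimal polynomial has degree at least $L+1$; we note this caveat and work formally throughout.

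\emph{Step 1 (description of $\mathcal{A}_L$).} By Theorem~\ref{thm:polynomial} with scalar weights $W_v^{(\ell)} = \alpha_\ell \bI$, we have $H^{(L)} = \prod_{\ell=1}^L (\bI + \alpha_\ell \tilde{\bK}) H^{(0)}$. Expanding the product of commuting factors gives $\sum_{m=0}^L e_m(\alpha)\tilde{\bK}^m$. Conversely, every $\alpha \in \R^L$ is realized by some choice of value weights. Hence $\mathcal{A}_L$ is exactly the image of the polynomial map $E:\R^L \to \R^{L+1}$, $\alpha \mapsto (e_0(\alpha),\ldots,e_L(\alpha))$. Equivalently, $(c_0,\ldots,c_L) \in \mathcal{A}_L$ if and only if $c_0 = 1$ and the generating polynomial $1 + c_1 t + \cdots + c_L t^L$ factors as $\prod_\ell (1+\alpha_\ell t)$ over $\R$. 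This happens precisely when the generating polynomial has only real roots, with vanishing $\alpha_\ell$ accounting for any drop in degree.

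\emph{Step 2 (not all of $\R^{L+1}$).} Since $e_0 \equiv 1$, $\mathcal{A}_L$ lies in the affine hyperplane $\{c_0 = 1\}$, which already proves properness. Inside that hyperplane the set is still proper, and a concrete witness shows this. For $L = 2$, take $1 + t^2$: it has no real factorization, so $(1,0,1) \notin \mathcal{A}_2$. For general $L$, take $1 + t^2$ times the factor $1$ padded to degree $L$, i.e.\ the coefficient vector of $1 + t^2$ itself.

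\emph{Step 3 (dimension $L$).} Consider $E$ restricted to the coordinates $(e_1,\ldots,e_L)$. Its Jacobian $\partial e_m/\partial \alpha_j$ has determinant $\pm\prod_{i<j}(\alpha_i - \alpha_j)$, the classical Vandermonde identity for elementary symmetric polynomials. This is nonzero whenever the $\alpha_\ell$ are distinct, so $E$ is a local diffeomorphism onto its image there. The image therefore contains an open subset of the $L$-dimensional hyperplane $\{c_0 = 1\}$. It cannot have dimension greater than $L$, being the image of $\R^L$ under a polynomial map.

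\emph{Main obstacle.} The delicate point is the word ``variety.'' By Step 3, the Zariski closure of $\mathcal{A}_L$ is the whole hyperplane $\{c_0=1\}$. The set $\mathcal{A}_L$ itself is, by Tarski--Seidenberg, a semialgebraic set of dimension $L$ (the real-rooted locus), not an algebraic variety. I would state the result in that precise form: $\mathcal{A}_L$ is an $L$-dimensional semialgebraic subset of the hyperplane $\{c_0=1\}$ with nonempty interior in it. This keeps the substantive content of the corollary, namely that depth-$L$ scalar-weight TNPs reach only a proper, real-rooted family of degree-$L$ polynomials.
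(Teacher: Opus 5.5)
Your proof is correct, and its core matches the paper's. Both expand $\prod_\ell(1+\alpha_\ell t)$ into elementary symmetric polynomials, note that $e_0\equiv 1$ puts the image in the hyperplane $\{c_0=1\}$, and identify the image within that hyperplane as the coefficient vectors of real-rooted polynomials with constant term $1$.

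You go beyond the paper on the two structural claims it asserts but never argues; its mention of Newton's identities does no actual work. First, your Jacobian identity $\det(\partial e_m/\partial\alpha_j)=\pm\prod_{i<j}(\alpha_i-\alpha_j)$ gives nonempty interior in the hyperplane, hence dimension exactly $L$. Second, you observe that the Zariski closure of $\mathcal{A}_L$ is therefore the whole hyperplane. Combined with properness inside it, this correctly shows that ``algebraic variety'' in the statement is wrong: $\mathcal{A}_L$ is a proper $L$-dimensional semialgebraic set, not Zariski closed. Your caveat about working with formal coefficient vectors is also legitimate, since $\tilde{\bK}$ may have a minimal polynomial of degree at most $L$; the paper glosses over this.

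One slip to fix: the within-hyperplane properness and the ``not a variety'' conclusion both need $L\ge 2$. For $L=1$, $\mathcal{A}_1=\{(1,\alpha_1):\alpha_1\in\R\}$ is the entire hyperplane. So your sentence ``inside that hyperplane the set is still proper'' fails there, as does the paper's ``proper subset of $\R^L$'', and $\mathcal{A}_1$ genuinely is a variety. Properness in $\R^{L+1}$ still holds for every $L$ via $c_0=1$. Your witness $(1,0,1,0,\dots,0)$ applies only when $L\ge 2$, so state that restriction explicitly.
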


\begin{proof}
The map $\alpha \mapsto (e_0(\alpha), \ldots, e_L(\alpha))$ has image determined by Newton's identities relating elementary symmetric polynomials to power sums. Since $e_0 = 1$ always, the image lies in the hyperplane $\{c_0 = 1\}$. Within this hyperplane, the image is the set of coefficient vectors of constant term 1 polynomials with real roots (since $\prod_\ell(1 + \alpha_\ell z) = \sum_m e_m z^m$ has roots $-1/\alpha_\ell$). This is a proper subset of $\R^L$.
\end{proof}

\begin{remark}[Non-Scalar Value Matrices]
With general $W_v^{(\ell)} \in \R^{d \times d}$, the achievable polynomial space is larger but the analysis becomes representation-dependent. For a fixed initial representation $H^{(0)}$ and target polynomial $p(\tilde{\bK})$, the question becomes whether there exist $W_v^{(1)}, \ldots, W_v^{(L)}$ such that $\prod_\ell (\bI + \tilde{\bK} W_v^{(\ell)\top}) H^{(0)} = p(\tilde{\bK}) H^{(0)}$. This is generically solvable for polynomials of degree $\leq L$ when $d$ is sufficiently large, but a complete characterization requires specifying $H^{(0)}$.
\end{remark}

\label{app:tnp-upper}
\subsection{Approximating the Kernel Matrix Inverse}

\begin{lemma}[Neumann Series]
\label{app:lem:neumann}
Let $A \in \R^{n \times n}$ with spectral radius $\rho(A) < 1$. Then $(\bI - A)^{-1} = \sum_{m=0}^{\infty} A^m$, and the truncated series satisfies:
\[
\left\| (\bI - A)^{-1} - \sum_{m=0}^{L-1} A^m \right\| \leq \frac{\rho(A)^L}{1 - \rho(A)}.
\]
\end{lemma}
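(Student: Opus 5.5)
The plan is to prove the identity and the bound through the telescoping identity for geometric matrix series, and then bound the tail term by term. One caveat matters here. For a general (non-normal) $A$, the spectral radius does not control $\|A^m\|$ uniformly: $\|A^m\|$ can exceed $\rho(A)^m$ by a large constant factor. So I will prove the stated bound with $\|\cdot\|$ the spectral norm under the additional hypothesis that $A$ is normal, in particular symmetric. That is the only case used later, with $A = \bI - \bK/\lambda_{\max}$ for symmetric positive definite $\bK$. For general $A$ I will note that the same argument goes through with $\rho(A)$ replaced by $\|A\|$ whenever $\|A\|<1$.

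\emph{Step 1: invertibility and the telescoping identity.} Since $\rho(A)<1$, the number $1$ is not an eigenvalue of $A$, so $\bI - A$ is invertible. For every $L$ we have the algebraic identity
\[
(\bI - A)\sum_{m=0}^{L-1} A^m = \bI - A^L .
\]
Multiplying on the left by $(\bI-A)^{-1}$ gives
\[
(\bI - A)^{-1} - \sum_{m=0}^{L-1} A^m = (\bI - A)^{-1} A^L .
\]
By Gelfand's formula, $\|A^L\|^{1/L}\to\rho(A)<1$, so $A^L \to 0$. Hence the partial sums converge to $(\bI-A)^{-1}$, which proves the series representation $(\bI-A)^{-1}=\sum_{m\ge 0}A^m$.

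\emph{Step 2: the quantitative bound.} Write the error as the tail $\sum_{m\ge L} A^m = A^L (\bI - A)^{-1}$, using that $A$ commutes with $(\bI-A)^{-1}$. For normal $A$, the unitary diagonalization gives $\|A^m\| = \rho(A)^m$ exactly. Therefore
\[
\Bigl\|\sum_{m\ge L}A^m\Bigr\| \le \sum_{m\ge L}\rho(A)^m = \frac{\rho(A)^L}{1-\rho(A)} .
\]
Alternatively, one can read the bound off the eigenvalues directly: the error has eigenvalues $\mu^L/(1-\mu)$ for $\mu\in\operatorname{spec}(A)$, and $|\mu^L/(1-\mu)| \le \rho^L/(1-|\mu|) \le \rho^L/(1-\rho)$.

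The main obstacle is the norm-versus-spectral-radius subtlety. It is resolved by restricting to the normal case, which suffices for the intended application to $\bK^{-1}$.
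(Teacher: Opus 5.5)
Your argument is correct and is essentially the paper's: both identify the truncation error as $(\bI-A)^{-1}A^L$ and bound it by $\rho(A)^L/(1-\rho(A))$ using spectral information. Your normality restriction is necessary, not just convenient. The lemma as stated is false for non-normal $A$: take $A=\begin{pmatrix}0&1\\0&0\end{pmatrix}$ and $L=1$, where the error is $A$ itself, of norm $1$, while the claimed bound is $0$. The paper's one-line proof silently assumes normality when it uses $\|A^L\|\le\rho(A)^L$ and $\|(\bI-A)^{-1}\|\le 1/(1-\rho(A))$. This is harmless downstream, because the application uses the symmetric matrix $A=\bI-\bK/\lambda_{\max}$.
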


\begin{proof}
Standard result from matrix analysis \citep{golub2013matrix}. The truncation error is $\|(\bI - A)^{-1} A^L\| = \|(\bI - A)^{-1}\| \|A^L\| \leq \frac{1}{1-\rho(A)} \rho(A)^L$.
\end{proof}

\begin{proposition}[Inverse via Neumann Series]
\label{app:prop:neumann-inverse}
Under Assumption~\ref{app:ass:conditioning}, setting $A = \bI - \bK/\lambda_{\max}$:
\begin{enumerate}
    \item[(a)] The eigenvalues of $A$ lie in $[0, 1 - 1/\kappa]$, so $\rho(A) = 1 - 1/\kappa < 1$.
    \item[(b)] $\bK^{-1} = \frac{1}{\lambda_{\max}} (\bI - A)^{-1} = \frac{1}{\lambda_{\max}} \sum_{m=0}^{\infty} A^m$.
    \item[(c)] The truncated series gives:
    \[
    \left\| \bK^{-1} - \frac{1}{\lambda_{\max}} \sum_{m=0}^{L-1} A^m \right\| \leq \frac{\rho^L}{\lambda_{\min}}, \quad \rho = 1 - \frac{1}{\kappa}.
    \]
\end{enumerate}
\end{proposition}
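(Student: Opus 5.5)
Each of the three parts reduces to facts about the spectrum of $\bK$ once $A=\bI-\bK/\lambda_{\max}$ is diagonalized. Under Assumption~\ref{app:ass:conditioning}, $\bK$ is symmetric positive definite, so write $\bK = U\Lambda U^\top$ with $U$ orthogonal. Then $A = U(\bI-\Lambda/\lambda_{\max})U^\top$ is symmetric with eigenvalues $1-\lambda_i/\lambda_{\max}$.

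For (a), since $\lambda_{\min}\le\lambda_i\le\lambda_{\max}$, each eigenvalue $1-\lambda_i/\lambda_{\max}$ lies in $[0,1-\lambda_{\min}/\lambda_{\max}] = [0,1-1/\kappa]$. Because $A$ is symmetric, its spectral radius equals its largest absolute eigenvalue. This gives $\rho(A)\le 1-1/\kappa<1$, with equality when $\lambda_{\min}$ is attained. If $\lambda_{\min}$ in the assumption is only a lower bound, the rest of the argument needs only the inequality.

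For (b), note $\bI-A=\bK/\lambda_{\max}$, so $\bK^{-1}=\lambda_{\max}^{-1}(\bI-A)^{-1}$. Lemma~\ref{app:lem:neumann} applies because $\rho(A)<1$ and yields the series.

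For (c), I would not use the crude bound of Lemma~\ref{app:lem:neumann} and then rescale. I would compute the remainder exactly. The telescoping identity
\[
(\bI-A)\sum_{m=0}^{L-1}A^m=\bI-A^L
\]
gives
\[
\bK^{-1}-\lambda_{\max}^{-1}\sum_{m=0}^{L-1}A^m=\lambda_{\max}^{-1}(\bI-A)^{-1}A^L=\bK^{-1}A^L.
\]
Since $\bK^{-1}$ and $A^L$ are commuting symmetric matrices diagonalized by the same $U$, the spectral norm of the product is the maximum over $i$ of $\lambda_i^{-1}(1-\lambda_i/\lambda_{\max})^L$. Each factor is bounded separately, $\lambda_i^{-1}\le 1/\lambda_{\min}$ and $(1-\lambda_i/\lambda_{\max})^L\le\rho^L$, so the error is at most $\rho^L/\lambda_{\min}$.

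The only point needing care is step (c), and the same constant also follows from Lemma~\ref{app:lem:neumann}. That lemma bounds the remainder by $\rho^L/(1-\rho)$ before the prefactor $\lambda_{\max}^{-1}$, and
\[
\frac{1}{\lambda_{\max}}\cdot\frac{1}{1-\rho}=\frac{\kappa}{\lambda_{\max}}=\frac{1}{\lambda_{\min}},
\]
so both routes give the same bound. I would still present the direct spectral computation. It avoids relying on $\|A^L\|\le\rho(A)^L$, which holds here only because $A$ is symmetric, so its spectral norm equals its spectral radius.
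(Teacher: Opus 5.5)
Your proof is correct and matches the paper's: parts (a) and (b) are identical, and for (c) the paper invokes Lemma~\ref{app:lem:neumann} and rescales using $\kappa/\lambda_{\max} = 1/\lambda_{\min}$, which is exactly your alternative route. Your direct computation of the remainder as $\bK^{-1}A^L$ just inlines that lemma's proof. You rightly note that $\|A^L\| \le \rho(A)^L$, and likewise $\|(\bI-A)^{-1}\| \le 1/(1-\rho(A))$, need the symmetry of $A$, which is a useful tightening because the lemma as stated for general $A$ does not justify them.
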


\begin{proof}
(a) If $\bK v = \lambda v$, then $Av = (1 - \lambda/\lambda_{\max})v$. Since $\lambda \in [\lambda_{\min}, \lambda_{\max}]$, the eigenvalues of $A$ lie in $[0, 1 - \lambda_{\min}/\lambda_{\max}] = [0, 1 - 1/\kappa]$.

(b) $\bK = \lambda_{\max}(\bI - A)$, so $\bK^{-1} = \frac{1}{\lambda_{\max}}(\bI - A)^{-1}$.

(c) By Lemma~\ref{app:lem:neumann}: $\|(\bI - A)^{-1} - \sum_{m=0}^{L-1} A^m\| \leq \frac{\rho^L}{1 - \rho} = \frac{\rho^L}{1/\kappa} = \kappa \rho^L$. Dividing by $\lambda_{\max}$: $\|\bK^{-1} - \frac{1}{\lambda_{\max}}\sum_{m=0}^{L-1} A^m\| \leq \frac{\kappa \rho^L}{\lambda_{\max}} = \frac{\rho^L}{\lambda_{\min}}$.
\end{proof}

\begin{proposition}[Idealized TNP Approximation of GP Posterior]
\label{app:thm:tnp-gp-approx}
Suppose a TNP architecture has access to:
\begin{enumerate}
    \item[(i)] Position-based self-attention with attention matrix equal to the row-normalized kernel $\tilde{\bK} = D^{-1}\bK$,
    \item[(ii)] The normalization constants $D_{ii} = \sum_j K(x_i, x_j)$ via the encoding,
    \item[(iii)] Scalar value weights $\alpha_1, \ldots, \alpha_L$ implementing the truncated Neumann series.
\end{enumerate}
Then the GP posterior mean can be approximated with error $O(\rho^L / \lambda_{\min})$ where $\rho = 1 - 1/\kappa$.

This is an existence result. It establishes that the approximation lies within the representational capacity of TNPs, but does not address whether gradient-based learning can find the required parameters.
\end{proposition}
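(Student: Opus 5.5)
The plan is to realize the truncated Neumann polynomial of Proposition~\ref{app:prop:neumann-inverse} as a product of residual factors $(\bI + \alpha_\ell \tilde{\bK})$, using Theorem~\ref{thm:polynomial}, and then read off the GP mean with a cross-attention layer.

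\textbf{Step 1: convert the attention matrix back to the kernel matrix.} By (i), self-attention applies $\tilde{\bK} = D^{-1}\bK$, but the Neumann series is written in $\bK$. By (ii), the $D_{ii}$ are available in the encoding, so I work in rescaled coordinates: define $G^{(\ell)} = D H^{(\ell)}$ and track the product $\prod_\ell(\bI + \alpha_\ell \tilde{\bK})$ acting on $D^{-1}$-scaled inputs. A cleaner route is to note that $\bK^{-1} = (D\tilde{\bK})^{-1} = \tilde{\bK}^{-1}D^{-1}$. I would therefore approximate $\tilde{\bK}^{-1}$ and apply $D^{-1}$ to $y_C$ at the encoding stage, setting $h^{(0)}_j \ni y_j/D_{jj}$. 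The matrix $\tilde{\bK}$ is similar to $D^{-1/2}\bK D^{-1/2}$, so it has positive real eigenvalues and is diagonalizable. Its condition number is comparable to $\kappa$, and I would absorb the discrepancy into the constants, or else assume a roughly uniform $D$.

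\textbf{Step 2: factor the Neumann polynomial.} The truncated series $\frac{1}{\lambda_{\max}}\sum_{m=0}^{L-1}(\bI-\bK/\lambda_{\max})^m$ equals $p(\bK)$, where $p(\lambda) = (1-(1-\lambda/\lambda_{\max})^L)/\lambda$ has degree $L-1$. Its roots are $\lambda = \lambda_{\max}(1-\omega)$ with $\omega \ne 1$ an $L$-th root of unity, and these are complex. This means a product of real factors $(\bI+\alpha_\ell\tilde{\bK})$ with real $\alpha_\ell$ cannot represent it directly, as Corollary~\ref{app:cor:achievable} shows. I would handle this in one of two ways:
\begin{itemize}
\item pair conjugate roots into real quadratic factors, each implemented with two layers by allowing $2\times2$ value matrices $W_v^{(\ell)}$ (complex scalars embedded as rotations); or
\item replace the Neumann polynomial by a real-rooted polynomial achieving the same rate, such as the Chebyshev one of Proposition~\ref{prop:chebyshev-upper}, which has real roots and at least as good a bound.
\end{itemize}
The leading constant $1/\lambda_{\max}$ is handled by a final linear map in the decoder. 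This factorization issue is the main obstacle. I would try the Chebyshev replacement first, since it keeps scalar weights and makes the bound $O(\rho^L/\lambda_{\min})$ immediate.

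\textbf{Step 3: bound the error and read out.} For diagonalizable $\tilde{\bK}$, the spectral bound of Proposition~\ref{app:prop:neumann-inverse}(c) transfers, up to the similarity condition number introduced by $D^{1/2}$. A final cross-attention layer with weights proportional to $k(x_t,x_i)$, as in Theorem~\ref{thm:anp-kernel} but with the normalization undone via a carried constant, produces $k(x_t,X_C)\,p(\bK)y_C$. The error is then at most $\|k(x_t,X_C)\|\,\|y_C\|\,\rho^L/\lambda_{\min}$, which is $O(\rho^L/\lambda_{\min})$ for bounded kernel and outputs.
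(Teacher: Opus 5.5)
Your fallback route, the Chebyshev replacement that you say you would try first, fails. The product $\prod_\ell(\bI+\alpha_\ell\bK)$ with the parameters of Proposition~\ref{prop:chebyshev-upper} equals $1$ at $\lambda=0$ and has its roots $-1/\alpha_\ell$ at the Chebyshev nodes inside $[\lambda_{\min},\lambda_{\max}]$. It is therefore the normalized residual polynomial $r_L(\lambda)=T_L(s(\lambda))/T_L(s(0))$, where $s(\lambda)=(\lambda_{\max}+\lambda_{\min}-2\lambda)/(\lambda_{\max}-\lambda_{\min})$.

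On the spectrum, $r_L$ is bounded by $2\rho_c^L$ with $\rho_c=(\sqrt{\kappa}-1)/(\sqrt{\kappa}+1)$. So it approximates $0$, not $\bK^{-1}$, and
\[
\bigl\|\textstyle\prod_\ell(\bI+\alpha_\ell\bK)-\bK^{-1}\bigr\|\ge 1/\lambda_{\min}-2\rho_c^L.
\]
That proposition conflates the residual with the approximant, so you cannot lean on it. The genuine Chebyshev approximant $(1-r_L(\lambda))/\lambda$ has roots solving $T_L(s)=T_L(s(0))$. Since $T_L(s(0))>1$, at most one of its $L-1$ roots is real, so for $L\ge 3$ it has exactly the complex-root problem you found for the Neumann polynomial.

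In fact no real scalar weights can work. Suppose $c\prod_\ell(1+\alpha_\ell\lambda)$ is within $\delta<1/\lambda_{\max}$ of $1/\lambda$ on $[\lambda_{\min},\lambda_{\max}]$. Then it is positive there, so its logarithm is a constant plus a sum of terms $\log|1+\alpha_\ell\lambda|$, hence concave, whereas $-\log\lambda$ is strictly convex. Comparing at the endpoints and the midpoint gives the degree-independent floor $\delta\ge\lambda_{\max}^{-1}\bigl(1-(2\sqrt{\kappa}/(1+\kappa))^{1/2}\bigr)$. A bound meant to hold for every spectrum in $[\lambda_{\min},\lambda_{\max}]$ needs exactly this uniform approximation.

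So your first option is the one to use:
\begin{enumerate}
\item Write the truncated Neumann polynomial as $(L/\lambda_{\max})\prod_{\ell=1}^{L-1}(1+\alpha_\ell\lambda)$ with $\alpha_\ell=-1/(\lambda_{\max}(1-\omega^\ell))$ and $\omega=e^{2\pi\mathrm{i}/L}$. The decoder constant is $p(0)=L/\lambda_{\max}$, not $1/\lambda_{\max}$.
\item Realize each complex $\alpha_\ell$ by a $2\times 2$ rotation--scaling value block acting on a two-channel copy of the signal.
\item Pair conjugate weights so the output is real.
\end{enumerate}

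For comparison, the paper reads (iii) as a hypothesis, and its proof is just your Step~3. It cites Proposition~\ref{app:prop:neumann-inverse}(c) and bounds $|k(x_t,X_C)(\hat{\bK}^{-1}-\bK^{-1})y_C|$ by $\|k(x_t,X_C)\|\,\|\hat{\bK}^{-1}-\bK^{-1}\|\,\|y_C\|$. It never addresses the $\tilde{\bK}$ versus $\bK$ mismatch or whether (iii) can be realized. Your observation that it cannot, with real scalar weights and $L\ge 3$, is correct. The repaired construction proves the statement with (iii) replaced by $2\times 2$ value blocks.

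One caveat in Step~1: working with $\tilde{\bK}$ gives contraction factor $1-1/\kappa(\tilde{\bK})$. Since $\kappa(\tilde{\bK})$ can exceed $\kappa$ (Lemma~\ref{app:lem:spectral-relationship} only gives $\kappa(\tilde{\bK})\le\gamma\kappa$), this change of the exponential base is not absorbed into the $O(\cdot)$ constant. You recover the stated $\rho$ when, e.g., $D$ is a multiple of $\bI$. The similarity by $D^{1/2}$ and the prescaling by $D^{-1}$ also cost up to a factor $\gamma^{3/2}$. In exchange, for a nonnegative kernel $\tilde{\bK}$ is row-stochastic, so $\lambda_{\max}(\tilde{\bK})=1$ and the weights $\alpha_\ell$ no longer depend on the context. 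This matters because value weights are fixed parameters shared across contexts.
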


\begin{proof}
By Proposition~\ref{app:prop:neumann-inverse}, the truncated Neumann series $\hat{\bK}^{-1} = \frac{1}{\lambda_{\max}} \sum_{m=0}^{L-1} (\bI - \bK/\lambda_{\max})^m$ satisfies $\|\bK^{-1} - \hat{\bK}^{-1}\| \leq \rho^L / \lambda_{\min}$.

The approximation error in the posterior mean is:
\[
|\mu_{\mathrm{TNP}} - \mu| = |k(x_t, X_C)(\hat{\bK}^{-1} - \bK^{-1}) y_C| \leq \|k(x_t, X_C)\| \cdot \|\hat{\bK}^{-1} - \bK^{-1}\| \cdot \|y_C\|. \qedhere
\]
\end{proof}

\begin{corollary}[Depth Requirement]
\label{app:cor:depth}
To achieve $\varepsilon$-approximation of the GP posterior mean uniformly over targets $x_t$ with $\|k(x_t, X_C)\| \leq B_k$ and observations $\|y_C\| \leq B_y$, a TNP requires:
\[
L \geq \frac{\log(B_k B_y / (\varepsilon \lambda_{\min}))}{\log(1/\rho)} = O\left(\kappa \log \frac{B_k B_y}{\varepsilon \lambda_{\min}}\right)
\]
self-attention layers, using $\log(1/\rho) \approx 1/\kappa$ for large $\kappa$.
\end{corollary}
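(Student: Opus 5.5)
The depth requirement follows by inverting the error bound of Proposition~\ref{app:thm:tnp-gp-approx} for $L$ and then estimating $\log(1/\rho)$ for large $\kappa$.

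First, combine the pointwise estimate at the end of the proof of Proposition~\ref{app:thm:tnp-gp-approx} with the operator-norm bound of Proposition~\ref{app:prop:neumann-inverse}(c). Using the uniform bounds $\|k(x_t,X_C)\|\le B_k$ and $\|y_C\|\le B_y$, Cauchy--Schwarz gives, uniformly over admissible targets and observations,
\[
\sup_{x_t}|\mu_{\mathrm{TNP}}(x_t)-\mu(x_t\mid C)| \le \frac{B_k B_y\,\rho^L}{\lambda_{\min}}, \qquad \rho = 1-\frac{1}{\kappa}.
\]

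Second, this is at most $\varepsilon$ exactly when $\rho^L \le \varepsilon\lambda_{\min}/(B_kB_y)$. Since $0<\rho<1$, take logarithms and divide by $\log\rho<0$ to get
\[
L \ge \frac{\log\big(B_kB_y/(\varepsilon\lambda_{\min})\big)}{\log(1/\rho)}.
\]
So $L$ layers of this size suffice for the Neumann-series construction.

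Third, convert this to the stated big-$O$ form. For $\rho = 1-1/\kappa$ we have $\log(1/\rho) = -\log(1-1/\kappa)$, and the inequality $-\log(1-u)\ge u$ for $u\in[0,1)$ gives $\log(1/\rho)\ge 1/\kappa$. Hence $1/\log(1/\rho)\le\kappa$, and the right-hand side is at most $\kappa\log\big(B_kB_y/(\varepsilon\lambda_{\min})\big)$, with $\log(1/\rho) = 1/\kappa + O(\kappa^{-2})$ showing this is asymptotically tight.

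The only point needing care is interpretation rather than calculation. The displayed inequality is the depth at which this particular construction guarantees $\varepsilon$-accuracy, not a lower bound over all TNPs. True necessity needs Theorem~\ref{thm:main-lower-bound}, and Proposition~\ref{prop:chebyshev-upper} already improves the rate to $\sqrt{\kappa}$. I would therefore state the corollary as the sufficient depth for the idealized Neumann construction, and note that Chebyshev weights replace $\kappa$ by $\sqrt{\kappa}$. That replacement follows from the same inversion applied to $\big((\sqrt{\kappa}-1)/(\sqrt{\kappa}+1)\big)^L$ together with $\log\frac{\sqrt{\kappa}+1}{\sqrt{\kappa}-1}\ge 2/\sqrt{\kappa}$.
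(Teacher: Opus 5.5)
Your proposal is correct and follows the paper's implicit argument (no separate proof is written): insert $\|k(x_t,X_C)\|\le B_k$, $\|y_C\|\le B_y$ and the Neumann truncation bound $\rho^L/\lambda_{\min}$ of Proposition~\ref{app:prop:neumann-inverse}(c) into the final estimate of Proposition~\ref{app:thm:tnp-gp-approx}, then solve $B_kB_y\rho^L/\lambda_{\min}\le\varepsilon$ for $L$; your inequality $\log(1/\rho)\ge 1/\kappa$ is a rigorous version of the paper's $\log(1/\rho)\approx 1/\kappa$, and it is valid in the nontrivial regime $B_kB_y/(\varepsilon\lambda_{\min})\ge 1$. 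You are also right to read ``requires'' as the depth that suffices for this particular Neumann construction, not as a lower bound over all TNPs, because nothing in the paper's derivation of this corollary establishes necessity.
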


\begin{proof}[Proof of Proposition~\ref{prop:chebyshev-upper} (Chebyshev Upper Bound)]
Set
\[
\alpha_\ell = -\frac{2}{\lambda_{\max} + \lambda_{\min} + (\lambda_{\max} - \lambda_{\min})\cos\theta_\ell}, \quad \theta_\ell = \frac{(2\ell-1)\pi}{2L}.
\]
These are the optimal Chebyshev iteration parameters for inverting a matrix with spectrum in $[\lambda_{\min}, \lambda_{\max}]$. Each $\alpha_\ell$ is real and negative with $\alpha_\ell \in (-2/\lambda_{\min}, -2/\lambda_{\max})$, so the polynomial $p_L(\lambda) = \prod_\ell(1 + \alpha_\ell \lambda)$ satisfies $p_L(0) = 1$ and has all roots $-1/\alpha_\ell$ in $[\lambda_{\min}, \lambda_{\max}]$.

The residual polynomial $r_L(\lambda) = 1 - \lambda \cdot p_L(\lambda)$ is the degree-$L$ polynomial vanishing at $0$ that deviates least from $1$ on $[\lambda_{\min}, \lambda_{\max}]$ in the sup-norm. Standard Chebyshev theory \citep{trefethen2019approximation} gives $\|r_L\|_\infty \leq 2\rho^L$ where $\rho = (\sqrt{\kappa}-1)/(\sqrt{\kappa}+1)$. The stated bound follows from $\|p_L(\bK) - \bK^{-1}\| = \|\bK^{-1}\| \cdot \|r_L(\bK)\| \leq \lambda_{\min}^{-1} \cdot 2\rho^L$.
\end{proof}

\section{TNP Lower Bound Proofs}
\label{app:tnp-lower}

\subsection{Linearization}

The GP posterior mean $\mu(x_t | C) = k(x_t, X_C) \bK^{-1} y_C$ is linear in $y_C$. Any TNP approximating this target must itself be approximately linear.

\begin{lemma}[Linearization]
\label{lem:linearization}
Let $F: \calC \times \calX \to \R$ be a TNP achieving
\[
\sup_{\|y_C\| \leq 1} |F(C, x_t) - k(x_t, X_C) \bK^{-1} y_C| \leq \varepsilon.
\]
Assume $F$ is twice continuously differentiable in $y_C$ with $\|\nabla^2_{y_C} F\| \leq L_2$ uniformly on $\|y_C\| \leq 1$. Then:
\[
\left\| \frac{\partial F}{\partial y_C}\bigg|_{y_C = 0} - k(x_t, X_C) \bK^{-1} \right\| \leq 2\varepsilon + \frac{L_2}{2}.
\]
\end{lemma}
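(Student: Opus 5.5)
Write $g = \nabla_{y_C} F|_{y_C=0}$ and $w = k(x_t, X_C)\bK^{-1}$, viewed as row vectors in $\R^n$. The plan is to compare $F$ along rays from the origin with its second-order Taylor expansion, and to use the sup-norm approximation hypothesis at antipodal points so that the unknown constant term $F(C,x_t)|_{y_C=0}$ cancels.

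\textbf{Taylor expansion.} For any $u$ with $\|u\| \le 1$, Taylor's theorem with remainder along the segment $s\mapsto su$, $s\in[0,1]$, gives
\[
F(u) = F(0) + g u + R(u), \qquad |R(u)| \le \tfrac{L_2}{2}\|u\|^2 \le \tfrac{L_2}{2}.
\]
This uses the assumed uniform Hessian bound on the unit ball.

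\textbf{Symmetric difference.} Apply the same expansion at $-u$ and subtract. This yields $F(u) - F(-u) = 2gu + R(u) - R(-u)$. The hypothesis gives $|F(\pm u) \mp wu| \le \varepsilon$, so $|F(u) - F(-u) - 2wu| \le 2\varepsilon$. Combining the two,
\[
2\,|(g - w)u| \le 2\varepsilon + |R(u)| + |R(-u)| \le 2\varepsilon + L_2,
\]
that is, $|(g-w)u| \le \varepsilon + L_2/2$.

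\textbf{Recovering the norm.} Take $u = (g-w)^\top/\|g-w\|$, the unit vector achieving the dual norm. Then $\|g - w\| \le \varepsilon + L_2/2$, which is already within the stated $2\varepsilon + L_2/2$.

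\textbf{Alternative route.} One could skip the antipodal trick and bound $|F(0)| \le \varepsilon$ directly from the hypothesis at $y_C=0$. Then $|gu - wu| \le |F(u) - wu| + |F(0)| + |R(u)| \le 2\varepsilon + L_2/2$, which matches the statement's constant exactly. I would present this version since it is the more direct argument.

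The only real subtlety, rather than an obstacle, is justifying the Taylor remainder. It requires $F$ to be $C^2$ in $y_C$ on the convex unit ball, with the Hessian bound interpreted as an operator norm, so that $|u^\top \nabla^2 F\, u| \le L_2\|u\|^2$. Both conditions are given by assumption. I would also remark that the bound is informative only when $L_2$ is small. In the intended application one rescales $y_C \mapsto \eta y_C$, which lets the curvature term be traded against $\varepsilon$ to get the $O(\varepsilon)$ Jacobian estimate used in step (1) of the proof sketch of Theorem~\ref{thm:main-lower-bound}.
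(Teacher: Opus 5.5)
Your ``alternative route'' is the paper's proof. It bounds $|F(0)|\le\varepsilon$ using the hypothesis at $y_C=0$, expands $F(\delta u)$ to second order, sets $\delta=1$, and takes the supremum over unit $u$ to get $2\varepsilon+L_2/2$.

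Your primary antipodal argument is also correct, and it is a small genuine refinement. Differencing the expansions at $u$ and $-u$ cancels the unknown offset $F(0)$ instead of paying $\varepsilon$ for it, so you get $\|g-w\|\le\varepsilon+L_2/2$. The cost is two separate remainders $|R(u)|+|R(-u)|$, but after dividing by $2$ the curvature term is still $L_2/2$. Either version proves the lemma.

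One caveat concerns only your closing remark, not the lemma. Rescaling does not produce an $O(\varepsilon)$ Jacobian estimate. Running the argument with $\delta\in(0,1]$ gives $\|g-w\|\le\min_{\delta}\bigl(2\varepsilon/\delta+L_2\delta/2\bigr)=2\sqrt{\varepsilon L_2}$ when $4\varepsilon\le L_2$. This is sharp up to constants: for a unit vector $e$, the function $F(y)=wy+\varepsilon\sin\bigl(\sqrt{L_2/\varepsilon}\,e^\top y\bigr)$ satisfies both hypotheses yet has $\|g-w\|=\sqrt{\varepsilon L_2}$. So these hypotheses yield at best $O(\sqrt{\varepsilon L_2})$, and the curvature cannot simply be traded away to reach $O(\varepsilon)$.
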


\begin{proof}
Let $M = \frac{\partial F}{\partial y_C}\big|_{y_C = 0}$ and $T = k(x_t, X_C) \bK^{-1}$. For any unit vector $u$, Taylor expansion gives:
\[
F(\delta u) = F(0) + \delta \cdot Mu + R(\delta)
\]
where $|R(\delta)| \leq \frac{L_2}{2} \delta^2$ by the smoothness assumption.

The target satisfies $T(\delta u) := k(x_t, X_C) \bK^{-1} (\delta u) = \delta \cdot Tu$ (exactly linear).

By the approximation hypothesis:
\[
|F(\delta u) - \delta \cdot Tu| \leq \varepsilon \quad \text{for all } |\delta| \leq 1.
\]

Evaluating at $\delta = 0$: $|F(0)| \leq \varepsilon$.

For $\delta \neq 0$:
\[
|F(0) + \delta \cdot Mu + R(\delta) - \delta \cdot Tu| \leq \varepsilon
\]
\[
|\delta| \cdot |Mu - Tu| \leq \varepsilon + |F(0)| + |R(\delta)| \leq 2\varepsilon + \frac{L_2}{2}\delta^2.
\]

Setting $\delta = 1$:
\[
|Mu - Tu| \leq 2\varepsilon + \frac{L_2}{2}.
\]

Since $u$ was an arbitrary unit vector, $\|M - T\| \leq 2\varepsilon + \frac{L_2}{2}$.
\end{proof}

\begin{remark}[Smoothness of TNPs]
\label{app:rem:tnp-smoothness}
The smoothness assumption holds for TNPs with standard components. Specifically:
\begin{enumerate}
    \item Softmax attention: $\beta_{ij} = \exp(s_{ij})/\sum_k \exp(s_{ik})$ is $C^\infty$ with derivatives bounded by functions of the scores.
    \item MLP layers with smooth activations (e.g.\ GELU, softplus, or sufficiently smooth approximations to ReLU) have bounded second derivatives on compact domains.
    \item Compositions of $C^2$ functions with bounded derivatives remain $C^2$ with bounded derivatives.
\end{enumerate}
For a TNP with $L$ layers, representation dimension $d$, and weights bounded by $W_{\max}$, the constant $L_2$ scales polynomially in these quantities. For the asymptotic depth bounds, we treat $L_2$ as a fixed constant depending on the architecture but not on $\varepsilon$ or $\kappa$.
\end{remark}

\subsection{Jacobian Evolution through Self-Attention}

\begin{definition}[Jacobian Matrix]
At layer $\ell$, define the Jacobian matrix $B^{(\ell)} \in \R^{n \times n}$ (suppressing the representation dimension $d$) by
\[
B^{(\ell)}_{im} = \frac{\partial h_i^{(\ell)}}{\partial y_m} \bigg|_{y_C = 0}.
\]
\end{definition}

\begin{lemma}[Initial Jacobian]
\label{app:lem:initial-jacobian}
For an encoder $h^{(0)}(x_i, y_i) = a(x_i) + b(x_i) y_i + O(y_i^2)$, the initial Jacobian is diagonal:
\[
B^{(0)}_{im} = \delta_{im} b(x_i).
\]
\end{lemma}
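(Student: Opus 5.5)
The plan is a direct differentiation, using only the fact that the encoder acts pointwise. By the definition of the TNP in Section~\ref{sec:prelim}, the layer-$0$ representation of context point $i$ is $h_i^{(0)} = h^{(0)}(x_i, y_i)$. It is a function of the pair $(x_i, y_i)$ alone, because no self-attention has been applied yet. The context locations $X_C$ are held fixed throughout and only $y_C = (y_1, \ldots, y_n)$ varies. So I will treat $h_i^{(0)}$ as a function of the single scalar $y_i$, with $x_i$ as a parameter. For $d_y > 1$ the same argument applies coordinatewise, and $b(x_i)$ becomes a $d \times d_y$ block.

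First I would handle the off-diagonal entries, $m \neq i$. Since $h_i^{(0)}$ does not depend on $y_m$, we get $\partial h_i^{(0)}/\partial y_m \equiv 0$ identically. In particular this holds at $y_C = 0$, so $B^{(0)}_{im} = 0$.

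Next I would handle the diagonal entries, $m = i$. I use the assumed expansion $h^{(0)}(x_i, y_i) = a(x_i) + b(x_i) y_i + r(x_i, y_i)$ with $r(x_i, y_i) = O(y_i^2)$. The term $a(x_i)$ contributes zero derivative, and $b(x_i) y_i$ contributes $b(x_i)$. The remainder satisfies $|r(x_i, \delta)|/|\delta| \to 0$ as $\delta \to 0$, so $\partial r/\partial y_i |_{y_i = 0} = 0$. This step needs differentiability of $h^{(0)}$ in $y$ at $0$, which is guaranteed by the smoothness standing assumption (Remark~\ref{app:rem:tnp-smoothness}). Combining the two cases gives $B^{(0)}_{im} = \delta_{im} b(x_i)$.

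There is no genuine obstacle; the only point needing care is interpretation. Suppressing the representation dimension means $B^{(0)}$ is really an $n \times n$ array of $d$-vectors (or $d \times d_y$ blocks). The statement should be read blockwise: $B^{(0)} = \diag(b(x_1), \ldots, b(x_n))$ in the block sense. I would state this explicitly, since the later inductive Jacobian-evolution step will left-multiply $B^{(0)}$ by attention matrices and needs this diagonal starting point.
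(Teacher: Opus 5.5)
Your proposal is correct and matches the paper, which gives no separate proof of this lemma: the proof of Theorem~\ref{app:thm:jacobian-evolution} simply uses $\partial h_i^{(0)}/\partial y_m\big|_{y_C=0} = \delta_{im} b(x_i)$, on the grounds you give (the encoder acts pointwise on $(x_i,y_i)$, and the derivative is read off the expansion). Your appeal to Remark~\ref{app:rem:tnp-smoothness} is unnecessary: the bound $|r(x_i,\delta)| \leq C\delta^2$ already gives $r(x_i,0)=0$ and $r(x_i,\delta)/\delta \to 0$, so $h^{(0)}$ is differentiable at $y_i=0$ with derivative $b(x_i)$.
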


\begin{theorem}[Jacobian Evolution]
\label{app:thm:jacobian-evolution}
Consider a self-attention layer with representation-based attention:
\[
h_i^{(\ell)} = h_i^{(\ell-1)} + \sum_{j=1}^n \beta_{ij}^{(\ell)} W_v h_j^{(\ell-1)}
\]
where $\beta_{ij}^{(\ell)} = \mathrm{softmax}_j(q(h_i^{(\ell-1)})^\top k(h_j^{(\ell-1)}) / \tau)$.

Let $\tilde{\bK} = \beta^{(\ell)}|_{y_C = 0}$ be the attention matrix evaluated at $y_C = 0$. Then:
\[
B^{(\ell)} = B^{(\ell-1)} + \tilde{\bK} W_v B^{(\ell-1)} + \Gamma^{(\ell)} B^{(\ell-1)}
\]
where $\Gamma^{(\ell)}$ is a matrix depending on $X$ and network parameters, arising from attention gradients.
\end{theorem}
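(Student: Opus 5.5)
We prove the statement by differentiating the residual self-attention update with respect to $y_m$ at $y_C = 0$ and splitting the result with the product rule. Write $s_{ij} = q(h_i^{(\ell-1)})^\top k(h_j^{(\ell-1)})/\tau$, so that $\beta_{ij}^{(\ell)} = \mathrm{softmax}_j(s_{ij})$. Differentiating
\[
h_i^{(\ell)} = h_i^{(\ell-1)} + \sum_j \beta_{ij}^{(\ell)} W_v h_j^{(\ell-1)}
\]
and evaluating at $y_C = 0$ produces three contributions. The identity path gives $B^{(\ell-1)}_{im}$. Holding the attention weights fixed gives $\sum_j \tilde{\bK}_{ij} W_v B^{(\ell-1)}_{jm}$, which is the $(i,m)$ entry of $\tilde{\bK} W_v B^{(\ell-1)}$. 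The third contribution comes from differentiating the attention weights themselves:
\[
\sum_j \frac{\partial \beta_{ij}^{(\ell)}}{\partial y_m}\bigg|_{0} W_v h_j^{(\ell-1)}\big|_0 .
\]
This term must be shown to have the form $(\Gamma^{(\ell)} B^{(\ell-1)})_{im}$.

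For the attention-gradient term we use the softmax derivative, $\partial \beta_{ij}/\partial s_{ik} = \beta_{ij}(\delta_{jk} - \beta_{ik})$. The scores $s_{ik}$ depend on $y_m$ only through the representations $h_i^{(\ell-1)}$ and $h_k^{(\ell-1)}$. By the chain rule,
\[
\frac{\partial s_{ik}}{\partial y_m} = \frac{1}{\tau}\Big[ \big(J_q(h_i)^\top k(h_k)\big)^\top B^{(\ell-1)}_{im} + \big(J_k(h_k)^\top q(h_i)\big)^\top B^{(\ell-1)}_{km} \Big],
\]
where $J_q$ and $J_k$ are the Jacobians of the query and key maps, evaluated at the $y_C = 0$ representations $h^{(\ell-1)}|_0$. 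These representations depend only on $X$ and the network parameters. Every coefficient multiplying a $B^{(\ell-1)}_{\cdot m}$ is therefore a function of $(X, \text{params})$ alone, and the dependence on $m$ enters only through the Jacobian entries $B^{(\ell-1)}_{pm}$. Collecting terms gives
\[
\sum_p \Gamma^{(\ell)}_{ip} B^{(\ell-1)}_{pm},
\]
where the diagonal entries $\Gamma^{(\ell)}_{ii}$ receive the query-side contributions summed over $k$, and the off-diagonal entries $\Gamma^{(\ell)}_{ip}$ receive the key-side contributions from $k = p$. Each entry carries factors $\beta_{ij}(\delta_{jk} - \beta_{ik})\, W_v h_j^{(\ell-1)}|_0$, all of which are also fixed at $y_C = 0$.

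The main obstacle is bookkeeping of the suppressed representation dimension $d$. Strictly, $B^{(\ell)}_{im}$ is a vector in $\R^d$ and $\Gamma^{(\ell)}$ is a block $nd \times nd$ operator, since the factors $W_v h_j|_0\, (J^\top \cdot)^\top$ are $d\times d$ blocks. We handle this by stating the identity in block form, consistent with the convention "suppressing $d$". The ordinary matrix statement then follows by reading $\tilde{\bK} W_v$ as $W_v \otimes \tilde{\bK}$, as in Lemma~\ref{app:lem:matrix-form}.

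We must also confirm that no term depends on $y_C$ beyond the linear factor $B^{(\ell-1)}$. This holds because every evaluation is at $y_C = 0$. This is exactly what makes $\Gamma^{(\ell)}$, and hence, by induction from Lemma~\ref{app:lem:initial-jacobian}, the later polynomial structure, independent of $y_C$.
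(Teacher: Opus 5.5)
Your proposal is correct and follows the paper's proof essentially step for step. Both apply the product rule to the residual update, use the softmax derivative and the chain rule through the query and key maps, and collect the attention-gradient terms into $\Gamma^{(\ell)} B^{(\ell-1)}$ with all coefficients evaluated at $y_C = 0$. Your version is slightly more careful in two places. It works with a general $B^{(\ell-1)}$, whereas the paper invokes the diagonal layer-$0$ Jacobian to localize the score gradient to $m \in \{i,j\}$, which is valid only for $\ell = 1$. It also makes the $d \times d$ block structure of $\Gamma^{(\ell)}$ explicit.
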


\begin{proof}
Differentiating the layer update with respect to $y_m$ and evaluating at $y_C = 0$:
\[
\frac{\partial h_i^{(\ell)}}{\partial y_m} = \frac{\partial h_i^{(\ell-1)}}{\partial y_m} + \sum_j \frac{\partial \beta_{ij}^{(\ell)}}{\partial y_m} W_v h_j^{(\ell-1)} + \sum_j \beta_{ij}^{(\ell)} W_v \frac{\partial h_j^{(\ell-1)}}{\partial y_m}.
\]

At $y_C = 0$, the third term becomes $\sum_j \tilde{\bK}_{ij} W_v B^{(\ell-1)}_{jm}$.

For the attention gradient term, we use the softmax derivative:
\[
\frac{\partial \beta_{ij}}{\partial y_m} = \beta_{ij} \left( \frac{\partial s_{ij}}{\partial y_m} - \sum_k \beta_{ik} \frac{\partial s_{ik}}{\partial y_m} \right)
\]
where $s_{ij} = q(h_i)^\top k(h_j) / \tau$.

The score gradient is:
\[
\frac{\partial s_{ij}}{\partial y_m} = \frac{1}{\tau} \left[ (\nabla_h q|_{h_i})^\top \frac{\partial h_i}{\partial y_m} \cdot k(h_j) + q(h_i) \cdot (\nabla_h k|_{h_j})^\top \frac{\partial h_j}{\partial y_m} \right].
\]

At $y_C = 0$, $\frac{\partial h_i^{(0)}}{\partial y_m} = \delta_{im} b(x_i)$, so $\frac{\partial s_{ij}}{\partial y_m}\big|_{y_C=0}$ is nonzero only when $m \in \{i, j\}$.

Collecting terms, the attention gradient contribution has the form $\Gamma^{(\ell)} B^{(\ell-1)}$ where $\Gamma^{(\ell)}$ depends on $\tilde{\bK}$, query/key gradients, and initial representations---all functions of $X$ alone.
\end{proof}

\begin{corollary}[Polynomial Structure]
\label{cor:polynomial-structure}
After $L$ self-attention layers, the Jacobian $B^{(L)} = \frac{\partial H^{(L)}}{\partial y_C}\big|_{y_C=0}$ has the form:
\[
B^{(L)} = \sum_{\substack{m_1, \ldots, m_k \geq 0 \\ m_1 + \cdots + m_k \leq L}} C_0 \tilde{\bK}^{m_1} C_1 \tilde{\bK}^{m_2} \cdots C_{k-1} \tilde{\bK}^{m_k} C_k
\]
where each $C_i \in \R^{n \times n}$ depends on context locations $X$ and network parameters, but not on observations $y_C$. 

In particular, $B^{(L)}$ is a matrix polynomial in $\tilde{\bK}$ of degree at most $2L$. The factor of $2$ arises because the attention gradient terms $\Gamma^{(\ell)}$ involve quadratic products $\tilde{\bK}_{ij}\tilde{\bK}_{ik}$, contributing up to degree $2$ per layer when projected onto the eigenvalue-controlled family.
\end{corollary}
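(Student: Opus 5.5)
Fix $X$, which fixes every attention matrix at $y_C=0$ and every matrix $\Gamma^{(\ell)}$. The plan is to unroll the recursion of Theorem~\ref{app:thm:jacobian-evolution} by induction on $L$ and then count how often $\tilde{\bK}$ appears in each term. I will work in the scalar-channel convention of the Jacobian definition, so $W_v$ acts as a scalar or as a fixed matrix absorbed into the $C_i$.

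\textbf{Base case.} By Lemma~\ref{app:lem:initial-jacobian}, $B^{(0)}=\diag(b(x_i))$. This is a degree-$0$ term $C_0$ that depends only on $X$.

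\textbf{Inductive step.} Assume $B^{(\ell-1)}$ is a sum of words of the form $C_0\tilde{\bK}^{m_1}C_1\cdots\tilde{\bK}^{m_k}C_k$ with $\sum m_i\le \ell-1$. Apply $B^{(\ell)}=(\bI+\tilde{\bK}W_v+\Gamma^{(\ell)})B^{(\ell-1)}$ and treat the three summands separately.
\begin{itemize}
\item The identity summand leaves each word unchanged.
\item The summand $\tilde{\bK}W_v$ prepends one factor of $\tilde{\bK}$, so the degree rises by exactly one.
\item The summand $\Gamma^{(\ell)}$ requires opening up the softmax-derivative formula from the proof of Theorem~\ref{app:thm:jacobian-evolution}.
\end{itemize}
Since $\partial s_{ij}/\partial y_m$ is nonzero only for $m\in\{i,j\}$, the correction splits into two pieces. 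One piece has entries $\beta_{ij}(\cdot)$. The other has entries $\beta_{ij}\sum_k\beta_{ik}(\cdot)$. These can be written as $\tilde{\bK}\circ G_1$, $\diag(\tilde{\bK}G_2)\tilde{\bK}$, and similar expressions, where each $G$ depends only on $X$ (query/key gradients and the representations $h^{(\ell-1)}|_{y=0}$). So $\Gamma^{(\ell)}$ is a sum of words of $\tilde{\bK}$-degree at most $2$, with $X$-dependent coefficient matrices between the factors. None of these coefficients depends on $y_C$, because everything is evaluated at $y_C=0$.

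\textbf{Main obstacle.} The Hadamard-product and diagonal-extraction terms are not literally matrix products of $\tilde{\bK}$. I would absorb them in one of two ways. The first is to use $\diag(\tilde{\bK}G)$ and $\tilde{\bK}\circ G$ as admissible coefficient blocks. The cleaner alternative, and the one that matters downstream, is to restrict to the eigenvalue-controlled family of Lemma~\ref{app:lem:eigenvalue-family}. There $\tilde{\bK}=\bI+(\mu_1-1)v_1v_1^\top$ (after normalization), so every entrywise product and every diagonal extraction of $\tilde{\bK}$ is affine in $\mu_1$. A product $\tilde{\bK}_{ij}\tilde{\bK}_{ik}$ is then quadratic in $\mu_1$. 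Hence each layer raises the degree in $\mu_1$ by at most $2$, and induction gives total degree at most $2L$.

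\textbf{Caveat.} The coefficient matrices $C_i$, and the $G$'s above, involve derivatives of $q$ and $k$ evaluated at representations that themselves depend on $\tilde{\bK}$. For the degree count to hold, these must be treated as fixed (independent of $\mu_1$) along the family. I expect the honest version of this step to require stating that explicitly: either as a hypothesis, or by arguing that the family varies only $\mu_1$ while the routing scores stay fixed.
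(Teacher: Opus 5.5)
You run the same induction on the recursion of Theorem~\ref{app:thm:jacobian-evolution} as the paper, and you are more careful than its proof. The paper simply declares that $\Gamma^{(\ell)}$ ``depends only on $X$'', treats it as a coefficient block, and so really obtains degree at most $\ell$ rather than the stated $2L$. But the step you leave in your caveat is a genuine gap, one the paper's proof does not close either, and it carries the entire content of the statement. Since $\tilde{\bK}$ is itself a function of $X$ and the parameters, any $y_C$-independent matrix already has the displayed form, e.g.\ as the single word $C_0=B^{(L)}$. The form and the degree bound therefore mean something only if the $C_i$ stay fixed while $\mu_1$ moves, and neither of your repairs supplies this. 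If ``routing scores stay fixed'' refers to the attention scores, it is self-contradictory, because $\tilde{\bK}$ is their softmax. If it refers to everything except the scores, it is false along any family a TNP actually sees. The network receives only $(X,y_C)$, so the family must be realized by moving the locations $X(t)$; with $X$ frozen, $B^{(L)}$ and $\tilde{\bK}$ are constant in $t$. Once $X$ moves, $b(x_i(t))$, the query/key Jacobians and the representations $h^{(\ell-1)}|_{y_C=0}$ all vary with $t$ through arbitrary learned functions. Your degree count already fails at $L=0$. There $v_1^\top B^{(0)}v_1=\sum_i v_{1i}^2\,b(x_i(t))$. Suppose some point $x_1(t)$ with $v_{11}\neq0$ moves injectively along a path disjoint from the other points' paths. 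Then a continuous encoder slope with $b(x_1(t))=1/(v_{11}^2\mu_1(t))$ and $b=0$ on the other paths makes this exactly $1/\mu_1(t)$, which is not a polynomial in $\mu_1$ of any degree, let alone degree $0$. Imposing independence as a hypothesis therefore yields a true statement only about a formal model in which $\tilde{\bK}$ is a free variable decoupled from the coefficients, and that model is not a TNP.

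Your plan also silently uses a single $\tilde{\bK}$ for all layers. Under representation-based attention, the layer-$\ell$ matrix $\beta^{(\ell)}|_{y_C=0}$ is computed from $h^{(\ell-1)}|_{y_C=0}$, which the residual update changes from layer to layer. The words therefore really involve $L$ different matrices, and your restriction to the family would need every one of them to equal $\bI+(\mu_1-1)v_1v_1^\top$ with a common $\mu_1$. For $n\ge3$, no member of that family is even a softmax matrix. Some pair $i\neq j$ has $v_{1i}v_{1j}\ge0$, so the off-diagonal entry $(\mu_1-1)v_{1i}v_{1j}$ is $\le0$, whereas softmax entries are strictly positive. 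A minor point: $\partial s_{ij}/\partial y_m$ vanishes for $m\notin\{i,j\}$ only at the first layer, where $B^{(0)}$ is diagonal. At later layers the score derivative involves full rows $i$ and $j$ of $B^{(\ell-1)}$. This does not break the left-factor form $\Gamma^{(\ell)}B^{(\ell-1)}$, since the dependence is still linear in $B^{(\ell-1)}$. However, the two-piece split you describe comes from the softmax derivative, not from that sparsity.
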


\begin{proof}
By induction on $L$. The base case $L = 0$ gives $B^{(0)} = C_0$ where $[C_0]_{im} = \delta_{im} b(x_i)$ is diagonal. 

For the inductive step, Theorem~\ref{app:thm:jacobian-evolution} gives:
\[
B^{(\ell)} = (\bI + \Gamma^{(\ell)}) B^{(\ell-1)} + \tilde{\bK} W_v B^{(\ell-1)}
\]
where $\Gamma^{(\ell)}$ depends only on $X$. If $B^{(\ell-1)}$ is a degree-$(\ell-1)$ polynomial in $\tilde{\bK}$, then $B^{(\ell)}$ is degree at most $\ell$.
\end{proof}

\subsection{Spectral Analysis}

\begin{definition}[Spectral Content]
Let $\{v_1, \ldots, v_n\}$ be the eigenvectors of $\tilde{\bK}$ with eigenvalues $\mu_1 \leq \cdots \leq \mu_n$. For any vector $u \in \R^n$, its spectral content is $c = (c_1, \ldots, c_n)$ where $u = \sum_{j=1}^n c_j v_j$.
\end{definition}

\begin{lemma}[Quadratic Form Structure]
\label{app:lem:quadratic-form}
Let $M = \sum_{m=0}^{L} C_m \tilde{\bK}^m$ be a commutative degree-$L$ matrix polynomial (i.e., $C_m$ commutes with $\tilde{\bK}$ for all $m$). Let $\{v_1, \ldots, v_n\}$ be the orthonormal eigenvectors of $\tilde{\bK}$ with eigenvalues $\mu_1 \leq \cdots \leq \mu_n$. Then for any eigenvector $v_j$:
\[
v_j^\top M v_j = \sum_{m=0}^{L} (v_j^\top C_m v_j) \mu_j^m,
\]
a univariate polynomial in $\mu_j$ of degree at most $L$.

For the general (non-commutative) interleaved form of Corollary~\ref{cor:polynomial-structure}, the analogous reduction to a univariate polynomial in a single eigenvalue requires restricting to kernel families where all but one eigenvalue is constant, as in Lemma~\ref{app:lem:eigenvalue-family}.
\end{lemma}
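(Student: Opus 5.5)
The plan is to diagonalize $\tilde{\bK}$ and push $M$ through the spectral decomposition term by term. The key point is that $C_m$ commutes with $\tilde{\bK}$ and $v_j$ is an eigenvector, so each term collapses to a scalar multiple of a fixed quadratic form in $v_j$.

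First, I fix an eigenvector $v_j$ with $\tilde{\bK} v_j = \mu_j v_j$. By induction on $m$, this gives $\tilde{\bK}^m v_j = \mu_j^m v_j$ for every $m \ge 0$. The case $m=0$ is trivial, and $\tilde{\bK}^{m+1} v_j = \tilde{\bK}(\mu_j^m v_j) = \mu_j^{m+1} v_j$. Applying this to each term, I compute
\[
v_j^\top C_m \tilde{\bK}^m v_j = \mu_j^m\, v_j^\top C_m v_j .
\]
Commutativity is not even needed for this step, since $\tilde{\bK}^m$ sits on the right and acts directly on $v_j$. It does matter for interpreting the coefficients $v_j^\top C_m v_j$. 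When $C_m$ commutes with $\tilde{\bK}$, $C_m$ preserves each eigenspace of $\tilde{\bK}$. If the eigenvalues are simple, each $v_j$ is then also an eigenvector of $C_m$, so $v_j^\top C_m v_j$ is just the corresponding eigenvalue of $C_m$. I will note this only as a remark.

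Second, I sum over $m = 0, \ldots, L$ using linearity of the quadratic form $u \mapsto v_j^\top (\cdot) u$. This gives
\[
v_j^\top M v_j = \sum_{m=0}^L (v_j^\top C_m v_j)\mu_j^m .
\]
This is a polynomial in $\mu_j$ of degree at most $L$, provided the coefficients $v_j^\top C_m v_j$ are treated as constants. The one subtle point, and the only real obstacle, is what "univariate polynomial in $\mu_j$" means when $\tilde{\bK}$ varies, because then $v_j$ and the $C_m$ may vary too. For a fixed matrix the statement is immediate. For the intended use (Lemma~\ref{app:lem:eigenvalue-family}), I will point out that the reduction is a genuine polynomial identity in $\mu_j$ only when the coefficients $v_j^\top C_m v_j$ do not depend on $\mu_j$ along the family. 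That happens, for instance, when $v_j$ is fixed and the $C_m$ are fixed across the family.

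Finally, I address the second paragraph of the statement, which is a remark rather than a claim requiring proof. In the interleaved form $C_0\tilde{\bK}^{m_1}C_1\cdots$, $\tilde{\bK}$ can no longer be pulled out onto $v_j$, because the intermediate $C_i$ mix eigenspaces. Instead I will sketch why the family of Lemma~\ref{app:lem:eigenvalue-family} restores univariate structure. There $\tilde{\bK} = \bI + (\mu_1 - 1) v_1 v_1^\top$ with $v_1$ fixed. Hence $\tilde{\bK}^m = \bI + ((\mu_1)^m - 1) v_1 v_1^\top$ is affine in $\mu_1^m$, and every interleaved product becomes a polynomial in $\mu_1$ with $X$-dependent coefficient matrices. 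Taking $v_1^\top(\cdot)v_1$ then yields a univariate polynomial in $\mu_1$ whose degree is bounded by the total power of $\tilde{\bK}$. I will note that the coefficients must be held fixed along the family for this to be legitimate.
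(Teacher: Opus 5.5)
Your proposal is correct and is essentially the paper's argument: the paper expands a general vector $u$ in the orthonormal eigenbasis of $\tilde{\bK}$ to get $u^\top M u = \sum_{a,b}\sum_{m=0}^{L} (v_a^\top C_m v_b)\,\mu_b^m\,(u^\top v_a)(u^\top v_b)$, which collapses to your formula at $u = v_j$, whereas you apply $\tilde{\bK}^m v_j = \mu_j^m v_j$ directly. Your direct route is slightly cleaner because, as you note, it needs neither orthonormality nor commutativity. Your caveat is also worth keeping: the coefficients $v_j^\top C_m v_j$ must stay fixed along the family for this to be a genuine univariate polynomial identity, a point the paper leaves implicit.
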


\begin{proof}
Write $\tilde{\bK} = \sum_{k} \mu_k v_k v_k^\top$. Then $\tilde{\bK}^m = \sum_k \mu_k^m v_k v_k^\top$, so:
\[
u^\top \tilde{\bK}^m u = \sum_k \mu_k^m (u^\top v_k)^2.
\]
For the full polynomial:
\[
u^\top M u = \sum_{m=0}^{L} u^\top C_m \tilde{\bK}^m u = \sum_{m=0}^{L} \sum_{k} \mu_k^m \cdot u^\top C_m v_k \cdot v_k^\top u.
\]
Expanding $C_m v_k = \sum_j (v_j^\top C_m v_k) v_j$:
\[
u^\top M u = \sum_{m=0}^{L} \sum_{j,k} \mu_k^m (v_j^\top C_m v_k)(u^\top v_j)(u^\top v_k).
\]
Setting $Q_{jk}(\mu) = \sum_{m=0}^{L} (v_j^\top C_m v_k) \mu_k^m$ gives the result. Each $Q_{jk}$ has degree at most $L$ in $\mu_k$, hence total degree at most $L$.
\end{proof}

\subsection{Eigenvalue-Controlled Kernel Family}

\begin{lemma}[Eigenvalue-Controlled Kernel Family]
\label{app:lem:eigenvalue-family}
For any $\kappa > 1$ and $n \geq 2$, there exists a family of positive definite 
matrices $\{K_t\}_{t \in [0, 1-1/\kappa]}$ such that:
\begin{enumerate}
    \item[(i)] All row sums equal 1: $K_t \mathbf{1} = \mathbf{1}$, hence $D_t = I$ 
               and $\tilde{K}_t = K_t$
    \item[(ii)] The minimum eigenvalue is $\mu_1(t) = 1/\kappa + t \in [1/\kappa, 1]$
    \item[(iii)] The minimum eigenvector $v_1$ is constant across the family, 
                 with $v_1 \perp \mathbf{1}$
    \item[(iv)] All other eigenvalues equal 1
    \item[(v)] $\alpha = v_1^\top D_t^{-1} v_1 = 1$ for all $t$
\end{enumerate}
\end{lemma}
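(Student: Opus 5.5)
The construction is explicit: a rank-one perturbation of the identity along a fixed unit vector orthogonal to $\mathbf{1}$. Fix $n \geq 2$ and pick any unit vector $v_1 \in \R^n$ with $v_1 \perp \mathbf{1}$. For $n=2$ take $v_1 = (1,-1)/\sqrt{2}$; in general take $v_1 = (e_1 - e_2)/\sqrt{2}$. For $t \in [0, 1-1/\kappa]$ define
\[
K_t = \bI - \bigl(1 - \tfrac{1}{\kappa} - t\bigr) v_1 v_1^\top .
\]
This is symmetric. Its spectrum is read off directly: $K_t v_1 = (1/\kappa + t) v_1$, and $K_t u = u$ for every $u \perp v_1$.

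I would then verify the five properties in order.

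\emph{(ii) and (iv).} The eigenvalue on $v_1$ is $\mu_1(t) = 1/\kappa + t$, which lies in $[1/\kappa, 1]$. The remaining $n-1$ eigenvalues all equal $1$. Hence $\mu_1(t)$ is the minimum eigenvalue.

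\emph{Positive definiteness.} All eigenvalues are at least $1/\kappa > 0$.

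\emph{(iii).} The minimum eigenvector $v_1$ does not depend on $t$, and $v_1 \perp \mathbf{1}$ by construction. At the endpoint $t = 1-1/\kappa$ we have $K_t = \bI$ and the eigenspace degenerates. This is harmless: $v_1$ is still an eigenvector with the minimum eigenvalue $1$. If strictness is wanted, one can restrict to the half-open interval.

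\emph{(i).} Since $v_1^\top \mathbf{1} = 0$, we get $K_t \mathbf{1} = \mathbf{1}$. So $D_t = \diag(K_t \mathbf{1}) = \bI$, and therefore $\tilde{K}_t = D_t^{-1} K_t = K_t$.

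\emph{(v).} With $D_t = \bI$, we get $v_1^\top D_t^{-1} v_1 = \|v_1\|^2 = 1$.

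Two side remarks. First, the condition number of $K_t$ is $1/\mu_1(t)$, which equals $\kappa$ exactly at $t=0$. That is the member used to realize $\kappa(\bK)=\kappa$ in Theorem~\ref{thm:main-lower-bound}. Second, the lemma only asks for positive definite matrices, not Gram matrices of a specific kernel. Any positive definite matrix is the Gram matrix of some positive definite kernel on $n$ distinct points, for example by defining $k(x_i,x_j) = [K_t]_{ij}$ on a finite set, so realizability as a kernel matrix needs no separate argument.

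The only delicate point is checking that the row-sum normalization is trivial, i.e.\ property (i). That is exactly why $v_1$ must be chosen orthogonal to $\mathbf{1}$: this choice keeps $\mathbf{1}$ in the eigenvalue-$1$ eigenspace for every $t$. Everything else is immediate from the spectral decomposition.
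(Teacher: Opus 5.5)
Your construction $K_t = \bI - (1 - 1/\kappa - t)\, v_1 v_1^\top$ with a fixed unit vector $v_1 \perp \mathbf{1}$ is exactly the paper's $K_0 + t\, v_1 v_1^\top$, and your verification of (i)--(v) follows the paper's argument. Your remark on the degenerate endpoint $t = 1 - 1/\kappa$, where $K_t = \bI$, correctly refines the paper's claim that the minimum eigenvalue stays strictly below $1$ on the whole interval.
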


\begin{proof}
Let $\{v_1, \ldots, v_{n-1}\}$ be any orthonormal set in $\mathbf{1}^\perp$, and set 
$v_n = \mathbf{1}/\sqrt{n}$. Define:
\[
K_0 = \frac{1}{\kappa} v_1 v_1^\top + \sum_{j=2}^{n} v_j v_j^\top 
    = I + \left(\frac{1}{\kappa} - 1\right) v_1 v_1^\top.
\]
Then $K_0$ is positive definite with eigenvalue $1/\kappa$ for $v_1$ and eigenvalue 
$1$ for $v_2, \ldots, v_n$.

Row sums: $K_0 \mathbf{1} = \mathbf{1} + (1/\kappa - 1) v_1 (v_1^\top \mathbf{1}) 
= \mathbf{1}$ since $v_1 \perp \mathbf{1}$.

For $t \in [0, 1 - 1/\kappa]$, set $K_t = K_0 + t \cdot v_1 v_1^\top$. Then:
\begin{itemize}
    \item $K_t \mathbf{1} = K_0 \mathbf{1} + t \cdot v_1 (v_1^\top \mathbf{1}) 
           = \mathbf{1}$
    \item $K_t v_1 = (1/\kappa + t) v_1$
    \item $K_t v_j = v_j$ for $j \geq 2$
\end{itemize}
The minimum eigenvalue $1/\kappa + t$ remains below $1$ for $t \leq 1 - 1/\kappa$, 
so $v_1$ remains the minimum eigenvector throughout.
\end{proof}

\subsection{Reduction to Univariate Approximation}

\begin{lemma}[Reduction to Univariate Approximation]
\label{app:lem:univariate}
Let $\bK_t$ be the family from Lemma~\ref{app:lem:eigenvalue-family} with $t \in [\mu_{\min}, \mu_{\max}]$, and let $M(X)$ be the Jacobian of an $L$-layer TNP evaluated at $y_C = 0$. Then there exists a univariate polynomial $q$ of degree at most $2L$ such that:
\[
v_1^\top M(\bK_t) v_1 = q(t) \quad \text{for all } t \in [\mu_{\min}, \mu_{\max}].
\]
\end{lemma}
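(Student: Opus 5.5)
Everything lives on the family $\{\bK_t\}$ of Lemma~\ref{app:lem:eigenvalue-family}. There $\bK_t = \bI + (\mu_1(t) - 1)\, v_1 v_1^\top$ with $\mu_1(t) = 1/\kappa + t$ and $D_t = \bI$. Since $\tilde{\bK}_t = \bK_t$ at $y_C = 0$, every power has the form $\tilde{\bK}_t^m = \bI + (\mu_1(t)^m - 1)\, v_1 v_1^\top$. So every matrix generated by $\tilde{\bK}_t$ lies in the two-dimensional commutative algebra spanned by $\bI$ and $P = v_1 v_1^\top$, with coefficients polynomial in $t$. The strategy is to show that every ingredient of the Jacobian $M$ either lies in this algebra or is independent of $t$, and then read off $v_1^\top M v_1$.

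\textbf{Step 1: identify what the context configuration controls.} I will take the context inputs to enter the TNP only through the kernel matrix. That is, the encoder $h^{(0)}(x_i, \cdot)$ and the query/key networks depend on $x_i$ only through row $i$ of $\bK_t$, or are position-free. This is the same idealization as Assumption~\ref{app:ass:attention-kernel}, and it makes $M(\bK_t)$ well defined as the statement writes it. Under this reading I would fix the encoder coefficients $a(x_i), b(x_i)$ of Lemma~\ref{app:lem:initial-jacobian} to be constant across the family. Then $C_0 = B^{(0)} = b\,\bI$ is independent of $t$.

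\textbf{Step 2: induct through Theorem~\ref{app:thm:jacobian-evolution}.} The recursion is $B^{(\ell)} = (\bI + \Gamma^{(\ell)} + \tilde{\bK} W_v) B^{(\ell-1)}$. The claim to prove inductively is that each $B^{(\ell)}$ has the form $c_\ell(t)\,\bI + d_\ell(t)\, P$, with $c_\ell, d_\ell$ polynomials in $t$ of degree at most $2\ell$. The value term contributes $\tilde{\bK}_t = \bI + (\mu_1 - 1)P$, which has degree one in $t$. The attention-gradient term $\Gamma^{(\ell)}$ is assembled from the softmax derivative $\beta_{ij}(\partial s_{ij} - \sum_k \beta_{ik}\partial s_{ik})$. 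Its entries are therefore bilinear in entries of $\tilde{\bK}_t$, which is the quadratic products $\tilde{\bK}_{ij}\tilde{\bK}_{ik}$ noted after Corollary~\ref{cor:polynomial-structure}. Matricially, $\Gamma^{(\ell)}$ takes the form $\tilde{\bK}_t\, G\, \tilde{\bK}_t$ plus diagonal corrections, where $G$ collects query/key gradients evaluated at $y_C = 0$. Substituting the rank-one form of $\tilde{\bK}_t$ shows that the $P$-component of $\Gamma^{(\ell)}$ has degree at most $2$ in $t$. Multiplying, $\deg B^{(\ell)} \le \deg B^{(\ell-1)} + 2$. The readout at the target (cross-attention, then the decoder) is linearized at $y_C = 0$ in the same way and adds no further degree. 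I would absorb it into the final layer count.

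\textbf{Step 3: extract the quadratic form.} Once $M = c(t)\,\bI + d(t)\,P$ holds, the result follows immediately: $v_1^\top M v_1 = c(t) + d(t)$. This is a polynomial $q$ of degree at most $2L$, and it agrees with $M$ for all $t$ in the interval. Lemma~\ref{app:lem:quadratic-form} covers the commuting case. The point of Step 2 is to establish commutation on this family.

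\textbf{Main obstacle.} The hard part is Step 2's claim that $\Gamma^{(\ell)}$ and the $C_i$ stay in $\mathrm{span}\{\bI, P\}$, or at least that off-algebra pieces cannot feed a non-polynomial $t$-dependence back into the $(v_1, v_1)$ entry. The trouble is that $G$ involves $\nabla q$ and $\nabla k$ evaluated at representations $h_j^{(\ell-1)}|_{y_C = 0}$. These representations depend on $t$ through earlier attention layers, because $a(x_j)$ is propagated by $\tilde{\bK}_t$, and the dependence is nonlinear. My first attempt would be to exploit the zeroth-order dynamics. Because $\bK_t \mathbf{1} = \mathbf{1}$, choosing $a$ constant makes $h_j^{(\ell)}|_{y_C = 0}$ identical across $j$ and independent of $t$ for every $\ell$. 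Then all query/key gradients are $t$-independent and permutation-symmetric, so $G$ is a fixed combination of $\bI$ and $\mathbf{1}\mathbf{1}^\top$. The product $\mathbf{1}\mathbf{1}^\top P$ vanishes because $v_1 \perp \mathbf{1}$, and this is what keeps the algebra closed. If non-constant $a$ must be allowed, I would instead restrict to it as a worst case, which is all that the lower bound in Theorem~\ref{thm:main-lower-bound} needs.
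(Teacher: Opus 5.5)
There is a genuine gap, and it is the one you flag as the main obstacle: your resolution does not work. Choosing the zeroth-order encoding $a$ constant contradicts the premise you use from the first line, namely that the attention matrix at $y_C=0$ is $\bK_t$. Under representation-based attention $\beta_{ij}=\mathrm{softmax}_j(q(h_i)^\top k(h_j)/\tau)$, identical representations give identical scores, so $\beta^{(\ell)}|_{y_C=0}=\frac{1}{n}\mathbf{1}\mathbf{1}^\top$ in every layer. Moreover, $\sum_j \partial\beta_{ij}/\partial y_m=0$ and all value vectors $W_v h_j^{(\ell-1)}|_{y_C=0}$ coincide, so the attention-gradient contribution $\Gamma^{(\ell)}B^{(\ell-1)}$ vanishes identically. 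The Jacobian you track is then independent of $t$, and the lemma holds only trivially, with $q$ constant. Suppose instead you let $q,k$ read positions (row $i$ of $\bK_t$), so that the attention at $y_C=0$ can equal $\bK_t$. Then your claim that the query/key gradients are $t$-independent and permutation-symmetric is false, because they are evaluated at $t$-dependent arguments. In that sub-case you are rescued only by the row-sum observation above (equal values kill $\Gamma^{(\ell)}$ whatever $G$ is), not by the structure of $G$ and $\mathbf{1}\mathbf{1}^\top P=0$.

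More fundamentally, either variant proves the statement only for TNPs whose encoder/value pathway is position-free. The lemma, however, is about an arbitrary $L$-layer TNP, and Theorem~\ref{thm:main-lower-bound} uses it to constrain every TNP. Restricting to constant $a$ ``as a worst case'' reverses the quantifier. A lower bound must survive the encoder most favourable to the network, and position-carrying encoders are precisely the ones that could approximate $\bK_t^{-1}$. For general $a(x_i)$, the representations $h_j^{(\ell)}|_{y_C=0}$ depend on $t$ from layer $1$ onward. The query/key gradients and value differences entering $\Gamma^{(\ell)}$ are then generic smooth, non-polynomial functions of $t$, and nothing in your argument controls them.

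The paper's proof does not confront this either. It takes from Corollary~\ref{cor:polynomial-structure} that the interleaved matrices $C_i$ are fixed and $t$-independent. It then expands $v_1^\top C_0\tilde{\bK}_t^{m_1}C_1\cdots C_k v_1$ in the eigenbasis of $\tilde{\bK}_t$, where only index $1$ carries the varying eigenvalue $\mu_1(t)$. This needs no commutation, so under that hypothesis your algebra-closure Step~2 is unnecessary; without it, neither argument proves the lemma. The honest repair is to state the freezing of the coefficient matrices along the family as an explicit hypothesis. In your version, the equivalent hypothesis is a position-free value pathway plus attention equal to $\bK_t$ at $y_C=0$ in every layer; either way it should not be derived from a choice of encoder. (Minor: $\bK_t(\mathbf{1}\mathbf{1}^\top)\bK_t=\mathbf{1}\mathbf{1}^\top$, so your $B^{(\ell)}$ lies in $\mathrm{span}\{\bI,P,\mathbf{1}\mathbf{1}^\top\}$; this is harmless since $v_1\perp\mathbf{1}$.)
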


\begin{proof}
By Corollary~\ref{cor:polynomial-structure}, $M$ is a sum of terms of the form $C_0 \tilde{\bK}^{m_1} C_1 \cdots C_k$ with $\sum m_i \leq L$, where each $C_i$ depends on $X$ but not on $y_C$.

For the family $\bK_t$ from Lemma~\ref{app:lem:eigenvalue-family}, we have $\tilde{\bK}_t = \bK_t$ (since $D_t = I$) with eigenvalues $\mu_1(t) = 1/\kappa + t$ and $\mu_j = 1$ for $j \geq 2$. The eigenvector $v_1$ is fixed across the family.

Consider any term $C_0 \tilde{\bK}_t^{m_1} C_1 \tilde{\bK}_t^{m_2} \cdots C_k$. Expanding in the eigenbasis $\{v_1, \ldots, v_n\}$:
\[
v_1^\top C_0 \tilde{\bK}_t^{m_1} C_1 \cdots C_k v_1 = \sum_{j_0, \ldots, j_k} (v_1^\top C_0 v_{j_0})(v_{j_0}^\top \tilde{\bK}_t^{m_1} v_{j_1}) \cdots (v_{j_{k-1}}^\top C_k v_1).
\]
Since $\tilde{\bK}_t^{m} v_j = \mu_j^m v_j$, each factor $v_{j_{i-1}}^\top \tilde{\bK}_t^{m_i} v_{j_i} = \mu_{j_i}^{m_i} \delta_{j_{i-1}, j_i}$. The sum collapses to paths through eigenvector indices.

For indices $j \geq 2$, we have $\mu_j = 1$, contributing constant factors. Only paths passing through $j = 1$ contribute powers of $\mu_1(t)$. Therefore:
\[
v_1^\top M(\bK_t) v_1 = \sum_{m=0}^{L} c_m \mu_1(t)^m = q(\mu_1(t))
\]
where the coefficients $c_m$ depend on the matrices $C_i$ (hence on network parameters) but not on $t$. The degree is at most $2L$ since each of the $L$ layers contributes at most degree $2$ due to the quadratic attention gradient terms.
\end{proof}

\subsection{Chebyshev Barrier}

\begin{theorem}[Chebyshev Lower Bound]
\label{app:thm:chebyshev}
For any degree-$L$ polynomial $p$ and any interval $[a, b]$ with $0 < a < b$:
\[
\max_{\mu \in [a,b]} \left| p(\mu) - \frac{1}{\mu} \right| \geq \frac{2}{a + b} \rho^L
\]
where $\rho = \frac{\sqrt{b/a} - 1}{\sqrt{b/a} + 1} = \frac{\sqrt{\kappa} - 1}{\sqrt{\kappa} + 1}$ and $\kappa = b/a$ is the condition number.
\end{theorem}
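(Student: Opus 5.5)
The plan is to compute the best uniform approximation error $E_L := \min_{\deg p \le L} \max_{\mu\in[a,b]} |p(\mu) - 1/\mu|$ essentially exactly, then compare it with $\frac{2}{a+b}\rho^L$.

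\emph{Reformulation.} First I would reduce to a weighted Chebyshev problem. For any polynomial $p$ of degree at most $L$, set $r(\mu) = 1 - \mu p(\mu)$. This is a polynomial of degree at most $L+1$ with $r(0)=1$, and
\[
p(\mu) - \frac{1}{\mu} = -\frac{r(\mu)}{\mu}.
\]
So the error equals $\max_{[a,b]} |r(\mu)|/\mu$. The crude route, bounding $1/\mu \ge 1/b$ and then using the classical fact $\min_{r(0)=1}\max_{[a,b]}|r| = 1/T_{L+1}\bigl(\tfrac{b+a}{b-a}\bigr)$, only yields $\rho^{L+1}/b$. That loses a factor of roughly $(a+b)/(2b\rho)$ and does not give the stated constant, so the weight $1/\mu$ has to be handled honestly.

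\emph{Explicit extremal error.} Next I would use Chebyshev's classical closed form for the best approximation of $1/(\mu - c)$ with a pole outside the interval. Map $[a,b]$ to $[-1,1]$ by $x = (2\mu - a - b)/(b-a)$; the pole at $\mu = 0$ goes to $x_0 = -\sigma$ with $\sigma = (b+a)/(b-a)$. Write $x = \tfrac12(w + w^{-1})$ with $|w| = 1$, and note that $\tfrac12(\rho + \rho^{-1}) = \sigma$ for $\rho = (\sqrt{\kappa}-1)/(\sqrt{\kappa}+1)$. The candidate error function is
\[
e(\mu) = \frac{C\rho^{L}}{2}\Bigl( w^{L}\frac{1+\rho w}{w+\rho} + w^{-L}\frac{1+\rho w^{-1}}{w^{-1}+\rho} \Bigr),
\]
which is real, has constant modulus $C\rho^L$ on the circle, and is of the form $p(\mu) - 1/\mu$ once $C$ is chosen to match the residue at the pole. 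I would verify three things: (i) this expression is indeed a degree-$L$ polynomial minus $1/\mu$; (ii) $e$ equioscillates at $L+2$ points of $[a,b]$, obtained from the argument of $w^{L}(1+\rho w)/(w+\rho)$ sweeping through multiples of $\pi$; (iii) matching the residue gives $C = (b-a)/(2ab)$. By Chebyshev's alternation theorem, or more simply by de la Vallée Poussin's lower bound applied at the $L+2$ alternation points, any degree-$L$ polynomial $p$ then satisfies
\[
\max_{[a,b]}|p - 1/\mu| \ge \frac{b-a}{2ab}\,\rho^L.
\]
As a sanity check, the case $L=0$ gives $(1/a - 1/b)/2$, which is correct.

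\emph{Comparison with the stated constant.} Finally I would show $\frac{b-a}{2ab} \ge \frac{2}{a+b}$. This is equivalent to $b^2 - a^2 \ge 4ab$, i.e.\ $\kappa^2 - 4\kappa - 1 \ge 0$, which holds exactly when $\kappa \ge 2 + \sqrt 5$.

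\emph{Main obstacle.} I expect two difficulties. The first is steps (i)–(iii): checking that the Joukowski-form expression really is a polynomial minus $1/\mu$ and pinning down the constant $C$. I would try a partial-fraction argument in $w$ first. The second is that the comparison step is only valid for $\kappa \ge 2+\sqrt5$. For smaller $\kappa$ the inequality as literally stated appears to fail: at $L=0$ with $\kappa$ close to $1$, the left side is $O(\kappa-1)$ while the right side stays near $2/(a+b)$. I would therefore state the result with the sharp constant $\frac{b-a}{2ab}$, or restrict to $\kappa \ge 2+\sqrt5$. Either version suffices for the depth lower bound of Theorem~\ref{thm:main-lower-bound}, which only needs an $\Omega(\rho^{2L})$ error with a $\kappa$-dependent prefactor absorbed into the constant $c$.
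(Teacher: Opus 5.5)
Your diagnosis is correct, and it goes further than the paper, whose proof only points to classical approximation theory and asserts the constant $\frac{2}{a+b}$. The exact minimax error is $E_L=\frac{b-a}{2ab}\rho^L$. This is Chebyshev's formula $E_n\bigl(1/(x-c)\bigr)=(c-\sqrt{c^2-1})^n/(c^2-1)$ on $[-1,1]$ for $c>1$, applied by symmetry with $c=\sigma$ and rescaled by the factor $\frac{2}{b-a}$. The ratio of $E_L$ to the claimed bound is $\frac{\kappa^2-1}{4\kappa}$ for every $L$. So the statement fails exactly when $\kappa<2+\sqrt5$, and not only at $L=0$. For example, on $[1,3]$ with $L=1$, the line $p(\mu)=\frac{2+\sqrt3-\mu}{3}$ has error $\frac{2-\sqrt3}{3}\approx 0.089$, below the claimed $\frac{2-\sqrt3}{2}\approx 0.134$. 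Your corrected version, with constant $\frac{b-a}{2ab}$ or with the restriction $\kappa\ge 2+\sqrt5$, is the right statement.

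There is, however, a concrete error in the extremal function you wrote: the Blaschke factor is inverted.

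\emph{Why the written function fails.} The factor $\frac{1+\rho w}{w+\rho}$ has its pole at $w=-\rho$, inside the unit disk. On the upper semicircle, $\arg\bigl(w^L\frac{1+\rho w}{w+\rho}\bigr)$ then changes by only $(L-1)\pi$, since its derivative is $L$ minus a Poisson kernel. That gives about $L$ extremal points rather than $L+2$. In addition, matching the residue at $x=-\sigma$ forces $|C|\propto\rho^{2-2L}$, so $\max|e|$ would grow like $\rho^{-L}$. As written, steps (ii) and (iii) fail for every $L\ge 1$; only (i) survives.

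\emph{The fix.} The correct function is
\[
e=\frac{C\rho^L}{2}\Bigl(w^L\frac{w+\rho}{1+\rho w}+w^{-L}\frac{1+\rho w}{w+\rho}\Bigr),
\]
whose leading term has a zero at $w=-\rho$. Its argument increases monotonically by $(L+1)\pi$ on the upper semicircle, with derivative $L$ plus the Poisson kernel. This gives $L+2$ alternation points, including both endpoints $\mu=a$ and $\mu=b$. The residue condition then yields $|C|=\frac{b-a}{2ab}$, and de la Vall\'ee Poussin gives your bound.

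Your two expressions coincide at $L=0$, which is why your sanity check did not catch the problem. One further correction: $e$ does not have constant modulus on the circle. Each of the two terms has modulus $|C|\rho^L/2$, and $e=C\rho^L\cos\Theta$, where $\Theta$ is the argument of the leading term.
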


\begin{proof}
This is a classical result from approximation theory. The optimal degree-$L$ polynomial approximation to $1/\mu$ on $[a,b]$ is achieved by appropriately shifted and scaled Chebyshev polynomials, with the stated error bound. See \cite{trefethen2019approximation}.
\end{proof}

\subsection{Spectral Relationship}

\begin{lemma}[Spectral Relationship]
\label{app:lem:spectral-relationship}
Let $\gamma = d_{\max}/d_{\min}$ where $d_{\max} = \max_i D_{ii}$ and $d_{\min} = \min_i D_{ii}$. Then:
\[
\frac{1}{\gamma} \kappa(\bK) \leq \kappa(\tilde{\bK}) \leq \gamma \cdot \kappa(\bK).
\]
\end{lemma}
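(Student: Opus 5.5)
The plan is to pass from the non-symmetric matrix $\tilde{\bK} = D^{-1}\bK$ to a symmetric matrix similar to it, and then compare Rayleigh quotients. Here $\kappa(\cdot)$ is read as the ratio of the largest to the smallest eigenvalue, which is how it is used in the eigenvalue-controlled family of Lemma~\ref{app:lem:eigenvalue-family}. Throughout I assume the row sums $D_{ii} = \sum_j K(x_i,x_j)$ are strictly positive. This holds for nonnegative kernels such as the RBF, where $D_{ii} \geq K(x_i,x_i) > 0$, and it is needed anyway for $D^{-1}$ and $\gamma$ to make sense.

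First, write $S = D^{-1/2}\bK D^{-1/2}$. Then $\tilde{\bK} = D^{-1/2} S D^{1/2}$, so $\tilde{\bK}$ and $S$ are similar and share their spectrum. Since $\bK$ is positive definite (Assumption~\ref{app:ass:conditioning}), $S$ is symmetric positive definite. Consequently the eigenvalues of $\tilde{\bK}$ are real and positive, and $\kappa(\tilde{\bK}) = \lambda_{\max}(S)/\lambda_{\min}(S)$.

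Second, apply the substitution $z = D^{-1/2}x$, which is a bijection on $\R^n\setminus\{0\}$. It turns the Rayleigh quotient of $S$ into a ratio of two quadratic forms:
\[
\frac{x^\top S x}{x^\top x} = \frac{z^\top \bK z}{z^\top D z}.
\]
The denominator satisfies $d_{\min}\|z\|^2 \leq z^\top D z \leq d_{\max}\|z\|^2$. Taking the supremum over $z$ gives
\[
\frac{\lambda_{\max}(\bK)}{d_{\max}} \leq \lambda_{\max}(S) \leq \frac{\lambda_{\max}(\bK)}{d_{\min}}.
\]
Taking the infimum gives
\[
\frac{\lambda_{\min}(\bK)}{d_{\max}} \leq \lambda_{\min}(S) \leq \frac{\lambda_{\min}(\bK)}{d_{\min}}.
\]
Each bound comes from evaluating the quotient at the extremal eigenvector of $\bK$ for one direction, and using the uniform bound on $z^\top D z$ for the other.

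Finally, divide these bounds. Pairing the upper bound on $\lambda_{\max}(S)$ with the lower bound on $\lambda_{\min}(S)$ gives $\kappa(\tilde{\bK}) \leq \frac{d_{\max}}{d_{\min}}\kappa(\bK) = \gamma\,\kappa(\bK)$. Pairing the opposite bounds gives $\kappa(\tilde{\bK}) \geq \kappa(\bK)/\gamma$.

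There is no real obstacle here. The only point needing care is the first step: one must not apply norm or singular-value arguments directly to the non-symmetric $\tilde{\bK}$, whose singular values differ from its eigenvalues. The similarity transform to $S$ is what makes the Courant--Fischer comparison legitimate.
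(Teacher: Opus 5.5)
Your proof is correct and follows the paper's argument essentially step for step. Both pass to the similar symmetric matrix $D^{-1/2}\bK D^{-1/2}$ and rewrite its Rayleigh quotient as $z^\top \bK z / z^\top D z$. Both then bound one side using $d_{\min}\|z\|^2 \leq z^\top D z \leq d_{\max}\|z\|^2$, and the other by evaluating at the extremal eigenvectors of $\bK$. Your explicit assumption that the row sums $D_{ii}$ are strictly positive is a worthwhile addition: the paper relies on it, since $D^{\pm 1/2}$ and $\gamma$ need it to be defined, but never states it.
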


\begin{proof}
The matrix $\tilde{\bK} = D^{-1}\bK$ is similar to $\hat{\bK} = D^{-1/2} \bK D^{-1/2}$:
\[
D^{1/2} \tilde{\bK} D^{-1/2} = D^{-1/2} \bK D^{-1/2} = \hat{\bK}.
\]
Thus $\tilde{\bK}$ and $\hat{\bK}$ share eigenvalues, and we analyze the latter via Rayleigh quotients.

\noindent\textbf{Upper bound on $\kappa(\tilde{\bK})$:} For any unit vector $w$, setting $u = D^{-1/2}w$ gives:
\[
\frac{w^\top \hat{\bK} w}{w^\top w} = \frac{u^\top \bK u}{u^\top D u} \in \left[ \frac{\lambda_{\min}(\bK)}{d_{\max}}, \frac{\lambda_{\max}(\bK)}{d_{\min}} \right]
\]
where the bounds follow from $d_{\min} \|u\|^2 \leq u^\top D u \leq d_{\max} \|u\|^2$. Therefore:
\[
\kappa(\tilde{\bK}) \leq \frac{\lambda_{\max}(\bK)/d_{\min}}{\lambda_{\min}(\bK)/d_{\max}} = \gamma \cdot \kappa(\bK).
\]

\noindent\textbf{Lower bound on $\kappa(\tilde{\bK})$:} Let $v_{\max}$ be the unit eigenvector of $\bK$ with eigenvalue $\lambda_{\max}(\bK)$. Evaluating at $w = D^{1/2}v_{\max}$:
\[
\lambda_{\max}(\tilde{\bK}) \geq \frac{v_{\max}^\top \bK v_{\max}}{v_{\max}^\top D v_{\max}} = \frac{\lambda_{\max}(\bK)}{v_{\max}^\top D v_{\max}} \geq \frac{\lambda_{\max}(\bK)}{d_{\max}}.
\]

Let $v_{\min}$ be the unit eigenvector of $\bK$ with eigenvalue $\lambda_{\min}(\bK)$. Evaluating at $w = D^{1/2}v_{\min}$:
\[
\lambda_{\min}(\tilde{\bK}) \leq \frac{v_{\min}^\top \bK v_{\min}}{v_{\min}^\top D v_{\min}} = \frac{\lambda_{\min}(\bK)}{v_{\min}^\top D v_{\min}} \leq \frac{\lambda_{\min}(\bK)}{d_{\min}}.
\]

Combining:
\[
\kappa(\tilde{\bK}) = \frac{\lambda_{\max}(\tilde{\bK})}{\lambda_{\min}(\tilde{\bK})} \geq \frac{\lambda_{\max}(\bK)/d_{\max}}{\lambda_{\min}(\bK)/d_{\min}} = \frac{1}{\gamma} \kappa(\bK). \qedhere
\]
\end{proof}

\begin{corollary}
\label{app:cor:condition-number-equivalence}
When the row-sum ratio $\gamma = O(1)$, we have $\kappa(\tilde{\bK}) = \Theta(\kappa(\bK))$. Consequently, the depth requirement of $\Theta(\sqrt{\kappa(\tilde{\bK})} \log(1/\varepsilon))$ layers for TNP approximation of GP posteriors is equivalent to $\Theta(\sqrt{\kappa(\bK)} \log(1/\varepsilon))$ in terms of the original kernel matrix condition number.
\end{corollary}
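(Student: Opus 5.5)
The statement is Corollary~\ref{app:cor:condition-number-equivalence}. My plan is to derive it directly from Lemma~\ref{app:lem:spectral-relationship}, combined with the upper and lower depth bounds already stated.

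First, set $\gamma = d_{\max}/d_{\min}$ and assume $\gamma \le \gamma_0$ for a constant $\gamma_0$ that does not depend on $n$, $\kappa$, or $\varepsilon$. Lemma~\ref{app:lem:spectral-relationship} then gives
\[
\gamma_0^{-1}\kappa(\bK) \le \kappa(\tilde{\bK}) \le \gamma_0\,\kappa(\bK),
\]
which is exactly $\kappa(\tilde{\bK}) = \Theta(\kappa(\bK))$. Taking square roots, $\sqrt{\kappa(\tilde{\bK})}$ lies within a factor $\sqrt{\gamma_0}$ of $\sqrt{\kappa(\bK)}$.

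Second, I transfer the depth statements.

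\emph{Sufficiency.} The Chebyshev construction (Proposition~\ref{prop:chebyshev-upper}) acts on the matrix the attention actually computes, namely $\tilde{\bK} = D^{-1}\bK$. So $L = O(\sqrt{\kappa(\tilde{\bK})}\log(1/\varepsilon))$ layers suffice to approximate $\tilde{\bK}^{-1}$. Because $\bK^{-1} = \tilde{\bK}^{-1}D^{-1}$ and $D$ is available through the encoding (as in Proposition~\ref{app:thm:tnp-gp-approx}(ii)), the error in $\bK^{-1}$ grows by at most a factor $\|D^{-1}\| \le d_{\min}^{-1}$. That factor only enters inside the logarithm. Substituting $\sqrt{\kappa(\tilde{\bK})} \le \sqrt{\gamma_0\kappa(\bK)}$ gives $O(\sqrt{\kappa(\bK)}\log(1/\varepsilon))$.

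\emph{Necessity.} The lower bound, Theorem~\ref{thm:main-lower-bound}, is proved through the Chebyshev barrier (Theorem~\ref{app:thm:chebyshev}) applied to the spectrum of $\tilde{\bK}$, so it yields $L \ge c'\sqrt{\kappa(\tilde{\bK})}\log(c/\varepsilon)$. Substituting $\sqrt{\kappa(\tilde{\bK})} \ge \sqrt{\kappa(\bK)/\gamma_0}$ gives $\Omega(\sqrt{\kappa(\bK)}\log(1/\varepsilon))$. On the family of Lemma~\ref{app:lem:eigenvalue-family} we have $D = \bI$, so $\gamma = 1$ and the two condition numbers coincide exactly; the constant-factor conversion is needed only for general configurations.

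The main obstacle is bookkeeping. I must check that each hidden constant ($\sqrt{\gamma_0}$, the factor $d_{\min}^{-1}$, and the $\lambda_{\min}$ prefactors) is either independent of $\kappa$ and $\varepsilon$ or appears only inside the logarithm, so it can be absorbed into the $\Theta$. One more point needs care. The error factor $(\sqrt{\kappa}-1)/(\sqrt{\kappa}+1)$ is raised to the power $L$, so replacing $\kappa(\tilde{\bK})$ by $\kappa(\bK)$ is harmless only because $\log\bigl((\sqrt{\kappa}+1)/(\sqrt{\kappa}-1)\bigr) = \Theta(1/\sqrt{\kappa})$ for $\kappa$ bounded away from $1$. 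Therefore the number of layers needed, $L \asymp \log(1/\varepsilon)/\log(1/\rho)$, changes by at most a constant factor when the condition number changes by a constant factor.
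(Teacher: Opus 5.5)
Your proposal is correct and is essentially the paper's argument: the corollary follows directly from Lemma~\ref{app:lem:spectral-relationship}, since $\gamma\le\gamma_0$ gives $\gamma_0^{-1}\kappa(\bK)\le\kappa(\tilde{\bK})\le\gamma_0\,\kappa(\bK)$, and taking square roots and substituting into the stated depth requirement changes only the constants, by a factor $\sqrt{\gamma_0}$. Your re-derivation of sufficiency and necessity is more than the corollary needs, and the necessity paragraph overreads Theorem~\ref{thm:main-lower-bound}: that theorem is an existential worst-case bound proved only on the $D=\bI$ family of Lemma~\ref{app:lem:eigenvalue-family}, so it does not give $L\ge c'\sqrt{\kappa(\tilde{\bK})}\log(c/\varepsilon)$ for general configurations (separately, applying Proposition~\ref{prop:chebyshev-upper} to the non-symmetric $\tilde{\bK}=D^{-1}\bK$ costs an extra factor $\sqrt{\gamma}$, which is harmless inside the logarithm).
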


\subsection{Target Quadratic Form}

\begin{lemma}[Target Quadratic Form]
\label{app:lem:target-quadratic}
Let $v_1$ be the minimum eigenvector of $\tilde{\bK}$. The target value satisfies:
\[
v_1^\top \bK^{-1} v_1 = \frac{\alpha}{\mu_1}
\]
where $\alpha = v_1^\top D^{-1} v_1 > 0$ and $\mu_1 = \lambda_{\min}(\tilde{\bK})$.
\end{lemma}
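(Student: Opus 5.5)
The identity comes from one line of linear algebra once we fix which eigenvector $v_1$ means. Since $\tilde{\bK} = D^{-1}\bK$ is not symmetric in general, I will take $v_1$ to be a \emph{left} eigenvector of $\tilde{\bK}$ for $\mu_1$, that is, an eigenvector of $\tilde{\bK}^\top = \bK D^{-1}$. I will note that in the regime where the lemma is used (Lemma~\ref{app:lem:eigenvalue-family}, where $D_t = \bI$) left and right eigenvectors coincide, so nothing downstream changes.

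\textbf{Step 1 (eigen-equation).} From $v_1^\top \tilde{\bK} = \mu_1 v_1^\top$, transpose to get $\bK D^{-1} v_1 = \mu_1 v_1$, using the symmetry of $\bK$ and $D$.

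\textbf{Step 2 (apply $\bK^{-1}$).} $\bK$ is positive definite by Assumption~\ref{app:ass:conditioning}, hence invertible. Multiplying Step 1 by $\bK^{-1}$ gives $D^{-1}v_1 = \mu_1 \bK^{-1} v_1$.

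\textbf{Step 3 (positivity of $\mu_1$).} $\mu_1 > 0$ because $\tilde{\bK}$ is similar to $D^{-1/2}\bK D^{-1/2}$, which is positive definite (see the proof of Lemma~\ref{app:lem:spectral-relationship}). Dividing by $\mu_1$ gives $\bK^{-1}v_1 = \mu_1^{-1} D^{-1}v_1$.

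\textbf{Step 4 (quadratic form).} Take the inner product with $v_1$:
\[
v_1^\top \bK^{-1} v_1 = \frac{v_1^\top D^{-1} v_1}{\mu_1} = \frac{\alpha}{\mu_1}.
\]

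\textbf{Step 5 (positivity of $\alpha$).} $\alpha > 0$ because $D^{-1}$ is a diagonal matrix with positive entries and $v_1 \neq 0$. This needs the row sums $D_{ii} = \sum_j \bK_{ij}$ to be positive. That holds for nonnegative kernels, since the diagonal entries are positive, and it holds in the family of Lemma~\ref{app:lem:eigenvalue-family}, where $D_t = \bI$ gives $\alpha = \|v_1\|^2 = 1$.

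\textbf{Main obstacle.} The only real difficulty is the left/right eigenvector issue. If $v_1$ were a right eigenvector with $D$ not a scalar multiple of $\bI$, the argument would give $\bK^{-1} D v_1 = \mu_1^{-1} v_1$. That identity does not turn into a clean expression for $v_1^\top \bK^{-1} v_1$.

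I would handle this in one of two ways:
\begin{enumerate}
\item State the lemma for the left eigenvector, or equivalently for $w = D^{1/2}u$ with $u$ an eigenvector of the symmetric matrix $D^{-1/2}\bK D^{-1/2}$. In that formulation the computation goes through verbatim.
\item Observe that on the eigenvalue-controlled family used in the Chebyshev reduction, $D = \bI$, so $v_1$ is an honest symmetric eigenvector and the identity reads $v_1^\top \bK_t^{-1} v_1 = 1/\mu_1(t)$.
\end{enumerate}
That case is the one needed to compare against the univariate polynomial $q$ of Lemma~\ref{app:lem:univariate} via Theorem~\ref{app:thm:chebyshev}.
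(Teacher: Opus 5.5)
Your proof is correct and is the same one-line computation as the paper's: the paper writes $\bK^{-1} = \tilde{\bK}^{-1} D^{-1}$ and pulls out $1/\mu_1$. Your left/right caveat is well founded and is glossed over in the paper. There, the step $v_1^\top \tilde{\bK}^{-1} D^{-1} v_1 = \mu_1^{-1} v_1^\top D^{-1} v_1$ is justified by citing $\tilde{\bK}^{-1} v_1 = \mu_1^{-1} v_1$, but what that step actually needs is $v_1^\top \tilde{\bK}^{-1} = \mu_1^{-1} v_1^\top$. For a right eigenvector the identity genuinely fails: with $\bK = \bigl(\begin{smallmatrix} 2 & 1 \\ 1 & 3 \end{smallmatrix}\bigr)$ and right eigenvector $(4,-3)^\top$ for $\mu_1 = 5/12$, one gets $v_1^\top \bK^{-1} v_1 = 18$ but $\alpha/\mu_1 = 91/5$. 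So the lemma holds only for left eigenvectors or when $D \propto \bI$, which is your proposed repair and is the case on the family of Lemma~\ref{app:lem:eigenvalue-family} where the lemma is used.
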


\begin{proof}
Since $\bK = D \tilde{\bK}$, we have $\bK^{-1} = \tilde{\bK}^{-1} D^{-1}$. Thus:
\[
v_1^\top \bK^{-1} v_1 = v_1^\top \tilde{\bK}^{-1} D^{-1} v_1 = \frac{1}{\mu_1} v_1^\top D^{-1} v_1 = \frac{\alpha}{\mu_1}
\]
where the second equality uses $\tilde{\bK}^{-1} v_1 = \frac{1}{\mu_1} v_1$. Since $D^{-1}$ is positive diagonal and $v_1 \neq 0$, we have $\alpha > 0$.
\end{proof}

\subsection{Main Lower Bound}

\begin{proof}[Proof of Theorem~\ref{thm:main-lower-bound} (TNP Depth Lower Bound)]
\textit{Step 1: Linearization.} By Lemma~\ref{lem:linearization}, there exists $M(X)$ with $\|M(X) - \bK^{-1}\| \leq 2\varepsilon + L_2/2$. For $\varepsilon$ sufficiently small relative to the fixed architecture constant $L_2$, this is $O(\varepsilon)$.

\textit{Step 2: Polynomial structure.} By Corollary~\ref{cor:polynomial-structure}, $M(X)$ has the form:
\[
M(X) = \sum_{\substack{m_1, \ldots, m_k \geq 0 \\ m_1 + \cdots + m_k \leq L}} C_0(X) \tilde{\bK}^{m_1} C_1(X) \cdots C_k(X)
\]
where each $C_i(X)$ depends on context locations and network parameters, but not on $y_C$.

\textit{Step 3: Restriction to controlled family.} We use the family $\{\bK_t\}$ from Lemma~\ref{app:lem:eigenvalue-family} with $\mu_{\min} = 1/\kappa$ and $\mu_{\max} = 1$. This family satisfies:
\begin{itemize}
    \item Condition number $\kappa(\bK_t) = \kappa$ (ratio of largest to smallest eigenvalue is $1/(1/\kappa) = \kappa$)
    \item Row-sum ratio $\gamma = 1$ (since $D_t = I$)
    \item Eigenvector coefficient $\alpha = v_1^\top D_t^{-1} v_1 = \|v_1\|^2 = 1$
\end{itemize}

\textit{Step 4: Univariate reduction.} By Lemma~\ref{app:lem:univariate}, for this family:
\[
v_1^\top M(\bK_t) v_1 = q(\mu_1(t))
\]
where $\mu_1(t) = \lambda_{\min}(\tilde{\bK}_t)$ and $q$ is a polynomial of degree at most $2L$.

\textit{Step 5: Target value.} By Lemma~\ref{app:lem:target-quadratic}:
\[
v_1^\top \bK_t^{-1} v_1 = \frac{\alpha(t)}{\mu_1(t)} \geq \frac{\alpha_0}{\mu_1(t)}.
\]

\textit{Step 6: Chebyshev barrier.} The approximation requirement $\|M - \bK_t^{-1}\| \leq O(\varepsilon)$ implies:
\[
\left| q(\mu_1) - \frac{\alpha(t)}{\mu_1} \right| \leq O(\varepsilon) \quad \text{for all } \mu_1 \in [\mu_{\min}, \mu_{\max}].
\]

Since $\alpha(t) \geq \alpha_0$ and $q$ must approximate $\alpha(t)/\mu_1 \geq \alpha_0/\mu_1$, Theorem~\ref{app:thm:chebyshev} gives:
\[
\sup_{\mu_1 \in [\mu_{\min}, \mu_{\max}]} \left| q(\mu_1) - \frac{\alpha_0}{\mu_1} \right| \geq \frac{2\alpha_0}{\mu_{\min}(1 + \kappa)} \cdot \rho^L
\]
where $\rho = (\sqrt{\kappa} - 1)/(\sqrt{\kappa} + 1)$.

\textit{Step 7: Solve for $L$.} For the bound $O(\varepsilon)$ to hold:
\[
\frac{2\alpha_0}{\mu_{\min}(1 + \kappa)} \cdot \rho^L \leq O(\varepsilon).
\]

Taking logarithms:
\[
L \geq \frac{1}{\log(1/\rho)} \log\left( \frac{c \alpha_0}{\varepsilon \mu_{\min}(1+\kappa)} \right).
\]

Using $\log(1/\rho) = \log\frac{\sqrt{\kappa}+1}{\sqrt{\kappa}-1} = 2\,\mathrm{arctanh}(1/\sqrt{\kappa}) \approx \frac{2}{\sqrt{\kappa}}$ for large $\kappa$, and noting that $q$ has degree at most $2L$:
\[
2L \geq \frac{\sqrt{\kappa}}{2} \log\left( \frac{c' \alpha_0}{\varepsilon} \right), \quad \text{hence} \quad L \geq \frac{\sqrt{\kappa}}{4} \log\left( \frac{c' \alpha_0}{\varepsilon} \right)
\]
for a constant $c'$ absorbing the $\mu_{\min}$ and $(1+\kappa)$ factors into the logarithm.
\end{proof}

\subsection{Dimension Scaling}

\begin{proof}[Proof of Theorem~\ref{thm:dimension} (Dimension Scaling), part (c)]
The GP posterior mean at target $x_t$ is:
\[
\mu(x_t | C) = \sum_{i=1}^n w_i(x_t; C) y_i, \quad w_i(x_t; C) = [k(x_t, X_C) \bK^{-1}]_i.
\]

\textbf{Encoding strategy:} Use initial encoding $h_i^{(0)} = (e_i, y_i, x_i) \in \R^{n + d_y + d_x}$ where $e_i$ is a one-hot index vector.

\textbf{After $L$ self-attention layers:} By Proposition~\ref{app:prop:gram-encoding} and the polynomial computation theorem, representations can encode sufficient information about $\bK$ and its powers to approximate $\bK^{-1} y_C$ via the Neumann series.

\textbf{Cross-attention:} With query-dependent attention weights $\alpha_i(x_t) \propto k(x_t, x_i)$, the cross-attention output is:
\[
r(x_t) = \sum_i \alpha_i(x_t) h_i^{(L)}.
\]
If $h_i^{(L)}$ encodes the $i$-th component of $\bK^{-1} y_C$ (call it $z_i$), then with $\alpha_i(x_t) = k(x_t, x_i) / \sum_j k(x_t, x_j)$:
\[
r(x_t) \approx \frac{\sum_i k(x_t, x_i) z_i}{\sum_j k(x_t, x_j)} = \frac{k(x_t, X_C) \bK^{-1} y_C}{\sum_j k(x_t, x_j)}.
\]
A decoder with access to $\sum_j k(x_t, x_j)$ (computable via an auxiliary attention head) recovers the unnormalized posterior mean.
\end{proof}

\section{ConvCNP Proofs}
\label{app:convcnp-proofs}

\begin{proof}[Proof of Proposition~\ref{prop:convcnp-equivariance} (Translation Equivariance)]
The functional channels satisfy $\rho_{C+\tau}(x+\tau) = \sum_i w(x+\tau - x_i - \tau) = \rho_C(x)$, and identically for $s_{C+\tau}$. Convolution layers commute with translation: for any filter $\phi$ and translated function $f_\tau(x) = f(x - \tau)$, $(\phi * f_\tau)(x) = (\phi * f)(x - \tau)$. Pointwise nonlinearities preserve this. By induction over CNN layers, $\tilde{r}_{C+\tau}(x+\tau) = \tilde{r}_C(x)$.
\end{proof}

\begin{proof}[Proof of Proposition~\ref{prop:convcnp-injective} (Injectivity of Convolutional Aggregation)]
\textit{Step 1: Recover locations.}
Taking Fourier transforms, $\hat{\rho}_C(\xi) = \hat{w}(\xi) \sum_{i=1}^n e^{-2\pi i \xi \cdot x_i}$. Since $\hat{w}(\xi) \neq 0$, we recover the characteristic function of the discrete measure $\mu = \sum_i \delta_{x_i}$, which determines $\{x_1, \ldots, x_n\}$ as a multiset.

\textit{Step 2: Recover values.}
Given the locations, $s_C(x_j) = \sum_{i=1}^n w(x_j - x_i)\, h(y_i)$ for $j = 1, \ldots, n$ is the linear system $\mathbf{s} = W \mathbf{h}$ with $W_{ji} = w(x_j - x_i)$. For positive definite $w$ (e.g.\ Gaussian) and distinct $x_i$, $W$ is a positive definite Gram matrix, hence invertible. Injectivity of $h$ then recovers $y_i$.
\end{proof}

\begin{proof}[Proof of Proposition~\ref{prop:convcnp-kernel} (Pure ConvCNP Represents Stationary Kernel Smoothers)]
Set $w = K$ and $h(y) = (y, 1) \in \R^{d_y+1}$. Then $s_C(x_t) = (\sum_i K(x_t - x_i) y_i,\; \sum_i K(x_t - x_i))$ and $\rho_C(x_t) = \sum_i K(x_t - x_i)$. The decoder $g(s, \rho, x_t) = s_{1:d_y} / s_{d_y+1}$ extracts the kernel smoother exactly.
\end{proof}

\begin{proposition}[Pure ConvCNP Cannot Represent GP Posteriors]
\label{app:prop:convcnp-no-gp}
For generic positive definite stationary kernels, the GP posterior mean is not representable by any pure ConvCNP.
\end{proposition}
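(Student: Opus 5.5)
The plan is to exploit the fact that a pure ConvCNP reads out only the two scalar/vector quantities $s_C(x_t)$ and $\rho_C(x_t)$ at the target, so its output is $g(s_C(x_t),\rho_C(x_t),x_t)$. Both channels are sums over context points of terms $w(x_t-x_i)$ and $w(x_t-x_i)h(y_i)$, each depending only on the pair $(x_t,x_i,y_i)$. I would derive a contradiction by exhibiting two contexts on which these two readouts coincide at some target while the GP posterior means differ. This is the same factorization barrier as in the proof of Theorem~\ref{thm:gp-not-anp}. Here it is even sharper, since the readout is a fixed finite-dimensional function of two sums evaluated at one point.

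The construction uses $n=2$ in $d_x=1$, mirroring Example~\ref{ex:two-point}. Fix a target $x_t=0$ and take symmetric contexts $C_a=\{(-a,y_1),(a,y_2)\}$. For every $a$, the readouts are
\[
\rho_{C_a}(0)=2w(a),\qquad s_{C_a}(0)=w(a)\bigl(h(y_1)+h(y_2)\bigr),
\]
where I use $w(-a)=w(a)$ if $w$ is symmetric. If $w$ is not symmetric, I would instead pick locations $x_1<0<x_2$ with $w(-x_1)=w(-x_2)$ via continuity, or choose pairs with equal filter values. Next I choose $a\neq a'$ with $w(a)=w(a')$. Such a pair exists if $w$ is non-injective on $(0,\infty)$. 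Otherwise I take two configurations $\{x_1,x_2\},\{x_1',x_2'\}$ with $w(-x_1)=w(-x_1')$ and $w(-x_2)=w(-x_2')$ but different spacings $|x_1-x_2|\neq|x_1'-x_2'|$. I expect this to be possible for generic $w$ by matching level sets of $w$ on both sides of $0$. With equal $y$-values assigned to matched points, the readouts $(s,\rho)$ at $x_t$ agree, so every pure ConvCNP outputs the same value on the two contexts.

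It then remains to show that the GP posterior means differ. From the $n=2$ weight formula in the proof of Theorem~\ref{thm:gp-not-anp}, $\mu(x_t\mid C)=w_1y_1+w_2y_2$. Here $w_1,w_2$ depend on $k(x_1-x_2)$ as well as on $k(x_t-x_i)$. Taking $y_1=1$ and $y_2=0$, the two contexts give values of $w_1$ that, for a stationary kernel, agree only on a proper algebraic (codimension-one) condition relating $k(0)$, $k(x_t-x_i)$ and the differing inter-point values $k(x_1-x_2)\neq k(x_1'-x_2')$. The inter-point values differ for generic positive definite kernels, such as the RBF kernel, which is strictly monotone in distance. Genericity therefore excludes the coincidence, and the two posterior means differ.

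The main obstacle is the second step, which must produce the matching readouts uniformly over all admissible filters $w$, since the filter is chosen adversarially. The cleanest route I would try first is the symmetric case with the target at the midpoint. There the readouts depend only on $w$ at the two offsets, so I can keep those offsets fixed (hence the readouts fixed) while varying the context. The simplest version keeps the points at $x_t\pm a$ but moves $x_t$ off the midpoint and compares with a reflected configuration. If this fails, I would instead fix offsets with $w(u_1)=w(u_2)$ using the intermediate value theorem, after noting that a nonnegative continuous $w$ cannot be injective on all of $\R$ when it is integrable. I would then verify that the resulting pair still has distinct $k(x_1-x_2)$.
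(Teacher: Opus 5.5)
The gap is the step you yourself flag as the main obstacle: building a collision that works for \emph{every} admissible filter. As written, the proposal does not achieve it, for two reasons.

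First, the quantifiers are ``for generic $k$, for every $(w,h,g)$'', so $w$ may be chosen after $k$. Your colliding pairs are read off from level sets of $w$, so genericity of $k$ cannot exclude the coincidence: the adversary can align the level sets of $w$ with the coincidence locus. Concretely, for the RBF kernel your symmetric contexts give $\mu(0\mid C_a)=f(a)(y_1+y_2)$ with $f(a)=k(a)/\bigl(k(0)+k(2a)\bigr)$. This $f$ rises from $1/2$ and then decays to $0$ on $(0,\infty)$. The filter $w(u)=f(|u|)$ is therefore non-injective on $(0,\infty)$, so your first branch applies. Yet every pair $a\neq a'$ with $w(a)=w(a')$ has identical posterior means.

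Second, strictly monotone filters such as $w(u)=e^{u}$ are admissible: the model only requires $w\ge 0$, integrability is not assumed, and the paper itself uses $w\equiv\mathbf{1}$. For such $w$, none of $w(a)=w(a')$, $w(-x_1)=w(-x_2)$, $w(-x_1)=w(-x_1')$ can hold at distinct points, so no branch applies. Worse, take such a $w$ with $h(y)=(1,y,y^2,y^3)$. On contexts of size at most two, the readout at $x_t$ determines the measure $\sum_i w(x_t-x_i)\delta_{y_i}$, since moments of order $0$ through $3$ separate such measures. Injectivity of $w$ then recovers the locations whenever $y_1\neq y_2$. So two distinct two-point contexts with distinct labels never collide, and a comparison using $y_1=1$, $y_2=0$, like yours, cannot succeed.

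The missing idea is to use equal labels, $y_1=y_2=1$. Then $h$ drops out, and the readout sees the geometry only through the scalar $W=w(u_1)+w(u_2)$, with $u_i=x_t-x_i$. This is one equation on a two-parameter family, whatever $w$ is. Since $\mathbf{1}$ is an eigenvector of the $2\times 2$ Gram matrix, the posterior mean is $G(u_1,u_2)=\bigl(k(u_1)+k(u_2)\bigr)/\bigl(k(0)+k(u_1-u_2)\bigr)$. Exact representation forces $G=\Psi(W)$ for some function $\Psi$.

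The contradiction then goes as follows.
\begin{itemize}
\item Interpolation gives $G(0,v)=1$ for $v\neq 0$, so $\Psi\equiv 1$ on $w(0)+w(\R\setminus\{0\})$. For continuous non-constant $w$, this set contains $w(0)+J$ with $J$ an open interval.
\item Pick $a\neq 0$ with $|w(a)-w(0)|$ small. Continuity of $w$ yields a nonempty open set of $v$ with $w(a)+w(v)\in w(0)+J$, hence $G(a,v)=1$ there.
\item For the RBF kernel, $G(a,\cdot)$ is real-analytic on $\R$, so $G(a,\cdot)\equiv 1$. This contradicts $G(a,v)\to k(a)/k(0)<1$ as $|v|\to\infty$.
\item Constant $w$ is excluded by the same two values of $G$.
\end{itemize}

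For comparison, the paper's proof stays at the level of the factorized weights $w(x_t-x_i)/\sum_j w(x_t-x_j)$ and the two-point formula for the GP weight on $y_1$. It never constructs colliding contexts. The uniformity in $w$ that you correctly identified is what a rigorous version has to supply.
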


\begin{proof}
A pure ConvCNP computes:
\[
F(C, x_t) = g\!\left(\sum_{i=1}^n \frac{w(x_t - x_i)}{\sum_j w(x_t - x_j)}\, h(y_i),\; \rho_C(x_t),\; x_t \right).
\]
The weight $\alpha_i(x_t) = w(x_t - x_i)/\sum_j w(x_t - x_j)$ on context point $i$ depends only on the query--point distance $x_t - x_i$ and the set of all such distances $\{x_t - x_j\}_{j=1}^n$. It does not depend on the inter-context distances $\{x_i - x_j\}_{j \neq i}$.

The GP posterior weight $w_i(x_t; C) = [k(x_t, X_C)\bK^{-1}]_i$ depends on the full Gram matrix $\bK$, which couples all context points. The same two-point counterexample from Theorem~\ref{thm:gp-not-anp} applies: the GP weight on $y_1$ depends on $k(x_1, x_2)$, which no factorized weight can capture.
\end{proof}

\subsection{CNN Layers and Toeplitz Iteration}

\begin{proposition}[CNN Implements Toeplitz Iteration]
\label{app:prop:cnn-toeplitz}
Let context locations $\{x_1, \ldots, x_n\}$ lie on a regular grid with spacing $\delta$, and let $w = K$ for a stationary kernel $K$. A CNN layer with residual connection and filter $\phi_\ell$:
\[
\tilde{r}^{(\ell)}(x) = \tilde{r}^{(\ell-1)}(x) + (\phi_\ell * \tilde{r}^{(\ell-1)})(x)
\]
implements, at the context locations, the matrix update:
\[
\mathbf{r}^{(\ell)} = (\bI + \mathbf{A}_\ell)\, \mathbf{r}^{(\ell-1)}
\]
where $[\mathbf{A}_\ell]_{ij} = \delta^{d_x} \phi_\ell(x_i - x_j)$ is a Toeplitz matrix determined by the filter, up to boundary effects of order $O(\delta)$.
\end{proposition}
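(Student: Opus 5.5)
We will prove Proposition~\ref{app:prop:cnn-toeplitz} by discretizing the convolution at the grid nodes and then bounding the discretization error. Write the grid as $G_\delta = \{\delta k : k \in \mathbb{Z}^{d_x}\}$. Assume the context locations $x_i$ form a finite box of $G_\delta$. Write $\mathbf{r}^{(\ell)} = (\tilde{r}^{(\ell)}(x_1), \ldots, \tilde{r}^{(\ell)}(x_n))^\top$.

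\textbf{Step 1: Riemann sum.} Evaluate the residual update at a node $x_i$:
\[
(\phi_\ell * \tilde{r}^{(\ell-1)})(x_i) = \int \phi_\ell(x_i - u)\, \tilde{r}^{(\ell-1)}(u)\, du .
\]
Replace the integral by the Riemann sum over grid cells of volume $\delta^{d_x}$:
\[
\sum_{u \in G_\delta} \delta^{d_x}\, \phi_\ell(x_i - u)\, \tilde{r}^{(\ell-1)}(u).
\]
Assume $\phi_\ell$ is Lipschitz and compactly supported, and $\tilde{r}^{(\ell-1)}$ is Lipschitz. This holds inductively, since $\rho_C$ and $s_C$ are finite sums of translates of $w = K$, and each layer adds $\phi_\ell * \tilde{r}$, which is as smooth as the filter. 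Under these assumptions the midpoint-cell error is $O(\delta)$, uniformly in $i$.

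\textbf{Step 2: Identify the matrix.} Split the Riemann sum into nodes $u$ that are context locations and nodes that are not. Restricting to the context locations gives exactly $\sum_j [\mathbf{A}_\ell]_{ij} \tilde{r}^{(\ell-1)}(x_j)$ with $[\mathbf{A}_\ell]_{ij} = \delta^{d_x}\phi_\ell(x_i - x_j)$. Because the $x_i$ lie on a lattice, $x_i - x_j$ depends only on the index difference. Hence $\mathbf{A}_\ell$ is Toeplitz (block-Toeplitz for $d_x > 1$). The discarded terms come from nodes outside the context box that lie within the filter support of $x_i$. They are nonzero only for $x_i$ within distance $\mathrm{supp}\,\phi_\ell$ of the boundary. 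These are the ``boundary effects''; we absorb them into the stated $O(\delta)$ remainder. Either we interpret the error in a normalized sense, or we note that with $\phi_\ell$ supported on $O(1)$ grid cells, the cell mass $\delta^{d_x}$ makes the total contribution $O(\delta)$ when measured relative to the interior.

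\textbf{Step 3: Conclude.} Adding the identity term from the residual connection gives $\mathbf{r}^{(\ell)} = (\bI + \mathbf{A}_\ell)\mathbf{r}^{(\ell-1)} + e_\ell$ with $\|e_\ell\|_\infty = O(\delta)$.

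The main obstacle is making the boundary contribution genuinely $O(\delta)$ rather than $O(1)$ per boundary node. A filter whose support spans a fixed physical width has boundary terms that do not vanish as $\delta \to 0$. We will try two remedies in turn. The first is to restrict the filter support to $p$ grid cells, so its physical width is $p\delta$, which gives $O(\delta)$ boundary mass. If that is too restrictive, the fallback is to assume the CNN grid is padded to coincide with the context grid, so that off-context nodes contribute zero to $\tilde{r}$. In the fallback case the identification in Step 2 is exact up to the Riemann error alone.
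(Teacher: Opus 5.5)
Your argument follows the paper's. You evaluate the convolution at a node, replace the integral by contributions of volume $\delta^{d_x}$ at grid points, and get Toeplitz (block-Toeplitz for $d_x>1$) structure because $x_i - x_j$ depends only on the index difference. Your cell partition and explicit Lipschitz bound make the discretization step more careful than the paper's. The paper removes everything away from the context points by assuming $\tilde r^{(\ell-1)}$ is concentrated near them, so that the integral is dominated by the neighbourhoods $B_\delta(x_j)$. You are right that without such an assumption, a filter of fixed physical width picks up $O(1)$ off-context contributions at nodes near the edge of the box. So the sup-norm claim $\|e_\ell\|_\infty = O(\delta)$ in Step 3 genuinely depends on which remedy you use.

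\textbf{Your first remedy fails}, as does the matching remark in Step 2 that the boundary terms are $O(\delta)$ ``relative to the interior.''
\begin{itemize}
\item \emph{If $\phi_\ell$ is supported on $p$ cells and stays bounded independently of $\delta$:} every entry of $\mathbf{A}_\ell$ is $O(\delta^{d_x})$, with at most $p^{d_x}$ nonzeros per row. So $\|\mathbf{A}_\ell\|_\infty = O((p\delta)^{d_x})$, and the boundary and interior sums are of the same order. The conclusion $\mathbf{r}^{(\ell)} = (\bI + \mathbf{A}_\ell)\mathbf{r}^{(\ell-1)} + O(\delta)$ then says no more than $\mathbf{r}^{(\ell)} = \mathbf{r}^{(\ell-1)} + O(\delta)$.
\item \emph{If instead you rescale $\phi_\ell$ so the taps $\delta^{d_x}\phi_\ell(m\delta)$ are of order one:} the off-context terms in the boundary layer are $O(1)$ again, so at best an averaged bound survives, not a sup-norm one. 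Step 1 also breaks. Now $\mathrm{Lip}(\phi_\ell) \sim \delta^{-d_x-1}/p$, so your Riemann bound $\delta \cdot \mathrm{Lip}(\phi_\ell) \cdot (p\delta)^{d_x}$ is $O(1)$ rather than $O(\delta)$.
\end{itemize}
The restriction is also incompatible with how the proposition is used. Theorem~\ref{thm:convcnp-gp} takes $\phi_\ell = \alpha_\ell K/\delta^{d_x}$, which has the kernel's fixed physical support, precisely so that $\mathbf{A}_\ell = \alpha_\ell \bK$.

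Go straight to your fallback instead. Zero-pad the CNN to the context box; this is the same role played by the paper's concentration assumption, or by a periodic grid as in Lemma~\ref{app:lem:cnn-fourier}. Then the off-context sum vanishes identically, and for a Lipschitz, $\delta$-independent filter the only error is the Riemann error. Steps 1--3 then go through as you wrote them.
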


\begin{proof}
Evaluating the convolution at a context location $x_i$:
\[
(\phi_\ell * \tilde{r}^{(\ell-1)})(x_i) = \int \phi_\ell(x_i - x')\, \tilde{r}^{(\ell-1)}(x')\, dx'.
\]
When $\tilde{r}^{(\ell-1)}$ is concentrated near the context locations (which holds for localized $w$), the integral is dominated by contributions from neighborhoods of each $x_j$:
\[
(\phi_\ell * \tilde{r}^{(\ell-1)})(x_i)
\approx \sum_{j=1}^n \phi_\ell(x_i - x_j)
    \int_{B_\delta(x_j)} \tilde{r}^{(\ell-1)}(x')\, dx'
\approx \sum_{j=1}^n \delta^{d_x}\, \phi_\ell(x_i - x_j)\,
    \tilde{r}^{(\ell-1)}(x_j).
\]
Since $x_i - x_j$ depends only on the grid displacement $i - j$, the matrix $[\mathbf{A}_\ell]_{ij} = \delta^{d_x} \phi_\ell(x_i - x_j)$ is Toeplitz.
\end{proof}

\begin{proof}[Proof of Theorem~\ref{thm:convcnp-gp} (ConvCNP Depth for GP Posteriors)]
On a regular grid, $\bK$ is Toeplitz. The Chebyshev iteration $p_L(\bK) = \prod_{\ell=1}^L (\bI + \alpha_\ell \bK)$ uses parameters $\alpha_\ell$ as in Proposition~\ref{prop:chebyshev-upper}. Each factor $\bI + \alpha_\ell \bK$ is Toeplitz (the sum of identity and a scaled Toeplitz matrix), so each factor is implementable by a single CNN layer with filter $\phi_\ell(u) = \alpha_\ell K(u) / \delta^{d_x}$ via Proposition~\ref{app:prop:cnn-toeplitz}.

The first error term is the Chebyshev approximation rate from Proposition~\ref{prop:chebyshev-upper}. The $O(\delta)$ term accounts for discretization of the convolution integral and boundary effects; both vanish as $\delta \to 0$ for compactly supported contexts.

After $L$ CNN layers, the representation at each context location $x_i$ encodes the $i$-th component of $p_L(\bK) y_C \approx \bK^{-1} y_C$. The readout step recovers the posterior mean $\mu(x_t | C) = k(x_t, X_C)\bK^{-1} y_C$ via convolutional cross-attention with filter $w = K$.
\end{proof}

\subsection{Depth--Support Tradeoff on Regular Grids}

\begin{definition}[Trigonometric Approximation Number]
\label{app:def:trig-approx}
For a continuous function $f: [0, 2\pi] \to \R$, define $\calN_\varepsilon(f)$ as the minimum degree $D$ such that there exists a trigonometric polynomial $t(\omega) = \sum_{|j| \leq D} c_j e^{ij\omega}$ satisfying $\sup_{\omega \in [0,2\pi]} |t(\omega) - f(\omega)| \leq \varepsilon$.
\end{definition}

\begin{lemma}[CNN Jacobian on Periodic Grids]
\label{app:lem:cnn-fourier}
Consider a ConvCNP on a periodic grid of size $n$ with $L$ CNN layers, each with filter support at most $p$ and residual connections:
\begin{equation}\label{eq:cnn-residual}
\tilde{r}^{(\ell)}(x) = \tilde{r}^{(\ell-1)}(x) + \sigma\!\bigl(\phi_\ell * \tilde{r}^{(\ell-1)}(x) + b_\ell\bigr), \quad \ell = 1, \ldots, L.
\end{equation}
Write the encoder as $h(y) = h(0) + h'(0)\,y + O(y^2)$. The Jacobian of the full ConvCNP output with respect to $y_C$, evaluated at $y_C = 0$, is a circulant matrix that factors in the Fourier domain as:
\[
\hat{J}(k) = \hat{g}(k) \cdot \prod_{\ell=1}^{L} \bigl(1 + d_\ell\, \hat{\tau}_\ell(k)\bigr) \cdot h'(0) \cdot \hat{w}(k),
\quad k = 0, \ldots, n-1,
\]
where:
\begin{itemize}
    \item $\hat{w}(k)$ is the DFT of the aggregation filter $w$,
    \item $\hat{\tau}_\ell(k)$ is the DFT of $\phi_\ell$, a trigonometric polynomial of degree at most $\lfloor p/2 \rfloor$ in $\omega_k = 2\pi k/n$,
    \item $d_\ell \in \R$ is the (spatially constant) activation derivative at layer $\ell$,
    \item $\hat{g}(k)$ is the Fourier-domain readout transfer function.
\end{itemize}
\end{lemma}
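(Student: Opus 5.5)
The strategy is to differentiate the ConvCNP layer by layer with respect to $y_C$ at $y_C=0$. At every stage the linearized map is a composition of shift-invariant (circulant) operators on the periodic grid, so the DFT diagonalizes all of them simultaneously. The product formula then follows because the composition of circulant matrices is circulant, and its Fourier symbol is the product of the individual symbols.

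\emph{Step 1 (aggregation).} On the periodic grid, the signal channel at grid node $a$ is $s_C(a)=\sum_i w(a-x_i)\,h(y_i)$. Its derivative at $y_C=0$ is $\partial s_C(a)/\partial y_m = w(a-x_m)\,h'(0)$. When the context occupies the grid nodes, this is the circulant matrix $W_{am}=w(a-x_m)$ times the constant $h'(0)$, with DFT symbol $\hat w(k)\,h'(0)$. The density channel $\rho_C$ does not depend on $y_C$, so it contributes nothing to the Jacobian and only enters the base point.

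\emph{Step 2 (CNN layers).} The key point is the base point of the linearization. At $y_C=0$, every context point sends the same value $h(0)$ into the signal channel. If the context fills the periodic grid, then $s_C\equiv h(0)\sum_i w(\cdot-x_i)$ is spatially constant by periodicity, and $\rho_C$ is constant as well. Convolution and pointwise maps preserve constant functions, so by induction each $\tilde r^{(\ell-1)}$ is spatially constant at $y_C=0$. Hence the pre-activation $\phi_\ell*\tilde r^{(\ell-1)}+b_\ell$ is constant too, and the activation derivative $d_\ell=\sigma'(\cdot)$ is a single scalar. I expect this to be the main obstacle: the statement quietly requires it, and without full grid occupancy or periodicity $d_\ell$ would vary in space and destroy the circulant structure. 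I would state this hypothesis explicitly.

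Given constant $d_\ell$, differentiating \eqref{eq:cnn-residual} shows the Jacobian propagates by $J^{(\ell)}=(\bI+d_\ell\,\Phi_\ell)J^{(\ell-1)}$, where $\Phi_\ell$ is the circulant convolution matrix of $\phi_\ell$. Its DFT is $\hat\tau_\ell(k)=\sum_{|j|\le\lfloor p/2\rfloor}\phi_\ell(j)e^{-ij\omega_k}$, which is a trigonometric polynomial of degree at most $\lfloor p/2\rfloor$ because the support has size $p$. In the Fourier domain each layer therefore contributes the factor $1+d_\ell\hat\tau_\ell(k)$.

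\emph{Step 3 (readout and assembly).} I would take the readout linearized at the constant base state to be a shift-invariant linear functional, for instance a convolutional readout followed by $g$. Its derivative is again circulant, with symbol $\hat g(k)$; a constant derivative of $g$ is absorbed into it. Composing Steps 1–3 gives the Jacobian as a product of circulant matrices, which is circulant, and the simultaneous diagonalization of circulants by the DFT (as in the proof of Theorem~\ref{thm:convcnp-gp}) yields
\[
\hat J(k)=\hat g(k)\prod_{\ell=1}^L\bigl(1+d_\ell\hat\tau_\ell(k)\bigr)h'(0)\hat w(k).
\]
Scalar factors commute, so the ordering in the product is immaterial.
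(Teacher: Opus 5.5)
Your proposal is correct and follows the paper's proof essentially step for step. You linearize the aggregation as a circulant $W$ scaled by $h'(0)$, use spatial uniformity of the CNN input at $y_C=0$ to get each layer's Jacobian as $\bI + d_\ell T_\ell$ with a scalar $d_\ell$, treat the stationary readout as circulant with symbol $\hat g(k)$, and multiply the Fourier symbols. Your account of the base point ($s_C = h(0)\rho_C$ is constant because the context fills the periodic grid) is slightly more accurate than the paper's remark that the signal channel ``vanishes,'' and the full-occupancy hypothesis you flag is the one the paper uses implicitly.
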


\begin{proof}
We trace the Jacobian through the three stages of the ConvCNP: encoding/aggregation, CNN processing, and readout.

\textit{Stage 1: Encoding and aggregation.}
The signal channel is $s_C(x_i) = \sum_j w(x_i - x_j)\, h(y_j)$. Its Jacobian with respect to $y_m$ is:
\[
\frac{\partial s_C(x_i)}{\partial y_m} = w(x_i - x_m)\, h'(y_m).
\]
At $y_C = 0$, this becomes $w(x_i - x_m) \cdot h'(0)$, the Toeplitz matrix $W$ with entries $W_{im} = w(x_i - x_m)$, scaled by $h'(0)$. On the periodic grid, $W$ is circulant with DFT $\hat{w}(k)$.

The density channel $\rho_C(x_i) = \sum_j w(x_i - x_j)$ is independent of $y_C$ and hence contributes zero to the Jacobian. Note that $\rho_C$ is constant across grid locations by translation invariance of the periodic grid: $\rho_C(x_i) = \rho_0$ for all $i$.

The constant component $h(0)$ of the encoder contributes $\sum_j w(x_i - x_j)\, h(0)$, which is also independent of $y_C$ and does not affect the Jacobian. Thus only the linear term $h'(0)\, y$ in the encoder expansion contributes.

\textit{Stage 2: CNN processing.}
At $y_C = 0$, the signal channel vanishes (its $y_C$-dependent part is zero) and the density channel is the spatial constant $\rho_0$. The CNN input is therefore spatially uniform at $y_C = 0$.

Consider the $\ell$-th layer \eqref{eq:cnn-residual}. Its Jacobian with respect to its input, evaluated at $y_C = 0$, is:
\[
J^{(\ell)} = I + D_\ell\, T_\ell,
\]
where $T_\ell$ is the circulant matrix of filter $\phi_\ell$ and $D_\ell = \operatorname{diag}(\sigma'(\phi_\ell * \tilde{r}^{(\ell-1)}(x_i) + b_\ell)|_{y_C = 0})$. Since the input $\tilde{r}^{(\ell-1)}|_{y_C=0}$ is spatially uniform (by induction the base case holds as shown above, and each layer preserves spatial uniformity when the input is spatially uniform), the argument of $\sigma'$ is the same at every location. Thus $D_\ell = d_\ell I$ for a scalar $d_\ell \in \R$.

The full CNN Jacobian is $J_\Phi = \prod_{\ell=1}^{L} (I + d_\ell T_\ell)$. Since each factor is circulant and products of circulant matrices are circulant, $J_\Phi$ is circulant with DFT:
\[
\hat{J}_\Phi(k) = \prod_{\ell=1}^{L} (1 + d_\ell\, \hat{\tau}_\ell(k)).
\]
Each $\hat{\tau}_\ell(k) = \sum_{|m| \leq \lfloor p/2 \rfloor} \phi_\ell(m\delta)\, e^{-2\pi i k m / n}$ is a trigonometric polynomial of degree at most $\lfloor p/2 \rfloor$.

\textit{Stage 3: Readout.}
The readout $g(\tilde{r}_C(x_t), \rho_C(x_t), x_t)$ is evaluated at a single location. On the periodic grid with stationary readout, the Jacobian of the readout with respect to the CNN output is a circulant matrix with DFT $\hat{g}(k)$. (For the readout structure in Proposition~\ref{prop:convcnp-kernel}, where $g$ divides by $\rho_C$, $\hat{g}(k)$ is the constant $1/\rho_0$.)

Composing the three stages by the chain rule gives the stated factorization.
\end{proof}

\begin{proposition}[Full-Support Filters Trivialize the Problem]
\label{prop:full-support}
On a periodic grid of size $n$, if each CNN filter has support $p = n$, then a single CNN layer ($L = 1$) suffices: there exist filter coefficients $\phi_1$ such that the ConvCNP Jacobian satisfies $\hat{J}(k) = 1/\lambda_k$ for all $k$, provided $d_1 \neq 0$, $h'(0) \neq 0$, and $\hat{w}(k) \neq 0$ for all $k$.
\end{proposition}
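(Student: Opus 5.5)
The plan is to use the Fourier factorization of Lemma~\ref{app:lem:cnn-fourier} with $L=1$. Any circulant target can then be met exactly by choosing a single full-support filter, because its DFT can be prescribed freely. With $L=1$ the lemma gives
\[
\hat{J}(k) = \hat{g}(k)\,\bigl(1 + d_1\,\hat{\tau}_1(k)\bigr)\, h'(0)\, \hat{w}(k), \qquad k = 0,\ldots,n-1 .
\]
The requirement $\hat{J}(k) = 1/\lambda_k$ therefore becomes the pointwise condition
\[
\hat{\tau}_1(k) = \frac{1}{d_1}\left( \frac{1}{\lambda_k\, \hat{g}(k)\, h'(0)\, \hat{w}(k)} - 1 \right).
\]
Here $\lambda_k$ are the eigenvalues (DFT coefficients) of the circulant Gram matrix $\bK$. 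The right-hand side is well defined by the hypotheses $d_1 \neq 0$, $h'(0)\neq 0$, $\hat{w}(k)\neq 0$, together with $\lambda_k>0$ from positive definiteness. We also need $\hat{g}(k)\neq 0$. For the readout of Proposition~\ref{prop:convcnp-kernel} this holds automatically, since $\hat{g} \equiv 1/\rho_0$ with $\rho_0>0$; in general I would add it as a standing assumption.

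The key step is realizing an arbitrary target sequence $(\hat{\tau}_1(k))_{k=0}^{n-1}$ as the DFT of a filter with support $p=n$. On a periodic grid of size $n$ the DFT is a bijection between $\mathbb{C}^n$ sequences indexed by grid displacements $m\in\{0,\ldots,n-1\}$ (equivalently $|m|\le \lfloor n/2\rfloor$) and $\mathbb{C}^n$ frequency sequences. So we set $\phi_1(m\delta)$ equal to the inverse DFT of the target. For full support, the trigonometric-degree restriction $\lfloor p/2\rfloor$ in the lemma imposes no constraint: every function on the $n$ frequencies is a trigonometric polynomial of degree $\lfloor n/2\rfloor$.

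The one thing to check is that the filter is real. For real symmetric (stationary) $w$ and $K$, $\hat w(k)$ and $\lambda_k$ are real and satisfy $\hat w(k)=\hat w(n-k)$ and $\lambda_k=\lambda_{n-k}$. The same holds for $\hat g$ under a stationary symmetric readout. The target sequence then has the Hermitian (indeed real-even) symmetry, so its inverse DFT is a real, even filter.

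I expect the main subtlety to be the self-consistency of $d_1$. The scalar $d_1=\sigma'(\phi_1 * \tilde r^{(0)} + b_1)$ at $y_C=0$ depends on $\phi_1$ through the uniform input level $\rho_0\hat\tau_1(0)$-type term, while the filter is defined in terms of $d_1$. I would break this circularity using the free bias $b_1$. Having fixed the filter shape, choose $b_1$ so that the pre-activation at $y_C=0$ sits at a point where $\sigma'$ takes any desired nonzero value $d_1$. The simplest instance is a linear (identity) activation or a locally linear region, which gives $d_1=1$ independent of $\phi_1$. Then define $\phi_1$ using that fixed $d_1$, and verify that the Jacobian computed in Lemma~\ref{app:lem:cnn-fourier} indeed has symbol $1/\lambda_k$. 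Hence the ConvCNP Jacobian equals $\bK^{-1}$ with a single layer.
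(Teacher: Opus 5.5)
Your proposal is correct and follows the paper's own proof: take the $L=1$ case of Lemma~\ref{app:lem:cnn-fourier}, solve pointwise for $\hat{\tau}_1(k)$, and invert the DFT, which is a bijection when $p=n$. Your additional checks are genuine refinements that the paper's proof leaves implicit: that $\hat{g}(k)\neq 0$ is also needed, that the resulting filter is real by Hermitian symmetry, and that the circular dependence of $d_1$ on $\phi_1$ can be broken by choosing the bias $b_1$.
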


\begin{proof}
By Lemma~\ref{app:lem:cnn-fourier}, the Jacobian at frequency $k$ is $\hat{J}(k) = \hat{g}(k) \cdot (1 + d_1 \hat{\tau}_1(k)) \cdot h'(0) \cdot \hat{w}(k)$. Setting $\hat{J}(k) = 1/\lambda_k$ and solving:
\[
\hat{\tau}_1(k) = \frac{1}{d_1}\!\left(\frac{1}{\lambda_k \cdot \hat{g}(k) \cdot h'(0) \cdot \hat{w}(k)} - 1\right).
\]
This determines $\hat{\tau}_1(k)$ independently at each of the $n$ frequencies. With $p = n$, the DFT $\phi_1 \mapsto (\hat{\tau}_1(0), \ldots, \hat{\tau}_1(n-1))$ is a bijection on $\mathbb{C}^n$, so $\phi_1$ exists and is unique. The solution is well-defined provided $d_1 \neq 0$ (nonzero activation derivative at $y_C = 0$, which holds for standard activations like GELU or softplus evaluated at the uniform input), $h'(0) \neq 0$ (nontrivial encoder, which holds for $h(y) = (y, 1)$ since $h'(0) = (1, 0)$), and $\hat{w}(k) \neq 0$ for all $k$ (which holds for positive definite filters such as Gaussians, cf.\ Proposition~\ref{prop:convcnp-injective}).
\end{proof}

\begin{theorem}[Depth--Support Tradeoff]
\label{thm:depth-support}
Let $\bK$ be a circulant positive definite matrix on a periodic grid of size $n$ with eigenvalues $\lambda_k = \hat{K}(\omega_k)$, where $\omega_k = 2\pi k / n$. Let $n > 2 L \lfloor p/2 \rfloor$. Any ConvCNP with $L$ CNN layers, each with filter support at most $p$, achieving $\varepsilon$-approximation of $\bK^{-1} y_C$ requires:
\[
L \cdot \lfloor p/2 \rfloor \;\geq\; \calN_{O(\varepsilon)}\!\bigl(c / \hat{K}\bigr),
\]
where $c = (\hat{g} \cdot h'(0) \cdot \hat{w})^{-1}$ absorbs the encoding and readout, and $\calN_\varepsilon$ is as in Definition~\ref{app:def:trig-approx}.
\end{theorem}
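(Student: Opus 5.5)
The plan is to follow the same linearize-then-polynomial-barrier template as Theorem~\ref{thm:main-lower-bound}, with the Fourier domain replacing the eigenbasis. Because everything on the periodic grid is circulant, the argument should be cleaner.

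\emph{Step 1: linearization.} Apply Lemma~\ref{lem:linearization} to the ConvCNP output viewed as a map $y_C \mapsto \R^n$, i.e.\ readout at the $n$ grid points. It gives that the Jacobian $J$ at $y_C = 0$ satisfies $\|J - \bK^{-1}\| \leq 2\varepsilon + L_2/2$. As in the TNP proof, the curvature term is treated as absorbed into $O(\varepsilon)$, for example by rescaling the ball radius. This is what the subscript $O(\varepsilon)$ in $\calN_{O(\varepsilon)}$ records.

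\emph{Step 2: diagonalize.} By Lemma~\ref{app:lem:cnn-fourier}, $J$ is circulant with DFT $\hat{J}(k) = \hat{g}(k)\, h'(0)\, \hat{w}(k) \prod_{\ell}(1 + d_\ell \hat{\tau}_\ell(k))$. The matrix $\bK^{-1}$ is circulant with eigenvalues $1/\hat{K}(\omega_k)$. Both are diagonalized by the unitary DFT, so the operator norm bound becomes the pointwise bound $\max_k |\hat{J}(k) - 1/\hat{K}(\omega_k)| \leq O(\varepsilon)$. Dividing by the factor $\hat{g}\, h'(0)\, \hat{w} = 1/c$ gives
\[
\max_k \Bigl| \prod_{\ell=1}^{L}(1 + d_\ell \hat{\tau}_\ell(\omega_k)) - c(\omega_k)/\hat{K}(\omega_k) \Bigr| \leq O(\varepsilon),
\]
with $\sup|c|$ absorbed into the constant. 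I would assume $\hat{g}$, $\hat{w}$ are bounded away from zero, since this is implicit in the definition of $c$.

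\emph{Step 3: degree count.} Each $\hat{\tau}_\ell$ is a trigonometric polynomial of degree at most $\lfloor p/2\rfloor$, so the product $t(\omega) = \prod_\ell (1 + d_\ell \hat{\tau}_\ell(\omega))$ is a trigonometric polynomial of degree at most $D = L\lfloor p/2 \rfloor$. Degrees add under multiplication because $e^{ij\omega}e^{ij'\omega} = e^{i(j+j')\omega}$.

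\emph{Step 4: discrete to continuous (main obstacle).} Step 2 controls $t - c/\hat{K}$ only at the $n$ grid frequencies, whereas $\calN_\varepsilon$ requires a sup bound over all of $[0,2\pi]$. This is where the hypothesis $n > 2L\lfloor p/2\rfloor = 2D$ is used. Since $n > 2D$, there are at least $2D+1$ distinct nodes, so a degree-$D$ trigonometric polynomial is uniquely determined by its values at the grid points and there is no aliasing.

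I would proceed in three parts:
\begin{enumerate}
\item Bound the Lebesgue-type constant of the equispaced grid for degree-$D$ trigonometric polynomials. This follows from a Marcinkiewicz–Zygmund inequality, or from a discrete sup-norm equivalence with constant $O(1)$ when $n \geq 2D+1$, possibly with a $\log D$ factor.
\item Apply this to the difference $t - t^*$, where $t^*$ is the best degree-$D$ approximant to $c/\hat{K}$. This transfers the grid error to a uniform error of size $O(\varepsilon) + O(E_D)$, where $E_D$ is the best-approximation error.
\item Argue by contradiction. If $D < \calN_{O(\varepsilon)}(c/\hat{K})$, then every degree-$D$ trigonometric polynomial has uniform error exceeding the threshold. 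Choosing the $O(\varepsilon)$ constant to dominate the interpolation constant, this contradicts the grid bound.
\end{enumerate}

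The alternative I would try first, if the norm-equivalence bookkeeping gets messy, is simply to interpret $c/\hat{K}$ as known only on the grid. The discrete sup is then the relevant norm, and the no-aliasing condition makes $\calN$ well defined on it. Either route concludes $L\lfloor p/2\rfloor \geq \calN_{O(\varepsilon)}(c/\hat{K})$.
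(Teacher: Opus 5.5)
Your Steps 1--3 match the paper's proof: linearize, diagonalize via Lemma~\ref{app:lem:cnn-fourier}, then count trigonometric degree. Dividing out $\hat g\,h'(0)\,\hat w$ before counting is tidier than the paper's bookkeeping with the extra degree $D_0$.

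The gap is Step~4. Parts 1--2 give you only $\|t - c/\hat K\|_\infty \le \Lambda(\eta + E_D) + E_D$, where $\eta = O(\varepsilon)$ is the error at the nodes. Since $E_D$ sits on the right-hand side, assuming $E_D$ exceeds the threshold is consistent with this inequality, so part~3 yields no contradiction. The circularity reflects a real omission: nothing in your argument constrains $c/\hat K$ between the nodes $\omega_k$, and without such a constraint the conclusion fails. Take $\hat K(\omega) = 1 + \tfrac12\sin^2(n\omega/2)$:
\begin{enumerate}
\item[(i)] It equals $1$ at every $\omega_k$, so $\bK = \bI$. This is reproduced exactly by a trivial network with $\hat g\,h'(0)\,\hat w \equiv 1$ and CNN layers acting as the identity, so $t \equiv 1$, for any admissible $L, p$.
\item[(ii)] However, $1/\hat K$ is $2\pi/n$-periodic and non-constant. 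Averaging any trigonometric polynomial of degree $< n$ over the shifts $2\pi j/n$ leaves only its constant term and cannot increase the error. So the best uniform error at degree $< n$ is $(1 - 2/3)/2 = 1/6$.
\item[(iii)] Hence $\calN_{O(\varepsilon)}(1/\hat K) \ge n > L\lfloor p/2\rfloor$ for small $\varepsilon$.
\end{enumerate}

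The missing ingredient is that $\hat K$, $\hat g$ and $\hat w$ are themselves trigonometric polynomials of bounded degree. The paper's Step~4 supplies this, loosely, through the degrees $D_0$ and $D_K$. With it you can clear denominators:
\begin{enumerate}
\item[(i)] $P = t\,\hat g\,h'(0)\,\hat w\,\hat K - 1$ is a genuine trigonometric polynomial of degree $D + D_g + D_w + D_K$, and it is $O(\varepsilon)$ at the nodes.
\item[(ii)] Your sampling bound then controls $\|P\|_\infty$, provided $n$ exceeds twice this total degree, not merely $2D$.
\item[(iii)] Dividing by $\min|\hat g\,h'(0)\,\hat w\,\hat K|$ gives $\|t - c/\hat K\|_\infty = O(\varepsilon)$, with a constant involving $\kappa$ and the Lebesgue constant.
\end{enumerate}

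Your fallback of measuring $\calN$ in the discrete sup does not prove the stated theorem. It is tautological, since $t$ itself is a witness, and it is not the quantity of Definition~\ref{app:def:trig-approx} that Corollary~\ref{app:cor:recover-chebyshev} relies on.

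A smaller point, shared with the paper: shrinking the radius in Lemma~\ref{lem:linearization} does not remove the $L_2/2$ term. Optimizing $\delta$ in $|Ju - \bK^{-1}u| \le 2\varepsilon/\delta + L_2\delta/2$ yields only $2\sqrt{L_2\varepsilon}$. To get $\|J - \bK^{-1}\| = O(\varepsilon)$ you need something like a relative-error hypothesis $|F(y_C) - \bK^{-1}y_C| \le \varepsilon\|y_C\|$, which gives $\|J - \bK^{-1}\| \le \varepsilon$ directly.
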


\begin{proof}
\textit{Step 1: Linearization.}
The target $T(y_C) = \bK^{-1} y_C$ is linear. By the same argument as Lemma~\ref{lem:linearization} (applied to the ConvCNP as a map $\R^n \to \R^n$), $\varepsilon$-approximation of $T$ implies that the Jacobian $M = \partial F / \partial y_C|_{y_C = 0}$ satisfies $\|M - \bK^{-1}\| \leq O(\varepsilon)$, where the constant absorbs the smoothness bound $L_2/2$ which depends on the architecture but not on $\varepsilon$ or $\kappa$.

\textit{Step 2: Fourier structure.}
By Lemma~\ref{app:lem:cnn-fourier}, $M$ is circulant with DFT:
\[
\hat{M}(k) = \hat{g}(k) \cdot \prod_{\ell=1}^{L}(1 + d_\ell\, \hat{\tau}_\ell(k)) \cdot h'(0) \cdot \hat{w}(k).
\]
Define $q(\omega) = \prod_{\ell=1}^{L}(1 + d_\ell\, \hat{\tau}_\ell(\omega))$, which is a trigonometric polynomial of degree at most $D = L \cdot \lfloor p/2 \rfloor$. Then $\hat{M}(k) = \hat{g}(\omega_k) \cdot q(\omega_k) \cdot h'(0) \cdot \hat{w}(\omega_k)$.

\textit{Step 3: Approximation requirement.}
Since $M$ and $\bK^{-1}$ are both circulant, $\|M - \bK^{-1}\| = \max_k |\hat{M}(k) - 1/\lambda_k|$. The approximation bound from Step~1 gives:
\[
\left|\hat{g}(\omega_k) \cdot q(\omega_k) \cdot h'(0) \cdot \hat{w}(\omega_k) - \frac{1}{\lambda_k}\right| \leq O(\varepsilon) \quad \text{for all } k = 0, \ldots, n-1.
\]
Rearranging, the trigonometric polynomial $r(\omega) = \hat{g}(\omega) \cdot h'(0) \cdot \hat{w}(\omega) \cdot q(\omega)$ of degree at most $D + D_0$ (where $D_0$ is the degree contribution from $\hat{g}$ and $\hat{w}$, a fixed constant depending on the kernel and readout) satisfies:
\[
\left|r(\omega_k) - \frac{1}{\hat{K}(\omega_k)}\right| \leq O(\varepsilon) \quad \text{for all } k.
\]

\textit{Step 4: From grid to uniform.}
The trigonometric polynomial $r(\omega) - 1/\hat{K}(\omega)$ has degree at most $D + D_0 + D_K$, where $D_K$ is the degree of the numerator of $r(\omega) \hat{K}(\omega) - 1$ expressed as a ratio of trigonometric polynomials. A trigonometric polynomial of degree $N$ with $|t(\omega_k)| \leq O(\varepsilon)$ at $n$ equispaced points satisfies $\|t\|_\infty \leq O(\varepsilon)$ provided $n > 2N$ (since a trigonometric polynomial of degree $N$ is determined by $2N+1$ equispaced samples). The condition $n > 2L \lfloor p/2 \rfloor$ ensures this (absorbing the fixed contribution $D_0$ for $n$ sufficiently large relative to $D_0$).

Thus $r$ is a trigonometric polynomial of degree at most $D + D_0$ that uniformly $O(\varepsilon)$-approximates $1/\hat{K}(\omega)$ on $[0, 2\pi]$. By Definition~\ref{app:def:trig-approx}, $D + D_0 \geq \calN_{O(\varepsilon)}(1/\hat{K})$. Since $D_0$ is a fixed constant, $D = L \cdot \lfloor p/2 \rfloor \geq \calN_{O(\varepsilon)}(1/\hat{K}) - D_0$, giving $L \cdot \lfloor p/2 \rfloor \geq \calN_{O(\varepsilon)}(c/\hat{K})$ as stated.
\end{proof}

\begin{corollary}[Recovering the $\sqrt{\kappa}$ Rate]
\label{app:cor:recover-chebyshev}
Fix filter support $p = O(\ell_K / \delta)$, where $\ell_K$ is the kernel lengthscale. If the spectral density $\hat{K}(\omega)$ satisfies $\hat{K}_{\min} \leq \hat{K}(\omega) \leq \hat{K}_{\max}$ with $\kappa = \hat{K}_{\max}/\hat{K}_{\min}$, then:
\[
L \geq \Omega\!\left(\frac{\sqrt{\kappa}\,\log(1/\varepsilon)}{p}\right).
\]
In particular, for the filter support used in Theorem~\ref{thm:convcnp-gp}, the depth requirement matches the Chebyshev upper bound up to the factor $p$.
\end{corollary}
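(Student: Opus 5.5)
The plan is to combine Theorem~\ref{thm:depth-support} with a lower bound on the trigonometric approximation number $\calN_\varepsilon(c/\hat{K})$ in terms of $\kappa$. Theorem~\ref{thm:depth-support} gives $L \lfloor p/2 \rfloor \geq \calN_{O(\varepsilon)}(c/\hat{K})$, so it suffices to show
\[
\calN_{\varepsilon}(1/\hat{K}) \geq \Omega\bigl(\sqrt{\kappa}\log(1/\varepsilon)\bigr)
\]
for some admissible spectral density with $\hat{K}_{\min} \leq \hat{K} \leq \hat{K}_{\max}$ and ratio $\kappa$. Dividing by $\lfloor p/2\rfloor = \Theta(p)$ then yields the stated bound. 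The factor $c$ is a fixed, nonvanishing multiplier. It can be absorbed either by taking the readout and aggregation transfer functions constant, as in the $\rho_0$-normalized readout of Proposition~\ref{prop:convcnp-kernel} with $\hat{w}$ bounded away from zero, or by rescaling $\varepsilon$ by $\|c\|_\infty$.

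For the lower bound on $\calN_\varepsilon$, I would reduce to the algebraic Chebyshev barrier, Theorem~\ref{app:thm:chebyshev}. Choose $\hat{K}(\omega) = a + b\cos\omega$ with $a \pm b$ equal to $\hat{K}_{\max}$ and $\hat{K}_{\min}$; this is the worst-case profile that sweeps the whole interval $[\hat{K}_{\min}, \hat{K}_{\max}]$. Since $\hat{K}$ is even, any trigonometric polynomial $t$ of degree $D$ approximating the even function $1/\hat{K}$ can be symmetrized to its even part without increasing the sup error. That even part is a cosine polynomial, i.e.\ an algebraic polynomial $P$ of degree $D$ in $u = \cos\omega$.

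The substitution $\mu = a + bu$ then maps the approximation problem to approximating $1/\mu$ on $[\hat{K}_{\min}, \hat{K}_{\max}]$ by a degree-$D$ polynomial in $\mu$. Theorem~\ref{app:thm:chebyshev} gives a sup error of at least $\frac{2}{\hat{K}_{\min}+\hat{K}_{\max}}\rho^D$, where $\rho = (\sqrt{\kappa}-1)/(\sqrt{\kappa}+1)$. Requiring this to be at most $\varepsilon$ and using $\log(1/\rho) \approx 2/\sqrt{\kappa}$, exactly as in Step~7 of the proof of Theorem~\ref{thm:main-lower-bound}, gives $D \geq \frac{\sqrt{\kappa}}{2}\log(c'/\varepsilon)$. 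Here the $\hat{K}$-dependent prefactor is absorbed into the constant inside the logarithm.

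The main obstacle is the bookkeeping of the additive constant $D_0$ from the proof of Theorem~\ref{thm:depth-support}. I would handle it by noting that $D_0$ is independent of $\varepsilon$ and $\kappa$, so it is dominated by the $\Omega(\sqrt{\kappa}\log(1/\varepsilon))$ term in the asymptotic regime. A second point is that a general $\hat{K}$ with the given range need not be the cosine profile, so the result should be read as a worst-case (existence) lower bound over spectral densities with condition number $\kappa$. For a general $\hat{K}$ one only needs $\hat{K}$ to cover $[\hat{K}_{\min},\hat{K}_{\max}]$ smoothly enough that the same reduction applies. The grid-to-uniform step requires $n > 2L\lfloor p/2\rfloor$, which is already assumed in Theorem~\ref{thm:depth-support}.

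Finally, comparing with the construction of Theorem~\ref{thm:convcnp-gp} shows that the upper and lower depth bounds agree up to the factor $p$.
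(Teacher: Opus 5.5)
Your core argument is correct, and it takes a more careful route than the paper's. The paper substitutes $\mu = \hat{K}(\omega)$ along a monotone branch of an arbitrary $\hat{K}$ and treats $t(\hat{K}^{-1}(\mu))$ as a polynomial in $\mu$ of comparable degree. For general $\hat{K}$ this composition is not a polynomial. Even for a cosine profile, the sine terms of $t$ contribute factors $\sqrt{1-u^2}$ with $u=\cos\omega$.

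You avoid this by fixing $\hat{K}(\omega) = a + b\cos\omega$ and replacing $t$ by the even part of its real part. Neither step increases the sup error, because $1/\hat{K}$ is real and even. The definition of $\calN_\varepsilon$ allows complex coefficients, so state the real-part step explicitly. You then get $P(\cos\omega)$ with $\deg P \le D$ via $\cos(j\omega)=T_j(\cos\omega)$. The affine change $\mu = a+b\cos\omega$ yields an honest degree-$D$ algebraic polynomial approximating $1/\mu$ on all of $[\hat{K}_{\min},\hat{K}_{\max}]$, so Theorem~\ref{app:thm:chebyshev} applies rigorously. This buys a valid proof. The cost is that the conclusion is existential (some admissible $\hat{K}$ with ratio $\kappa$ forces the bound), not the universal statement the paper asserts.

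That cost cannot be avoided, and your remark that a general $\hat{K}$ covering $[\hat{K}_{\min},\hat{K}_{\max}]$ ``smoothly enough'' admits the same reduction is false. Take $\hat{K}(\omega) = 1/(\alpha+\beta\cos\omega)$ with $\alpha>|\beta|>0$. This is, up to scaling and a shift, the grid spectral density of the exponential (AR(1)) kernel. It is analytic, monotone on $[0,\pi]$, and its ratio $\kappa=(\alpha+|\beta|)/(\alpha-|\beta|)$ can be made as large as you like. Yet $1/\hat{K}$ is a degree-one trigonometric polynomial, so $\calN_\varepsilon(1/\hat{K})\le 1$ for every $\varepsilon>0$. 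Correspondingly, $\bK^{-1}$ is a tridiagonal circulant realizable by one filter of support $3$. What controls $\calN_\varepsilon(1/\hat{K})$ is the width of the strip where $1/\hat{K}$ is analytic, i.e.\ the location of the complex zeros of $\hat{K}$, not its range. So drop that sentence and state the result explicitly in existential form. The same example shows that the paper's universal argument breaks exactly at its monotone-branch substitution.

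Two smaller repairs:
\begin{enumerate}
\item In a lower bound you cannot choose the readout and aggregation to be constant, and rescaling $\varepsilon$ by $\|c\|_\infty$ only handles a constant $c$. Instead invoke the inequality that the proof of Theorem~\ref{thm:depth-support} actually establishes, $L\lfloor p/2\rfloor + D_0 \ge \calN_{O(\varepsilon)}(1/\hat{K})$. There the factor $\hat{g}\,h'(0)\,\hat{w}$ has already been multiplied into the approximant.
\item The prefactor $2/(\hat{K}_{\min}+\hat{K}_{\max})$ is a constant only after a normalization such as $\hat{K}_{\max}=1$.
\end{enumerate}
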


\begin{proof}
The function $1/\hat{K}(\omega)$ takes values in $[1/\hat{K}_{\max},\, 1/\hat{K}_{\min}]$. By the substitution $\mu = \hat{K}(\omega)$, approximating $1/\hat{K}(\omega)$ in the trigonometric sense reduces to approximating $1/\mu$ on the interval $[\hat{K}_{\min}, \hat{K}_{\max}]$.

More precisely, let $t(\omega)$ be any trigonometric polynomial satisfying $|t(\omega) - 1/\hat{K}(\omega)| \leq \varepsilon$ uniformly. Define $p(\mu) = t(\hat{K}^{-1}(\mu))$ on any monotone branch of $\hat{K}$. Then $|p(\mu) - 1/\mu| \leq \varepsilon$ on $[\hat{K}_{\min}, \hat{K}_{\max}]$. The degree of $t$ is at least the degree needed for $p$, which by the classical Chebyshev barrier (Theorem~\ref{app:thm:chebyshev}) is $\Omega(\sqrt{\kappa}\,\log(1/\varepsilon))$. (The substitution can increase degree by at most the factor $\lfloor q/2 \rfloor$, the trigonometric degree of $\hat{K}$ itself, which is a fixed constant depending on the kernel support.)

Theorem~\ref{thm:depth-support} then gives $L \cdot \lfloor p/2 \rfloor \geq \Omega(\sqrt{\kappa}\,\log(1/\varepsilon))$, yielding the stated bound.
\end{proof}

\begin{remark}[Non-Grid Contexts]
For irregular context locations, $\bK$ is not circulant and the Fourier diagonalization of Lemma~\ref{app:lem:cnn-fourier} breaks down as the CNN layers still produce Toeplitz (circulant) Jacobians, but the target $\bK^{-1}$ is no longer circulant. The depth--support tradeoff ceases to apply in its current form, and the relevant question becomes how well circulant matrices can approximate the non-circulant iterates needed for $\bK^{-1}$. This ``circulant approximation gap'' depends on the geometry of context point configurations in ways not captured by the condition number alone, and its characterization remains open.
\end{remark}

\subsection{Incomparability with ANPs}

\begin{proof}[Proof of Theorem~\ref{thm:convcnp-anp-incomparable} (ConvCNPs and ANPs Are Incomparable)]
\textbf{$\calF_{\mathrm{ANP}} \not\subseteq \calF_{\mathrm{ConvCNP}}$:} Let $\sigma: \calX \to \R_+$ be non-constant and define the non-stationary kernel smoother:
\[
F(C, x_t) = \frac{\sum_i \sigma(x_t)\sigma(x_i) K(x_t - x_i)\, y_i}
    {\sum_i \sigma(x_t)\sigma(x_i) K(x_t - x_i)}.
\]
This is ANP-representable by Theorem~\ref{thm:anp-kernel} with non-stationary kernel $\tilde{K}(x,x') = \sigma(x)\sigma(x')K(x-x')$. However, $F$ is not translation equivariant: shifting all inputs by $\tau$ replaces $\sigma(x_i)$ with $\sigma(x_i + \tau) \neq \sigma(x_i)$ generically. Since every ConvCNP is translation equivariant (Proposition~\ref{prop:convcnp-equivariance}), $F \notin \calF_{\mathrm{ConvCNP}}$.

\textbf{$\calF_{\mathrm{ConvCNP}} \not\subseteq \calF_{\mathrm{ANP}}$:} For a stationary kernel, the GP posterior mean is translation equivariant ($K(x+\tau, x'+\tau) = K(x-x')$ implies invariance of $\bK$ under joint translation). By Theorem~\ref{thm:convcnp-gp}, a ConvCNP with sufficient CNN depth approximates it to arbitrary precision on grid contexts. By Theorem~\ref{thm:gp-not-anp}, it is not ANP-representable.
\end{proof}

\section{Hierarchy Proof}
\label{app:hierarchy-proof}

\begin{proof}[Proof of Theorem~\ref{thm:hierarchy} (Expressiveness Hierarchy)]

\textbf{Part (a): Attention branch.}

\textit{$\calF_{\mathrm{CNP}}^{(d)} \subseteq \calF_{\mathrm{ANP}}^{(d)}$:} Set uniform attention weights $\alpha_i = 1/n$ (achieved by constant query and key functions).

\textit{$\calF_{\mathrm{CNP}}^{(d)} \neq \calF_{\mathrm{ANP}}^{(d)}$:} Kernel smoothers are in $\calF_{\mathrm{ANP}}^{(d)}$ (\Cref{thm:anp-kernel}) but not in $\calF_{\mathrm{CNP}}^{(d)}$. To see the latter, note that a CNP computes $F(C, x_t) = g(\bar{h}_C, x_t)$ where $\bar{h}_C$ is independent of $x_t$. For the Gaussian kernel smoother with $n=2$ points, configurations $C = \{(0, 0), (1, 1)\}$ and $C' = \{(0.25, 0), (0.75, 1)\}$ can satisfy $\bar{h}_C = \bar{h}_{C'}$ for appropriate $h$, yet the kernel smoother outputs differ: at $x_t = 0$, configuration $C$ weights the points roughly equally while $C'$ strongly favors the nearby point $(0.25, 0)$.

\textit{$\calF_{\mathrm{ANP}}^{(d)} \subseteq \calF_{\mathrm{TNP}}^{(1,d)}$:} An ANP is a TNP with $L=0$ self-attention layers. With $L=1$ and $W_v^{(1)} = 0$, we recover ANP.

\textit{$\calF_{\mathrm{ANP}}^{(d)} \neq \calF_{\mathrm{TNP}}^{(1,d)}$:} GP posteriors are in $\calF_{\mathrm{TNP}}^{(L,d)}$ for sufficient $L$ (Proposition~\ref{app:thm:tnp-gp-approx}) but not in $\calF_{\mathrm{ANP}}^{(d)}$ (\Cref{thm:gp-not-anp}).

\textit{$\calF_{\mathrm{TNP}}^{(L,d)} \subsetneq \calF_{\mathrm{TNP}}^{(L+1,d)}$:} The inclusion follows by adding a layer with $W_v^{(L+1)} = 0$. We prove strictness under Assumptions~\ref{app:ass:position-attention}--\ref{app:ass:residual} with scalar value weights; the global lower bound of Theorem~\ref{thm:main-lower-bound} establishes the corresponding result for representation-based attention at the coarser granularity of $\Theta(\sqrt{\kappa}\log(1/\varepsilon))$ total layers.

An $L$-layer TNP with scalar value weights computes $p_L(\tilde{\bK}) H^{(0)}$ where $p_L(x) = \prod_{\ell=1}^{L}(1 + \alpha_\ell x)$. The achievable set $\calP_L = \{p_L : \alpha_1, \ldots, \alpha_L \in \R\}$ satisfies $\calP_L \subset \mathrm{Poly}_L$, the space of polynomials of degree at most $L$.

Let $\bK$ have condition number $\kappa > 1$ with eigenvalues in $[\lambda_{\min}, \lambda_{\max}]$. The GP posterior requires $\bK^{-1}$, which acts as $1/\lambda$ on each eigenspace. The minimax error for degree-$L$ polynomial approximation to $1/x$ on $[\lambda_{\min}, \lambda_{\max}]$ is
\[
E_L^* = \inf_{p \in \mathrm{Poly}_L} \sup_{x \in [\lambda_{\min}, \lambda_{\max}]} |p(x) - 1/x| = \Theta\left(\frac{\rho^L}{\lambda_{\min}}\right)
\]
where $\rho = (\sqrt{\kappa}-1)/(\sqrt{\kappa}+1)$; see \citet{trefethen2019approximation}. Since $\calP_L \subset \mathrm{Poly}_L$:
\[
\inf_{p \in \calP_L} \|p(\bK) - \bK^{-1}\| \geq E_L^*.
\]

For the upper bound at depth $L+1$, Proposition~\ref{prop:chebyshev-upper} provides explicit real parameters $\alpha_1, \ldots, \alpha_{L+1}$ such that $p_{L+1}(\bK) = \prod_{\ell=1}^{L+1}(\bI + \alpha_\ell \bK)$ satisfies
\[
\|p_{L+1}(\bK) - \bK^{-1}\| \leq \frac{2}{\lambda_{\min}}\rho^{L+1}.
\]
Combining bounds:
\[
\frac{\inf_{L\text{-layer}} \|p(\bK) - \bK^{-1}\|}{\inf_{(L+1)\text{-layer}} \|p(\bK) - \bK^{-1}\|} \geq \frac{E_L^*}{C\rho^{L+1}/\lambda_{\min}} = \Theta(\rho^{-1}) > 1.
\]
Thus the GP posterior (restricted to accuracy $\varepsilon$ with $E_{L+1}^* < \varepsilon < E_L^*$) lies in $\calF_{\mathrm{TNP}}^{(L+1,d)} \setminus \calF_{\mathrm{TNP}}^{(L,d)}$.

\medskip

\textbf{Part (b): Convolutional branch.}

\textit{$\calF_{\mathrm{CNP}}^{(d)} \subsetneq \calF_{\mathrm{ConvCNP}}^{(0)}$:} Every CNP function is a pure ConvCNP function: set $w = \mathbf{1}$ (constant filter) to recover mean aggregation $s_C(x_t) = \sum_i h(y_i)$ with $\rho_C(x_t) = n$, giving $F(C, x_t) = g(\frac{1}{n}\sum_i h(y_i), x_t)$. Strictness: stationary kernel smoothers are in $\calF_{\mathrm{ConvCNP}}^{(0)}$ (Proposition~\ref{prop:convcnp-kernel}) but not in $\calF_{\mathrm{CNP}}^{(d)}$ for any finite $d$, since the kernel smoother weight on $y_i$ depends on $x_t$ via $K(x_t - x_i)$, which the query-independent CNP representation cannot capture.

\textit{$\calF_{\mathrm{ConvCNP}}^{(0)} \subsetneq \calF_{\mathrm{ConvCNP}}^{(1)}$:} The inclusion follows by setting the CNN filter to zero. Strictness: consider the one-hop kernel-weighted sum
\[
F(C, x_t) = \frac{\sum_i K(x_t - x_i) \bigl[\sum_j K(x_i - x_j)\, y_j\bigr]}{\sum_i K(x_t - x_i)}
\]
which depends on inter-context distances $K(x_i - x_j)$. A single CNN layer computes this (Proposition~\ref{app:prop:cnn-toeplitz}), but a pure ConvCNP cannot represent it: the weight on $y_j$ in the pure ConvCNP factors as $\alpha_j(x_t) = w(x_t - x_j)/\sum_m w(x_t - x_m)$, which is independent of all other context locations $\{x_i\}_{i \neq j}$. The one-hop sum assigns $y_j$ a weight proportional to $\sum_i K(x_t - x_i) K(x_i - x_j)$, which couples $x_j$ to every other context point through the intermediate summation.

\textit{$\calF_{\mathrm{ConvCNP}}^{(L)} \subsetneq \calF_{\mathrm{ConvCNP}}^{(L+1)}$:} The same Chebyshev argument as in Part~(a) applies. On regular grids, $L$ CNN layers compute Toeplitz polynomials of degree $L$ in $\bK$ (Proposition~\ref{app:prop:cnn-toeplitz}). The set of achievable Toeplitz polynomials at degree $L$ is contained in $\mathrm{Poly}_L$, so the minimax barrier $E_L^*$ applies. At depth $L+1$, the Chebyshev construction of Theorem~\ref{thm:convcnp-gp} achieves error $C \rho^{L+1}/\lambda_{\min}$, giving the same separation ratio $\Theta(\rho^{-1}) > 1$.

\medskip

\textbf{Part (c): Incomparability.}

\textit{$\calF_{\mathrm{ANP}}^{(d)} \not\subseteq \calF_{\mathrm{ConvCNP}}^{(L)}$:} Non-stationary kernel smoothers are ANP-representable (Theorem~\ref{thm:anp-kernel}) but not ConvCNP-representable at any depth, since every ConvCNP is translation equivariant and non-stationary kernel smoothers generically violate translation equivariance.

\textit{$\calF_{\mathrm{ConvCNP}}^{(L)} \not\subseteq \calF_{\mathrm{ANP}}^{(d)}$:} For $L \geq 1$, ConvCNPs with CNN layers can approximate GP posteriors for stationary kernels on regular grids (Theorem~\ref{thm:convcnp-gp}). GP posteriors are not ANP-representable (Theorem~\ref{thm:gp-not-anp}). For $L = 0$, stationary kernel smoothers are exactly representable by pure ConvCNPs (Proposition~\ref{prop:convcnp-kernel}) but only approximately representable by ANPs (Theorem~\ref{thm:anp-kernel}), giving a strict separation in the exact representation sense.
\end{proof}

\section{Latent NP Proofs}
\label{app:latent-proofs}

\begin{proposition}[Encoder Bottleneck]
\label{app:prop:encoder-bottleneck}
For a latent CNP with encoder $h: \calX \times \calY \to \R^d$, if $C \sim_h C'$ (i.e., $\bar{h}_C = \bar{h}_{C'}$), then:
\[
q(z|C) = q(z|C') \quad \text{and hence} \quad p(y_T | X_T, C) = p(y_T | X_T, C')
\]
for all target configurations $X_T$, regardless of decoder expressiveness.
\end{proposition}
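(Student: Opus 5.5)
The argument is a composition of equalities: everything the latent CNP produces factors through the mean encoding $\bar{h}_C$. I will make the architecture explicit. A latent CNP computes $\bar{h}_C = \frac{1}{|C|}\sum_{(x,y)\in C} h(x,y)$. It then sets the latent distribution to $q(z\,|\,C) = Q(z\,;\,\bar{h}_C)$ for some map $Q$ from $\R^d$ to distributions on $\R^k$. For a Gaussian latent, for instance, $Q(\cdot\,;r) = \calN(m(r), S(r))$ with arbitrary (possibly very expressive) networks $m, S$. Finally, the decoder $p(y_T\,|\,X_T, z)$ may depend on $X_T$ and $z$, and, if a deterministic path is present, also on $\bar{h}_C$ through $g(x_t, \bar{h}_C, z)$. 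The key structural observation is that the context $C$ enters the whole model only through $\bar{h}_C$. This is the latent analogue of the Indistinguishability proposition in Section~\ref{sec:cnp}.

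\emph{First step.} If $C \sim_h C'$, then by definition $\bar{h}_C = \bar{h}_{C'}$. Hence $q(z\,|\,C) = Q(z;\bar{h}_C) = Q(z;\bar{h}_{C'}) = q(z\,|\,C')$ as distributions on $\R^k$. Nothing about the expressiveness of $Q$ is used here: it is simply a function of its argument.

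\emph{Second step.} For any target configuration $X_T$, write the predictive as
\[
p(y_T\,|\,X_T, C) = \int p(y_T\,|\,X_T, z, \bar{h}_C)\, q(z\,|\,C)\, dz .
\]
Both the integrand's conditional and the mixing measure coincide for $C$ and $C'$, so the integrals agree for every $y_T$. This gives the equality of predictives for all $X_T$, regardless of the decoder.

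The only subtle point is definitional rather than technical. We must make precise that a ``latent CNP'' admits no other channel from $C$ to the output: no per-point access to $(x_i, y_i)$, and no dependence on $|C|$ beyond what $\bar{h}_C$ carries. I would state this as the defining factorization of the architecture. Note that if $|C|$ were separately available, the equivalence relation would need to include equal cardinalities. If the paper's convention also uses the size (e.g.\ via $\sim_h$ under the same-size setting of Proposition~\ref{prop:collisions}), the same argument applies verbatim.
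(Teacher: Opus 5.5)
Your proof is correct and follows the same route as the paper: $C$ reaches the latent distribution only through $\bar{h}_C$, so $q(z|C)=q(z|C')$, and the predictive integral then coincides for every $X_T$. Your handling of a deterministic path through $\bar{h}_C$ and your remark about $|C|$ are harmless refinements that the paper's proof leaves implicit (it writes the decoder as $p(y_T|X_T,z)$ only).
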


\begin{proof}
The latent distribution depends on $C$ only through $r_C$. If $r_C = r_{C'}$, then $q(z|C) = q(z|C')$. The predictive distribution $p(y_T | X_T, C) = \int p(y_T | X_T, z) q(z|C) dz$ then coincides for $C$ and $C'$.
\end{proof}

\subsection{Mean Bottleneck}

\begin{assumption}[Gaussian Latent NP]
\label{app:ass:gaussian-latent}
The latent distribution is Gaussian: $q(z|C) = \calN(m(C), S(C))$ with $m: \calC \to \R^k$ and $S: \calC \to \R^{k \times k}_{++}$. The decoder is linear: $f(x, z) = a(x)^\top z + b(x)$ with $a: \calX \to \R^k$ and $b: \calX \to \R$.
\end{assumption}

Under this assumption, the predictive distribution at targets $X_T = (x_{t_1}, \ldots, x_{t_m})$ is Gaussian with:
\begin{align}
\E[y_T | X_T, C] &= A(X_T) m(C) + b(X_T) \label{eq:latent-mean-app} \\
\mathrm{Cov}(y_T | X_T, C) &= A(X_T) S(C) A(X_T)^\top + \sigma^2 I \label{eq:latent-cov-app}
\end{align}
where $A(X_T) \in \R^{m \times k}$ has rows $a(x_{t_i})^\top$ and $b(X_T) = (b(x_{t_1}), \ldots, b(x_{t_m}))^\top$.

\begin{proof}[Proof of Theorem~\ref{thm:latent-impossibility}, part (a) (Mean Bottleneck)]
Fix context locations $X_C$ with $n$ points in general position for a universal kernel $k$. Define $\phi(x_t) = \bK^{-1} k(X_C, x_t) \in \R^n$, so the GP posterior mean is $\phi(x_t)^\top y_C$.

\textit{Step 1: Construct target points with independent weight vectors.}
For a universal kernel, the vectors $\{\phi(x_t) : x_t \in \calX\}$ span $\R^n$. Choose $n$ target points $x_{t_1}, \ldots, x_{t_n}$ such that $\Phi = [\phi(x_{t_1}), \ldots, \phi(x_{t_n})]^\top \in \R^{n \times n}$ is invertible.

\textit{Step 2: Analyze the composite map.}
Define $F: \R^k \to \R^n$ by $F(z) = (f(x_{t_1}, z), \ldots, f(x_{t_n}, z))$. The matching condition requires:
\[
F(g(X_C, y_C)) = \Phi y_C \quad \forall y_C \in \R^n.
\]

\textit{Step 3: Dimension argument.}
Since $\Phi$ is invertible, the map $y_C \mapsto \Phi y_C$ is a bijection on $\R^n$. Thus $F \circ g: \R^n \to \R^n$ is surjective.

The image of $g: \R^n \to \R^k$ has dimension at most $\min(n, k)$. For the composite $F \circ g$ to be surjective onto $\R^n$, we need $\dim(\mathrm{im}(g)) \geq n$, hence $k \geq n$.

The stochastic case reduces to the deterministic case by considering the low-variance limit $q(z|C) = \mathcal{N}(m(C), \sigma^2 I)$ as $\sigma \to 0$.
\end{proof}

\subsection{Covariance Bottleneck}

\begin{theorem}[Covariance Rank Bound]
\label{app:thm:cov-rank}
Under Assumption~\ref{app:ass:gaussian-latent}, the predictive covariance satisfies:
\[
\rank\left( \mathrm{Cov}(y_T | X_T, C) - \sigma^2 I \right) \leq k
\]
for any target configuration $X_T$ and any context $C$.
\end{theorem}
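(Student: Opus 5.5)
The plan is to read the bound off the covariance formula \eqref{eq:latent-cov-app} that follows from Assumption~\ref{app:ass:gaussian-latent}. First I will rederive that formula, so the noise convention is explicit. Conditional on $z$, the target outputs are $y_{t_i} = a(x_{t_i})^\top z + b(x_{t_i}) + \epsilon_i$, where the $\epsilon_i \sim \calN(0,\sigma^2)$ are independent of each other and of $z$. Stacking these gives $y_T = A(X_T) z + b(X_T) + \epsilon$ with $z \sim \calN(m(C), S(C))$. The law of total covariance (or direct computation for a linear map of independent Gaussians) then gives $\mathrm{Cov}(y_T \mid X_T, C) = A(X_T) S(C) A(X_T)^\top + \sigma^2 I$. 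The cross terms vanish by independence. Subtracting $\sigma^2 I$ leaves exactly $A S A^\top$.

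Next comes the rank bound itself, by a factorization argument. Write $A S A^\top = A \cdot (S A^\top)$, where $A = A(X_T) \in \R^{m\times k}$. The column space of this product lies in the column space of $A$, which has dimension at most $k$. Equivalently, $\rank(A S A^\top) \le \min\{\rank A, \rank S, \rank A^\top\} \le k$, by submultiplicativity of rank. This uses neither the positive definiteness of $S(C)$ nor any particular choice of $X_T$ or $C$, so the bound holds uniformly over targets and contexts, as claimed. It also holds trivially when $m \le k$, since then the rank is at most $m$.

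I do not expect a genuine obstacle. The only point needing care is the modelling convention. The statement implicitly assumes homoscedastic observation noise $\sigma^2 I$ that does not depend on $z$ or $C$, and that the decoder's randomness enters only through $z$. I will state these explicitly. If the noise were heteroscedastic, say $\diag(\sigma^2(x_{t_i}))$, the same argument would apply after subtracting that diagonal instead. Finally, I will note the consequence used for Theorem~\ref{thm:latent-impossibility}(b). A GP posterior covariance $k(X_T,X_T) - k(X_T,X_C)\bK^{-1}k(X_C,X_T)$ is generically of full rank $m$ for distinct targets and a positive definite kernel. Matching it therefore forces $m \le k$.
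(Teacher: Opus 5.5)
Your proposal is correct and follows essentially the same route as the paper, which reads $\mathrm{Cov}(y_T \mid X_T, C) - \sigma^2 I = A(X_T) S(C) A(X_T)^\top$ off \eqref{eq:latent-cov-app} and bounds its rank by $\min(m, k, \rank S(C)) \le k$. Your rederivation of that covariance formula only makes explicit the convention the paper leaves implicit there: homoscedastic noise, independent of $z$ and of $C$.
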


\begin{proof}
From \eqref{eq:latent-cov-app}, $\mathrm{Cov}(y_T | X_T, C) - \sigma^2 I = A(X_T) S(C) A(X_T)^\top$. This matrix has rank at most $\min(m, k, \rank(S(C))) \leq k$.
\end{proof}

\begin{theorem}[GP Posterior Covariance Rank]
\label{app:thm:gp-cov-rank}
For a universal kernel $k$, the GP posterior covariance matrix at $m$ target points $x_{t_1}, \ldots, x_{t_m}$ distinct from the context locations $X_C$ has rank $m$ generically.
\end{theorem}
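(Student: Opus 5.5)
The GP posterior covariance at $X_T$ is the Schur complement
\[
\Sigma = \bK_{TT} - \bK_{TC}\bK_{CC}^{-1}\bK_{CT},
\]
where $\bK_{TT}=k(X_T,X_T)$, $\bK_{TC}=k(X_T,X_C)$ and $\bK_{CC}=\bK(X_C,X_C)$. The plan is to identify $\Sigma$ with a Schur complement inside the joint Gram matrix. Let $Z = X_C\cup X_T$, consisting of $n+m$ distinct points, and let $\bK_Z$ be its Gram matrix, block-partitioned as
\[
\bK_Z=\begin{pmatrix}\bK_{CC} & \bK_{CT}\\ \bK_{TC} & \bK_{TT}\end{pmatrix}.
\]
The Schur complement determinant formula gives $\det \bK_Z = \det\bK_{CC}\cdot\det\Sigma$. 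Hence $\Sigma$ has full rank $m$ exactly when $\bK_Z$ is nonsingular, given that $\bK_{CC}$ is. The argument therefore reduces to showing that the Gram matrix of $n+m$ distinct points is positive definite.

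For that step I would use universality. A universal (hence strictly positive definite) kernel on a compact $\calX$ has Gram matrices that are strictly positive definite on any finite set of distinct points. To see this, suppose $c^\top \bK_Z c=0$. Then $\|\sum_i c_i k(\cdot,z_i)\|_{\mathcal H}^2=0$, so $\sum_i c_i f(z_i)=0$ for every $f$ in the RKHS. By density in $C(\calX)$, the same identity holds for every continuous $f$. Choosing bump functions that separate the distinct $z_i$ then forces $c=0$. So $\bK_Z\succ 0$, which gives $\det\Sigma>0$ and $\rank\Sigma=m$. In fact this argument proves the statement for every configuration of distinct points, not merely generically.

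As a consistency check, positive definiteness of $\Sigma$ also follows directly: $\Sigma$ is the inverse of the $(T,T)$ block of $\bK_Z^{-1}$, and that block is positive definite. This gives a second route that avoids determinants.

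The only potentially delicate point is the hypothesis. If the statement is meant to cover merely positive definite kernels (not strictly), or the case where the target points coincide with context points, the claim can fail. Excluding exactly these cases is where the word ``generically'' is needed: for a kernel that is only strictly positive definite almost everywhere, one would argue that $\det\bK_Z$ is a real-analytic function of the locations that is not identically zero, so its zero set has measure zero. I expect the universality-implies-strict-positive-definiteness step to be the main thing to state carefully. Once it is established, the Schur complement argument is immediate.

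Combined with Theorem~\ref{app:thm:cov-rank}, the result yields Theorem~\ref{thm:latent-impossibility}(b): a rank-$k$ low-rank part cannot equal a rank-$m$ matrix unless $k\ge m$.
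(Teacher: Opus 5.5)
Your proof is correct and follows the paper's argument. Both identify the posterior covariance as the Schur complement of $\bK_{CC}$ in the Gram matrix of the $n+m$ distinct points, and both use positive definiteness of that joint matrix for a universal kernel. The differences are cosmetic. You conclude via $\det\bK_Z=\det\bK_{CC}\det\Sigma$ (or the block-inverse identity) where the paper quotes that a Schur complement of a positive definite matrix is positive definite. You also supply the RKHS/density argument for ``universal $\Rightarrow$ strictly positive definite'', which the paper simply asserts.
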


\begin{proof}
The GP posterior covariance is:
\[
\tilde{\Sigma}_T = K_{TT} - K_{TC} \bK^{-1} K_{CT}
\]
where $K_{TT} = [k(x_{t_i}, x_{t_j})]_{ij} \in \R^{m \times m}$, $K_{TC} = [k(x_{t_i}, x_j)]_{ij} \in \R^{m \times n}$, and $K_{CT} = K_{TC}^\top$.

This is the Schur complement of $\bK$ in the joint covariance matrix:
\[
\begin{pmatrix} \bK & K_{CT} \\ K_{TC} & K_{TT} \end{pmatrix}.
\]

For a universal kernel, this joint matrix is positive definite when all $n + m$ points are distinct. The Schur complement of a positive definite block in a positive definite matrix is positive definite. Thus $\tilde{\Sigma}_T \succ 0$, which implies $\rank(\tilde{\Sigma}_T) = m$.
\end{proof}

\begin{corollary}[Covariance Mismatch]
\label{app:cor:cov-mismatch}
A Gaussian latent NP with latent dimension $k$ and linear decoder cannot match the GP posterior covariance at $m > k$ target points.
\end{corollary}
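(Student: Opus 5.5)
The plan is to combine the two preceding results: Theorem~\ref{app:thm:cov-rank} bounds the rank of what the model can produce, and Theorem~\ref{app:thm:gp-cov-rank} says the GP target has full rank. The mismatch follows by comparing ranks.

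First we fix the matching criterion. Under Assumption~\ref{app:ass:gaussian-latent}, the predictive covariance is $A(X_T)S(C)A(X_T)^\top + \sigma^2 I$, see \eqref{eq:latent-cov-app}. The GP posterior covariance (of the latent function values) at the same targets is $\tilde\Sigma_T$, or $\tilde\Sigma_T + \sigma^2 I$ if observation noise is included on both sides. In either case, matching means
\[
A(X_T)S(C)A(X_T)^\top = \tilde\Sigma_T ,
\]
after subtracting the common $\sigma^2 I$. If the model's noise level were allowed to differ from the GP's, we would first note that any diagonal mismatch $cI$ can be absorbed by testing the equality off the identity direction. The cleaner route is simply to take the noise levels equal, as the paper implicitly does.

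Second, Theorem~\ref{app:thm:cov-rank} gives $\rank\bigl(A S A^\top\bigr) \le k$ for every context $C$ and every target configuration $X_T$.

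Third, choose $m > k$ target points that are distinct from each other and from $X_C$. For a universal kernel, Theorem~\ref{app:thm:gp-cov-rank} gives $\tilde\Sigma_T \succ 0$, so $\rank \tilde\Sigma_T = m > k$. Two matrices with different ranks cannot be equal, so no choice of $m(C)$, $S(C)$, $a$ or $b$ can achieve the match. This holds however powerful the encoder is, since the rank bound holds for every $S(C)$.

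The main obstacle is stating precisely what ``match'' means when the decoder noise $\sigma^2$ is a free parameter. Suppose the model uses noise $\sigma_M^2$ and the GP uses $\sigma_G^2$. Then matching requires $ASA^\top = \tilde\Sigma_T + (\sigma_G^2-\sigma_M^2)I$. Its right-hand side has full rank $m$ unless $\sigma_M^2 - \sigma_G^2$ equals an eigenvalue of $\tilde\Sigma_T$. In that case the rank drops by at most the multiplicity of that eigenvalue. I would handle this by choosing the $m$ points generically, so that $\tilde\Sigma_T$ has simple spectrum, and taking $m \ge k+2$. Then the rank stays at least $m-1 > k$, which closes the argument in full generality.
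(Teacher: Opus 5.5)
Your proof is correct and is the paper's own argument: after cancelling the common $\sigma^2 I$, Theorem~\ref{app:thm:cov-rank} caps the model's covariance at rank $k$, while Theorem~\ref{app:thm:gp-cov-rank} gives $\rank \tilde\Sigma_T = m > k$, so the two matrices cannot coincide. Your unequal-noise aside goes beyond the paper, and for a single fixed configuration the restriction to $m \ge k+2$ is genuinely necessary: for $m = k+1$, choosing $\sigma_M^2 = \sigma_G^2 + \lambda_{\min}(\tilde\Sigma_T)$ makes $\tilde\Sigma_T - \lambda_{\min} I$ positive semidefinite of rank at most $k$, so the match succeeds. There your simple-spectrum assumption can be weakened to simplicity of $\lambda_{\min}(\tilde\Sigma_T)$ alone, since $A S A^\top \succeq 0$ already rules out every shift larger than $\lambda_{\min}$; even that weaker condition still needs its own argument.
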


\begin{proof}
The GP posterior covariance (minus observation noise) has rank $m$ by Theorem~\ref{app:thm:gp-cov-rank}. The latent NP predictive covariance (minus observation noise) has rank at most $k$ by Theorem~\ref{app:thm:cov-rank}. For $m > k$, exact matching is impossible.
\end{proof}

\begin{proof}[Proof of Theorem~\ref{thm:latent-impossibility} (Latent NP Cannot Represent GP Posterior)]
Part (a) is proven above (Mean Bottleneck). Part (b) is Corollary~\ref{app:cor:cov-mismatch}. Part (c) follows from (b) since we can choose arbitrarily many target points.
\end{proof}

\subsection{What Latent NPs Can Represent}

\begin{definition}[Finite-Rank GP]
A GP has rank $k$ if its covariance kernel admits the representation:
\[
\tilde{k}(x, x') = \sum_{i,j=1}^k S_{ij} a_i(x) a_j(x') = a(x)^\top S \, a(x')
\]
for some $a: \calX \to \R^k$ and positive semidefinite $S \in \R^{k \times k}$.
\end{definition}

\begin{theorem}[Latent NP Function Class]
\label{app:thm:latent-positive}
A Gaussian latent NP with latent dimension $k$ and linear decoder exactly represents the class of stochastic processes:
\[
\calF_{\mathrm{latent}}^{(k)} = \left\{ f(x) = a(x)^\top z + b(x) : z \sim \calN(m, S), \; a: \calX \to \R^k, \; b: \calX \to \R, \; m \in \R^k, \; S \in \R^{k \times k}_{++} \right\}
\]
where $m$ and $S$ may depend on the context $C$.

This is the class of GPs with rank at most $k$.
\end{theorem}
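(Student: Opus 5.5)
We prove the set equality in two directions and then identify $\calF_{\mathrm{latent}}^{(k)}$ with the GPs of rank at most $k$.

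\emph{Soundness.} Every Gaussian latent NP with latent dimension $k$ and linear decoder lies in $\calF_{\mathrm{latent}}^{(k)}$. Under Assumption~\ref{app:ass:gaussian-latent}, the encoder outputs $q(z|C) = \calN(m(C), S(C))$ and the decoder is $f(x,z) = a(x)^\top z + b(x)$. Take $m = m(C)$ and $S = S(C)$, which are allowed to depend on $C$. The induced process $f(x) = a(x)^\top z + b(x)$ then has exactly the required form.

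\emph{Completeness.} Conversely, fix a member of $\calF_{\mathrm{latent}}^{(k)}$ with data $(a, b, m(\cdot), S(\cdot))$. We build a latent NP that reproduces it.
\begin{itemize}
\item Set the decoder to $f(x,z) = a(x)^\top z + b(x)$.
\item Set the latent encoder to output $(m(C), S(C))$.
\end{itemize}
Since the encoder is assumed arbitrarily powerful (as in Section~\ref{sec:latent}), any map $C \mapsto (m(C), S(C))$ into $\R^k \times \R^{k\times k}_{++}$ is admissible. If one instead insists the encoder be a CNP/ANP/TNP, then Proposition~\ref{app:prop:encoder-bottleneck} restricts which maps $C \mapsto (m,S)$ arise; we will note this as a remark and keep the theorem at the idealized-encoder level.

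\emph{Rank-$k$ GP identification.} For fixed $C$, $f$ is an affine image of the Gaussian vector $z$. Hence every finite-dimensional marginal $(f(x_1),\dots,f(x_p))$ is Gaussian, with:
\begin{itemize}
\item mean function $a(x)^\top m + b(x)$;
\item covariance kernel $\tilde k(x,x') = a(x)^\top S\, a(x')$.
\end{itemize}
This is exactly the finite-rank form in the definition above, and any Gram matrix $[\tilde k(x_i,x_j)]_{ij} = A S A^\top$ has rank at most $k$, consistent with Theorem~\ref{app:thm:cov-rank}.

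For the converse, suppose a GP has kernel $a(x)^\top S a(x')$ with $S \succeq 0$ and mean $\mu(x)$. Set $b = \mu - a^\top m$ for any $m$, for instance $m = 0$ and $b = \mu$. Then $f = a^\top z + b$ with $z \sim \calN(0,S)$ has the same mean and covariance. Since Gaussian processes are determined by their mean and covariance functions, it has the same law.

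\emph{Main obstacle: singular $S$.} The definition of a finite-rank GP allows $S \succeq 0$, while $\calF_{\mathrm{latent}}^{(k)}$ requires $S \succ 0$. We handle this by reparametrization. Factor $S = U \Lambda U^\top$ with $\Lambda = \diag(\lambda_1,\dots,\lambda_r,0,\dots,0)$, where $r = \rank S$. Set $S' = \diag(\lambda_1,\dots,\lambda_r,1,\dots,1) \succ 0$, and take
\[
a'(x) = \bigl(\,(U^\top a(x))_{1:r},\;0,\dots,0\,\bigr).
\]
With $m' = 0$, this gives $a'^\top S' a' = a^\top S a$ and the same mean, so $a'^\top z + b$ with $z \sim \calN(0,S')$ has the same law. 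Thus the two classes coincide, and the ``rank at most $k$'' qualifier accounts for $r<k$.
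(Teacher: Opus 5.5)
Your proposal is correct and follows the paper's route. One direction reads off the mean $a(x)^\top m(C) + b(x)$ and covariance $a(x)^\top S(C)\, a(x')$ of the induced process; the other realizes a given rank-$k$ GP by taking $z \sim \calN(m,S)$ with the linear decoder $a(x)^\top z + b(x)$. Your reparametrization for singular $S$ closes a small gap the paper leaves: its definition of a rank-$k$ GP allows $S \succeq 0$, while $\calF_{\mathrm{latent}}^{(k)}$ requires $S \in \R^{k\times k}_{++}$. Because $a'$ is built from the eigenvectors of $S$, that fix works only for a fixed context, which is the same per-context sense in which both you and the paper establish the identification.
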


\begin{proof}
$(\Rightarrow)$ A Gaussian latent NP with linear decoder $f(x, z) = a(x)^\top z + b(x)$ and latent $z | C \sim \calN(m(C), S(C))$ induces:
\begin{align*}
\E[f(x) | C] &= a(x)^\top m(C) + b(x) \\
\mathrm{Cov}(f(x), f(x') | C) &= a(x)^\top S(C) a(x')
\end{align*}
which is a rank-$k$ GP.

$(\Leftarrow)$ Any rank-$k$ GP with mean $\mu(x) = a(x)^\top m + b(x)$ and covariance $\tilde{k}(x, x') = a(x)^\top S a(x')$ is realized by setting the latent distribution to $\calN(m, S)$ and decoder to $f(x, z) = a(x)^\top z + b(x)$.
\end{proof}

\begin{corollary}[Representable Function Families]
The class $\calF_{\mathrm{latent}}^{(k)}$ includes:
\begin{enumerate}
    \item[(a)] Bayesian linear regression with $k$ basis functions: $f(x) = \sum_{j=1}^k z_j \phi_j(x)$.
    \item[(b)] GPs with degenerate (finite-rank) kernels.
    \item[(c)] Any $k$-parameter function family with linear parameter dependence and Gaussian parameter posterior.
\end{enumerate}
\end{corollary}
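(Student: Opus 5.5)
The corollary follows by exhibiting, for each of the three items, an explicit choice of the ingredients $(a, b, m, S)$ in the definition of $\calF_{\mathrm{latent}}^{(k)}$ from Theorem~\ref{app:thm:latent-positive}. Membership in the class then follows immediately from that theorem's $(\Leftarrow)$ direction: any process of the form $a(x)^\top z + b(x)$ with Gaussian $z$ is realized by a Gaussian latent NP with latent dimension $k$ and linear decoder.

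For (a), I take $a(x) = (\phi_1(x), \ldots, \phi_k(x))^\top$ and $b \equiv 0$. The Bayesian linear regression posterior over weights, $z \mid C \sim \calN(m(C), S(C))$, has the standard conjugate form: $S(C) = (\Sigma_0^{-1} + \sigma^{-2}\Phi_C^\top \Phi_C)^{-1}$ and $m(C) = \sigma^{-2} S(C) \Phi_C^\top y_C$, where $\Sigma_0$ is the prior covariance on the weights and $\Phi_C \in \R^{n \times k}$ has rows $\phi(x_i)^\top$. Whenever the prior covariance $\Sigma_0$ is positive definite, $S(C)$ is positive definite, so it lies in $\R^{k\times k}_{++}$ as Assumption~\ref{app:ass:gaussian-latent} requires.

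For (b), a degenerate kernel can be written $\tilde k(x,x') = \sum_{j=1}^k \lambda_j e_j(x) e_j(x')$ with $\lambda_j \geq 0$. The GP with this kernel is exactly a Bayesian linear model in the features $e_j$, so it reduces to (a). Conditioning on $C$ keeps it in the same class, since the posterior over $z$ is again Gaussian. If some $\lambda_j = 0$, I drop those coordinates (reducing $k$) to keep $S$ strictly positive definite. Alternatively, I note that the $(\Leftarrow)$ direction only needs $S \succeq 0$, because the Gaussian distribution is still well defined in that case.

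For (c), linear parameter dependence means $f_\theta(x) = a(x)^\top \theta + b(x)$. A Gaussian posterior $\theta \mid C \sim \calN(m(C), S(C))$ then places the model verbatim in $\calF_{\mathrm{latent}}^{(k)}$ with $z = \theta$.

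The only delicate point is the mismatch between the positive definite requirement on $S$ in the class definition and the positive semidefinite $S$ allowed in the finite-rank GP definition. I handle it either by the coordinate reduction described above or by a limiting argument, replacing $S$ with $S + \eta \bI$ and letting $\eta \to 0$. Beyond that, everything is direct substitution.
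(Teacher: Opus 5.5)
Your proposal is correct and follows the paper's route: the paper gives no separate proof of this corollary and treats each item as a direct instantiation of the $(\Leftarrow)$ direction of Theorem~\ref{app:thm:latent-positive}, which is what you do. One small point: of your two fixes for the positive-definite versus positive-semidefinite mismatch, keep the coordinate-dropping one, padded back to dimension $k$ with dummy coordinates that have $a_j \equiv 0$ and unit variance. The alternative of replacing $S$ by $S + \eta\bI$ and letting $\eta \to 0$ only places the process in the closure of $\calF_{\mathrm{latent}}^{(k)}$, not in the class itself.
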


\begin{corollary}[Spectral Decay Rates]
\label{app:cor:spectral-decay}
For common kernels on $[0,1]^d$:
\begin{enumerate}
    \item[(a)] \textbf{RBF kernel:} $\lambda_j \sim e^{-cj^{2/d}}$, giving error $O(e^{-c'k^{2/d}})$.
    \item[(b)] \textbf{Mat\'ern-$\nu$ kernel:} $\lambda_j \sim j^{-(2\nu+d)/d}$, giving error $O(k^{-(2\nu)/d})$.
    \item[(c)] \textbf{Polynomial kernel of degree $p$:} $\lambda_j = 0$ for $j > \binom{p+d}{d}$, giving zero error for $k \geq \binom{p+d}{d}$.
\end{enumerate}
\end{corollary}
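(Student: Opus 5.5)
The plan is to combine three ingredients: the exact function class of Theorem~\ref{app:thm:latent-positive}, the optimality of Mercer truncation among rank-$k$ processes, and known eigenvalue asymptotics for each kernel. Everything is measured as in Section~\ref{sec:latent}: the error of a rank-$k$ approximation is the integrated variance gap $\int_{[0,1]^d} \bigl(k(x,x) - \tilde{k}(x,x)\bigr)\,dx$, equivalently the trace-norm distance between integral operators.

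\textbf{Step 1: reduce to a spectral tail.} Write the Mercer expansion $k(x,x') = \sum_j \lambda_j e_j(x) e_j(x')$ on $L^2([0,1]^d)$, with $\lambda_1 \geq \lambda_2 \geq \cdots$. Set $a(x) = (\sqrt{\lambda_1} e_1(x), \ldots, \sqrt{\lambda_k} e_k(x))$, $S = \bI_k$ and $b \equiv 0$. By Theorem~\ref{app:thm:latent-positive}, this gives a Gaussian latent NP whose covariance is the truncated kernel $\tilde{k}_k$. Its residual kernel $k - \tilde{k}_k$ is positive semidefinite with trace $\sum_{j>k} \lambda_j$. The Eckart--Young (Schmidt) theorem for compact self-adjoint operators shows that no rank-$k$ covariance can do better. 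So the achievable error is exactly $\sum_{j>k}\lambda_j$, and each part reduces to bounding this tail.

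\textbf{Step 2: plug in eigenvalue asymptotics and sum.}
\begin{itemize}
\item[(c)] For a polynomial kernel of degree $p$, $k(x,x') = \Phi(x)^\top \Phi(x')$ with $\Phi$ the monomials of degree at most $p$, which span a space of dimension $\binom{p+d}{d}$. Hence the integral operator has rank at most $\binom{p+d}{d}$, the tail vanishes for $k \geq \binom{p+d}{d}$, and indeed $k$ itself is already in $\calF_{\mathrm{latent}}^{(k)}$.
\item[(b)] For Mat\'ern-$\nu$, the RKHS is norm-equivalent to the Sobolev space $H^{\nu+d/2}([0,1]^d)$. Weyl-type asymptotics for the embedding into $L^2$ give $\lambda_j \asymp j^{-(2\nu+d)/d}$. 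Comparing the sum with an integral gives $\sum_{j>k} j^{-(2\nu+d)/d} \asymp k^{1-(2\nu+d)/d} = k^{-2\nu/d}$.
\item[(a)] For the RBF kernel, take the known super-algebraic bound $\lambda_j \lesssim e^{-c j^{2/d}}$. An integral comparison $\int_k^\infty e^{-c s^{2/d}}\,ds$, via the substitution $u = s^{2/d}$, gives a polynomial factor times $e^{-c k^{2/d}}$. Shrinking $c$ to $c'$ absorbs that factor, so the tail is $O(e^{-c' k^{2/d}})$.
\end{itemize}

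\textbf{Main obstacle.} The eigenvalue asymptotics themselves (a) and (b) are the hard analytic input. I would not reprove them; instead I would cite Widom-type and Sobolev-embedding results, e.g.\ the Santin--Schaback bounds and the standard results on Sobolev entropy numbers. The RBF case needs care, since the sharp rate on a bounded domain is in fact $\exp(-c\, j^{1/d}\log j)$. This is even faster than stated, so I would present (a) as an upper bound that holds for all sufficiently large $j$. The resulting tail estimate, and hence the claimed $O(e^{-c'k^{2/d}})$ error rate, remains valid.
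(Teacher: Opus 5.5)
Your overall route is the one the paper intends: realize the Mercer truncation inside the rank-$k$ class of Theorem~\ref{app:thm:latent-positive}, then sum the tail of known eigenvalue asymptotics. The paper gives no separate proof, only the remark in Section~\ref{sec:latent} that the error is governed by $\sum_{j>k}\lambda_j$. Your arguments for (b) and (c) are fine.

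Part (a), however, has a genuine gap, in your last paragraph: the comparison of rates goes the wrong way. Since
\[
\frac{j^{2/d}}{j^{1/d}\log j} = \frac{j^{1/d}}{\log j} \longrightarrow \infty ,
\]
we have $e^{-c_0 j^{2/d}} \ll e^{-c\, j^{1/d}\log j}$ for large $j$, for any constants $c, c_0 > 0$. So the true eigenvalues of the Gaussian kernel on $[0,1]^d$ decay \emph{more slowly} than $e^{-c j^{2/d}}$, not faster. The bound $\lambda_j \lesssim e^{-c j^{2/d}}$ that you propose to keep ``as an upper bound for sufficiently large $j$'' is therefore false for every $c > 0$.

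For $d=1$ this is Widom's asymptotics $\log\lambda_j \sim -j\log j$. For general $d$, take $k(x,x') = e^{-\gamma|x-x'|^2}$. The functions $x^\alpha e^{-\gamma|x|^2}$, $|\alpha| \le m$, span a $\binom{m+d}{d}$-dimensional subspace of the RKHS; they are mutually orthogonal there, with squared norms $\alpha!/(2\gamma)^{|\alpha|}$. Courant--Fischer on this subspace, together with a Hilbert-matrix bound on the monomial Gram matrix in $L^2([0,1]^d)$, gives $\lambda_j \geq \exp(-C j^{1/d}\log j)$.

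No better argument can repair this. By your own Step~1, any rank-$k$ approximation has error at least $\lambda_{k+1} \geq \exp(-C k^{1/d}\log k)$, which is not $O(e^{-c' k^{2/d}})$. So (a) as stated on $[0,1]^d$ is false.

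What does hold is error $\exp(-\Theta(k^{1/d}\log k))$. The upper bound comes from truncating $e^{2\gamma x\cdot x'} = \sum_\alpha (2\gamma)^{|\alpha|} x^\alpha (x')^\alpha/\alpha!$ at $|\alpha| \leq m$. This truncation has rank $\binom{m+d}{d}$ and trace error at most $e^{2\gamma d}\sum_{n>m} (2\gamma d)^n/n!$.

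The exponent $j^{2/d}$ is what one gets by plugging the Weyl count $|\omega| \sim j^{1/d}$ into the Gaussian spectral density $e^{-c|\omega|^2}$. That heuristic is exact for a periodized Gaussian on the torus, whose eigenvalues are its Fourier coefficients. It fails for the analytic Gaussian kernel on a bounded domain. You should either restrict (a) to the periodic setting or replace its rate by $\exp(-c' k^{1/d}\log k)$.

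A smaller point: the integrated variance gap $\int (k - \tilde{k})(x,x)\,dx$ equals the trace-norm distance only when $\tilde{k} \preceq k$. State the Eckart--Young optimality in trace norm (Schmidt--Mirsky) rather than for the variance gap.
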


\begin{remark}[Nonlinear Decoders]
For nonlinear decoders $f(x, z)$, the covariance structure is more complex:
\[
\mathrm{Cov}(f(x_t, z), f(x_{t'}, z) | C) = \E_z[f(x_t, z) f(x_{t'}, z)] - \E_z[f(x_t, z)] \E_z[f(x_{t'}, z)].
\]
With sufficiently expressive $f$, the effective rank can exceed $k$. However, the mutual information bound $I(y_T; C | X_T) \leq I(z; C) \leq H(z)$ still constrains the total information about $C$ that can flow through a finite-dimensional latent. Formalizing this requires rate-distortion arguments beyond our current scope.
\end{remark}

\end{document}